\def\c{\mathbf{c}}
\def\x{\mathbf{x}}
\def\q{\mathbf{q}}
\def\s{\mathbf{s}}
\def\z{\mathbf{z}}
\def\S{\mathcal{S} }
\def\V{\mathcal{V}}
\def\one{{\bf 1}}
\newcommand{\mypar}[1]{{\bf #1.}}
\theoremstyle{definition}
\newtheorem{defn}{Definition}
\newtheorem{myThm}{Theorem}
\newtheorem{myCorollary}{Corollary}
\DeclareMathOperator{\TV}{TV}
\DeclareMathOperator{\Adj}{A}
\DeclareMathOperator{\D}{D}
\DeclareMathOperator{\Mm}{M}
\DeclareMathOperator{\Vm}{V}
\DeclareMathOperator{\X}{X}
\DeclareMathOperator{\Z}{Z}
\newcommand{\R}{\ensuremath{\mathbb{R}}}
\DeclareMathOperator{\Id}{I}
\title{Deep Unsupervised Learning of 3D Point Clouds via  Graph Topology Inference and Filtering}
\author{Siheng Chen, Chaojing Duan, Yaoqing Yang, Duanshun Li, Chen Feng, Dong Tian}
\begin{document}

\maketitle

\begin{abstract}
We propose a deep autoencoder with graph topology inference and filtering to achieve compact representations of unorganized 3D point clouds in an unsupervised manner. Many previous works discretize 3D points to voxels and then use lattice-based methods to process and learn 3D spatial information; however, this leads to inevitable discretization errors. In this work, we try to handle raw 3D points without such compromise. The proposed networks follow the autoencoder framework with a focus on designing the decoder. The encoder of the proposed networks adopts similar architectures as in PointNet, which is a well-acknowledged method for supervised learning of 3D point clouds. The decoder of the proposed networks involves three novel modules: the folding module, the graph-topology-inference module, and the graph-filtering module. The folding module folds a canonical 2D lattice to the underlying surface of a 3D point cloud, achieving coarse reconstruction; the graph-topology-inference module learns a graph topology to represent pairwise relationships between 3D points, pushing the latent code to preserve both coordinates and pairwise relationships of points in 3D point clouds; and the graph-filtering module couples the above two modules, refining the coarse reconstruction through a learnt graph topology to obtain the final reconstruction. The proposed decoder leverages a learnable graph topology to push the codeword to preserve representative features and further improve the unsupervised-learning performance. We further provide theoretical analyses of the proposed architecture. We provide an upper bound for the reconstruction loss and further show the superiority of graph smoothness over spatial smoothness as a prior to model 3D point clouds. In the experiments, we validate the proposed networks in three tasks, including 3D point cloud reconstruction, visualization, and transfer classification. The experimental results show that (1) the proposed networks outperform the state-of-the-art methods in various tasks, including reconstruction and transfer classification; (2) a graph topology can be inferred as auxiliary information without specific supervision on graph topology inference; (3) graph filtering refines the reconstruction, leading to better performances; and (4) designing a powerful decoder could improve the unsupervised-learning performance, just like a powerful encoder.
\end{abstract}

\begin{keywords}
3D point cloud, deep autoencoder, graph filtering, graph topology inference
\end{keywords}

\begin{table}[htb!]
  \begin{center}
    \begin{tabular}{c |  c  c }
      \hline
       & 
       2D lattice & 
       Airplane
       \\
       &
      \includegraphics[trim={2cm 4cm 1cm 1cm}, clip=true , scale=0.02] {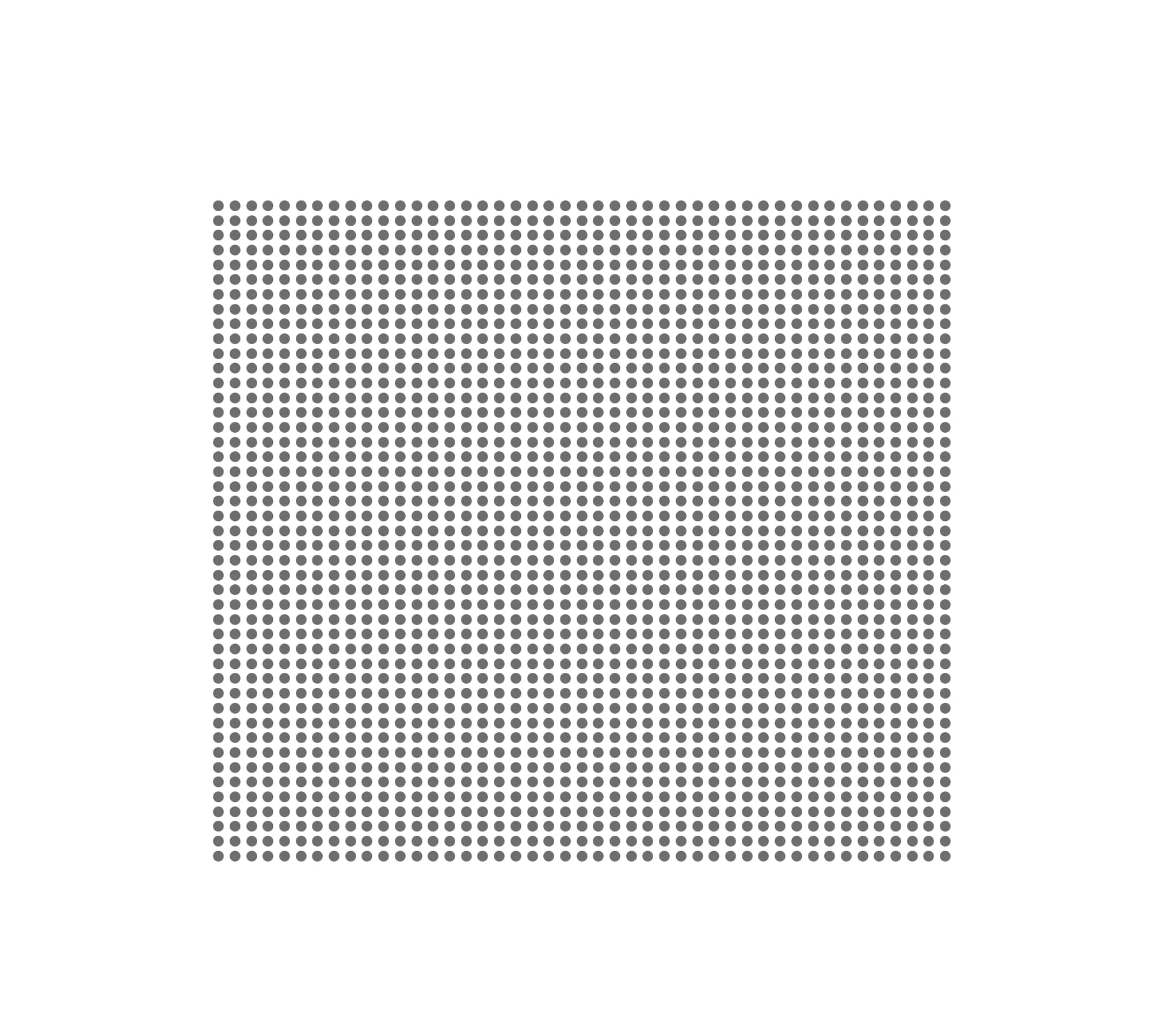} &
      \includegraphics[trim={4cm 4cm 2cm 4cm}, clip=true , scale=0.045] {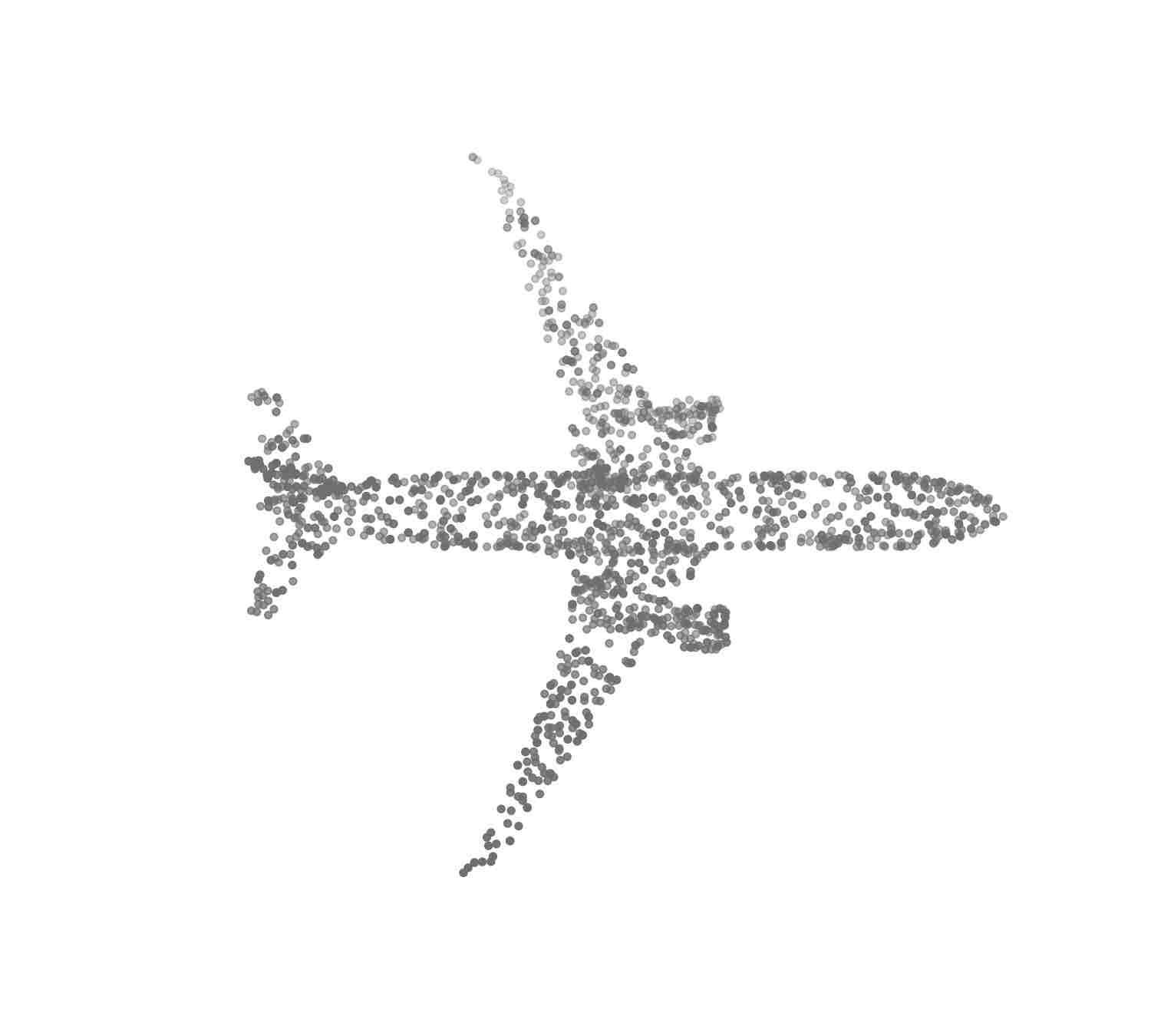}
      \\
      \hline
      Epoch & 
       Coarse reconstruction & 
       Refined reconstruction
       \\
           & 
       \emph{before graph filtering} & 
          \emph{after graph filtering}
       \\
      0 &
      \includegraphics[trim={6cm 6cm 4cm 6cm}, clip=true , scale=0.06] {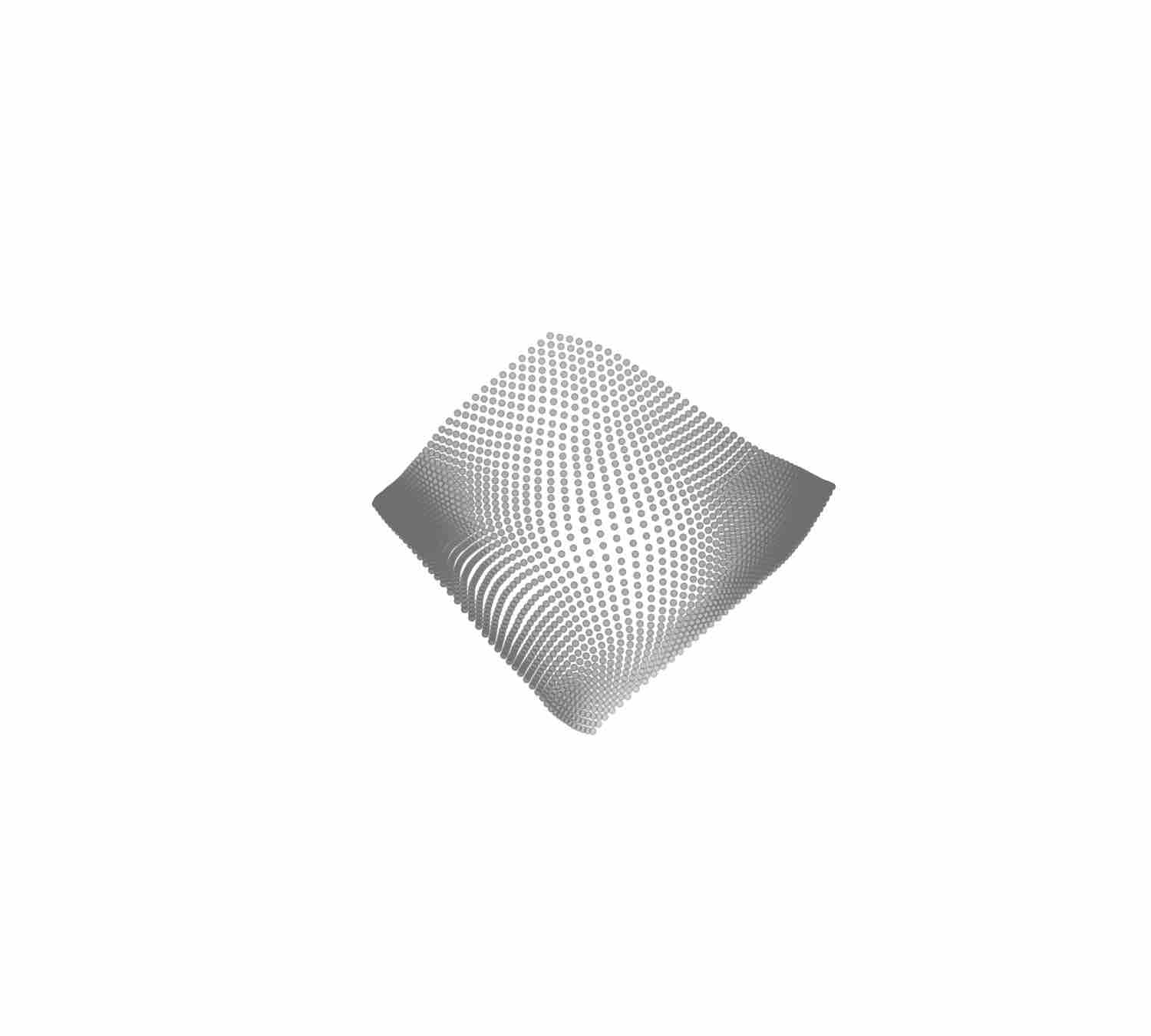} & 
      \includegraphics[trim={5cm 4cm 3cm 4cm}, clip=true , scale=0.06] 
      {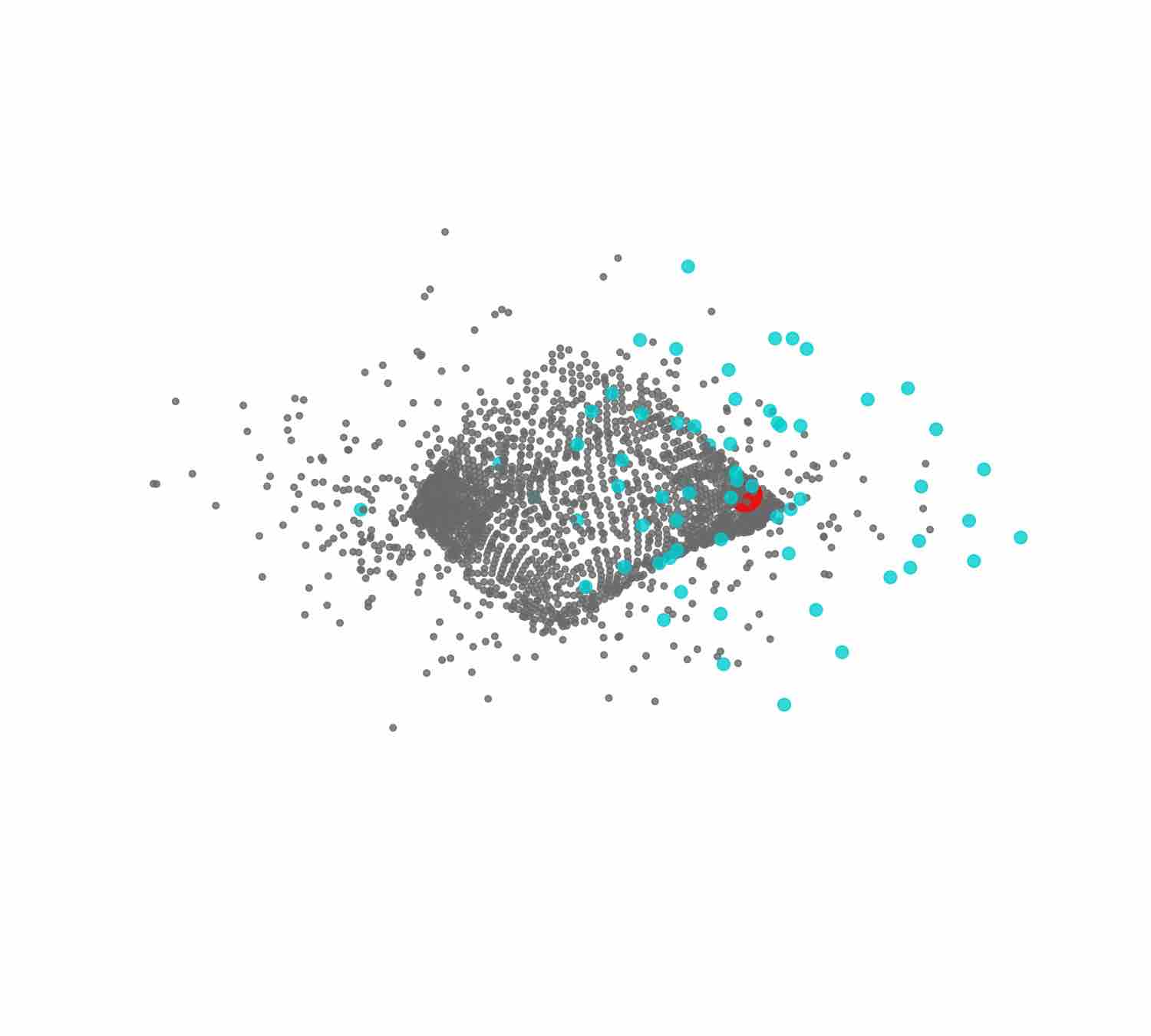}\\

      10 &
      \includegraphics[trim={6cm 6cm 4cm 6cm}, clip=true , scale=0.06] {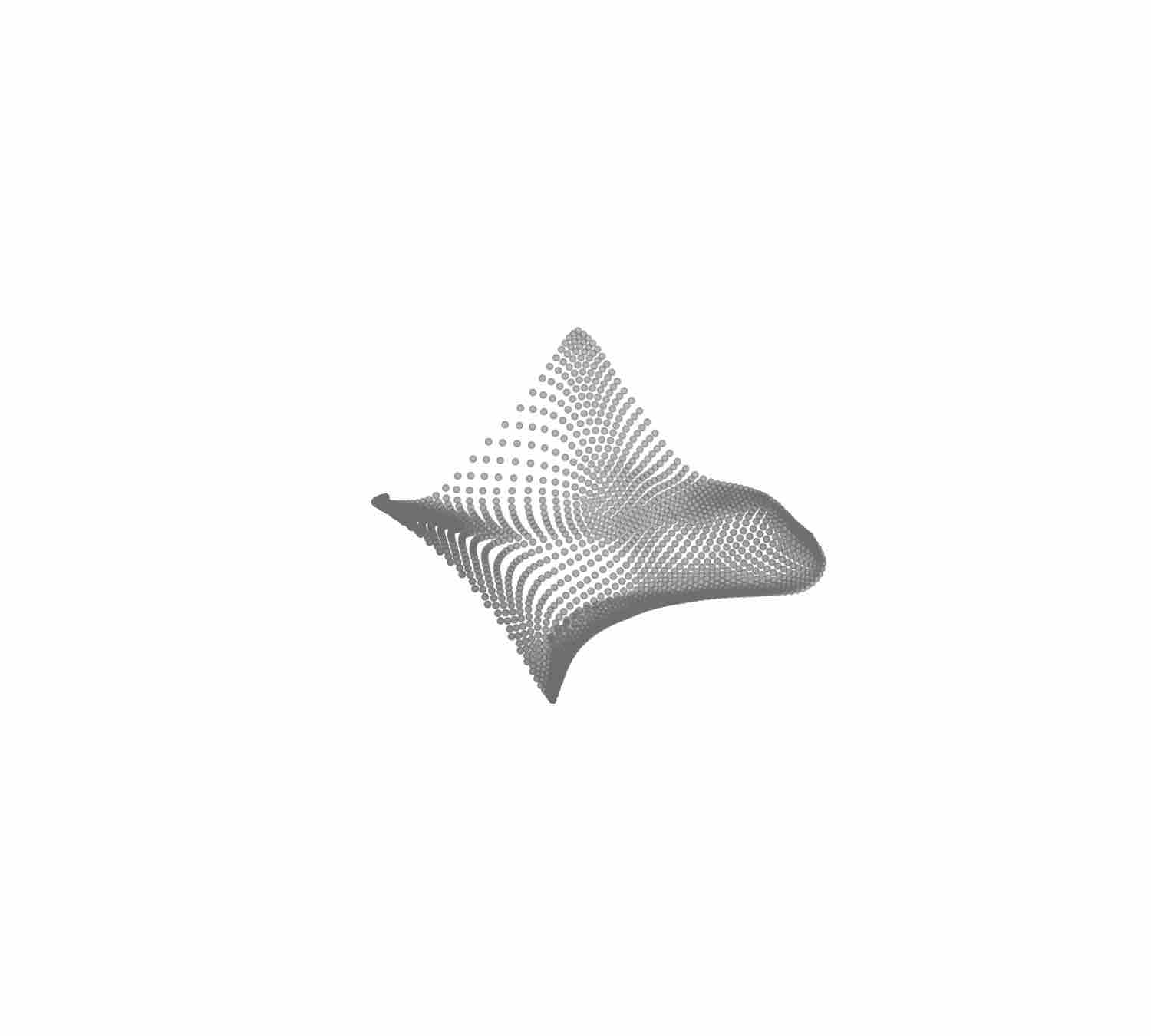} & 
      \includegraphics[trim={5cm 4cm 3cm 4cm}, clip=true , scale=0.06] 
      {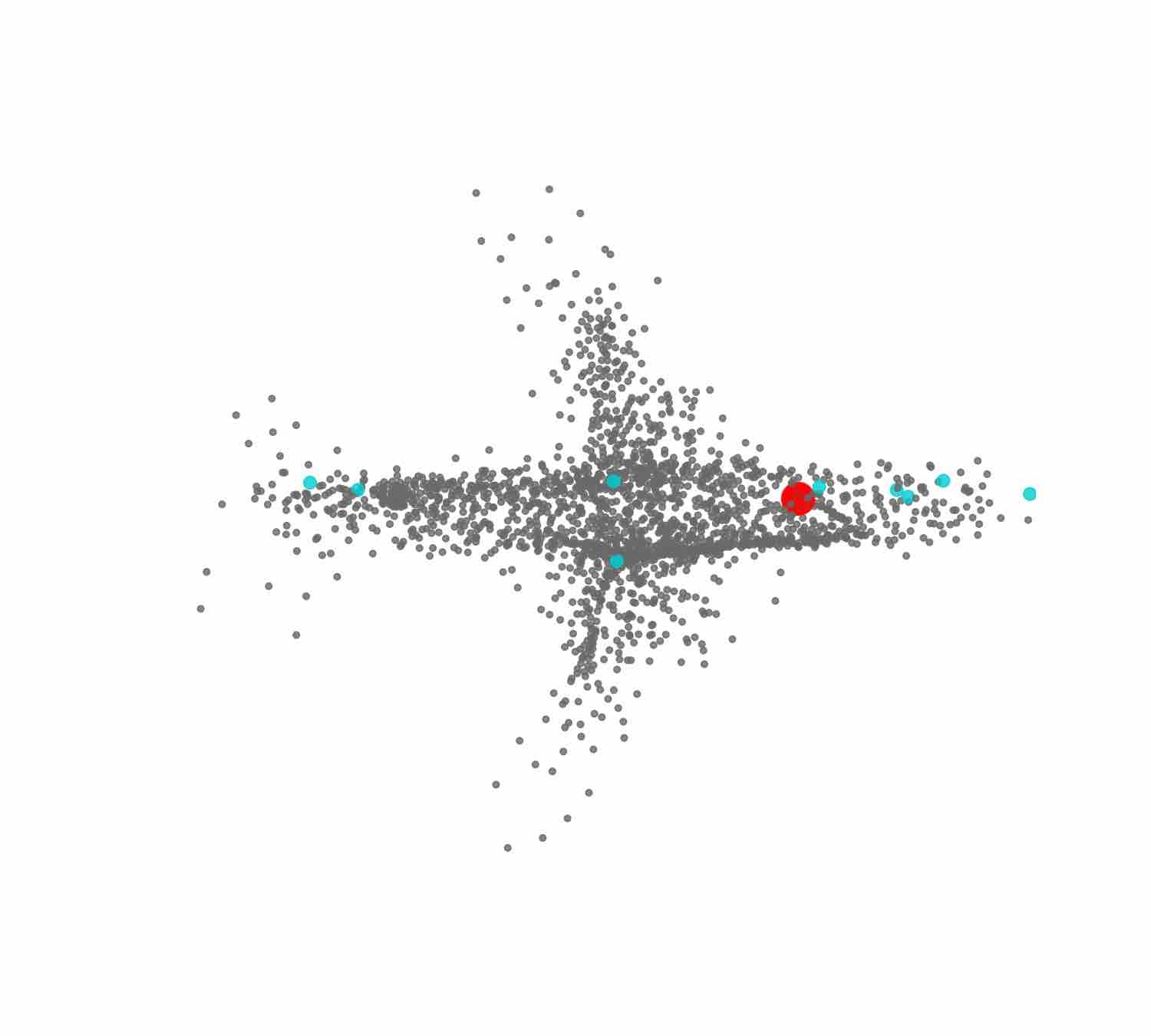}\\
            
      100 &
      \includegraphics[trim={6cm 6cm 4cm 6cm}, clip=true , scale=0.06] {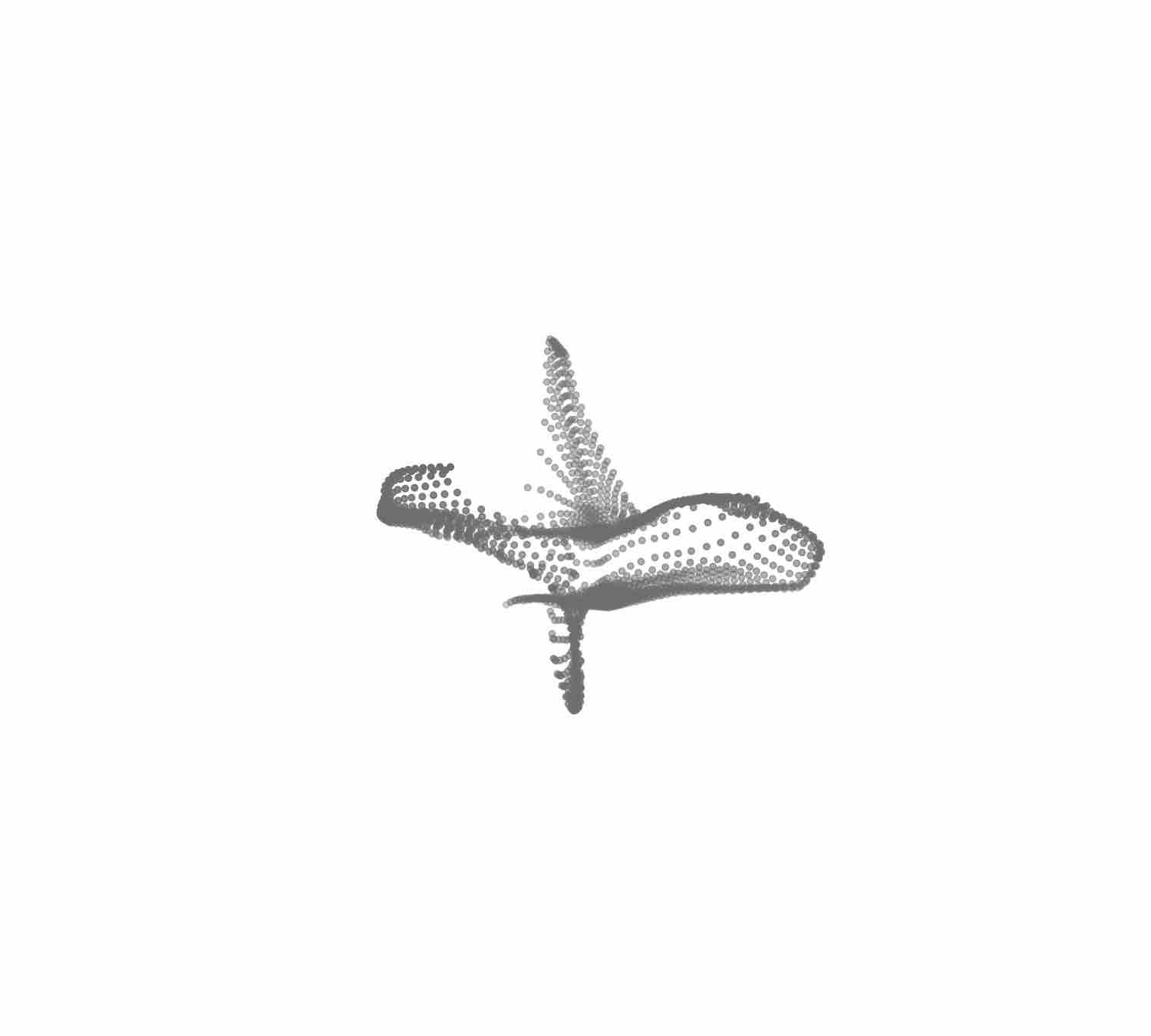} & 
      \includegraphics[trim={5cm 4cm 3cm 4cm}, clip=true , scale=0.06] 
      {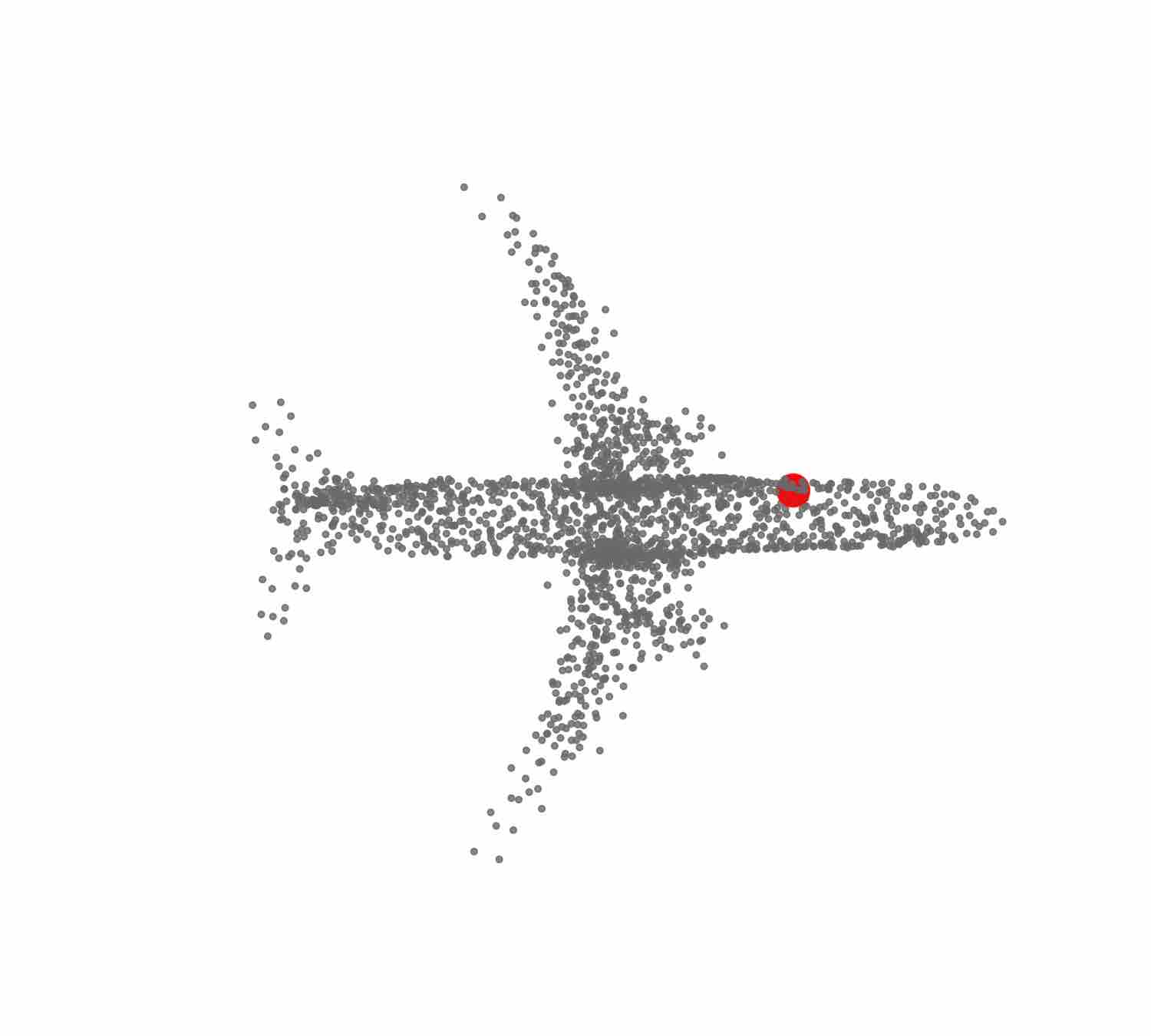}\\
            
      300 &
      \includegraphics[trim={6cm 6cm 4cm 6cm}, clip=true , scale=0.06] {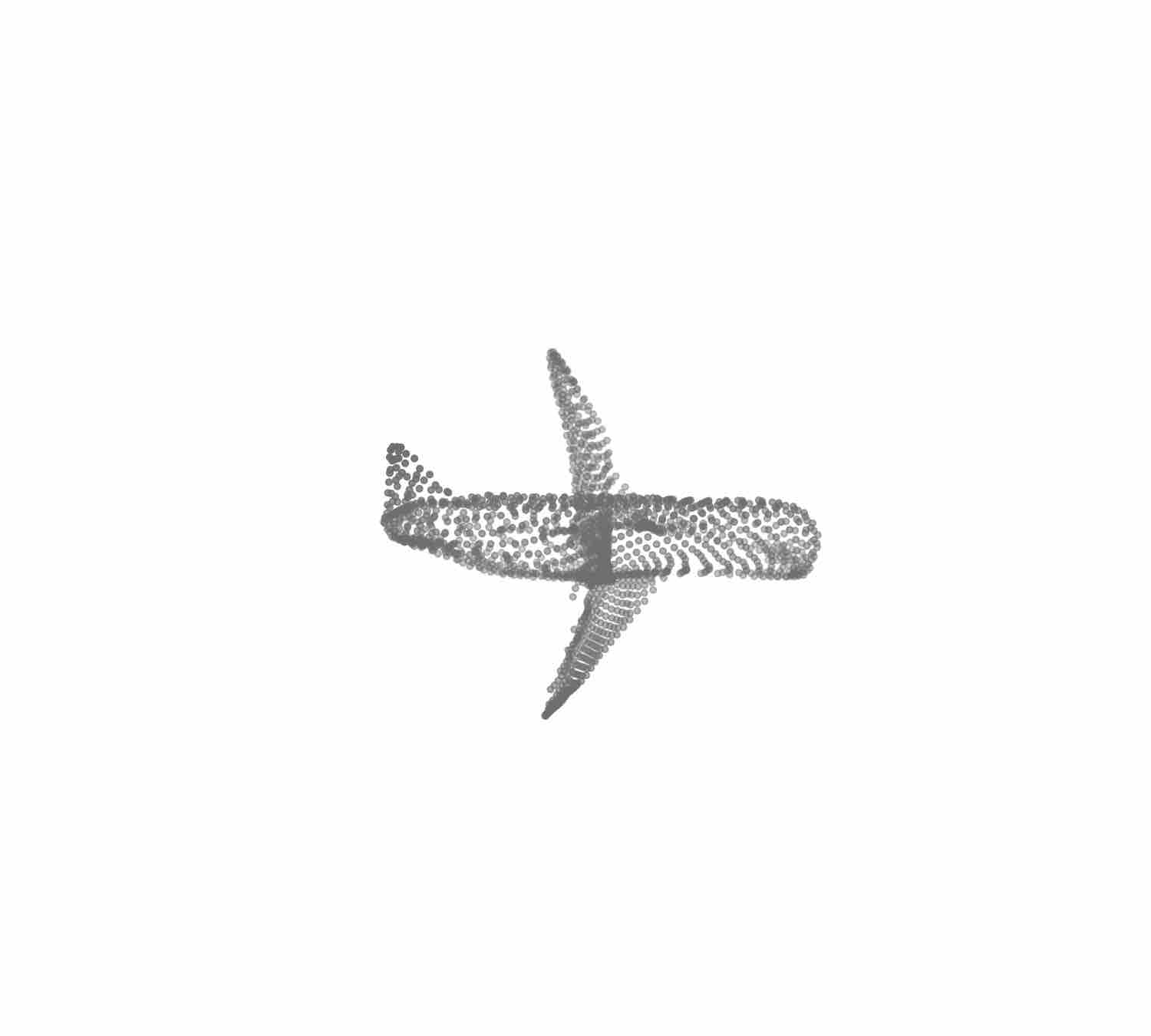} & 
      \includegraphics[trim={5cm 4cm 3cm 4cm}, clip=true , scale=0.06] 
      {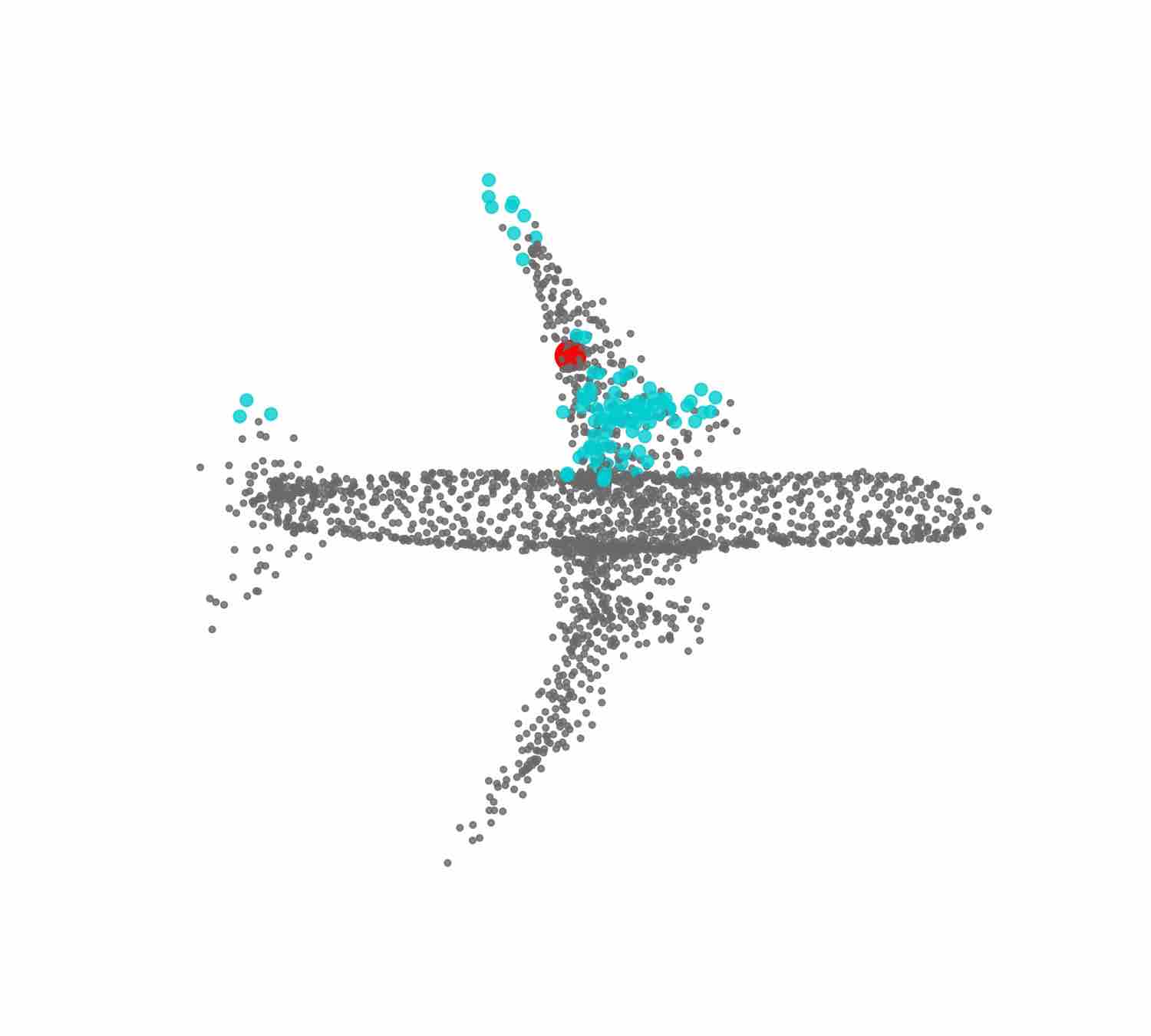}\\
      \hline
      
    \end{tabular}
  \end{center}
  \caption{\label{tab:crown_jewel}\textbf{Graph filtering improves the reconstruction of a 3D point cloud.}  A graph filter learnt through the graph-topology-inference module is used to refine the coarse reconstructions produced by the folding module (left column) and obtain the final reconstructions (right column). We select one 3D point (in red) and trace the evolvement of its neighbors in the learnt graph topology (in cyan) during the training process. Graphs guide the networks to preserves details.}
\end{table}

\section{Introduction}
\label{sec:intro}
3D point clouds are discrete representations of continuous surfaces in the 3D space, which have be widely used in autonomous driving, industrial robotics, augmented reality and many others~\cite{PCL}. Based on the storage order and spatial connectivity among 3D points, we distinguish between two types of point clouds: \emph{organized point clouds}, such as those collected by camera-like 3D sensors or 3D laser scanners and arranged on a lattice~\cite{3dmultiview}, and \emph{unorganized point clouds}, such as those that, due to their complex structure, are scanned from multiple viewpoints and are subsequently merged leading to the loss of ordering of indices~\cite{3dsensor}. Organized point clouds are easier to process as the underlying lattice produce natural spatial connectivity and reflect the sensing order. For generality, we consider unorganized point clouds in this paper. Different from 1D speech data or 2D images, which are associated with regular lattices~\cite{duanMWP}, unorganized 3D point clouds are usually sparsely and irregularly scattered in the 3D space;  this makes traditional latticed-based algorithms difficult to handle 3D point clouds. To solve this issue, many previous works discretize 3D point clouds by transforming them to either 3D voxels or multi-view images, causing volume redundancies and  quantization artifacts. As a pioneering work, PointNet is a  deep-neural-network-based supervised method that uses pointwise multi-layer perceptron followed by maximum pooling to guarantee permutation invariance. It achieves successes on a series of supervised-learning tasks, such as recognition, segmentation, and semantic scene segmentation of 3D point clouds~\cite{2016pointnet}. After that, similar techniques are also applied to many other tasks, such as 3D point cloud detection~\cite{vox3}, classification~\cite{recognition}, and upsampling~\cite{YuLFCH:18}.

In this work, we consider unsupervised learning of 3D point clouds; that is, learning compact representations of 3D point clouds without any labeling information. In this way, representative features are automatically extracted  from 3D point clouds and can be further applied to arbitrary subsequent tasks as auxiliary or prior information. Some works have been proposed recently to pursue this goal~\cite{3dgan,AchlioptasDMG:17}. They adopt the encoder-decoder framework.~\cite{3dgan} discretizes 3D points to 3D voxels and uses 3D convolutions to design both encoder and decoder; however, it leads to unavoidable discretization errors and 3D convolutions are expensive.~\cite{AchlioptasDMG:17} directly handles 3D points; it uses PointNet as the encoder and fully-connected layers as the decoder. This approach is effective; however, it does not explore geometric structures of 3D point clouds and requires an unnecessarily huge number of training parameters.

The proposed networks directly handle 3D points and explore geometric structures of 3D point clouds by graph structures. The proposed networks follow the classical autoencoder framework with a focus on designing a powerful decoder. The encoder adopts PointNet and the decoder consists of three novel modules: the folding module, the graph-topology-inference module, and the graph-filtering module. The folding module maps each node in a 2D lattice to a point in the 3D space based on the latent code generated by the encoder, achieving coarse reconstructions. Intuitively, this process \textit{folds} the underlying 2D flat of a 2D lattice to the underlying 3D surface of a 3D point cloud. The graph-topology-inference module learns a graph topology to explicitly capture the relationships between 3D points. Intuitively, the learnt graph topology is able to deform a 3D point cloud by \textit{cutting} or \textit{gluing} local shapes. Finally, the graph-filtering module couples the above two modules. It designs graph filters based on a learnable graph topology and refines the coarse reconstruction to obtain the final reconstruction.
Intuitively,  the graph filtering module guides the networks to fold, cut, and glue a 2D flat to form a refined and complex 3D surface; see Table~\ref{tab:crown_jewel}. The overall decoder leverages a learnable graph topology to push the codeword to preserve representative features and further improve the unsupervised-learning performance.
We further provide theoretical analyses of the proposed architecture. We provide an upper bound on the reconstruction loss, which is proportional to the cube root of the code length. With a certain smoothness assumption, we are able to show that the filtering process lowers the upper bound by smoothing the coarse reconstruction. We also provide a theoretical framework to show that graph smoothness is a better prior than  spatial smoothness and  graph filtering refines 3D point clouds by promoting graph smoothness. This reflects the needs of an appropriate graph topology and the subsequent graph filtering.

Experimentally, we validate the effectiveness of the proposed networks in three tasks, including reconstruction, visualization, and transfer classification. To specifically show the fine-grained performance, we manually label the 3D shapes in the dataset of ModelNet40 and provide subcategory labels. For example, airplanes are categorized into $9$ subcategories according to the shape of wings. The experimental results show that (1) the proposed networks outperform state-of-the-art methods in quantitative tasks; (2) graph filtering guides the networks to refine details and outperforms its competitors in fine-grained classification; and (3) graph topology inference can be achieved without direct supervision in an end-to-end architecture.

The main contributions of this paper are:
\begin{itemize}
\item We propose a novel deep autoencoder with a focus on designing a graph-based decoder. The overall decoder leverages a learnable graph topology to push the codeword to preserve representative features;

\item We provide theoretical analyses for proposed networks and show the effectiveness of leveraging a graph structure;

\item We propose a fine-grained 3D shape dataset with subcategory labels based on ModelNet40, which could be used for fine-grained recognition, visualization, and clustering; and

\item We validate the proposed networks in 3D point cloud reconstruction, visualization, and transfer classification. Both qualitative and quantitative results show that  the proposed networks outperform state-of-the-art methods.
\end{itemize}

\section{Related Works}
In this section, we overview the related works from three aspects: unsupervised learning, graph signal processing and geometric deep neural networks.
\vspace{-3mm}
\subsection{Unsupervised Learning}
Compared to supervised learning where training data is associated with ground-truth labels, unsupervised learning does not have any label and uses self-organization to model raw data. Some common unsupervised-learning methods include $k$-means clustering, Gaussian mixture models principal component analysis~\cite{Bishop:06}, matrix factorization~\cite{BerryBLPP:07}, autoencoders, and generative adversarial networks~\cite{Goodfellow-et-al-2016}.  Recently, a series of unsupervised-learning models are proposed to learn from 3D point clouds~\cite{sph,lfd,tlnetwork,vconv, FanSG:17}. For example,  3D GAN converts 3D points to 3D voxels~\cite{3dgan}, which introduces a lot of empty voxels and loses precision; LatentGAN handles 3D point clouds directly~\cite{AchlioptasDMG:17}; however, the decoder uses fully-connected layers, which does not explore specific geometric structures of 3D point clouds and requires a huge number of training parameters; and VIP-GAN uses  recurrent-neural-network-based architecture to solve multiple view inter-prediction tasks for each shape~\cite{innergan}; 
~\cite{abs-1901-05103} learns a continuous signed distance function representation of a class of shapes that enables high quality shape representation, interpolation and completion from partial and noisy 3D input data. AtlasNet~\cite{AtlasNet} models a 3D shape as a collection of parametric surface elements, which is similar to the folding module in the proposed decoder; 3DCapsNet~\cite{3dcapsule} adopts the dynamic routing scheme and the peculiar 2D latent space.  In this work, we use deep autoencoder to directly handle unorganized 3D points and propose graph-based operations to explore geometric structures of 3D point clouds.

\vspace{-3mm}
\subsection{Graph Signal Processing}
Graph signal processing is a theoretical framework for the analysis of high-dimensional data with irregular structures~\cite{ShumanNFOV:13,SandryhailaM:14,OrtegaFKMV:18}. This framework extends classical discrete signal processing to signals with an underlying irregular structure. The framework models underlying structure by a graph and signals by graph signals, generalizing concepts and tools from classical discrete signal processing to the graph domain. Some techniques involve representations for graph signals~\cite{ShumanFV:16, ChenSK:18}, sampling for graph signals~\cite{ChenVSK:15, AnisGO:15, ChenVSK:16}, recovery for graph signals~\cite{NarangGO:13, ChenSMK:14}, denoising~\cite{NarangO:12, ChenSMK:14a}, graph-based filter banks~\cite{NarangO:12,TremblayB:16}, graph-based transforms~\cite{NarangSO:10, ShumanFV:16}, graph topology inference~\cite{DongTRF:18}, and graph neural networks~\cite{GamaMLR:19, NiuCGTSK:18}. To process 3D point clouds,~\cite{ChenTFVK:18} uses graph filters and graph-based resampling strategies to select most informative 3D nodes;~\cite{ZhangR:18} uses graph convolutional neural networks to classify 3D point clouds. In this work, we use graph signal processing techniques to achieve filter design in the proposed graph-filtering module.

\vspace{-3mm}
\subsection{Geometric Deep Neural Networks}
People have used various ways to represent 3D data in deep learning, including voxels~\cite{maturana2015voxnet}, multi-view~\cite{su2015multi}, meshes~\cite{graph_data1}, and point clouds~\cite{2016pointnet}. As a common raw data format in autonomous driving, robotics and augmented reality, 3D point clouds are of particular interests because of their flexibility and expressivity~\cite{splatnet,graph_data1,graph_pc4,vox3,wang2018deep,tatarchenko2018tangent,xu2018spidercnn,li2018so}. To process and learn from 3D point clouds, many previous methods convert irregular 3D points to regular data structures by either voxelization or projection to 2D images, such that they can take the advantage of convolutional neural networks (CNN)~\cite{vox1, choy20163d, deep_image1, wang2018adaptive}; however, they have to  trade-off between resolution and memory. To handle raw point clouds directly, PointNet~\cite{2016pointnet} uses point-wise multilayer perceptron (MLP) and max-pooling to ensure the permutation invariance. A branch of 3D deep learning methods follows PointNet~\cite{2016pointnet} as their base networks, with applications in classification, segmentation, and up-sampling~\cite{YangFST:18, AtlasNet, sgpn, huang2018recurrent, YuLFCH:18, shen2018}. Other than that,~\cite{graph_data1,wang2018deep} define an MLP-based continuous graph convolution for processing unorganized point clouds with CNN-like networks;~\cite{tatarchenko2018tangent} defines a continuous tangent convolution for point clouds;~\cite{xu2018spidercnn} uses a simple polynomial convolution weight function instead of MLP;~\cite{vox3} uses PointNet before voxelization to combine both PointNet and CNN;~\cite{li2018so} uses a self-organizing map to learn ordered information for MLP; and~\cite{splatnet} designs a bilateral convolution layer that projects features onto a regularly partitioned space before convolution. Most of these previous works focus on supervised learning. In this work, we use deep learning techniques to achieve unsupervised learning of raw 3D points.

\begin{figure*}[htb]
\centerline{\includegraphics[width=6.4in , height = 2.7in, keepaspectratio]{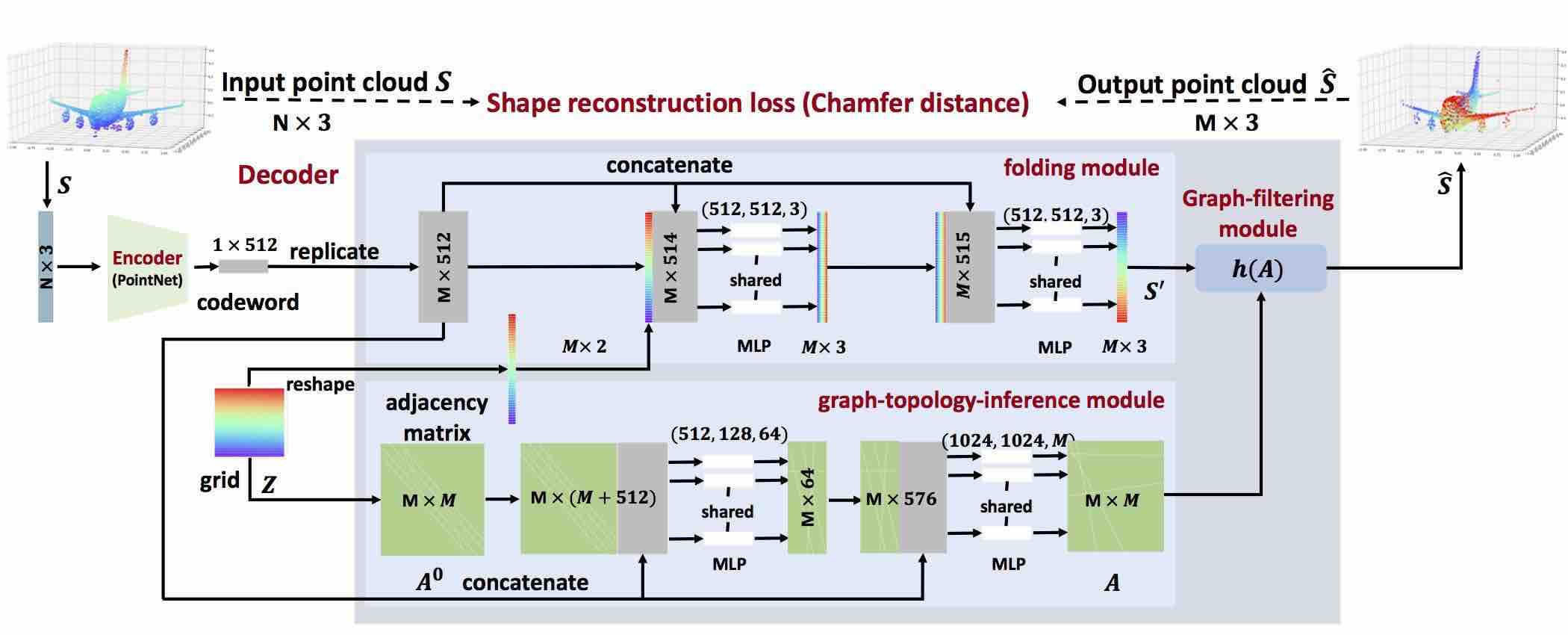}}
\caption{\label{fig:architecture} {\bf Proposed networks follow the encoder-decoder frameworks.}  The decoder includes three novel modules:  the folding module, the graph-topology-inference module, and the graph-filtering module.}
\end{figure*}

\section{Problem Formulation}
We now cover the background material necessary for the
rest of the paper. We start by introducing the properties of 3D point clouds. We then formulate the task of unsupervised learning of 3D point clouds.

\subsection{3D Point Clouds}
\label{sec:3DPoints}
Let $\S = \{  \x_i \in  \R^{K}~|~i= 1, \cdots, N\}$ be a 3D point cloud with ($K-3$) attributes. The corresponding matrix representation is
\begin{eqnarray*}
  \label{eq:PC}
  \X \ = \   \begin{bmatrix}
    \s_1 & \s_2 & \ldots  &  \s_K
  \end{bmatrix} \ = \ \begin{bmatrix} \x_1 & \x_2 & \ldots &
    \x_N
  \end{bmatrix}^T \ \in \ \R^{N \times K},
\end{eqnarray*}
where $\s_i \in \R^N$ denotes the $i$th attribute and $\x_j \in \R^K$
denotes the $j$th point; depending on the sensing device, attributes can be 3D coordinates, RGB colors, intensity, surface normal vector, and many others. Here we mainly consider 3D coordinates; thus from now on, $K = 3$. 

3D point clouds have their own specific properties:
\begin{itemize}
    \item Permutation invariant. 3D point clouds are collections of 3D points represented by the corresponding 3D coordinates. The order of points stored in collections can be changed by any permutation operation and the points are still in the collections with the same coordinates;
 
   \item Transformation equivalence. The 3D coordinates of points can be translated, rotated or reflected along any directions without changing the surfaces they represent, indicating those transformations change the 3D coordinates, but do not change the intrinsic topology or relationships between points of a point cloud;

    \item Piecewise-smoothness. Since 3D point clouds are usually sampled from the surfaces of objects, they are intrinsically 2D surfaces folded in the 3D space. Most parts of a surface are smooth and can be locally approximated by 2D tangent planes, indicating neighboring points share similar geometric structures. Some parts of a surface have significant curvatures and are non-smooth. Overall, the underlying surface of a 3D point cloud is mostly piecewise-smooth in the 3D spatial domain.
\end{itemize}

\subsection{Unsupervised learning of 3D Point Clouds}
The goal is to model the distribution of 3D point clouds by self-organization. Specifically, we aim to use a deep autoencoder to explore compact representations of 3D point clouds that preserve the ability to reconstruct the original point clouds. The proposed autoencoder is based on a deep neural-network framework. It learns to compress a 3D point cloud to a low-dimensional code, and then decompress the code back to a reconstruction that closely matches the original 3D point cloud. The compress module is called an encoder and the decompressed module is called a decoder.

\subsubsection{Encoder}
The functionality of an encoder $\Psi(\cdot)$ is to produce a low-dimensional code to represent the original point cloud; that is,
\begin{equation}
\label{eq:encoder}
\mathbf{c}  \ = \  \Psi  \left( \S \right) \in \mathbb{R}^{C},
\end{equation}
where $C \ll 3N$, reflecting that $\mathbf{c} $ is a compact representation of the original point cloud.

\subsubsection{Decoder}
The functionality of a decoder $\Phi(\cdot)$ is to reconstruct a 3D point cloud that closely matches the original one; that is, the reconstructed  3D point cloud is
\begin{equation}
\label{eq:decoder}
\widehat{\S}  \ = \   \Phi  \left( \mathbf{c}  \right) \in \mathbb{R}^{M \times 3}.
\end{equation}
Note that we do not restrict the number of points in the reconstructed 3D point cloud $M$ to be the same with  the number of points in the original 3D point cloud $N$.

We use deep neural networks to design both the encoder and the decoder. 

\subsubsection{Loss function}
To push the reconstructed 3D point cloud to match the original  3D point cloud, we aim to minimize their difference. Given a set of $n$ 3D point clouds and a fixed code length $C$, the overall optimization problem is 
\begin{eqnarray}
\label{eq:loss}
 \min_{ \Psi(\cdot),  \Phi(\cdot)}  && \sum_{i \in 1}^n d(\S_i, \widehat{\S}_i)  
\\ \nonumber
 {\rm subject~to~}  &&  \mathbf{c}_i  \ = \  \Psi \left( \S_i \right) \in \R^{C},
\\ \nonumber
&& \widehat{\S}_i  \ = \  \Phi \left( \mathbf{c}_i  \right),
\end{eqnarray}
where $\S_i$ is the $i$th 3D point cloud in the dataset and $d(\cdot,  \cdot)$ is the distance metric that measures the difference between two point clouds. Here we consider the augmented Chamfer distance,
\begin{eqnarray}
  \label{eq:aug_chamfer}
  d(\S, \widehat{\S}) & = &
   \max  \bigg\{  \frac{1}{N} \sum_{\x \in \S}  \min_{ \widehat{\x} \in  \widehat{S}}  \left\|  \x - \widehat{\x} \right\|_2, 
   \\ \nonumber
   && ~~~~~~~   \frac{1}{M} \sum_{\widehat{\x} \in  \widehat{S}}  \min_{\x \in \S}  \left\|   \widehat{\x} - \x \right\|_2  \bigg\},
\end{eqnarray}
where the first term $\min_{j = 1, 2, \cdots, M }  \left\|  \x_i - \widehat{\x}_j \right\|_2$ measures the $\ell_2$ distance between  each 3D point 
in the original point cloud and its correspondence in the
reconstructed point cloud; the second term $ \min_{j = 1, 2, \cdots, N }  \left\|   \widehat{\x}_i  - \x_j \right\|_2$ measures the $\ell_2$ distance between  each 3D point  in the reconstructed point cloud and its correspondence in the original point cloud. The maximum operation outside the bracket
enforces the distance from the original point cloud to the reconstructed point cloud  and the distance vice
versa be small simultaneously. The augmented Chamfer distance is essentially the Hausdorff distance between two 3D point clouds. 
Compared to the original Chamfer distance, it is more robust to prevent some ill cases. For example, the original Chamfer distance is less informative when one 3D point cloud have only a few 3D points. This augmented Chamfer distance enforces the underlying manifold of the reconstruction to stay close to that of the original point cloud.  Since we use the minimum and average operations to remove the influence from the number of points,  the reconstructed 3D point cloud does not necessarily have the same number of points as in the original 3D point cloud. 

We use stochastic gradient descent to solve~\eqref{eq:loss}. Since we train the entire networks end-to-end, the performance of unsupervised learning depends on both the encoder and the decoder:  the encoder extracts sufficient information such that the decoder is able to reconstruct; on the other hand, a decoder uses specific structures to push the encoder to extract specific information. Since we can reconstruct the original 3D point cloud,  the code $\mathbf{c}$ preserves key features that describe 3D shapes of the original 3D point cloud. The code thus can be used in classification, matching and other related tasks. In this paper, we mainly consider the design of a decoder.

\section{Network Architecture}
\label{sec:networks}
In this section, we introduce the proposed networks; see an overview in Figure~\ref{fig:architecture}. The proposed networks follow the encoder-decoder framework. The encoder follows PointNet~\cite{2016pointnet}. The decoder includes three novel modules:  the folding module, the graph-topology-inference module, and the graph-filtering module.

\subsection{Encoder}
Here we mainly adopt the architecture of PointNet~\cite{2016pointnet}. The encoder mainly includes a cascade of pointwise multi-layer perceptrons (MLP).  For example, the first MLP maps a point from 3D space to a high-dimensional feature space. Since  all 3D points share the same weights in the convolution, similar points will map to similar positions in the feature space. We next use the max-pooling to remove the point dimension, preserving global features. We finally use MLPs to map  global features to codes. Mathematically, the encoder~\eqref{eq:encoder} is implemented as 
\begin{subequations}
\label{eq:encoder_impl}
\begin{eqnarray}
\label{eq:encoder_1}
\mathbf{c}_i  & = & {\rm MLP}^{(L_1)} \left( \x_i  \right) \in \mathbb{R}^{C}, ~~~{\rm for}~i = 1, \cdots, N, 
\\
\label{eq:encoder_2}
\mathbf{c}' & = & {\rm max pool}   \left( \{ \mathbf{c}_i  \}_{i=1}^N \right) \in \mathbb{R}^{C},
\\
\label{eq:encoder_3}
\mathbf{c}  & = & {\rm MLP}^{(L_2)} \left(\mathbf{c}'  \right) \in \mathbb{R}^{C},
\end{eqnarray}
\end{subequations}
where $\x_i \in \R^3$ is the $i$th 3D point, ${\rm MLP}^{(\ell)} \left( \cdot \right)$ denotes $\ell$ layers of MLPs and $\mathbf{c}_i $ is the feature representation of the $i$th 3D point. Step~\eqref{eq:encoder_1} uses a cascade of MLPs to extract point-wise features $\mathbf{c}_i$; Step~\eqref{eq:encoder_2} uses max-pooling to aggregate point-wise features and obtain global features $\mathbf{c}'$; and Step~\eqref{eq:encoder_3} uses a cascade of MLPs to obtain the final code $\mathbf{c}$.

\begin{table}[htb!]
  \begin{center}
    \begin{tabular}{ c  c  c }
      \hline   
      Input & Folding module only & Overall \\
      \hline 
      $\quad$&
      $\quad$&
      $\quad$\\
      & \includegraphics[trim={10cm 10cm 10cm 10cm}, clip=true , scale=0.09] {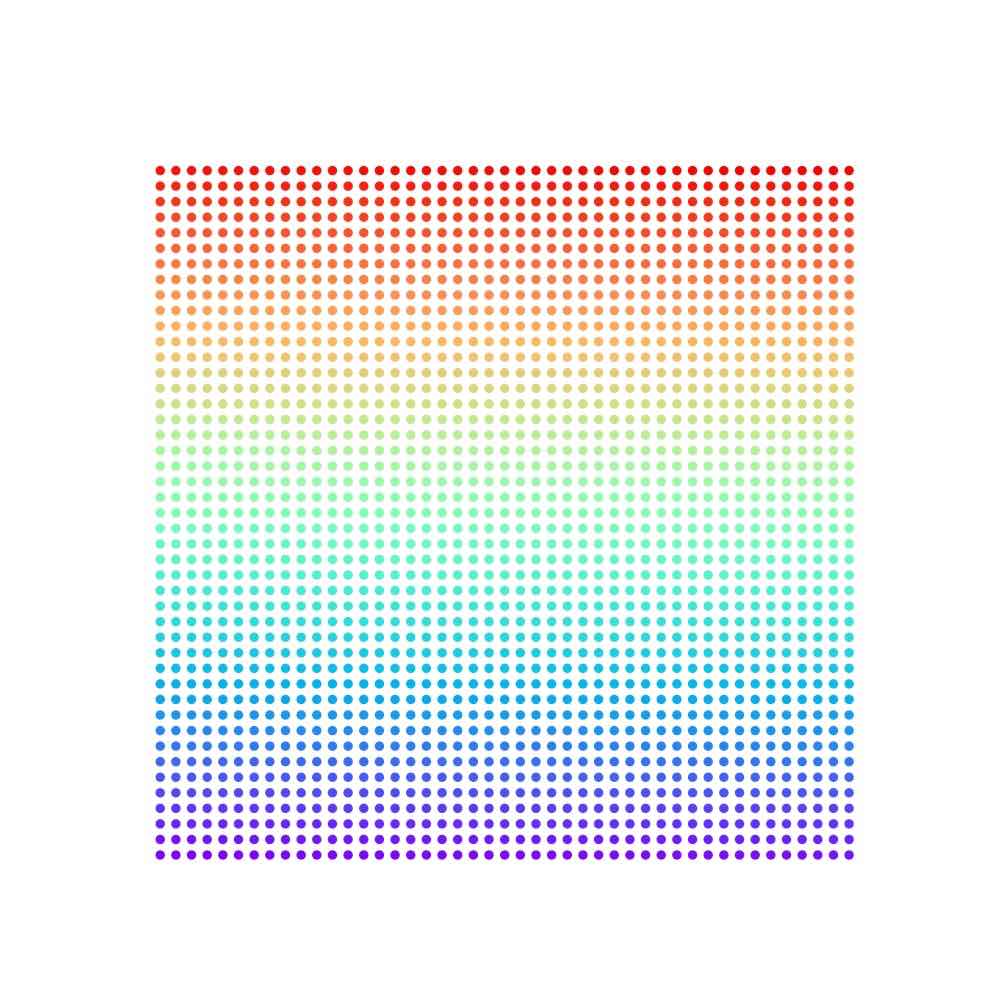} & \includegraphics[trim={10cm 10cm 10cm 10cm}, clip=true , scale=0.09] {figures/torus/grid.jpg}\\
      \includegraphics[trim={10cm 10cm 10cm 10cm}, clip=true , scale=0.045] {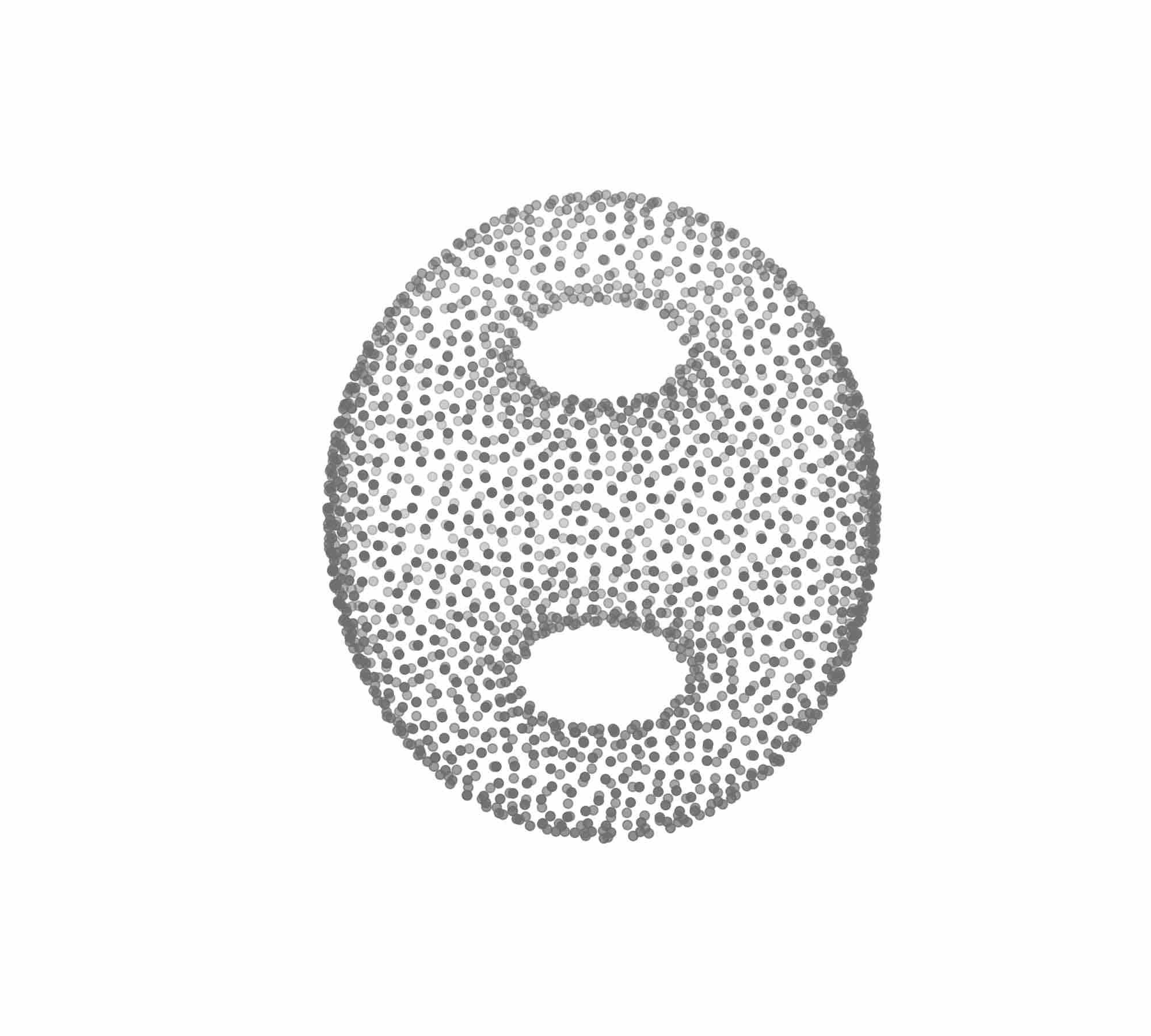} & \includegraphics[trim={10cm 10cm 10cm 10cm}, clip=true , scale=0.045] {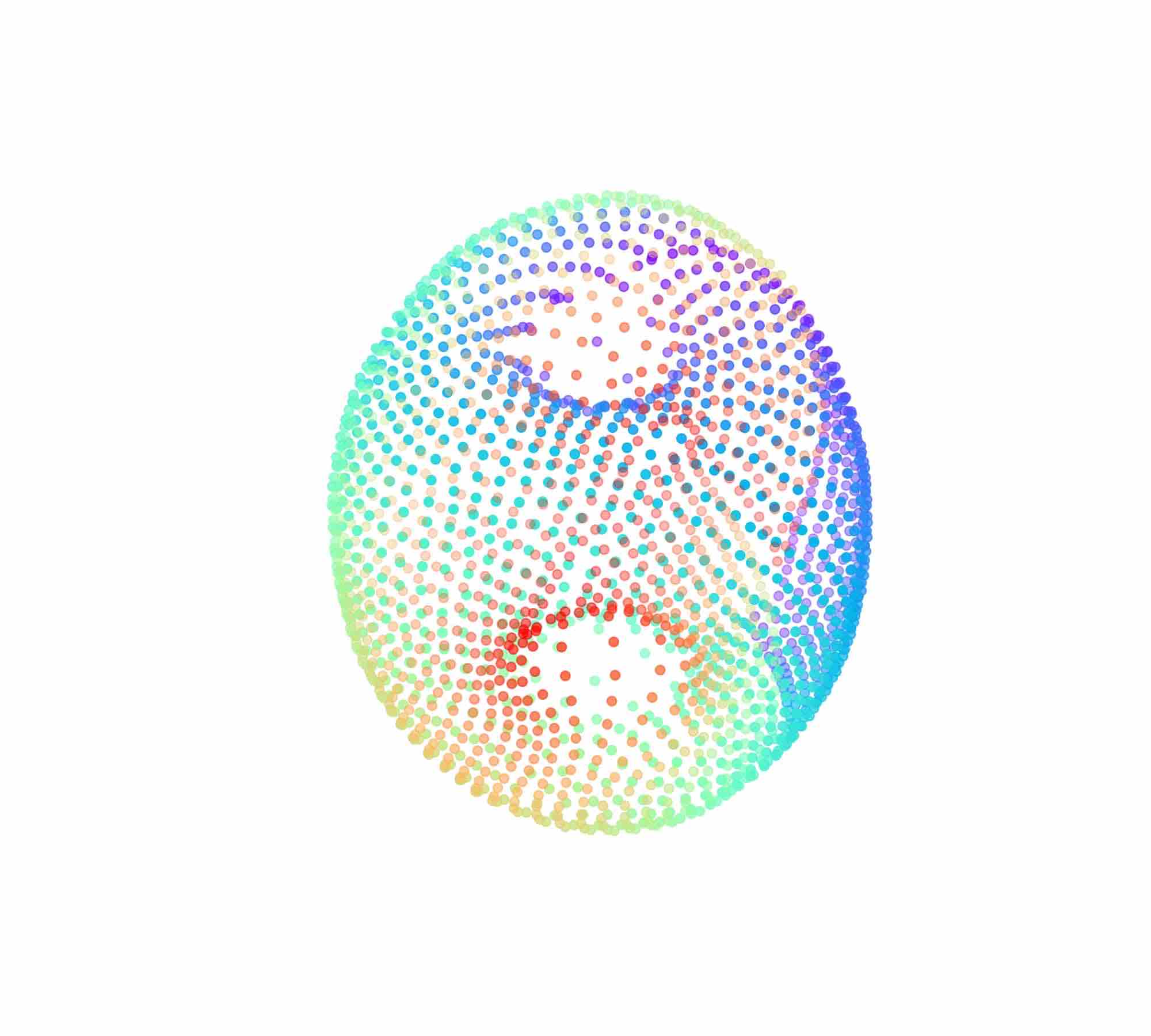} & \includegraphics[trim={10cm 10cm 10cm 10cm}, clip=true , scale=0.045] {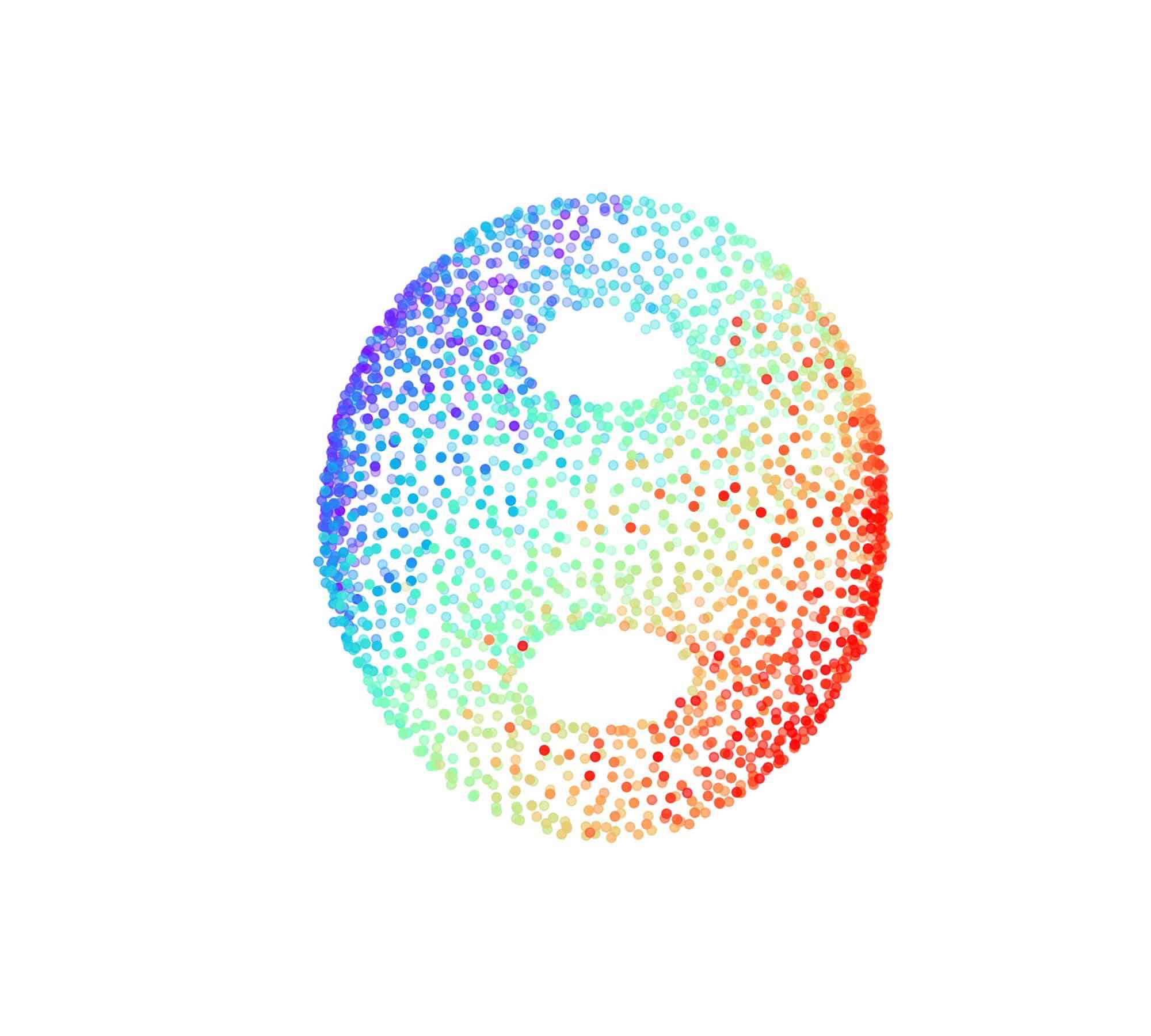}\\
      \includegraphics[trim={10cm 10cm 10cm 10cm}, clip=true , scale=0.045] {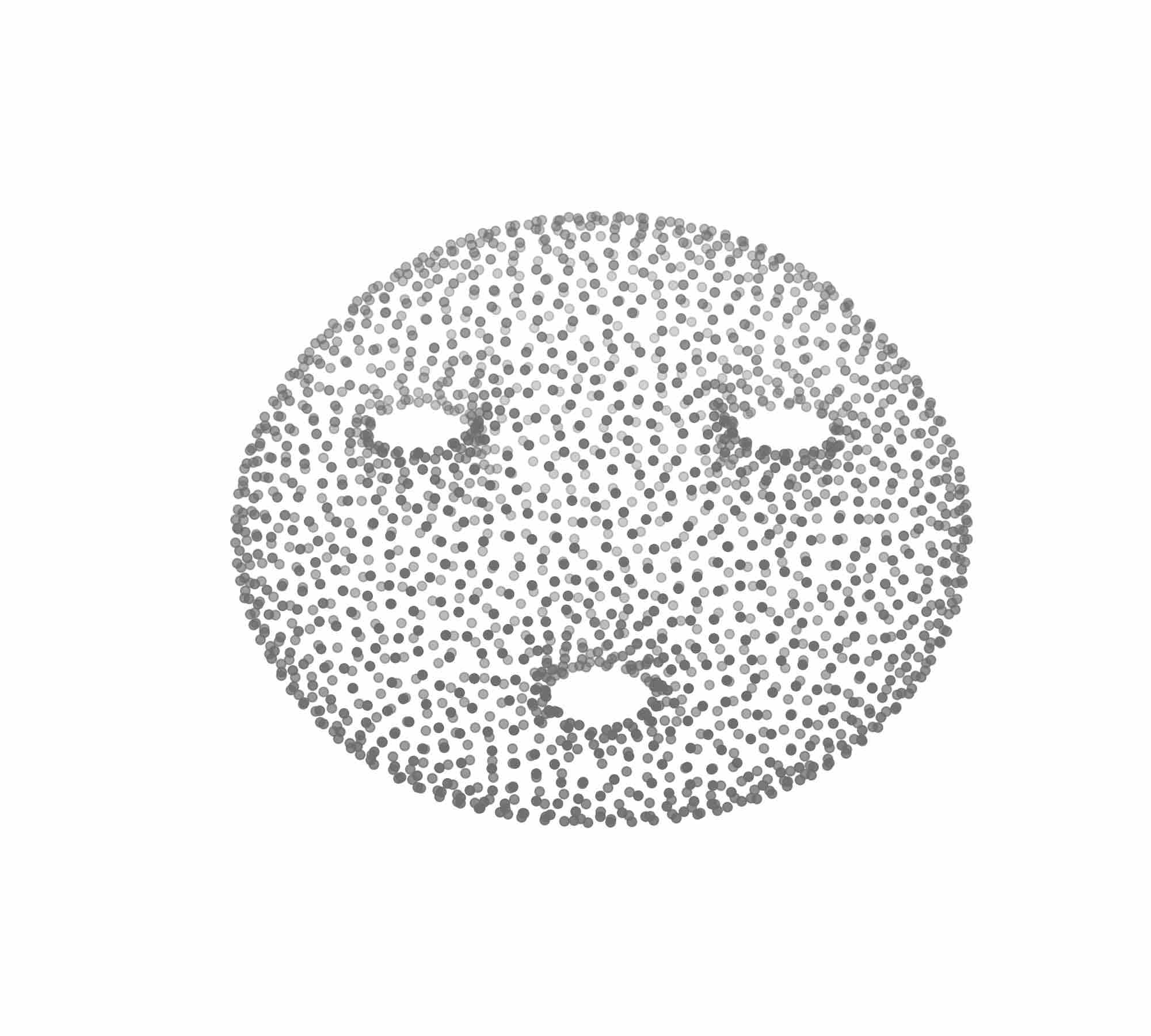} & \includegraphics[trim={10cm 10cm 10cm 10cm}, clip=true , scale=0.045] {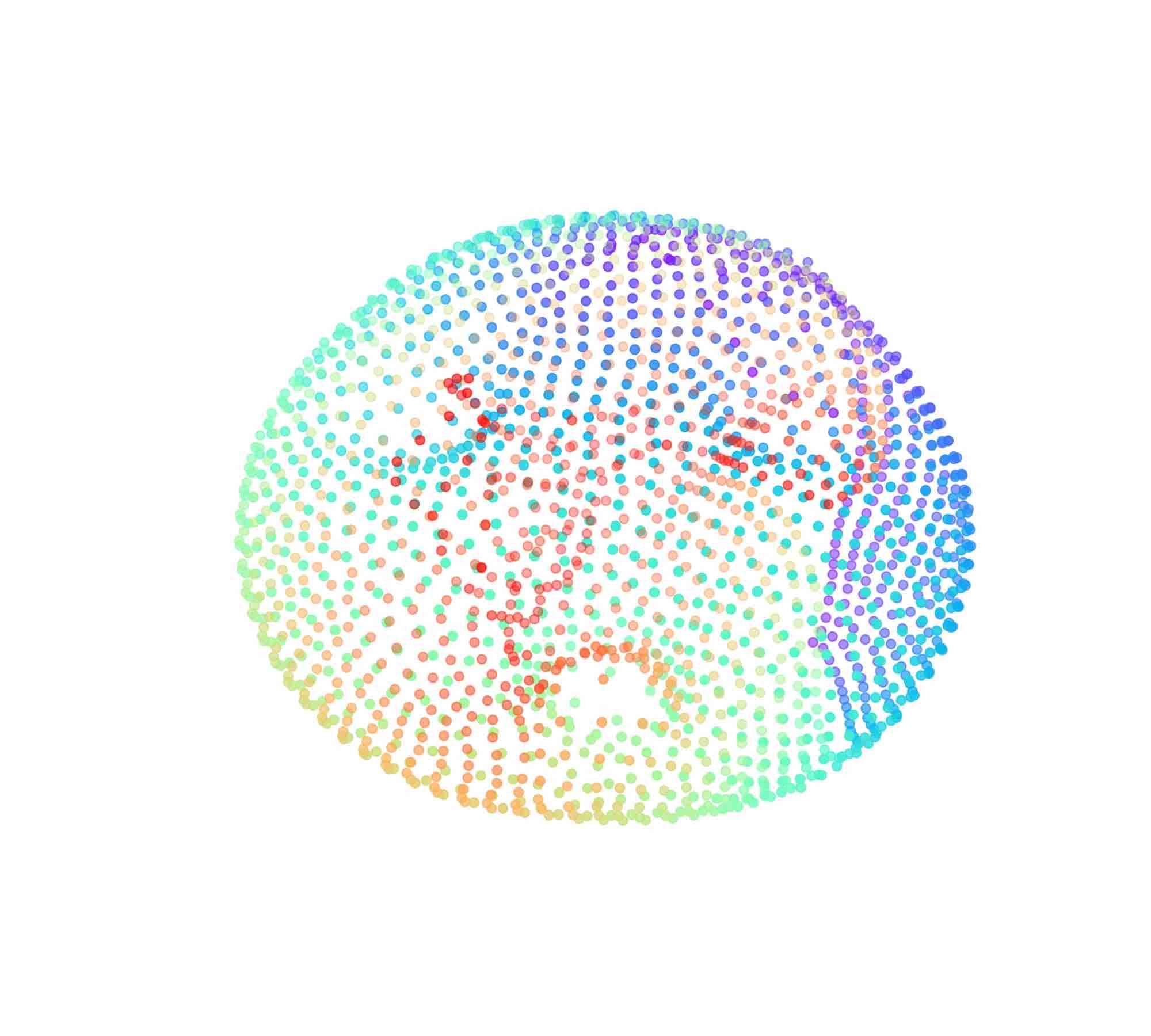} & \includegraphics[trim={10cm 10cm 10cm 10cm}, clip=true , scale=0.045] {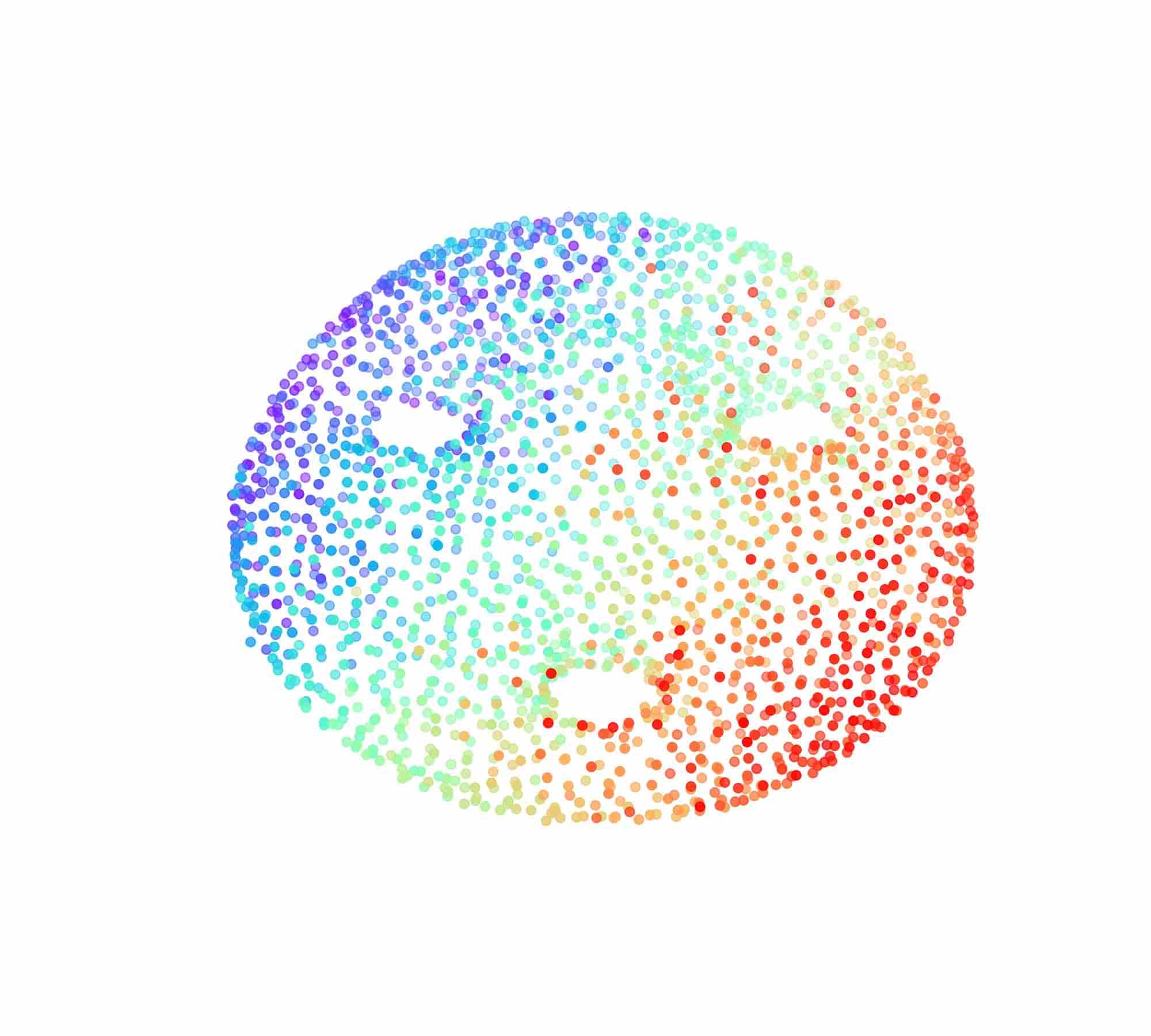}\\
      \hline
    \end{tabular}
  \end{center}
  \caption{\label{fig:torus_toy_example} \textbf{Only training the folding module cannot reconstruct  torus with high-order genus.} The first column shows the original point clouds sampled from tori generated in MeshLab \cite{meshlab}; the second column shows the reconstructions when we only train the folding module; the third column shows the reconstructions when we train all three modules together. The color associated with each point indicates the correspondence between a node in the 2D lattice and a 3D point. The smoothness of the color transition reflects the difficulty of the folding process. The reconstruction based on the folding mechanism only cannot capture the holes and the folding process is difficult to glue surfaces smoothly.}
\end{table}

\subsection{Decoder: Folding module}
\label{sec:folding_module}
The folding module decodes the code to form a coarse reconstructed 3D point cloud.  A straightforward way to  decode is to use fully-connect layers that directly map a code to a 3D point cloud~\cite{AchlioptasDMG:17}; however, it does not explore any geometric property of 3D point clouds and requires a huge number of training parameters\footnote{The decoding is to construct a mapping from the code space to the space of $3N$-dimensional point sets. The fully-connected decoding uses fully-connected layers to learn such a mapping, while our idea is to parameterize such a mapping by introducing a canonical 2D lattice.}. As mentioned in Section~\ref{sec:3DPoints}, 3D point clouds are intrinsically sampled from 2D surfaces lying in the 3D space. We then consider the reconstruction as folding from a 2D surface to a 3D surface and the folding mechanism is determined by the code produced by the encoder.

Let $\Z \in \mathbb{Z}^{M \times 2}$ be a matrix representation of  nodes sampled uniformly from a fixed regular 2D lattice and the $i$th row vector $\z_i \in \R^2$ be the 2D coordinate of the $i$th node in the 2D lattice. Note that $\Z$ is fixed and given for any 3D point cloud. It is used as a canonical base for the reconstruction and does not depend on the original point cloud.  The functionality of the folding module is to fold a 2D lattice to a surface in the 3D space. Since the code is trained in a data-driven manner, it preserves the folding mechanism. We thus can concatenate the code with each 2D coordinate and then uses MLPs to implement the folding process. Mathematically, the $i$th point after folding is
\begin{equation}
\label{eq:folding}
 \x'_i  \ = \  f_{\mathbf{c} }( \z_i )  =  {\rm MLP} \left( \left[   {\rm MLP} \left( \left[ \z_i,  \mathbf{c}  \right]  \right),  \mathbf{c}  \right]  \right) \in \mathbb{R}^{3},
\end{equation}
where the code $\mathbf{c}$ is the output of the encoder and $\left[  \cdot, \cdot \right] $ denotes the concatenation of two vectors. 
The folding function $f_{\mathbf{c} }( \cdot )$ consists of two-layer MLPs and the code is introduced in each layer to guide the folding process. We collect all the 3D points $ \x'_i $ to form the reconstruction $\S'  = \{ \x'_i \in \R^3~|~i= 1, \cdots, M \}$ with the corresponding matrix representation $\X' \in \R^{M \times 3}$. 

Intuitively, $f_{\mathbf{c} }( \cdot )$ is supposed to be a smooth function; that is, when two nodes are close in the 2D lattice, their correspondence after folding are also close in the 3D space. The smoothness makes the networks easy to train; however, when the 3D surfaces have a lot of curvatures and complex shapes, the smoothness of the 2D to 3D mapping limits the representation power. Table~\ref{fig:torus_toy_example} shows that only training the folding module cannot reconstruct tori with high-order genus. The color associated with each point indicates the correspondence between a node in the 2D lattice and a 3D point. The smoothness of the color transition reflects the difficulty of the folding process. We see that only training the folding module cannot capture the holes in tori and the networks cannot find an appropriate  way to fold. The reason behind is that the folding process implies the spatial smoothness, which can hardly construct an arbitrarily complex shape. Therefore, we consider $\X' \in \R^{M \times 3}$ a coarse reconstruction and we next propose two other modules to refine the reconstruction.

\subsection{Decoder: Graph-topology-inference module}
\label{sec:topology_learning_module}
The graph-topology-inference module decodes the code to reflect the pairwise relationships between 3D points.  As the initialization of 3D points used in the folding module, the 2D lattice sets a default and uniform connection pattern to each pair of 3D points; however, it cannot capture irregular connection patterns, especially a topology that is genus-wise different from a 2D plane.  To solve this, we learn a graph to capture irregular pairwise relationships, empowering the networks to refine a coarse reconstruction.

The learnt graph is initialized by the same 2D lattice used  in the folding module. The nodes are fixed and the edges are updated during the training process. The initial graph adjacent matrix $\Adj^{0}  \in \mathbb{R}^{M \times M}$ is 
\begin{equation*}
\Adj^{0}_{ij} \ = \ 
\begin{cases}
\frac{1}{Z_i} \exp(-\frac{\lVert \z_i - \z_j \rVert_2^2}{2\sigma^2}) & \text{if } \z_j \in \mathcal{N}_i, \\
0& \text{otherwise},
\end{cases}
\end{equation*}
where  $\z_i$ is the $i$th node in the canonical 2D lattice, the hyper-parameters $\sigma$ reflects the decay rate, $\mathcal{N}_i$ represents the $k$-nearest neighboring nodes of node $\z_i$ and a normalization term $Z_i = \sum_j \exp(-\lVert \z_i - \z_j \rVert_2^2/(2\sigma^2))$ ensures that $\sum_j \Adj^{0}_{ij} = 1$. Note that we use the same node set in the 2D lattice, but introduce more connections by considering $k$-nearest neighbors for each node, instead of restricting to 4. This can increase the reception field and make the model easier to train.

Since the code produced by the encoder preserves information of the original point cloud, we concatenate the code with each row of $\Adj^{0}$, and then uses MLPs to implement the graph topology inference. Mathematically, the $i$th row of the learnt graph adjacency matrix is obtained as
\begin{eqnarray}
\label{eq:adj}
 \Adj_i  & = &  g_{\mathbf{c} }( \z_i ) 
\\ \nonumber 
  & = & {\rm softmax} \left(  {\rm MLP} \left( \left[   {\rm MLP} \left( \left[  \Adj^{0}_i ,  \mathbf{c}  \right]  \right),  \mathbf{c}  \right]  \right) \right) \in \mathbb{R}^{M},
\end{eqnarray}
where $\Adj_i \in  \mathbb{R}^{M}$ is  the $i$th row of $\Adj$ and $\mathbf{c} $ is the code produced by the encoder. The softmax operation promotes sparse connections, which can help reduce overfitting. Note that (i) the last layer of MLP uses ReLU as the nonlinear activation function, ensuring that all the edge weights are nonnegative; (ii) the softmax ensures that the sum of each row in $\Adj$ is one; that is, $\Adj \one = \one \in \R^{M}$. We also recognize this as a random-walk matrix~\cite{Newman:10}, whose element reflects the transition probability jumping from one node to another. We then introduce one more step to make the graph adjacency matrix symmetric; that is,
$\Adj = (\Adj' + \Adj'^T)/2$.
This step ensures the pairwise relationships between 3D points are undirected. During the experiments, we see that this step can stabilize the training process.

The intuition behind this is that the initial graph adjacent matrix provides initial pairwise relationships in the 2D lattice and the latent code provides the mechanism to adjust the edge weights in the graph adjacent matrix and properly reflect the pairwise relationships. At the same time, during training, the graph-topology-inference module pushes the latent code to preserve the spatial relationships in the original 3D point cloud, guiding the evolvement of the graph adjacency matrix.

Note that the memory cost of learning a graph topology is $O(M^2)$. One potential solution of dealing with a large-scale 3D point cloud is to use the divide-and-conquer strategy~\cite{ChenNLL:19}; however, the further improvement is subject to the future work.

\subsection{Decoder: Graph-filtering module}
\label{sec:graph_filtering_module}
The graph-filtering module couples the previous two modules, refining a coarse reconstruction through a learnable graph topology. In Section~\ref{sec:3DPoints}, we mention that the underlying surface of 3D points is piecewise-smooth in the 3D spatial domain. The curvatures along the surface cause discontinuities; however, 3D coordinates are always smooth along the surface. In other words, when a graph perfectly reflects the underlying surface,  3D coordinates should be smooth in the graph domain. We thus design low-pass graph filters based on the learnt graph adjacency matrix to filter the coarse reconstruction and obtain the refined reconstruction. A graph filter allows each point to aggregate information from its neighbors to refine its 3D position. At the same time, the graph filtering pushes the networks to learn a graph topology that preserves smoothness on the graph for 3D points. Note that previous works design graph filters to process 3D point clouds, where both the graph topologies and the input 3D point clouds are fixed and known~\cite{ThanouCF:16,ZengCNPY:18,LozesEL:15,ChenTFVK:18}, while the proposed system uses a fixed graph filter to couple the graph topology inference and the folding process in an end-to-end learning system, where both the graph topologies and the input 3D point clouds are trainable.

Here we consider two types of graph filters.

\mypar{Graph-adjacency-matrix based filter design} Similarly to~\cite{ChenSMK:14a}, we can design smooth graph filters based on the learnt graph adjacency matrix~\eqref{eq:adj}; that is, 
\begin{equation}
\label{eq:adj_filter}
h(\Adj)     \ = \  \sum_{\ell=0}^{L-1} h_{\ell} \Adj^\ell \in \R^{M \times M},
\end{equation}
where $h_{\ell}$'s are filter coefficients and $L$ is the order of the graph filter. A larger $L$ indicates a bigger receptive field; however, a large $L$ slows down the training process. For simplicity, we thus set $L=1$ and $h_{0} = h_{1} = 0.5$. The final reconstruction is then
\begin{equation}
\label{eq:adj_filtering}
\widehat{\X}  \ = \  \frac{1}{2}(\Id + \Adj) \X' \in \R^{M \times 3},
\end{equation}
where $\X'$ is the coarse reconstruction obtained by the folding module. Correspondingly, the $i$th row of $\widehat{\X} $ is  $\widehat{\x}_i = (\x'_i + \sum_{j \in \mathcal{N}_i} \Adj_{ij} \x'_j ) / 2$, where $\x'_i$ is obtained in~\eqref{eq:folding}. The corresponding set representation is $\widehat{\S} = \{  \widehat{\x}_i \in  \R^{3}~|~i= 1, \cdots, M\}$. We recognize~\eqref{eq:adj_filtering} as the low-pass graph Haar filter~\cite{VetterliKG:12}. In traditional signal processing, the Haar filter is used to locally smooth time-series or images and remove noises. Here we use the graph Haar filter to locally smooth the coarse 3D point cloud to obtain a refined one. Theorem~\ref{thm:adj_smooth} shows the smoothness effect of the proposed low-pass graph Haar filter.

\mypar{Graph-Laplacian-matrix based filter design}
We can also convert the learnt graph adjacency matrix to 
the graph Laplacian matrix and then design smooth graph filters.  Let  $\mathcal{L}  = \widetilde{\D}  -  \widetilde{\Adj}$ be the graph Laplacian matrix, $\widetilde{\Adj} = \left( \Adj + \Adj^T \right)/2$ is  the symmetric graph adjacency matrix and $\widetilde{\D} = {\rm diag}  \left( \widetilde{\Adj}  \one \right)$ is the degree matrix.  The eigendecomposition of $\mathcal{L} $ is
\begin{eqnarray}
\label{eq:graph_lap}
  \mathcal{L} 
  \ = \    \Vm  \Sigma \Vm^T =  \Vm 
  \begin{bmatrix}
    \lambda_1 &  0 & \cdots & 0  \\
    0  &  \lambda_2  & \cdots & 0 \\
    \vdots &  \vdots & \ddots & \vdots  \\
    0 & 0 & \cdots &  \lambda_M
  \end{bmatrix}
  \Vm^{T},
\end{eqnarray}
where the eigenvector matrix $\Vm \in \R^{M \times M}$ is the graph Fourier basis and the eigenvalues ($\lambda_1 \leq \lambda_2 \leq \cdots \leq \lambda_M$) capture the graph frequencies. Since $\mathcal{L}  \one = 0 \cdot \one$, the first eigenvalue is zero ($\lambda_1 = 0$) and the corresponding eigenvector is all ones, representing the smoothest graph signal. A larger eigenvalue is associated with a less smooth graph signal. The graph Laplacian $\mathcal{L}$ captures the second-order differences and its inverse promotes the global smoothness~\cite{Chung:96}. 

We thus consider the graph filters as follows
\begin{equation}
\label{eq:lap_filter}
h( \mathcal{L} )    \ = \   \left( \mu \Id +   \mathcal{L}  \right)^{-1}   \in \R^{M \times M},
\end{equation}
where $\mu > 0$ is a hyperparameter to avoid computational issues. The graph-spectral representations of the graph filters are
\begin{eqnarray*}
  h( \mathcal{L} ) 
  \ = \  \Vm 
  \begin{bmatrix}
    \frac{1}{\mu} &  0 & \cdots & 0  \\
    0  &   \frac{1}{\mu + \lambda_2}  & \cdots & 0 \\
    \vdots &  \vdots & \ddots & \vdots  \\
    0 & 0 & \cdots &  \frac{1}{\mu + \lambda_M} 
  \end{bmatrix}
  \Vm^{T}.
\end{eqnarray*}
The final reconstruction is then $\widehat{\X}  \ = \    h( \mathcal{L} )  \X' \in \R^{M \times 3}.$

Note that since~\eqref{eq:lap_filter} involves a matrix inversion, the computational cost of the graph-Laplacian-matrix based filter is much more expensive than the graph-adjacency-matrix based filter~\eqref{eq:adj_filtering}.

\section{Theoretical Analysis}
In this section, we show the theoretical analyses about the proposed method.

\subsection{Reconstruction error}
The aim here is to show when the proposed encoder and decoder can lead to bounded reconstruction errors. Without loss of generality, we assume that all the 3D points lie in a unit cubic space.  The following theorem shows the existence of a pair of encoder and decoder to  reconstruct an arbitrary 3D point cloud.
\begin{myThm}
\label{thm:upper_bound}
Let $\mathcal{X}_n = \{  \S \subset [0, 1]^{3}, |\S| = n \}$ be the space of 3D point clouds with $n$ points. Then,  there exits a pair of encoder  $\Psi: \mathcal{X}_N \rightarrow \R^C$  and decoder  $\Phi: \R^C \rightarrow \mathcal{X}_C$ that satisfies
\begin{equation*}
 d \left(  \S,  \Phi \left( \Psi(\S) \right) \right)  \ \leq \  \frac{\sqrt{3}}{2 \sqrt[3]{C}}  \approx  \frac{0.866}{\sqrt[3]{C}}  ,  {\rm~for~all~} \S \in \mathcal{X}_N,
\end{equation*}    
where $d(\cdot, \cdot)$ is the augmented Chamfer distance~\eqref{eq:aug_chamfer}.
\end{myThm}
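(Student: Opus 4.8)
The plan is to prove the bound by exhibiting an explicit pair $(\Psi,\Phi)$ built from a regular grid quantization of the unit cube, and then to reduce the augmented Chamfer distance~\eqref{eq:aug_chamfer} to one elementary estimate, namely the distance from an arbitrary point of a subcube to that subcube's center. Since the theorem asserts only \emph{existence}, the encoder may be an arbitrary (even discontinuous) map, so I can let it transmit a combinatorial description of $\S$ rather than any geometric average.

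Concretely, assume first that $C$ is a perfect cube and set $m=\sqrt[3]{C}$ (in general take $m=\lfloor\sqrt[3]{C}\rfloor$ and absorb the rounding into the constant). I would partition $[0,1]^3$ into $m^3=C$ congruent axis-aligned subcubes of side $1/m$, with centers $\q_1,\dots,\q_C$. The encoder $\Psi$ sends $\S$ to the occupancy vector in $\R^C$ whose $\ell$-th entry flags whether the $\ell$-th subcube contains a point of $\S$. The decoder $\Phi$ reads this vector, outputs the centers of the occupied subcubes, and --- as $\Phi$ must return exactly $C$ points while at most $\min(N,C)\le C$ subcubes are occupied --- pads the output up to $C$ points by repeating one of those centers (or, if the $C$ points are required to be distinct, by arbitrarily slightly perturbed copies, which costs an arbitrarily small amount). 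Write $\widehat{\S}=\Phi(\Psi(\S))$.

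It then remains to bound the two averages inside the maximum in~\eqref{eq:aug_chamfer}. For the first, every $\x\in\S$ lies in some occupied subcube whose center lies in $\widehat{\S}$ at distance at most the half-diagonal $\tfrac{\sqrt3}{2m}$, so that average is $\le\tfrac{\sqrt3}{2m}$. For the second, every point of $\widehat{\S}$ is (a copy of) the center of an occupied subcube, hence has a point of $\S$ within $\tfrac{\sqrt3}{2m}$, so that average is $\le\tfrac{\sqrt3}{2m}$ as well; the maximum is therefore $\le\tfrac{\sqrt3}{2m}=\tfrac{\sqrt3}{2\sqrt[3]{C}}$.

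I expect the only genuinely subtle point to be forcing \emph{both} terms of the maximum to be controlled by the \emph{same} constant $\tfrac{\sqrt3}{2}$. This is precisely why I would reconstruct with subcube \emph{centers} instead of with sampled points of $\S$ --- two points inside a common cube can be a full diagonal $\tfrac{\sqrt3}{m}$ apart, which would double the constant --- and why the encoder must actually carry the occupancy pattern: a fixed decoder that always emitted all $C$ centers would drive the second term up to order $\sqrt3$ whenever $\S$ misses some cells (for instance a one-point cloud). Everything else --- that padding to exactly $C$ points does not hurt either term, and that for non-cube $C$ the floor function is what turns the equality into the stated approximation --- is routine bookkeeping.
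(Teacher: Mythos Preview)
Your proposal is correct and follows essentially the same voxelization-plus-occupancy construction as the paper's own proof. If anything, you are a bit more careful: you explicitly use subcube \emph{centers} (the paper writes the proxy as a corner $[i/K,j/K,k/K]$ but still claims the half-diagonal bound), and you address the padding needed so that $\Phi$ lands in $\mathcal{X}_C$, which the paper glosses over.
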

\begin{proof}
We  proof by construction. We partition the 3D space into equally-spaced nonoverlapping voxels along each of three dimensions. We evenly split the unit 3D space $[0, 1]^{3}$ into $K^3$ 3D voxels. The $(i, j, k)$th voxel represents a 3D space, 
\begin{eqnarray*}
\V_{i, j, k}  \ = \ \{ (x, y, z)  |   (i-1) / K   \leq & x & < i / K, 
\\ \nonumber
 (j-1) / K  \leq & y  & <  (j-1) / K ,
\\ \nonumber
 (k-1) / K  \leq &  z  & <  (k-1) / K \}.
\end{eqnarray*}

Let $g_{i,j,k} (\x) = \max_{\z \in \V_{i, j, k} } \exp\left( - \left\| \x - \z \right\|_2^2 ) \right) \in \R$ be  a soft indicator function, reflecting the presence of the point $\x$ in the $(i, j, k)$th voxel. Let $g(\x) = [g_{1,1,1}(\x), \cdots, g_{K,K,K}(\x) ] \in \R^{K^3}$, reflecting the influence of the point $\x$ to the unit space. Let the encoder work as
\begin{equation}
\label{eq:encoder_construction}
\c  \ = \ \Psi(\S) = [\Psi_{1,1,1} (\S), \cdots, \Psi_{K,K,K} (\S)] \in \R^{C},
\end{equation}
whose element $\Psi_{i,j,k} (\S) =  \max_{\x \in \S} g_{i,j,k} (\x) \in \R$ indicates the occupancy of the $(i,j,k)$th voxel by 3D points in $S$ with $C = K^3$. By construction, the encoder discretizes a 3D point cloud to voxels and the code reflects the voxel occupancy. 

Let the decoder work as
\begin{equation}
\label{eq:decoder_construction}
\widehat{\S}  \ = \ \Phi(\c) =  \{ \q_{i,j,k} \in [0,1]^3~|~\c_\ell = \Psi_{i,j,k} (\S) = 1 \},
\end{equation}
where $\q_{i,j,k}  =  [ i/K, j/K, k/K ] \in [0,1]^3$ is the 3D point, which acts as the  proxy of the $(i,j,k)$ voxel. In~\eqref{eq:decoder_construction}, we use two ways to index the code, indicating the one-to-one correspondence between $\ell$ and a triple of $i,j,k$. The code works as a lookup table:
When an element in the code is one, the corresponding voxel is activated. In other words, the index reflects the corresponding voxel and the value is a mask to select the activated voxel.

Now, we bound $d(\S, \widehat{\S})$. Because of the discretization, for any point $\x$ in $\S$, we can find a triple of $i,j,k$ to satisfy $\left\| \x - \q_{i,j,k} \right\|_2 \leq \sqrt{3}/(2K)$; at the same time, for any point $\q_{i,j,k} \in  \widehat{\S}$, we can find at least one point $\x \in \S$ to satisfy $\left\| \q_{i,j,k} - \x \right\|_2 \leq \sqrt{3}/(2K)$; see Figure~\ref{fig:proof_max_error} (a). Then, we have
\begin{eqnarray*}
  d(\S, \widehat{\S}) & = &
   \max  \bigg\{  \frac{1}{N} \sum_{\x \in \S}  \min_{ \widehat{\x} \in  \widehat{S}}  \left\|  \x - \widehat{\x} \right\|_2,   \frac{1}{M} \sum_{\widehat{\x} \in  \widehat{S}}  \min_{\x \in \S}  \left\|   \widehat{\x} - \x \right\|_2  \bigg\}
   \\
   & = & 
      \max  \bigg\{  \max_{\x \in \S} \min_{ i,j,k }  \left\|  \x - \q_{i,j,k} \right\|_2, 
    \\
    && ~~~~~~~  \max_{i,j,k} \min_{\x \in \S}  \left\|  \q_{i,j,k} - \x \right\|_2  \bigg\}
      \\
   & = & 
      \max  \bigg\{ \frac{\sqrt{3}}{2K},  \frac{\sqrt{3}}{2K} \bigg\}  =  \frac{\sqrt{3}}{2K} = \frac{\sqrt{3}}{2 \sqrt[3]{C}}. 
\end{eqnarray*}
\end{proof}

Theorem~\ref{thm:upper_bound} constructs a naive pair of encoder and decoder to show the upper bound of the reconstruction error. To achieve this upper bound, the encoder only needs to discretize the 3D space, such that each point can be mapped to the corresponding voxel;  and the decoder only needs to learn the index-to-voxel correspondence, such that each element in the code can represent the corresponding voxel. Note that the proposed structures of the encoder~\eqref{eq:encoder_impl} and the decoder~\eqref{eq:folding} are consistent with the hypothetical constructions, which is shown in details in the following corollary.

\begin{myCorollary}
The proposed encoder~\eqref{eq:encoder_impl} implements the constructed encoder~\eqref{eq:encoder_construction}. The proposed folding module~\eqref{eq:folding} implements the decoder~\eqref{eq:decoder_construction}. The proposed encoder followed by the proposed decoder leads to the same upper bound as shown in Theorem~\ref{thm:upper_bound}.
\end{myCorollary}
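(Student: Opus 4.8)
The plan is to check, module by module, that the trainable architecture of Section~\ref{sec:networks} can realize the hand-crafted pair $(\Psi,\Phi)$ from the proof of Theorem~\ref{thm:upper_bound} up to arbitrary accuracy, so that the chain of equalities bounding $d(\S,\widehat{\S})$ there applies verbatim. The only tool needed is the universal approximation property of MLPs: every continuous map on a compact (here in fact finite) domain is approximable by an MLP to any precision.

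For the encoder, I would pick the point-wise MLP in~\eqref{eq:encoder_1} so that it approximates the fixed continuous map $\x \mapsto g(\x) = [g_{1,1,1}(\x),\ldots,g_{K,K,K}(\x)] \in \R^{K^3}$, the vector of soft voxel indicators used in the proof, with code length $C = K^3$; this is legitimate since $g_{i,j,k}(\x) = \exp(-\mathrm{dist}(\x,\bar{\V}_{i,j,k})^2)$ is continuous on $[0,1]^3$. The max-pooling in~\eqref{eq:encoder_2} then computes coordinate-wise $\max_{\x\in\S} g_{i,j,k}(\x) = \Psi_{i,j,k}(\S)$, which is exactly~\eqref{eq:encoder_construction}, and the trailing MLP~\eqref{eq:encoder_3} can be taken to approximate the identity (or, if one wants the code to be literally binary as in~\eqref{eq:decoder_construction}, a thresholding map: legitimate because the pooled values equal $1$ on occupied voxels and are bounded away from $1$ elsewhere for point clouds in general position with respect to the voxel grid). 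Hence the proposed encoder realizes the constructed one.

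For the decoder, I would arrange the canonical 2D lattice $\Z$ to carry $M = C = K^3$ nodes under a fixed bijection $\ell \leftrightarrow (i,j,k)$ and choose the folding function so that~\eqref{eq:folding} approximates the map sending node $\z_\ell$ to the voxel proxy $\q_{i,j,k}$ of~\eqref{eq:decoder_construction} when $\mathbf{c}_\ell = 1$ and to the proxy of the nearest occupied voxel when $\mathbf{c}_\ell = 0$. This target depends only on the pair $(\z_\ell,\mathbf{c})$ and lives on a finite domain, so it is approximable; moreover the nested form $\mathrm{MLP}([\mathrm{MLP}([\z_i,\mathbf{c}]),\mathbf{c}])$ in~\eqref{eq:folding} is at least as expressive as a single MLP on $(\z_i,\mathbf{c})$ --- let the outer block ignore its extra copy of $\mathbf{c}$ --- so it suffices. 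The output points coming from occupied-voxel nodes then reproduce $\widehat{\S}$ of~\eqref{eq:decoder_construction}, while the remaining output points are proxies that are themselves within $\sqrt{3}/(2K)$ of a point of $\S$, so both terms of the augmented Chamfer distance~\eqref{eq:aug_chamfer} stay $\le \sqrt{3}/(2K)$, exactly as in the proof of Theorem~\ref{thm:upper_bound}. Finally I would note that the graph-topology-inference and graph-filtering modules can be made approximately inert --- e.g.\ drive the softmax in~\eqref{eq:adj} to concentrate so that $\Adj \approx \Id$, making the Haar filter $\tfrac{1}{2}(\Id+\Adj)$ in~\eqref{eq:adj_filtering} act as the identity --- so the full decoder inherits the bound $d(\S,\Phi(\Psi(\S))) \le \sqrt{3}/(2\sqrt[3]{C})$.

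The substance is not the approximation step itself but reconciling two mismatches between the idealization and the actual modules: the pooled code is only approximately $\{0,1\}$-valued near voxel boundaries, and the folding module emits a fixed number $M$ of points whereas~\eqref{eq:decoder_construction} emits a variable number. I would dispatch the first with a general-position assumption on $\S$ (or by absorbing a small $\epsilon$, harmless since every claim here is already up to MLP approximation error) and the second with the ``duplicate onto the nearest occupied proxy'' device above, checking that it does not enlarge the Hausdorff-type distance. Everything else is direct substitution into the argument already established for Theorem~\ref{thm:upper_bound}.
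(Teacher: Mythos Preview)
Your proof is correct and takes a genuinely different route from the paper's. The paper gives an explicit layer-by-layer construction for the folding MLP: it augments the code with an arbitrary point $\mathbf{x}_0\in\S$ so that the decoder input is $[\mathbf{c},\mathbf{x}_0,\z_m]$, then builds five specific layers (a hard thresholding layer, a linear expansion via a Kronecker-structured feature map, two selector layers borrowed from Theorem~3.1 of~\cite{YangFST:18}, and a final selector) so that every unoccupied voxel is mapped to the single point $\mathbf{x}_0$, which already lies in $\S$ and hence does not affect the augmented Chamfer distance. You instead invoke universal approximation on the finite post-thresholding domain and route each unoccupied node to the proxy of its nearest \emph{occupied} voxel; this obviates both the need to append $\mathbf{x}_0$ to the code and the dependence on an external selector lemma. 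Your argument is more self-contained and handles the ``fixed $M$ versus variable output size'' mismatch a shade more cleanly, while the paper's is more explicit about the actual weights but leans on an outside construction. Your extra remark about rendering the graph modules inert (driving $\Adj\approx\Id$ so that the Haar filter acts as the identity) is not in the paper's proof, which treats only the folding module; it is correct but goes slightly beyond what the corollary as stated requires.
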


\begin{proof}
Here we mainly prove the decoder part. Specifically, we aim to show the following: Let $M$ be the number of output points and $\mathbf{c}_0  = [\mathbf{c},\mathbf{x}_0]^\top$ be the code, where the code $\mathbf{c}$ follows~\eqref{eq:encoder_construction} and $\mathbf{x}_0 \in \R^3$ is an arbitrary 3D point in the original point cloud $\S$. Then, we use the folding module~\eqref{eq:folding} to decode $\mathbf{c}_0$ and  reconstruct a 3D point cloud $\widehat{\S}$ to satisfy that $d(\S,\widehat{\S}) \leq \sqrt{3}/{(2K)}$.

Now we explicitly provide the coefficients of each layer in the MLP. We denote by $\bm{\theta}_l$ the input and output to each layer. Note that by Definition 1 of \cite{YangFST:18}, the code is concatenated with $M$ pairs of two dimensional grid coordinates $\z_m = (x_m,y_m)$, $m=1,2,\ldots,M$, and all of the $M$ concatenated vectors are fed into the same MLP in parallel. In our proof, we assume that the number of output points $M$ is the same as the number of 3D grid points $K^3$, i.e., $M=K^3$. The input to the first layer of the MLP is a vector of length $K^3+5$, i.e., $\bm{\theta}_1 = [\mathbf{c},\mathbf{x}_0,\z_m]^\top$, where recall that $\mathbf{x}_0$ is a point in the original point cloud $S$. Note that the index $m$ represents the $m$th 3D voxel, and that also corresponds to the ($i$,$j$,$k$)-th grid point in Theorem~\ref{thm:upper_bound}. In the proof below, we will sometimes refer to the index $m$ and the ($i$,$j$,$k$)-th grid point exchangeably.

The first layer is a simple nonlinear layer that only operates on the first $K^3$ elements on the input $\bm{\theta}_1$ to change the value to a binary 1 or 0, i.e., $f_1(x) = x$ if $x\geq 1$. Therefore, after the first layer, the output $\bm{\theta}_2$ represents the occupancy of the 3D voxel grid of size $K^3$, concatenated by one point $\mathbf{x}_0$ in the original point cloud $S$ and the 2D-grid point $\z_m$.

The second layer is a linear layer that maps the first $K^3$ elements in the input $\bm{\theta}_2$ to $4K^3$ elements, and does not change the concatenated values of the last 5 entries. The feature map is the $4K^3$-by-$K^3$ matrix
\begin{equation}
 \mathbf{F}_2 = \mathbf{I}_{K^3}\otimes [1,0,0,0]^\top,  
\end{equation}
where $\mathbf{I}$ is the identity matrix, and $\otimes$ represents the Kronecker product. The bias vector is 
\begin{equation}
    \mathbf{b}_2 = [\mathbf{b}_{2,1},\mathbf{b}_{2,2},\ldots,\mathbf{b}_{2,K^3}]^\top,
\end{equation}
where each $\mathbf{b}_{2,m}$ is a length-4 vector $[0,\mathbf{q}_m]$ and the length-3 vector $\mathbf{q}_m = \mathbf{q}_{i,j,k}$, i.e., the ($i$,$j$,$k$)-th 3D grid point. We apply this feature map and the bias vector to the first $K^3$ entries in the input $\bm{\theta}_2$ (the total length of which is $K^3+5$), and the output is a vector of length $4K^3+5$. The output vector can be written as
\begin{equation}
    \bm{\theta}_3 = [\bm{\theta}_{3,1},\bm{\theta}_{3,2},\bm{\theta}_{3,3},\ldots,\bm{\theta}_{3,K^3},\mathbf{x}_0,\mathbf{u}_m],
\end{equation}
where $\bm{\theta}_{3,m} = [f_1(\Psi_{i,j,k}(\S)),\mathbf{q}_{i,j,k}]$, where recall that $f_1(\Psi_{i,j,k}(\S))$ means the binary value that indicates whether the ($i$,$j$,$k$)-th 3D voxel has been occupied or not. 

The third and the fourth layers are exactly the same as Theorem 3.1 in~\cite{YangFST:18}. The readers are referred to~\cite{YangFST:18} and the proof therein for details. The Theorem 3.1 in~\cite{YangFST:18} shows that one can use the grid point $\mathbf{u}_m$ to select the $m$th segment in a vector of length $3M$, and get a segment of size 3. Therefore, we can use exactly the same construction to get the $m$th segment of the first $4K^3$ elements in $\bm{\theta}_3$. Therefore, after the third and the fourth layer, the input to the fifth layer can be written as
\begin{equation}
    \bm{\theta}_5 = [f_1(\Psi_{i,j,k}(\S)),\mathbf{q}_{i,j,k},\mathbf{x}_0],
\end{equation}
where $f_1(\Psi_{i,j,k}(\S))$ indicates whether the ($i$,$j$,$k$)-th 3D voxel has been occupied or not, $\mathbf{q}_{i,j,k}$ means the position of the ($i$,$j$,$k$)-th 3D voxel, and $\mathbf{x}_0$ means one point in the original point cloud.

The fifth-layer can choose $\mathbf{q}_{i,j,k}$ or $\mathbf{x}_0$ based on whether $f_1(\Psi_{i,j,k}(\S))=1$ or not. One may wonder how we can use an MLP to make this selection. One way is to simply let the output be $\mathbf{y}_m = f_1(\Psi_{i,j,k}(\S))\cdot\mathbf{q}_{i,j,k}+(1-f_1(\Psi_{i,j,k}(\S)))\cdot\mathbf{x}_0$. However, this is not an MLP because it contains multiplication. In fact, Theorem 3.1 in~\cite{YangFST:18} again provides the construction: one can use an MLP to select one input in the vector to be the output. Thus, the second way is to construct a selecter-MLP as in Theorem 3.1 in~\cite{YangFST:18} to get the output.

The output of the MLP is thus either the grid point $\mathbf{q}_{i,j,k}$, or the point $\mathbf{x}_0$ that is already in the original point cloud. Therefore, the final output is the same as in Theorem~\ref{thm:upper_bound} plus one extra point $\mathbf{x}_0$ that is already in the original point cloud, which does not change the Chamfer distance. This concludes the proof.
\end{proof}
The intuition behind the proof is that the folding module ~\ref{sec:folding_module} can be represented as
\begin{equation*}
\Phi_{\Z}(\c) =  \{ f(\z_{\ell}) \in [0,1]^3~|~\c_\ell = 1, \z_{\ell} \in \Z, |\c| = |\Z| = C \},
\end{equation*}
where $\z_\ell  \in \mathbb{Z}^2$ is a node sampled from a 2D lattice $\Z$ and $f: \mathbb{Z}^2 \rightarrow [0,1]^3$.  It uses $\z_{\ell}$ to initialize $\q_{i,j,k}$in~\eqref{eq:decoder_construction} and only needs to learn the mapping function $f(\cdot)$ that lifts 2D points to 3D points. While LatentGAN~\cite{AchlioptasDMG:17} uses fully connected layers as the decoder, which has to learn $\q_{i,j,k}$ from scratch.

\begin{figure}[thb]
  \begin{center}
    \begin{tabular}{cc}
    \includegraphics[width=0.42\columnwidth]{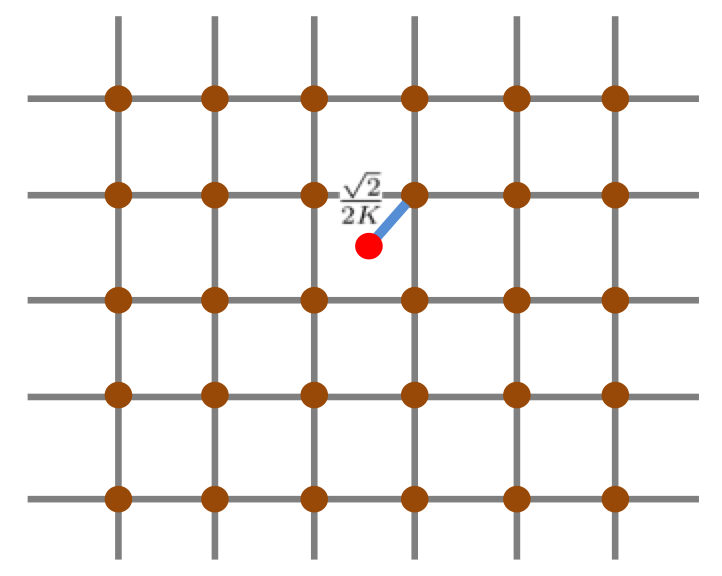}     & \includegraphics[width=0.42\columnwidth]{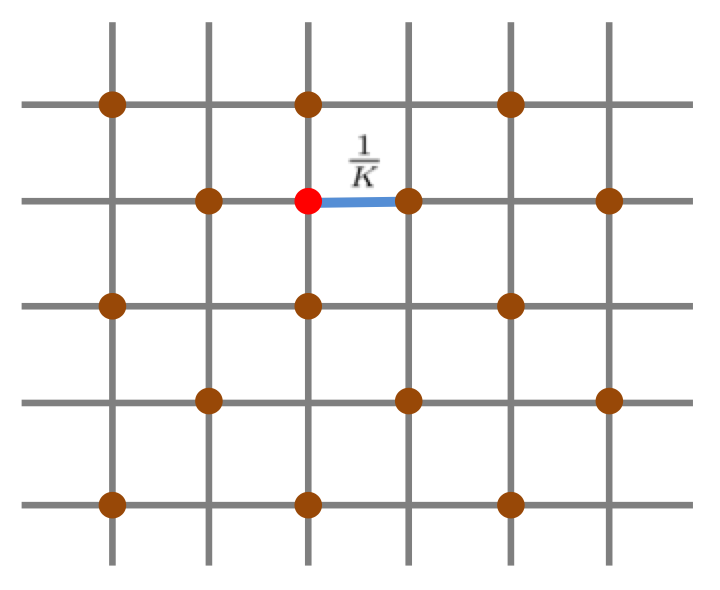}   
    \\
    {\small (a)  Error in Theorem~\ref{thm:upper_bound}.} &  {\small (b) Error in Theorem~\ref{thm:upper_bound_smooth}.}
  \end{tabular}
\end{center}
\caption{\label{fig:proof_max_error} 2D Illustration of Theorems~\ref{thm:upper_bound} and~\ref{thm:upper_bound_smooth}. As shown in both theorems, we discretize the space. For simplicity, we show 2D space, instead of 3D.  The code in Theorem~\ref{thm:upper_bound} preserves the occupancy of all voxels. Plot (a) shows the maximum error is $\sqrt{2}/(2K)$. The code in Theorem~\ref{thm:upper_bound_smooth} only preserves the occupancy of every other voxel. Plot (b) shows the maximum error is $1/K$. }
\end{figure}

The following theorem further shows that under the smoothness assumption, a filtering step refines the coarse 3D point clouds and achieves the better reconstructions.
\begin{myThm}
\label{thm:upper_bound_smooth}
Let $\mathcal{X}_n = \{  \S \subset [0, 1]^{3}, |\S| = n \}$ be the space of 3D point clouds with $n$ points. Let $\Psi: \mathcal{X}_N \rightarrow \R^C$  be the encoder defined in~\eqref{eq:encoder_construction}. Let $\widetilde{\mathcal{X}}_n \subset \mathcal{X}_n$ satisfy that for all $\S \in  \widetilde{\mathcal{X}}_n$,
\begin{eqnarray}
\label{eq:smooth}
&& \min_{\delta_i=\{0,1\}, \delta_j=\{0,1\}, \delta_k=\{0,1\}}  \Psi_{i+\delta_i,j+\delta_j,k+\delta_k}(\S)  \leq 
\\ \nonumber
 && \Psi_{i,j,k}(\S)  \leq  \max_{\delta_i=\{0,1\}, \delta_j=\{0,1\}, \delta_k=\{0,1\}}  \Psi_{i+\delta_i,j+\delta_j,k+\delta_k}(\S).
\end{eqnarray}
Then, there exits  a coarse decoder $\Gamma: \R^C \rightarrow \mathcal{X}_C$ and a filter $h: \mathcal{X}_C \rightarrow \widetilde{\mathcal{X}}_C$ that satisfies
\begin{equation*}
 d \left(  \S,  h \left( \Gamma \left( \Psi(\S) \right) \right) \right) \  \leq \  \frac{1}{\sqrt[3]{2C}}  \approx  \frac{0.7937}{\sqrt[3]{C}} ,  {\rm~for~all~} \S \in \widetilde{\mathcal{X}}_N,
\end{equation*}    
where $d(\cdot, \cdot)$ is the augmented Chamfer distance~\eqref{eq:aug_chamfer}.
\end{myThm}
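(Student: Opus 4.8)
The plan is to re-run the constructive argument of Theorem~\ref{thm:upper_bound}, but on a grid twice as fine while spending the same $C$ code entries, paying for the missing occupancies through the filter $h$ under the smoothness hypothesis~\eqref{eq:smooth}. Concretely, I would set $\widetilde K = \sqrt[3]{2C}$ and partition $[0,1]^3$ into $\widetilde K^{3} = 2C$ equal voxels $\V_{i,j,k}$ exactly as in the proof of Theorem~\ref{thm:upper_bound}, but let the encoder (the construction of~\eqref{eq:encoder_construction}, restricted) record the soft occupancy only on the checkerboard sub-lattice $\mathcal{C} = \{(i,j,k)\,:\, i+j+k \text{ even}\}$, which has exactly $|\mathcal{C}| = C$ voxels --- this is the ``every other voxel'' code of Figure~\ref{fig:proof_max_error}(b). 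The coarse decoder $\Gamma$ then does on this sub-lattice what~\eqref{eq:decoder_construction} does on the full grid: it returns the set of voxel centers $\q_{i,j,k}$ for the occupied $(i,j,k)\in\mathcal{C}$, a point cloud in $\mathcal{X}_C$ whose points sit on a scaled copy of the face-centered-cubic ($D_3$) lattice.

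Next I would instantiate the filter $h$ as a low-pass graph filter on the sub-lattice graph --- the natural candidate being the graph Haar filter $\tfrac12(\Id+\Adj)$ of~\eqref{eq:adj_filtering}, though any smoothing/interpolation operator mapping into $\widetilde{\mathcal X}_C$ works for an existence claim --- and prove two facts. First, that $h(\Gamma(\Psi(\S)))$ remains in $\widetilde{\mathcal{X}}_C$, i.e. the filtered cloud still obeys the min/max sandwiching of~\eqref{eq:smooth}; this should follow because replacing a value by a convex combination of it and its neighbors cannot create a new local extremum. Second, and this is the heart of the bound, that the filtered cloud has a point within $1/\widetilde K$ of every $\x\in\S$, and conversely every output point within $1/\widetilde K$ of $\S$. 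I would split on the voxel $\V_{i,j,k}\ni\x$: if $(i,j,k)\in\mathcal{C}$ is occupied, its center is within the half-diagonal $\sqrt3/(2\widetilde K)<1/\widetilde K$; if $(i,j,k)\notin\mathcal{C}$, the smoothness hypothesis~\eqref{eq:smooth} forces a nearby occupied sub-lattice voxel, and the filtering step is precisely what nudges a reconstructed point into the neighbourhood of $\V_{i,j,k}$. The worst case is the deep hole of the $D_3$ lattice of occupied centers, whose covering radius is exactly $1/\widetilde K$; feeding $\widetilde K = \sqrt[3]{2C}$ into the augmented Chamfer distance~\eqref{eq:aug_chamfer} then yields $d\big(\S,\,h(\Gamma(\Psi(\S)))\big)\le 1/\sqrt[3]{2C}$.

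The main obstacle is the tension between the output cardinality budget and coverage: the filter may emit at most $C$ points, only half of the $2C$ fine voxels, so one cannot simply list every occupied fine-voxel center; one must instead argue that smoothing the $C$ kept-center points redistributes them (or, where budget is freed because some kept voxels are empty, lets the filter add the skipped centers that~\eqref{eq:smooth} certifies occupied) so that the union of radius-$1/\widetilde K$ balls around the output points still contains all of $\S$. Making this rigorous requires (i) pinning down exactly which neighbour relation in~\eqref{eq:smooth} is invoked and checking that an occupied off-lattice voxel always has an occupied on-lattice voxel close enough for the averaging filter to reach it, and (ii) the clean $D_3$ covering-radius computation that produces the constant $1$ (hence $0.7937$ after the cube root) rather than the looser $\sqrt3/2$ coming from naive per-voxel rounding. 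Everything else --- the reuse of the discretization, the encoder construction, and the Chamfer-distance bookkeeping --- parallels the proof of Theorem~\ref{thm:upper_bound}.
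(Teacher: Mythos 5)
Your proposal follows essentially the same route as the paper's proof: refine the grid to $2C$ voxels, spend the $C$ code entries on only ``every other'' voxel (your checkerboard sub-lattice), decode those voxel centers with $\Gamma$, and let the filter $h$ interpolate the skipped voxels via the smoothness condition~\eqref{eq:smooth}, reading the $1/K = 1/\sqrt[3]{2C}$ error off the geometry of the retained sub-lattice. Your $D_3$ covering-radius computation makes precise the constant that the paper justifies only by pointing at Figure~\ref{fig:proof_max_error}(b), and the cardinality and neighbour-relation issues you flag are genuine loose ends, but they are equally unaddressed in the paper's own two-sentence argument, so your plan is at least as complete as the published proof.
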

\begin{proof}
To take the advantage of the smoothness condition~\eqref{eq:smooth}, instead of encoding each voxel, we can partition the space into a finer resolution and encode every other voxel. The decoder $\Gamma(\cdot)$ can then recover every other voxel. After that, $h(\cdot)$ is introduced to interpolate via neighboring voxels; that is, we add $\q_{i,j,k}$ to the final reconstruction $\widehat{\S}$ when $\q_{i+\delta_i,j+\delta_j,k+\delta_k} \in \widehat{\S}$ with $\delta_i=\{0,1\}, \delta_j=\{0,1\}, \delta_k=\{0,1\}$.  The maximum error we could make is no more than $1/K$; see Figure~\ref{fig:proof_max_error} (b). Since we use the half of the length of code, the code length is $C = K^3/2$, the error is ${1}/{\sqrt[3]{2C}}$.
\end{proof}
The smoothness condition~\eqref{eq:smooth} only requires the voxel is similar to its neighbor voxels, which is soft  and easy to satisfy. This result suggests that exploring smoothness can reduce the reconstruction error. The proposed graph-filtering module in Section~\ref{sec:graph_filtering_module} uses a graph filter to promote smoothness and potentially leads to a smaller reconstruction error. Theorem~\ref{thm:upper_bound_smooth}  assumes smoothness in the 3D spatial domain; however, 3D points may not be smooth in the 3D spatial domain. We next show that graphs and graph smoothness can be used to capture the distribution of 3D points.

\subsection{Graph smoothness}
The aim here is to show graph smoothness is a key prior to model 3D point clouds. When the graph-topology-inference module in Section~\ref{sec:topology_learning_module} can infer appropriate graphs, we can model 3D point clouds as smooth graph signals, instead of nonsmooth spatial signals. Since the graph-filtering module in Section~\ref{sec:graph_filtering_module} promotes graph smoothness, it is expected to improve the reconstruction quality of point clouds.

For simplicity, instead of considering points in the 3D space, we consider nodes in a 2D lattice, which is a simplified and discretized version of points in the 3D space. We use a binary value to indicate if each node in the 2D lattice is occupied. Those binary values on the 2D lattice form a~\emph{2D lattice signal}. Since a 3D point cloud intrinsically represents a 2D surface, 2D lattice signal intrinsically represents a 1D curve.
\begin{defn}
Let $\X \in \{0, 1\}^{N \times N}$ be a 2D lattice signal, where $\X_{i,j}$ denotes the signal value supported on the $(i,j)$th node. The directional variation of the spatial signal $\X$ at $(i,j)$th node is 
\begin{eqnarray*}
{\rm DTV}_{i,j} \left( \X \right) & = & \X_{i,j}  \bigg( \one_{\sum_{\Delta_i , \Delta_j \in \{-1, 1\} }  \X_{i + \Delta_i,j+\Delta_j}  = 0 } 
\\
&& + \sum_{\Delta_i , \Delta_j \in \{-1, 1\} } | \X_{i + \Delta_i,j + \Delta_j} -  \X_{i - \Delta_i,j - \Delta_j}|  \bigg).
\end{eqnarray*}
The directional total variation of the spatial signal $\X$ is 
\begin{equation*}
{\rm DTV} \left( \X \right) \ = \ \sum_{i,j=1}^N {\rm DTV}_{i,j} \left( \X \right).
\end{equation*}
\end{defn}
The first term in the parenthesis captures the isolated nodes and the second term in the parenthesis captures the local discontinuity. The directional total variation measures the curvatures represented by a 2D spatial signal. Here we cannot use the standard total variation~\cite{RudinOF:92} because we consider 1D curves in 2D lattice. The standard total variation
is defined to measure the smoothness level of natural images; however, here we consider 1D curves in the 2D lattice, which can be considered as an extremely sparse image with signals only on the curve. In this case, the standard total variation can hardly measure the complexity of the curvature.

A low directional total variation indicates that the underlying 1D curve is smooth in the 2D space. For example, Figure~\ref{fig:DTV} shows that red dots form a 1D curve in the 2D lattice and the directional total variation can properly measure the curvatures.

\begin{figure}[thb]
  \begin{center}
    \begin{tabular}{cc}
 \includegraphics[width=0.29\columnwidth]{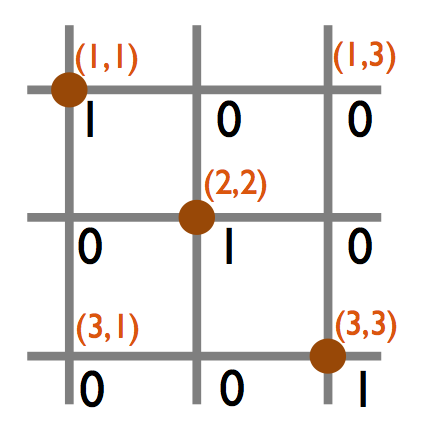}   & \includegraphics[width=0.29\columnwidth]{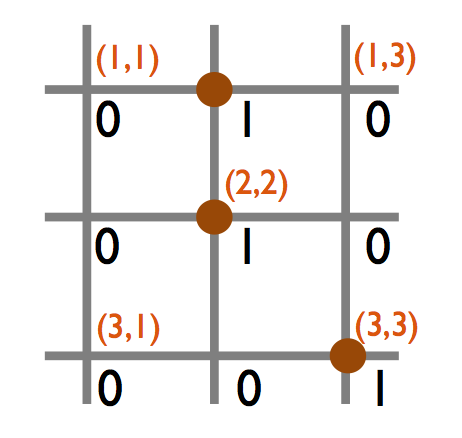}
    \\
    {\small (a) ${\rm DTV}_{2,2} \left( \X \right) = 0$.} &  {\small (b) ${\rm DTV}_{2,2} \left( \X \right) = 4$.}
  \end{tabular}
\end{center}
\caption{\label{fig:DTV} Directional total variation measures the local continuity of a 1D curve embedded in the 2D space. }
\end{figure}

\begin{defn}
Let $\x \in \R^{n}$ be a graph signal associated with a graph adjacency matrix $\Adj \in  \R^{n \times n}$, where $\x_{i} \in \R$ denotes the signal value on the $i$th node. 
The graph total variation of the graph signal $\x$ is 
\begin{equation*}
{\rm TV}_{\Adj} \left( \x \right) =   \left\| \x -  \frac{1}{|\lambda_{\rm max}|} \Adj \x \right\|_2^2 = \sum_i \left\| \x_i -  \frac{1}{|\lambda_{\rm max}|} \sum_{k} \Adj_{ik} \x_k \right\|_2^2.
\end{equation*}
\end{defn}
The graph total variation  measures the graph smoothness. A low graph total variation of a graph signal indicates that the underlying 1D curve is smooth in the graph domain. To represent the same occupancy of nodes in a 2D lattice, we can use either a spatial signal or a pair of graph signals.  A spatial signal uses a binary value to indicate the occupancy of a node; that is, when the value is one, the corresponding node is occupied. A pair of graph signals use the 2D coordinates to record the occupied nodes.  When a spatial signal represents the same information with a pair of graph signals, we consider they are equivalent. 

\begin{defn}
Let $\X \in \{0, 1\}^{N \times N}$ be a 2D lattice signal. Let $\x^{(1)}, \x^{(2)} \in \R^M$ be a pair of graph signals. $\X$ and $\x^{(1)}, \x^{(2)}$ are~\emph{equivalent} when
\begin{itemize}
\item $\X_{ \lceil \x^{(1)}_{\ell} \rceil, \lceil \x^{(2)}_{\ell} \rceil} = 1$, for $\ell=1, \cdots, M$;
\item $\sum_{i,j=1}^N \X_{i,j} = M$.
\end{itemize}
\end{defn}
The first condition indicates that each element in a graph signal reflects the coordinate in the 2D lattice. $\lceil \cdot \rceil$ denotes the ceiling function that rounds a real value to an integer. Note that the underlying graph associated with a graph signal can have arbitrary connections and the optimal one is supposed to provide graph smoothness for the 2D coordinates. For example, the spatial signal 
\begin{equation*}
  \widetilde{\X}
  \ = \ 
  \begin{bmatrix}
    0 &  1 & 1 & 1  \\
    0  & 0 & 1 & 0 \\
    0 &  1 & 0 & 0  \\
    1 & 1 & 1 &  0
  \end{bmatrix} \in \{0, 1\}^{4 \times 4}
\end{equation*}
is equivalent to a pair of graph signals
\begin{equation*}
 \begin{bmatrix}    
    \widetilde{\x}^{(1)} & \widetilde{\x}^{(2)}
  \end{bmatrix}
  \ = \ 
  \begin{bmatrix}    
     1 & 1 & 1 & 2 & 3 & 4 & 4 & 4  \\
     2 & 3 & 4 & 3 & 2 & 1 & 2 & 3
  \end{bmatrix}^T \in \R^{8 \times 2},
\end{equation*}
which can be associated with an arbitrary graph $\widetilde{\Adj}  \in \R^{8 \times 8}$. No matter what graph topology is, both the spatial signal $\widetilde{\X}$ and a pair of graph signals $[\widetilde{\x}^{(1)}, \widetilde{\x}^{(2)}]$ represents the same `Z' shape in the 2D space; see illustration in Figure~\ref{fig:smooth_model}. 

\begin{figure}[thb]
  \begin{center}
    \begin{tabular}{cc}
    \includegraphics[width=0.42\columnwidth]{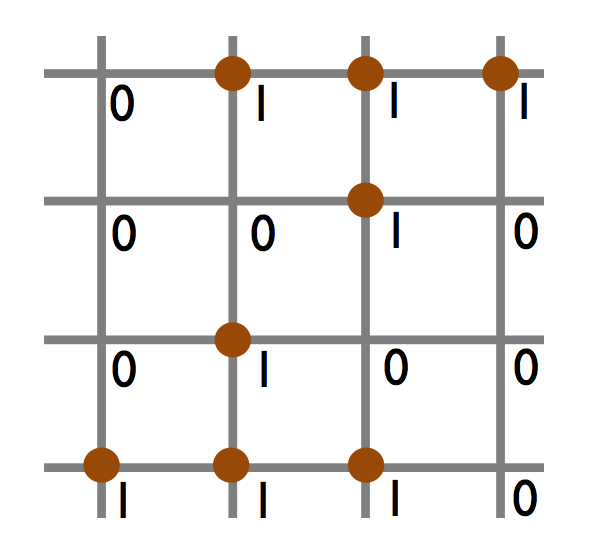}     & \includegraphics[width=0.42\columnwidth]{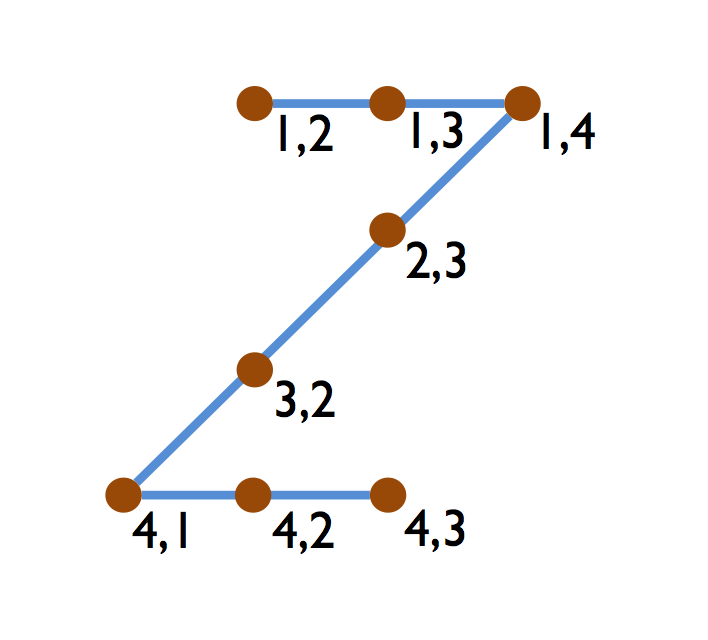}   
    \\
    {\small (a) 2D spatial signal.} &  {\small (b) Graph signals}.
  \end{tabular}
\end{center}
\caption{\label{fig:smooth_model} A nonsmooth 2D spatial signal is equivalent to smooth graph signals, representing a `Z' shape in the 2D space. }
\end{figure}

Among many possible graph topologies,  we can find at least one to promote graph smoothness for $[\widetilde{\x}^{(1)}, \widetilde{\x}^{(2)}]$. Given
\begin{equation*}
  \widetilde{\Adj}
  \ = \ 
  \begin{bmatrix}
    1 &  0 & 0 & 0 & 0 & 0 & 0 & 0 \\
    1/2 &  0 & 1/2 & 0 & 0 & 0 & 0 & 0 \\
    0 &  0 & 1 & 0 & 0 & 0 & 0 & 0  \\
    0 &  0 & 1/2  & 0 & 1/2  & 0 & 0 & 0 \\
    0 &  0 & 0 & 1/2  & 0 & 1/2  & 0 & 0  \\
    0 &  0 & 0 & 0 & 0 & 1 & 0 & 0 \\
    0 &  0 & 0 & 0 & 0 & 1/2 & 0  & 1/2   \\
    0 &  0 & 0 & 0 & 0 & 0 & 0 & 1 \\
  \end{bmatrix} \in \R^{8 \times 8},
\end{equation*}
we can show ${\rm TV}_{\widetilde{\Adj}} (\widetilde{\x}^{(1)}) = {\rm TV}_{\widetilde{\Adj}} (\widetilde{\x}^{(2)}) = 0$. At the same time,  due to the local discontinuity, ${\rm DTV} (\widetilde{\X}) > 0$. 

In the following theorem, we show that there always exists a nontrivial graph to promote graph smoothness for arbitrary graph signals.
\begin{myThm}
\label{thm:graph_smooth}
Let $\x^{(1)}, \x^{(2)} \in \R^M$ ($M>2$) be two vectors. Then, there exists a graph adjacency matrix $\Adj \in \R^{M \times M}$ to satisfy that
\begin{itemize}
\item $\Adj \neq \Id$;
\item ${\rm TV}_{\Adj} (\x^{(1)}) = {\rm TV}_{\Adj} (\x^{(2)}) = 0$.
\end{itemize}
\end{myThm}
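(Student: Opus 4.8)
The plan is to recognize that the zero-variation condition is really an eigenvector condition and then to build $\Adj$ by hand so that both vectors become eigenvectors for its largest-magnitude eigenvalue. Concretely, from the definition ${\rm TV}_{\Adj}(\x) = \bigl\| \x - \tfrac{1}{|\lambda_{\rm max}|}\Adj\x \bigr\|_2^2$, one sees immediately that ${\rm TV}_{\Adj}(\x) = 0$ iff $\Adj\x = |\lambda_{\rm max}|\,\x$; that is, $\x$ must be an eigenvector of $\Adj$ associated with the eigenvalue of largest magnitude. So it suffices to produce a symmetric matrix $\Adj \neq \Id$ whose largest-magnitude eigenvalue equals $1$ and for which $\Adj\x^{(1)} = \x^{(1)}$ and $\Adj\x^{(2)} = \x^{(2)}$ simultaneously.

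The construction I would use: set $W = \operatorname{span}\{\x^{(1)},\x^{(2)}\} \subseteq \R^M$ and observe $\dim W \le 2 < M$, so the orthogonal complement $W^\perp$ is nonzero — this is the one place the hypothesis $M > 2$ is used. Letting $P_W$ and $P_{W^\perp}$ be the orthogonal projectors onto $W$ and $W^\perp$, define
\begin{equation*}
\Adj \ = \ P_W + \tfrac{1}{2}\, P_{W^\perp}.
\end{equation*}
This matrix is symmetric, acts as the identity on $W$ and as $\tfrac12\Id$ on $W^\perp$, hence its eigenvalues are $1$ (on $W$) and $\tfrac12$ (on $W^\perp$), so its largest-magnitude eigenvalue is $|\lambda_{\rm max}| = 1$ whenever $W \neq \{0\}$. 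Since $\x^{(1)},\x^{(2)} \in W$, we get $\Adj\x^{(i)} = \x^{(i)} = |\lambda_{\rm max}|\,\x^{(i)}$, so both total variations vanish; and $\Adj - \Id = -\tfrac12 P_{W^\perp} \neq 0$, so $\Adj \neq \Id$. The degenerate subcase $W = \{0\}$, i.e. $\x^{(1)} = \x^{(2)} = \mathbf{0}$, I would dispose of in one line: the same formula gives $\Adj = \tfrac12\Id \neq \Id$ with $|\lambda_{\rm max}| = \tfrac12 \neq 0$, and ${\rm TV}_{\Adj}(\mathbf{0}) = 0$ trivially.

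I do not anticipate a genuine obstacle; once $\TV = 0$ is rewritten as ``common top eigenvector,'' the construction is essentially forced. The one subtlety worth stating explicitly is the normalization by $|\lambda_{\rm max}|$ in the definition of ${\rm TV}_{\Adj}$: one must check that the constructed matrix really has largest-magnitude eigenvalue $1$, i.e., that the eigenvalue on $W^\perp$ does not exceed $1$ in absolute value — which is why the complementary eigenvalue is chosen to be $\tfrac12$; any value of magnitude strictly less than $1$ and different from $1$ works equally well and still keeps $\Adj \neq \Id$. Finally, I would append a remark that one cannot in general make $\Adj$ a nonnegative, row-normalized matrix: for sign-changing vectors such as $\x^{(1)} = (1,-1,0,\dots)$, $\x^{(2)} = (0,1,-1,\dots)$ the span of $\{\x^{(1)},\x^{(2)}\}$ contains no nonzero nonnegative vector, so Perron--Frobenius rules out any nonnegative $\Adj$ with both as top eigenvectors — consistent with the theorem being stated for arbitrary real matrices, exactly as in the `Z'-shape example preceding it.
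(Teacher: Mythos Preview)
Your proof is correct and takes a genuinely different route from the paper's. The paper argues non-constructively by vectorization: writing ${\rm vec}\bigl((\Id-\Adj)[\x^{(1)}\ \x^{(2)}]\bigr) = \bigl([\x^{(1)}\ \x^{(2)}]^T\otimes\Id\bigr)\,{\rm vec}(\Id-\Adj)$ and observing that the $2M\times M^2$ matrix on the right has rank strictly less than $M^2$, it concludes the null space is nontrivial, so some $\Adj\neq\Id$ satisfies $\Adj\x^{(i)}=\x^{(i)}$. Your approach instead hand-builds $\Adj = P_W + \tfrac12 P_{W^\perp}$. This buys you two things the paper's argument does not: an explicit witness, and full control of the spectrum so that $|\lambda_{\max}|=1$ is guaranteed --- a point the rank count leaves unaddressed, since a generic null-space representative need not have $1$ as its eigenvalue of largest magnitude, which is what the normalization in ${\rm TV}_{\Adj}$ actually requires. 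Conversely, the vectorization viewpoint makes visible that the feasible set of $\Adj$'s has dimension at least $M^2-2M$, emphasizing how far from unique any construction is. Your closing Perron--Frobenius remark is a worthwhile addendum, clarifying why the theorem is stated for arbitrary real entries rather than, say, nonnegative row-stochastic matrices.
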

\begin{proof}
We want to show the existence of $\Adj$ given two conditions. Based on the second condition, we use the lifting trick to obtain
\begin{eqnarray*}
{\rm vec} \left( (\Id - \Adj) 
\begin{bmatrix} \x^{(1)} & \x^{(2)} \end{bmatrix} \right) 
= \left( \begin{bmatrix} {\x^{(1)}}^T \\  {\x^{(2)}}^T \end{bmatrix} \otimes \Id \right) {\rm vec}(\Id - \Adj) = 0.
\end{eqnarray*} 
Since $\begin{bmatrix} {\x^{(1)}}^T \\  {\x^{(2)}}^T \end{bmatrix} \otimes \Id \in \R^{2M \times M^2}$, rank$\left( \begin{bmatrix} {\x^{(1)}}^T \\  {\x^{(2)}}^T \end{bmatrix} \otimes \Id \right) < M^2$. Thus, there exists  $\Adj \neq \Id$ to ensure $(\Id - \Adj)  \x^{(1)}  =  (\Id - \Adj)  \x^{(2)} = 0$.
\end{proof}
We further show that a nonsmooth spatial can be represented as a smooth graph signal.
\begin{myCorollary}
\label{thm:bridging_smooth}
There exists a class of 2D lattice signals $\X \in \{0, 1\}^{N \times N}$  and corresponding pairs of graph signals $\x^{(1)}, \x^{(2)} \in \R^M$ associated with specifically designed graph adjacency matrices $\Adj \in \R^{M \times M}$
to satisfy that 
\begin{itemize}
\item $\X$ is equivalent to $\x^{(1)}, \x^{(2)}$;
\item $\frac{{\rm DTV} (\X) }{1_N^T \X 1_N} = O(1)$;
\item $\Adj \neq \Id$;
\item ${\rm TV}_{\Adj} (\x^{(1)}) = {\rm TV}_{\Adj} (\x^{(2)}) = 0$.
\end{itemize}
\end{myCorollary}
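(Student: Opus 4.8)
The plan is to promote the running `Z'-shape example to an infinite family and then read off the pair of graph signals and the graph from an explicit curve. Fix a lattice size $N$ and let $\X\in\{0,1\}^{N\times N}$ be the indicator of a monotone ``staircase'' curve whose maximal horizontal and vertical runs each have at least five occupied nodes, consecutive runs sharing their common corner node; write $M$ for the number of occupied nodes, so $M=\Theta(N)$. Traversing the curve, list the occupied nodes $v_1,\ldots,v_M$ in order and let $\x^{(1)},\x^{(2)}\in\R^M$ collect, respectively, their first and second (integer) coordinates. Because every occupied node is visited exactly once and the coordinates are integers, $\X_{\lceil\x^{(1)}_\ell\rceil,\lceil\x^{(2)}_\ell\rceil}=1$ for all $\ell$ and $\sum_{i,j}\X_{i,j}=M$, so $\X$ is equivalent to $\x^{(1)},\x^{(2)}$. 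Moreover ${\rm DTV}_{i,j}(\X)\le 4$ at every node, so ${\rm DTV}(\X)/(1_N^{T}\X\, 1_N)=O(1)$ holds automatically; and since each run contains a node whose four diagonal neighbours are all empty (contributing the isolated-neighbour indicator term $1$) and there are $\Theta(M)$ such nodes, ${\rm DTV}(\X)=\Theta(M)>0$, so the ratio is also bounded away from $0$, which is what makes the class a genuine witness.

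It remains to exhibit the graph. Decompose the curve into its maximal straight runs $R_1,\ldots,R_t$, so consecutive runs share exactly one node (a corner). Let $\Adj$ join every interior node of a run to its two neighbours inside that run with weight $\tfrac12$ each, and place a self-loop of weight $1$ at every corner and at the two global endpoints; this reproduces the structure of $\widetilde{\Adj}$ in the example above. Then $\Adj$ is entrywise nonnegative and satisfies $\Adj\, 1_M= 1_M$ (every row sums to $1$), so $|\lambda_{\rm max}(\Adj)|=1$, and $\Adj\ne\Id$ since any interior run node carries an off-diagonal edge. Along each run both coordinate sequences are affine in the position along the run (one is constant, the other an arithmetic progression), hence every interior node's coordinate equals the average of its two run-neighbours, while corners and endpoints map to themselves through their self-loops; thus $\Adj\,\x^{(1)}=\x^{(1)}$ and $\Adj\,\x^{(2)}=\x^{(2)}$, and therefore ${\rm TV}_{\Adj}(\x^{(i)})=\bigl\|\x^{(i)}-\tfrac{1}{|\lambda_{\rm max}|}\Adj\,\x^{(i)}\bigr\|_2^2=\bigl\|\x^{(i)}-\x^{(i)}\bigr\|_2^2=0$ for $i=1,2$. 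All four bullet points now hold. Equivalently --- and more briefly --- once $\x^{(1)},\x^{(2)}$ are fixed, a matrix $\Adj\ne\Id$ with vanishing graph total variation exists immediately by Theorem~\ref{thm:graph_smooth} (since $M>2$); the explicit construction above just makes the ``specifically designed'' clause concrete and keeps $|\lambda_{\rm max}(\Adj)|=1$ transparent.

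Since Theorem~\ref{thm:graph_smooth} already supplies the core of the argument, I do not expect a real obstacle. The one point that deserves care is the interplay between the second and fourth bullets: the runs must stay long enough (at least five nodes) that there are $\Theta(M)$ nodes with empty diagonal neighbourhoods, keeping the normalised directional variation bounded away from zero, while still each run must contain an interior node so that $\Adj\ne\Id$ holds alongside ${\rm TV}_{\Adj}=0$. A secondary check is that the non-symmetric stochastic matrix $\Adj$ really has $|\lambda_{\rm max}(\Adj)|=1$, so that the normalisation in the definition of ${\rm TV}_{\Adj}$ reduces to the identity scaling; both amount to routine bookkeeping once the run decomposition is drawn.
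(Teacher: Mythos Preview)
Your proposal is correct and follows essentially the same approach as the paper: the paper's proof is a two-sentence sketch that invokes ``a class of periodic `Z' or `L' shapes'' for the spatially non-smooth curve and then cites Theorem~\ref{thm:graph_smooth} for the existence of $\Adj\neq\Id$ with zero graph total variation. Your monotone staircase is exactly the periodic-$L$ family, and you close with the same appeal to Theorem~\ref{thm:graph_smooth}; the extra content you supply (the explicit row-stochastic $\Adj$ mirroring the $\widetilde{\Adj}$ example, the run-length $\geq 5$ condition to guarantee interior nodes with empty diagonal neighbourhood, and the verification that the normalised ${\rm DTV}$ is $\Theta(1)$ rather than merely $O(1)$) fills in details the paper leaves implicit but does not change the strategy.
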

The first condition indicates the equivalence; the second condition indicates the number of discontinuities increases as the length of the curve increases, reflecting spatial nonsmoothness;  the third condition indicates the graph topology is nontrivial; and the fourth condition indicates the existence of  a graph topology to ensure the graph smoothness. 
\begin{proof}
A class of  periodic `Z' or `L' shapes, which generalize the previous example of $\widetilde{\X}$, is a simple construction proof to this Theorem.  Due to the periodicity, the number of local discontinuities increases as the length of the curve increases. On the other hand, as shown in Theorem~\ref{thm:graph_smooth}, there exists a graph topology to ensure graph total variations are zeros.
\end{proof}

Corollary~\ref{thm:bridging_smooth} shows that with an appropriate graph topology, we can represent a nonsmooth spatial signal as a pair of smooth graph signals. Hypothetically, the graph-topology-inference module is designed to provide such a  graph topology. In practice, many 3D points sampled from complicated shapes and curvatures show nonsmoothness in the 3D space. On the other hand, when an appropriate  graph topology is obtained, graph smoothness is a strong prior to capture the distribution of 3D points.  We thus can use the graph-topology-inference module to learn an appropriate graph and then  promote graph smoothness for 3D points. This also indicates that convolutional neural networks may not be appropriate to learn 3D point clouds~\cite{3dgan}. A 3D convolution is usually used for 3D volume data and does not explore an appropriate manifold prior for 3D point clouds, which are sampled from the surfaces of objects.

We now show graph filtering guides the networks to promote graph smoothness and  refine the positions of 3D points.
\begin{myThm}
\label{thm:adj_smooth}
Let $\Adj \in \R^{M \times M}$ be an adjacency matrix whose eigenvalues are $\lambda_i$, with $\max_i |\lambda_i| = 1.$ Let $h(\Adj)  =  \frac{1}{2} \left( \Id + \Adj \right) \in \R^{M \times M}$ be a graph filter.  For any graph signal $\x \in \R^M$, we have
\begin{equation*}
 \TV_{\Adj} \left( \x  \right)  \geq  \TV_{\Adj} \left( h(\Adj) \x \right).
\end{equation*}
\end{myThm}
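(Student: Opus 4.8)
The plan is to pass to the graph Fourier domain and reduce the claim to an elementary scalar inequality on the eigenvalues. First, since $\max_i |\lambda_i| = 1$ we have $|\lambda_{\rm max}| = 1$, so the definition of the graph total variation simplifies to $\TV_\Adj(\x) = \|(\Id - \Adj)\x\|_2^2$ for every graph signal $\x$, and in particular $\TV_\Adj(h(\Adj)\x) = \|(\Id - \Adj)\,h(\Adj)\,\x\|_2^2$. Because $h(\Adj) = \tfrac12(\Id + \Adj)$ is a polynomial in $\Adj$, it commutes with $\Id - \Adj$, and $(\Id - \Adj)h(\Adj) = \tfrac12(\Id - \Adj)(\Id + \Adj) = \tfrac12(\Id - \Adj^2)$. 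Hence it suffices to prove $\|(\Id - \Adj)\x\|_2^2 \ge \tfrac14\|(\Id - \Adj^2)\x\|_2^2$.

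Next I would diagonalize $\Adj$. Recall that the adjacency matrix fed to the graph-filtering module is symmetrized in Section~\ref{sec:graph_filtering_module}, so we may write $\Adj = \Vm \Lambda \Vm^T$ with $\Vm$ orthogonal and $\Lambda = \mathrm{diag}(\lambda_1, \ldots, \lambda_M)$, and let $\widehat{\x} = \Vm^T \x$ denote the graph Fourier transform of $\x$. Expanding both sides in this basis gives $\TV_\Adj(\x) = \sum_{i=1}^{M} (1 - \lambda_i)^2 \widehat{\x}_i^2$ and $\TV_\Adj(h(\Adj)\x) = \tfrac14 \sum_{i=1}^{M} (1 - \lambda_i)^2 (1 + \lambda_i)^2 \widehat{\x}_i^2$, using $(1 - \lambda_i^2)^2 = (1-\lambda_i)^2(1+\lambda_i)^2$. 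Comparing term by term, it is enough to show $(1 - \lambda_i)^2 \ge \tfrac14 (1 - \lambda_i)^2 (1 + \lambda_i)^2$ for each $i$, which holds as soon as $\tfrac14 (1 + \lambda_i)^2 \le 1$, i.e. $|1 + \lambda_i| \le 2$. Since $|\lambda_i| \le \max_j |\lambda_j| = 1$, the triangle inequality yields $|1 + \lambda_i| \le 1 + |\lambda_i| \le 2$. Summing the per-frequency inequalities over $i$ gives the theorem.

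I expect the only delicate point to be the orthogonal diagonalization used in the second step: the clean frequency-domain factorization of $\TV_\Adj$ relies on $\Adj$ being orthogonally diagonalizable with real eigenvalues, which is ensured here by the symmetrization step but would require extra care for a generic non-normal adjacency matrix; I would therefore state this symmetry assumption explicitly at the start of the proof. The remaining ingredients — the commutation identity $(\Id - \Adj)h(\Adj) = \tfrac12(\Id - \Adj^2)$ and the scalar bound $\tfrac14(1+\lambda)^2 \le 1$ on $[-1,1]$ — are routine, so no substantial obstacle remains once the setup is fixed.
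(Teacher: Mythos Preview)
Your proposal is correct and follows essentially the same route as the paper: both arguments diagonalize $\Adj$ and reduce the claim to the scalar inequality $(1-\lambda_i)^2 \ge \tfrac14(1-\lambda_i)^2(1+\lambda_i)^2$ on $[-1,1]$. The paper packages this as showing that the matrix $\Mm = (\Id-\Adj)^T(\Id-\Adj) - h(\Adj)(\Id-\Adj)^T(\Id-\Adj)h(\Adj)$ is positive semidefinite by computing its eigenvalues, while you work directly in the Fourier basis after the commutation identity $(\Id-\Adj)h(\Adj)=\tfrac12(\Id-\Adj^2)$; your explicit flagging of the symmetry assumption is a welcome clarification that the paper leaves implicit.
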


\begin{proof}
Let $\Mm = ( \Id - \Adj)^T ( \Id - \Adj ) - h(\Adj) ( \Id - \Adj)^T ( \Id - \Adj ) h(\Adj)  \in \R^{M \times M}$. We want to show
\begin{eqnarray*}
&& \TV_{\Adj} \left( \x  \right) -  \TV_{\Adj} \left( h(\Adj) \x \right)
\\ 
&  = & \x^T \left(  ( \Id - \Adj)^T ( \Id - \Adj ) - h(\Adj) ( \Id - \Adj)^T ( \Id - \Adj ) h(\Adj) \right)  \x
\\
& = &  \x^T \Mm  \x
 \end{eqnarray*}
is nonnegative for any arbitrary $\x$. This is equivalent to show $\Mm$ is positive semidefinite.
We now want to show that all the eigenvalues of $\Mm$ are nonnegative. Since we can show
\begin{eqnarray*}
  && ( \Id - \Adj)^T ( \Id - \Adj )  \\
  & = & \Vm  \begin{bmatrix}
   (1 - \lambda_1)^2 &  0 & \cdots & 0  \\
    0  &  (1-  \lambda_2)^2  & \cdots & 0 \\
    \vdots &  \vdots & \ddots & \vdots  \\
    0 & 0 & \cdots &  (1-  \lambda_N )^2
  \end{bmatrix}
  \Vm^{-1},
 \end{eqnarray*}
and
\begin{eqnarray*}
  && h(\Adj) ( \Id - \Adj)^T ( \Id - \Adj ) h(\Adj)  \\
 & = &
\Vm 
  \begin{bmatrix}
   \frac{(1 - \lambda_1)^2 (1 + \lambda_1)^2}{4}  & \cdots & 0  \\
    \vdots  & \ddots & \vdots  \\
    0  & \cdots &  \frac{(1 - \lambda_N)^2 (1 + \lambda_N)^2}{4}
  \end{bmatrix}
  \Vm^{-1},
\end{eqnarray*}
the eigenvalues of $\Mm$ are $(1 - \lambda_i)^2 -  \frac{(1 - \lambda_i)^2 (1 + \lambda_i)^2}{4}$ for $i = 1, \cdots, N$. Since $-1 \leq \lambda_i \leq 1$,  $$(1 - \lambda_i)^2 -  \frac{(1 - \lambda_i)^2 (1 + \lambda_i)^2}{4} \geq 0.$$ Thus, $\Mm$ is positive semidefinite.
\end{proof}
This shows that the low-pass graph Haar filter promotes graph smoothness. In the context of 3D point clouds, the graph-filtering module refines the reconstructed 3D points by pushing them as smooth graph signals. For graph-Laplacian-based filters~\eqref{eq:lap_filter}, we can use the quadratic term $\x^T \mathcal{L} \x$ to measure the graph variations~\cite{ShumanNFOV:13} and further show that $$\x^T \mathcal{L} \x  \geq (h(\mathcal{L}) \x)^T \mathcal{L} (h(\mathcal{L}) \x),$$
which also pushes the reconstructed 3D points as  smooth graph signals.

\section{Experimental Results}
\label{sec:experiments}
In this section, we conduct experiments to validate the proposed unsupervised model mainly from three aspects: 3D point cloud reconstruction, visualization and transfer classification.

\subsection{Dataset}
We consider four datasets in the experiments, including ShapeNet~\cite{shapenet}, ShapeNetCore~\cite{yi2016scalable}, ModelNet40~\cite{WuSKYZTX:15} and ModelNet10~\cite{WuSKYZTX:15}. ShapeNet contains more than $50000$ unique 3D models from 55 manually verified categories of common objects and ShapeNetCore is a subset of ShapeNet. ModelNet40 (MN40) contains 9843/2468 3D CAD models from 40 categories in train/test datasets; ModelNet10 (MN10) contains 3991/909 3D CAD models from 10 categories in train/test datasets; the datasets are labeled by human workers in Amazon Mechanical Turk and manually selected by researchers.  In our experiments, we  sample $57447$ point clouds from ShapeNet, $12311$ point clouds from ModelNet40, $4900$ point clouds from ModelNet10, and $15011$ point clouds from ShapeNetCore. Each point cloud is composed of $2048$ 3D points. 

\subsection{Experimental Setup}
All the experiments are conducted on GTX TITAN Xp GPU with pytorch 0.4.0. We used ADAM with an initial learning rate of 0.0001. The training batch size is $32$. The features dimensions are shown in Figure~\ref{fig:architecture}. In our experiments, we choose $k = 96$ and $\sigma = 0.08$ in the graph-topology-learning module and $\mu = 0.5$ in the graph-filtering module. The experimental results show the overall performances are not sensitive to those hyperparameters. By default, the number of reconstructed 3D points are set to M = 2025. Note that a squared number is preferred for M because of the structure of the 2D lattice.

\begin{table*}[htb!]
	\begin{center}
		\begin{tabular}{ c|c c c c c c c c c c} 
			\hline
			Category & Airplane & Bag & Cap & Car & Chair & Earphone & Guitar & Knife &\# \textbf{winner}\\
			\hline
			\# shape & 2299 & 68 & 50 & 817 & 3362 & 63 & 709 & 357 & \\
			\hline
			3DCapsNet  (loss: $\times$ 1e-2) & 2.48 & 5.33 & 5.46 & 3.79 & 3.81 & 4.68 & 2.04 & 2.17 &  0 \\
			AtlasNet (loss: $\times$ 1e-2) & 2.39 & 4.33 & 4.17 & 3.88 & 3.58 & 5.13 & 1.96 & 1.75 &  0   \\
			 Folding module only   (loss: $\times$  1e-2) & 2.55 & 4.07 & 4.04 & 3.80 & 3.63 & 4.92 & 1.62 & 1.66 & 0\\
			With graph-adjacency-based filtering~\eqref{eq:adj_filtering}   (loss: $\times$  1e-2) & {\bf 2.34} & {\bf 3.91} & {\bf 3.91} & 3.83 & 3.46 & {\bf 4.28} & {\bf 1.58} & 1.70 & {\bf 5}\\
			With graph-Laplacian-based filtering~\eqref{eq:lap_filter}   (loss: $\times$  1e-2) & 2.38 & 4.06 & 4.53 & {\bf 3.72} & {\bf 3.39} & 4.77 & 1.73 & {\bf 1.49} & 3\\
			\hline
			\hline
			Category & Lamp & Laptop & Motorbike & Mug & Pistol & Rocket & Skateboard & Table & \# \textbf{winner} \\
			\hline
			\# shape & 1404 & 407 & 176 & 168 & 253 & 58 & 137 & 4683 & \\
			\hline
			 3DCapsNet  (loss: $\times$ 1e-2) & 3.51 & 2.96 & 3.91 & 4.86 & 2.67 & 2.21 & 3.09 & 3.44 & 0 \\
			AtlasNet (loss: $\times$ 1e-2) & 3.55 & 2.98 & 4.11 & 5.07 & 2.82 & 2.31 & 2.60 & 3.58 & 0 \\
			Folding module only   (loss: $\times$ 1e-2) & 3.47 & 2.90 & 4.03 & 4.94 & 2.75 & 2.28 & 2.48 & 3.48 & 0 \\
			With graph-adjacency-based filtering~\eqref{eq:adj_filtering}  (loss: $\times$ 1e-2) & 3.45 & 2.94 & 3.88 & {\bf 4.60} & 2.74 & 2.51 & 2.63 & 3.30 & 1 \\
			With graph-Laplacian-based filtering~\eqref{eq:lap_filter}  (loss: $\times$  1e-2) & {\bf 3.43} & {\bf 2.84} & {\bf 3.69} & 5.29 & {\bf 2.64} & {\bf 2.18} & {\bf 2.39} & {\bf 3.05} & {\bf 7}\\
			\hline
		\end{tabular}
	\end{center}
	\caption{\label{tab:reconstruction_loss} \textbf{Graph filtering lowers reconstruction losses.}  The reconstruction loss is measured by the augmented Chamfer distance. The overall networks achieves better reconstruction performance than the folding module only. All three models are trained by the same setting in an end-to-end fashion and use the same number of training parameters on the dataset of ShapeNetCore. }
\end{table*}

\subsection{Reconstruction of 3D Point Clouds}
\label{sec:reconstruction}
As an unsupervised method, we first validate  the reconstruction performance of the proposed networks. Specifically, we compare the reconstructions without graph filtering and with graph filtering. For the reconstructions without graph filtering, we remove the graph-topology-inference module and the graph-filtering module and only train the folding module end-to-end\footnote{In this paper, when we show the performance of the folding module, there are two possible settings. In the first setting, we train three modules together and the supervision is put on the final output of the graph-filtering module. The output of the folding module is the intermediate output from the entire networks; we call this~\emph{before graph filtering}. In the second setting, we only train the folding module and the supervision is directly put on the output of the folding module; we call this~\emph{folding module only}.}. We then use its output $\S'$ in~\eqref{eq:folding} as the final reconstruction; for the reconstructions with graph filtering, we train all three  modules and outputs the refined reconstruction $\widehat{S}$ in~\eqref{eq:adj_filtering}. To make a fair comparison, we adjust the configurations of both models to ensure they have similar numbers of trainable parameters. For the graph-filtering module, we consider both graph-adjacency and graph-Laplacian-matrix based filters.  We train all the networks on ShapeNetCore. Table~\ref{tab:reconstruction_loss} shows the comparisons of the reconstruction losses,  which are measured by the augmented Chamfer distance. We see that the entire networks with graph filtering achieve better reconstruction performance than the folding module only. This indicates that it is beneficial to using a graph topology to refine the reconstruction. We also see that  the graph-Laplacian-based filter~\eqref{eq:lap_filter} achieves slightly better reconstruction performances than the graph-adjacency-based filter~\eqref{eq:adj_filtering}. We also compare the proposed method with AtlasNet~\cite{AtlasNet} and 3DCapsNet~\cite{3dcapsule}. AtlasNet generates a 3D surface by leveraging multiple configurations from the same 2D flat; and 3DCapsNet generates multiple 3D local patches by leveraging distinct 2D lattices with the corresponding latent representations. Here we adopt one single 2D lattice and use a trainable graph topology to increase the expressive power of this 2D lattice. Table~\ref{tab:reconstruction_loss} shows that the proposed method outperforms AtlasNet and 3DCapsNet.

\begin{table*}[htb!]
  \begin{center}
    \begin{tabular}{c | c  c | c  c | c  c }
      \hline   
      \hline
      Epoch & 
      \multicolumn{2}{c|}{Swept wings no engines} & 
      \multicolumn{2}{c|}{Swept wings two engines} & 
      \multicolumn{2}{c}{Straight wings two engines} \\
      \hline
             & 
       \emph{Before graph filtering} & 
       \emph{After graph filtering} & 
       \emph{Before graph filtering} & 
       \emph{After graph filtering} & 
       \emph{Before graph filtering} & 
       \emph{After graph filtering} 
       \\
        \hline
      
      Input &
      \includegraphics[trim={4cm 4cm 2cm 4cm}, clip=true , scale=0.025] {figures/graph_variance/grid.jpg} &
      \includegraphics[trim={4cm 4cm 2cm 4cm}, clip=true , scale=0.04] {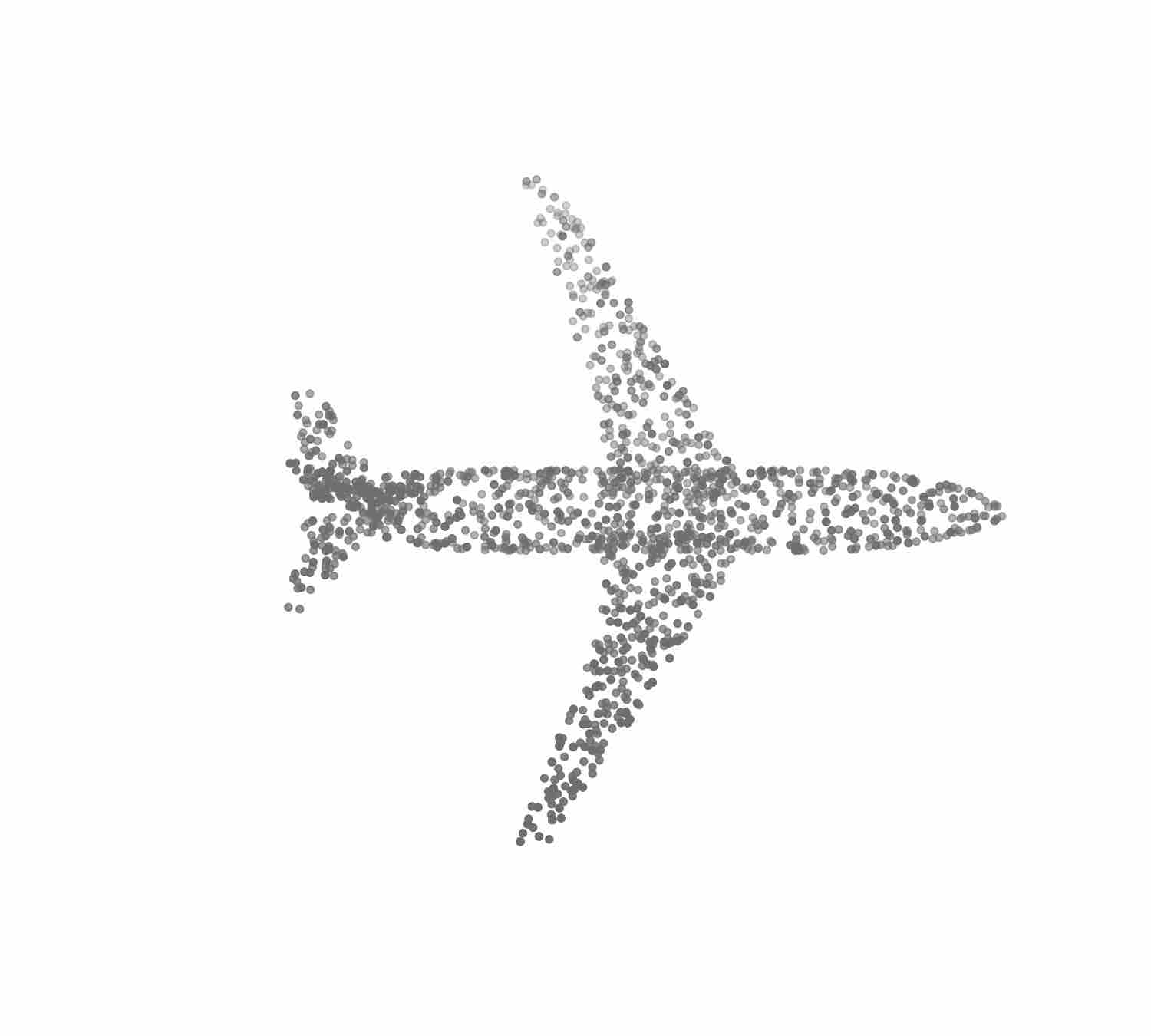} & 
      \includegraphics[trim={4cm 4cm 2cm 4cm}, clip=true , scale=0.025] {figures/graph_variance/grid.jpg} &
      \includegraphics[trim={4cm 4cm 2cm 4cm}, clip=true , scale=0.04]  {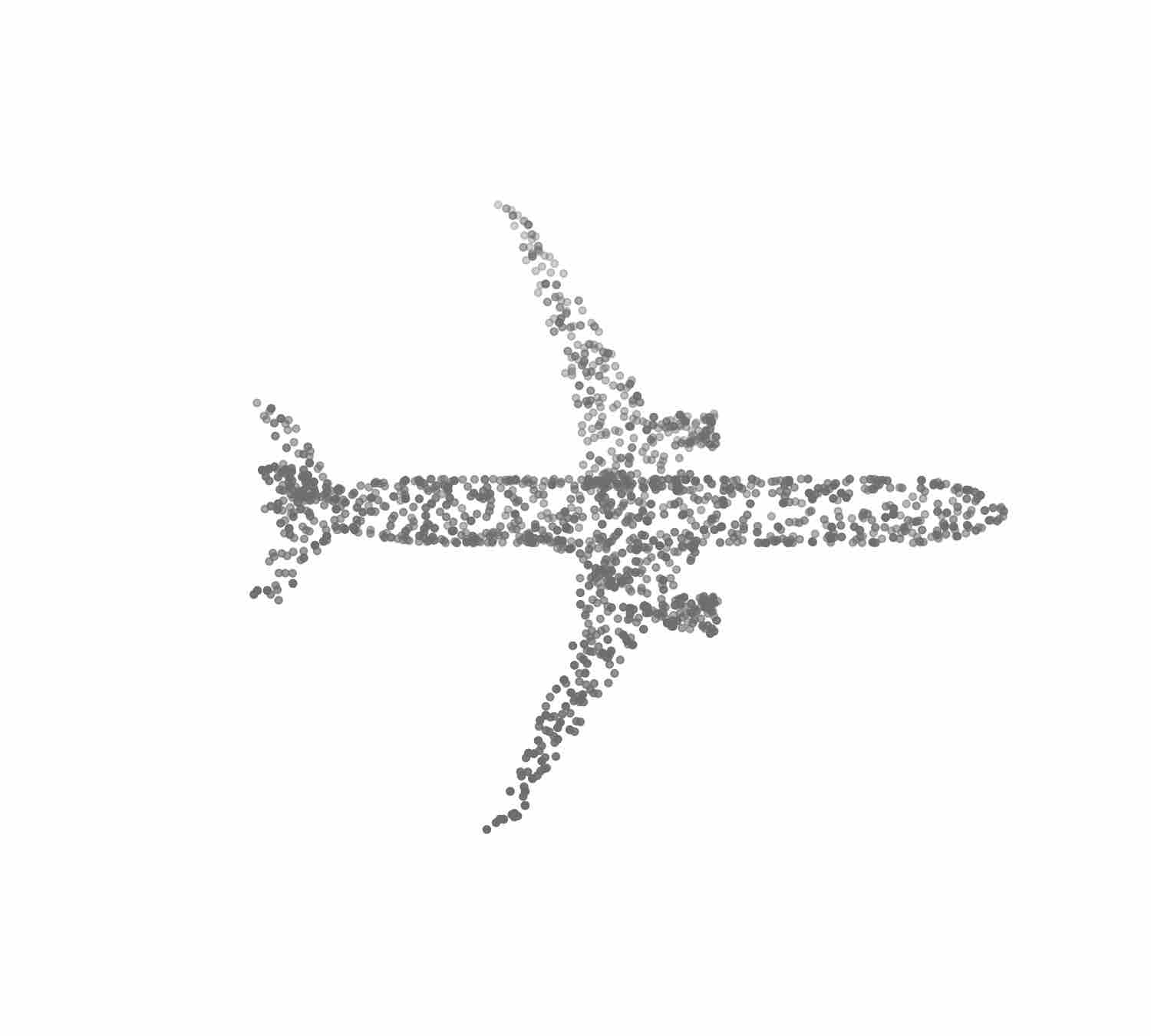} & 
      \includegraphics[trim={4cm 4cm 2cm 4cm}, clip=true , scale=0.025] {figures/graph_variance/grid.jpg} &
      \includegraphics[trim={4cm 4cm 2cm 4cm}, clip=true , scale=0.04] {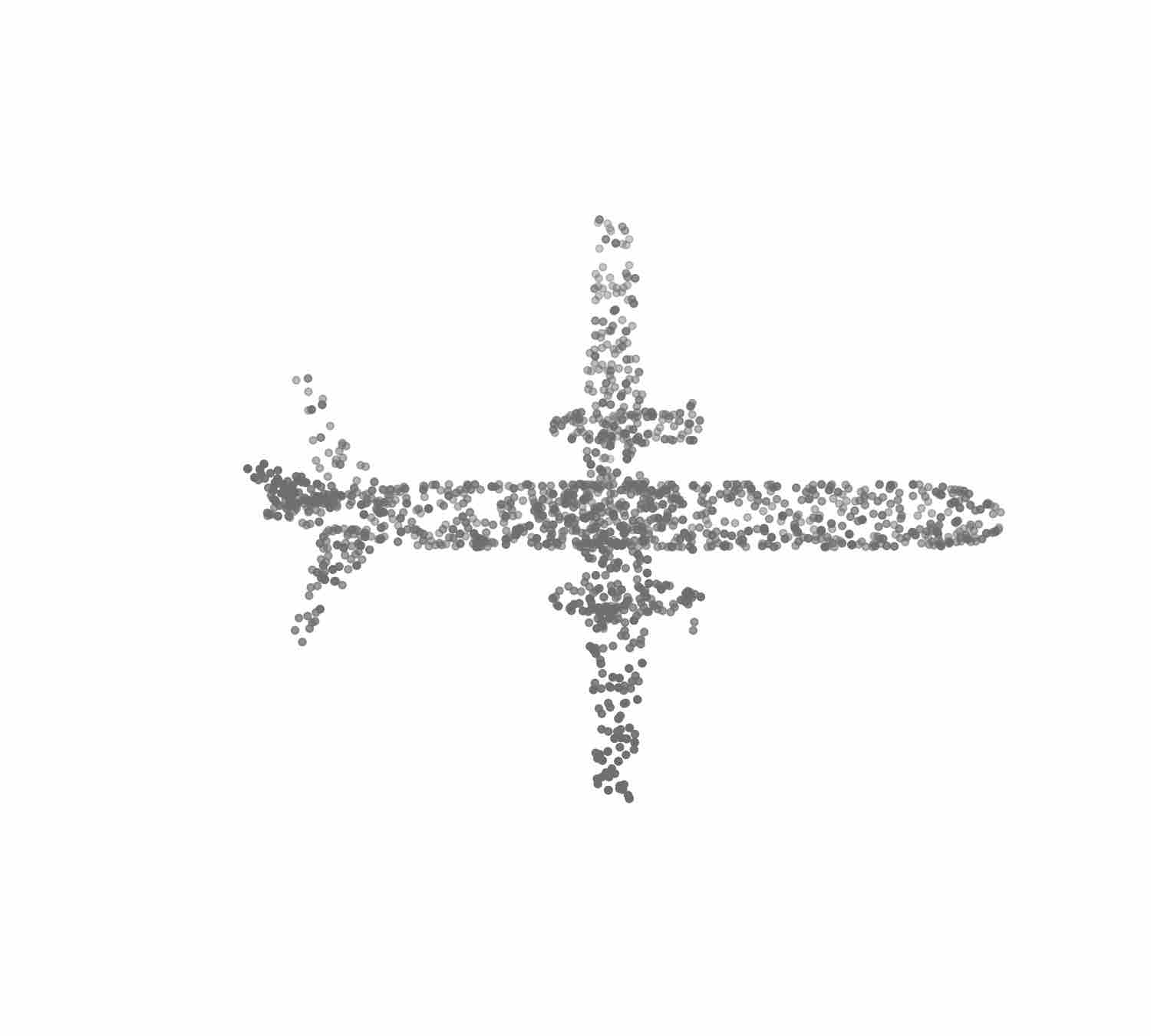}
 \\
      \hline
       &&&&&&\\[-1em] 
      0 &
      \includegraphics[trim={6cm 6cm 4cm 6cm}, clip=true , scale=0.045] {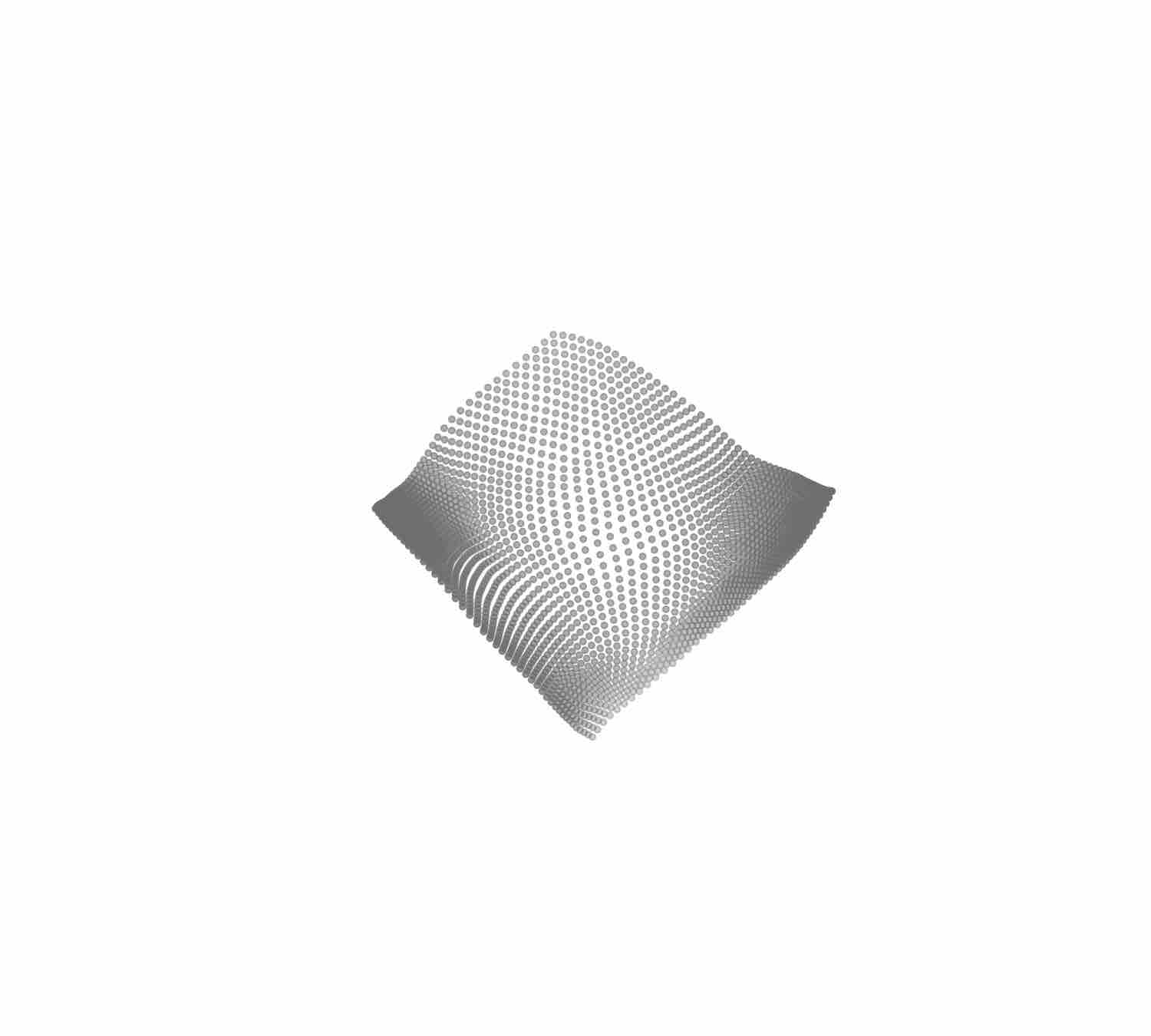} & 
      \includegraphics[trim={5cm 4cm 3cm 4cm}, clip=true , scale=0.045] {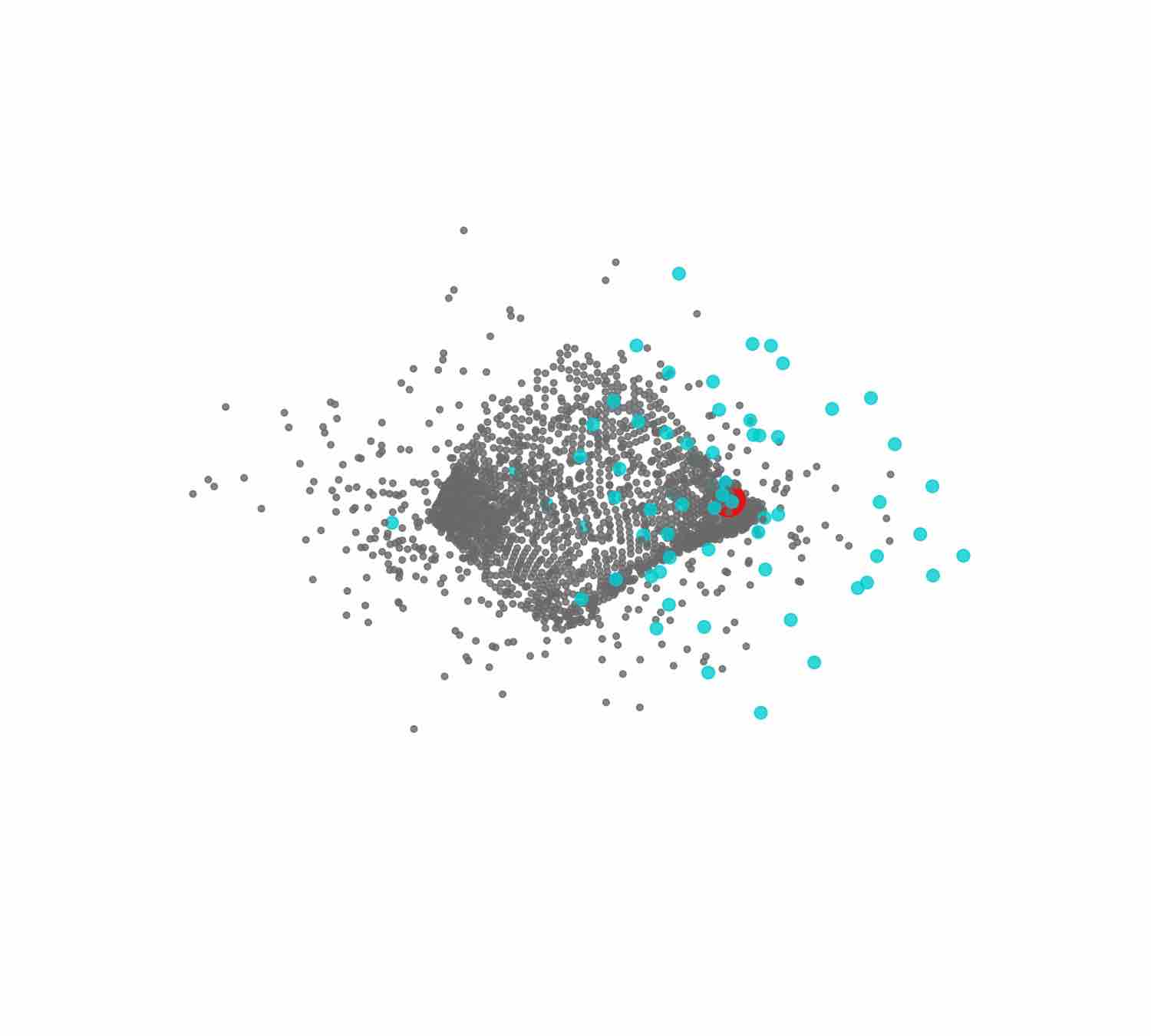} & 
      \includegraphics[trim={6cm 6cm 4cm 6cm}, clip=true , scale=0.045] {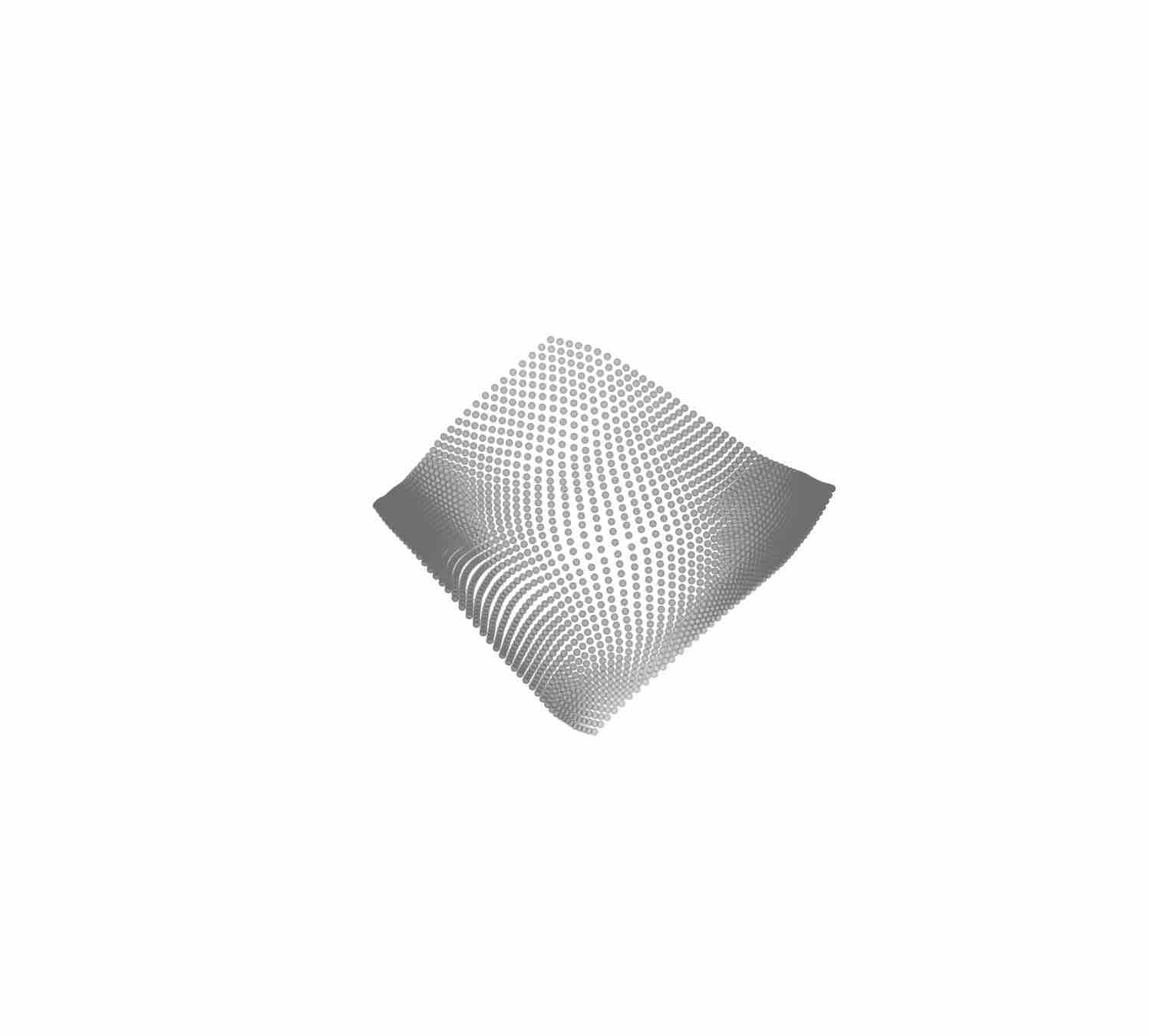} &
      \includegraphics[trim={5cm 4cm 3cm 4cm}, clip=true , scale=0.045] {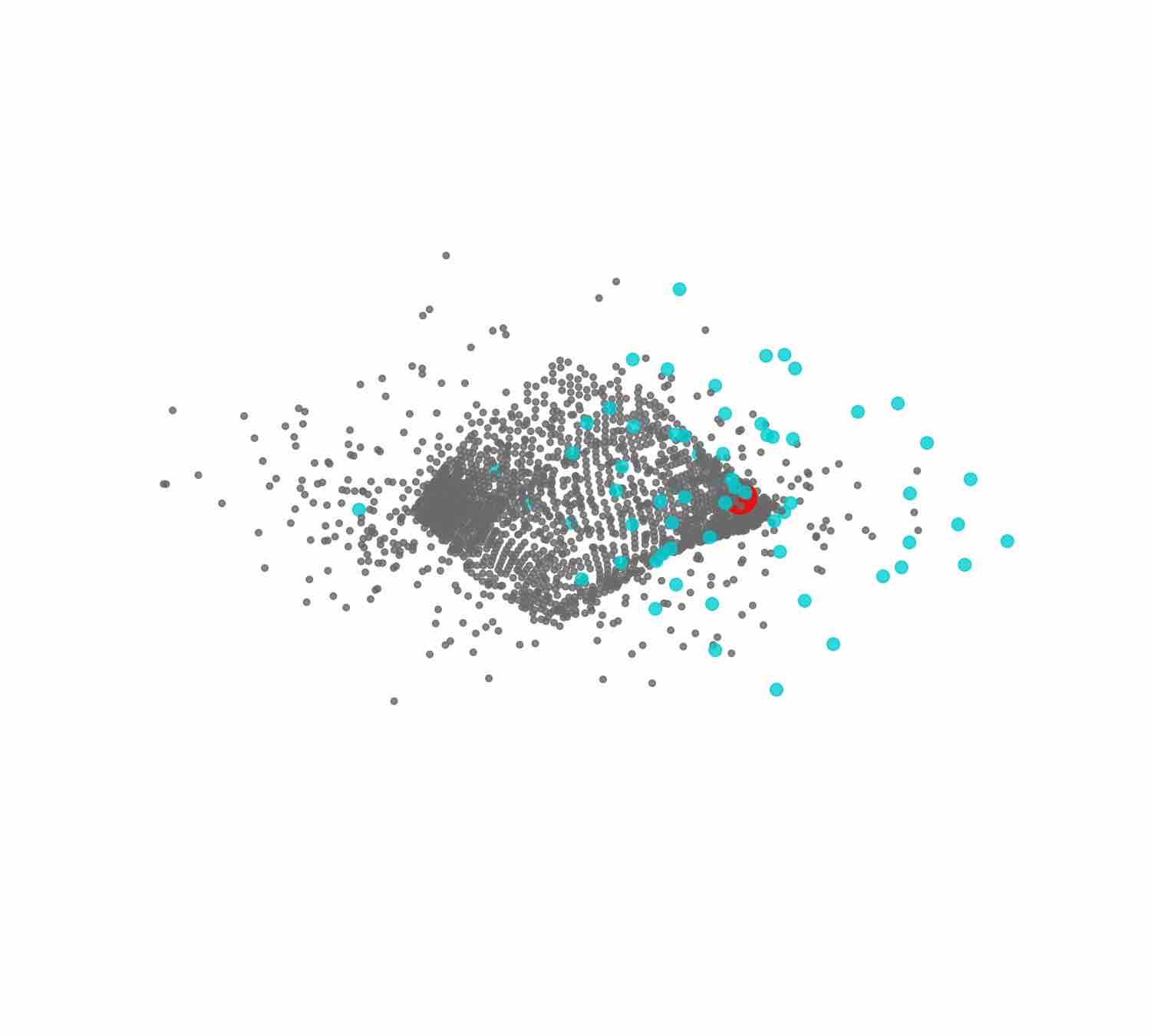} & 
      \includegraphics[trim={6cm 6cm 4cm 6cm}, clip=true , scale=0.045] {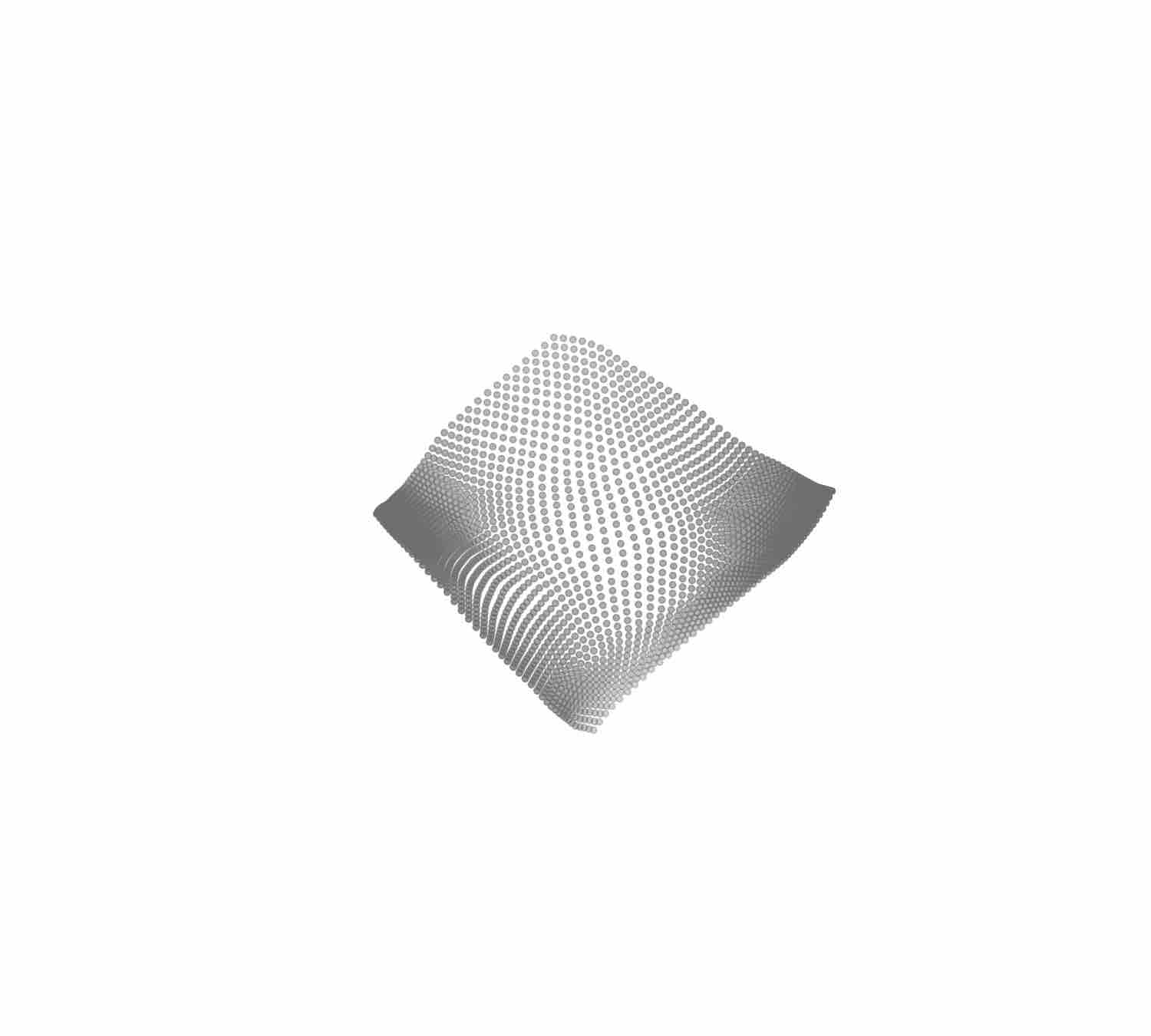} & 
      \includegraphics[trim={5cm 4cm 3cm 4cm}, clip=true , scale=0.045] {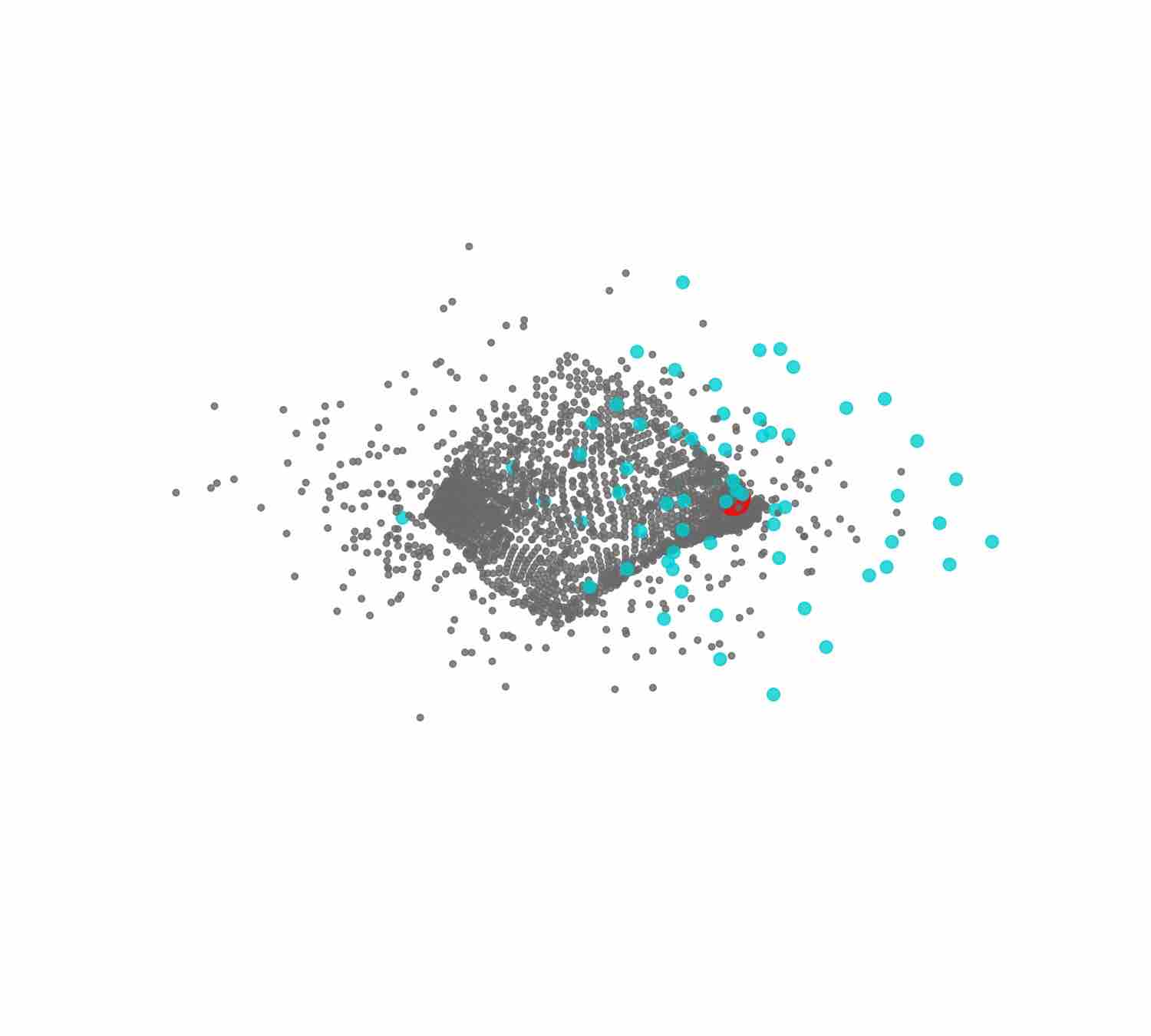}    
      \\      
      \hline
       &&&&&&\\[-1em]
      5 &
      \includegraphics[trim={6cm 6cm 4cm 6cm}, clip=true , scale=0.045] {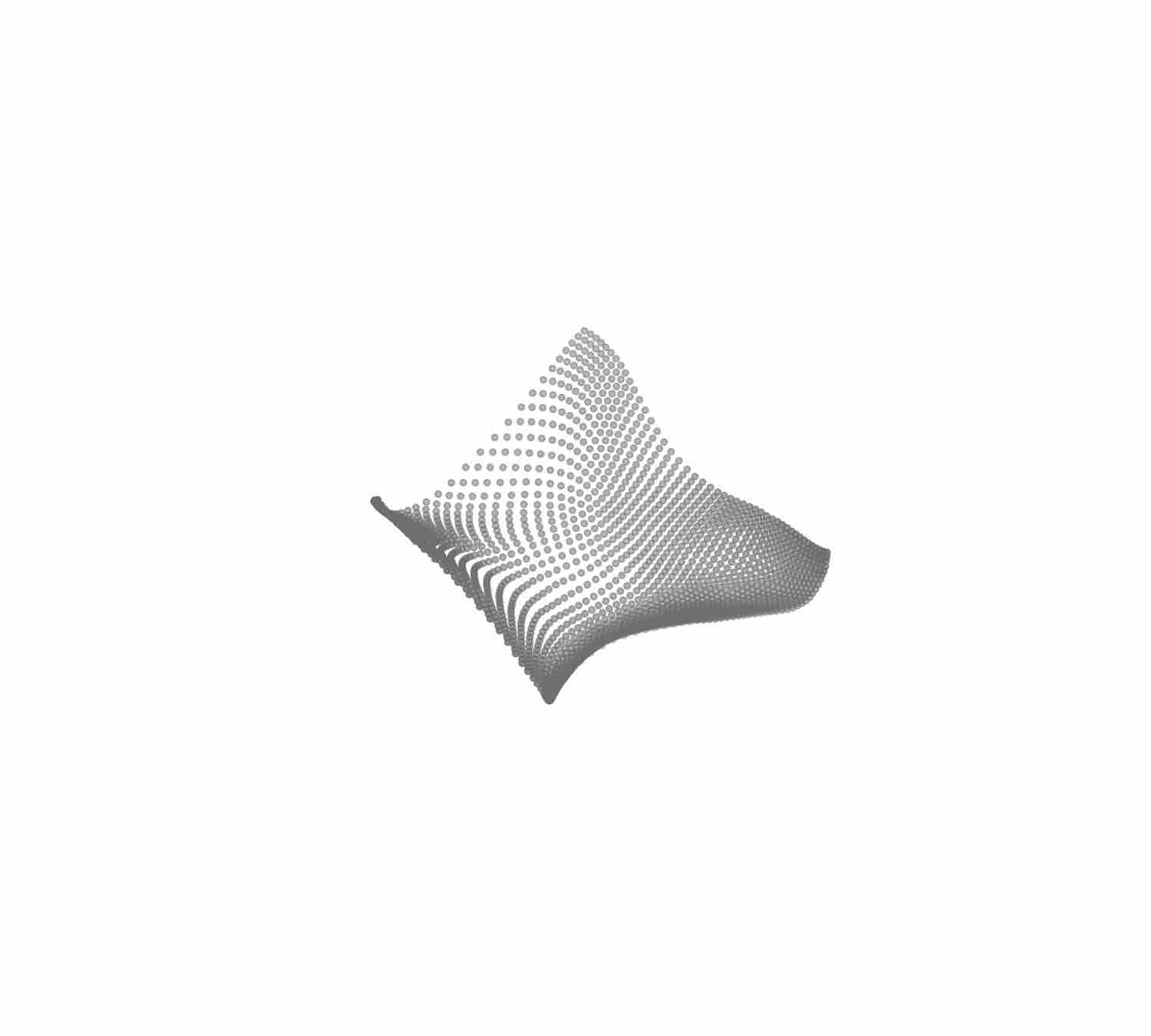} & 
      \includegraphics[trim={5cm 4cm 3cm 4cm}, clip=true , scale=0.045] {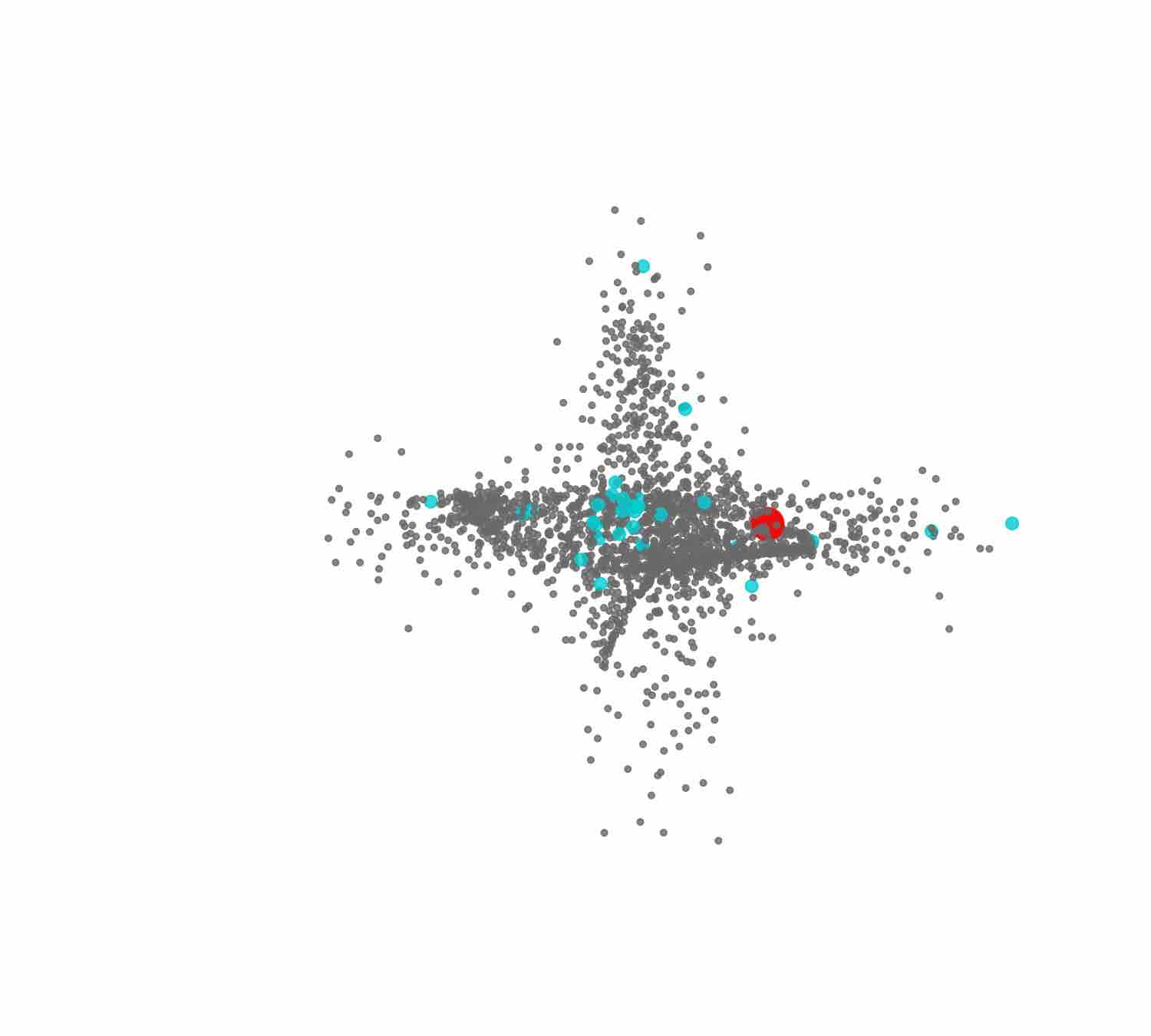} & 
      \includegraphics[trim={6cm 6cm 4cm 6cm}, clip=true , scale=0.06] {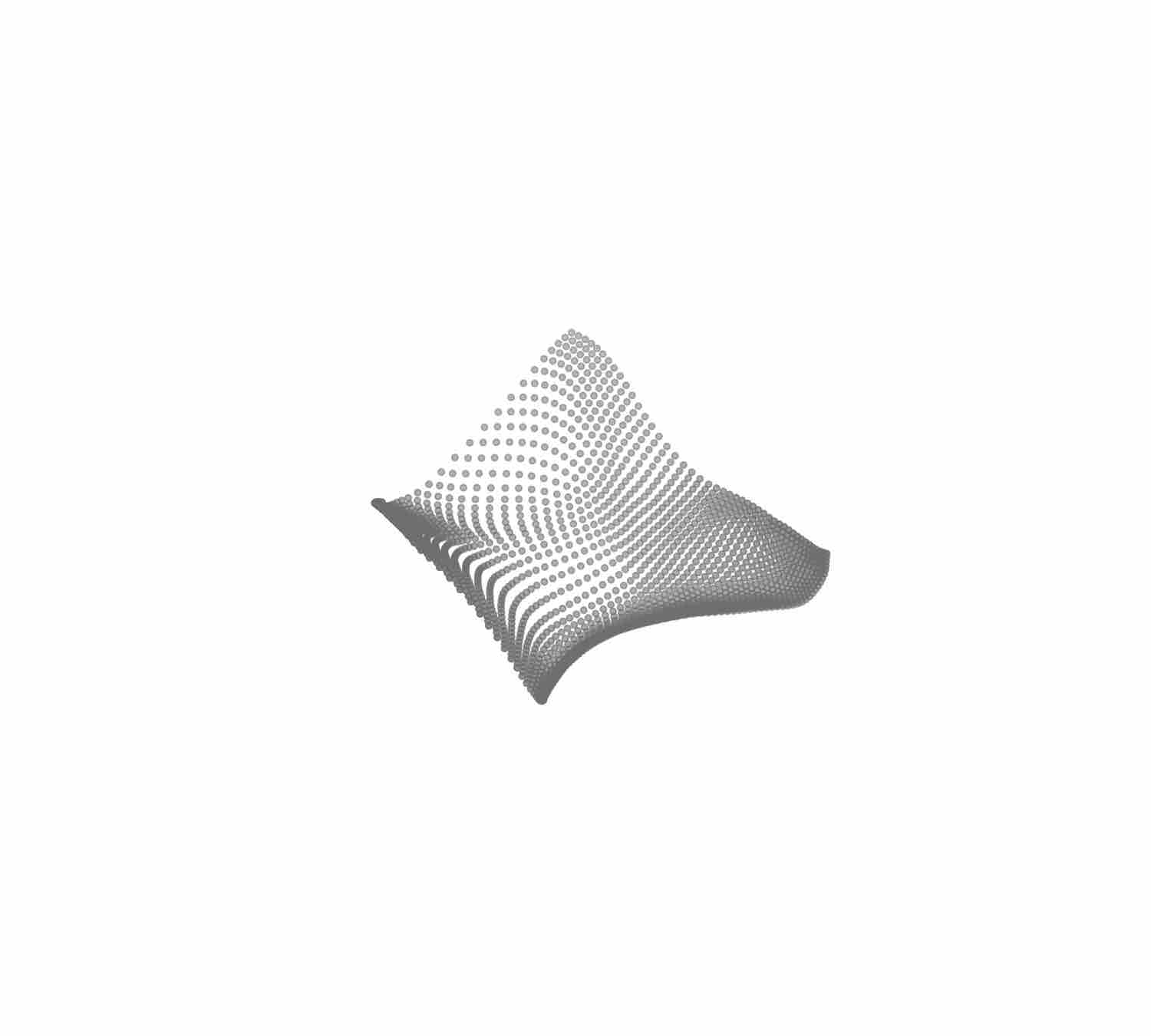} &
      \includegraphics[trim={5cm 4cm 3cm 4cm}, clip=true , scale=0.045] {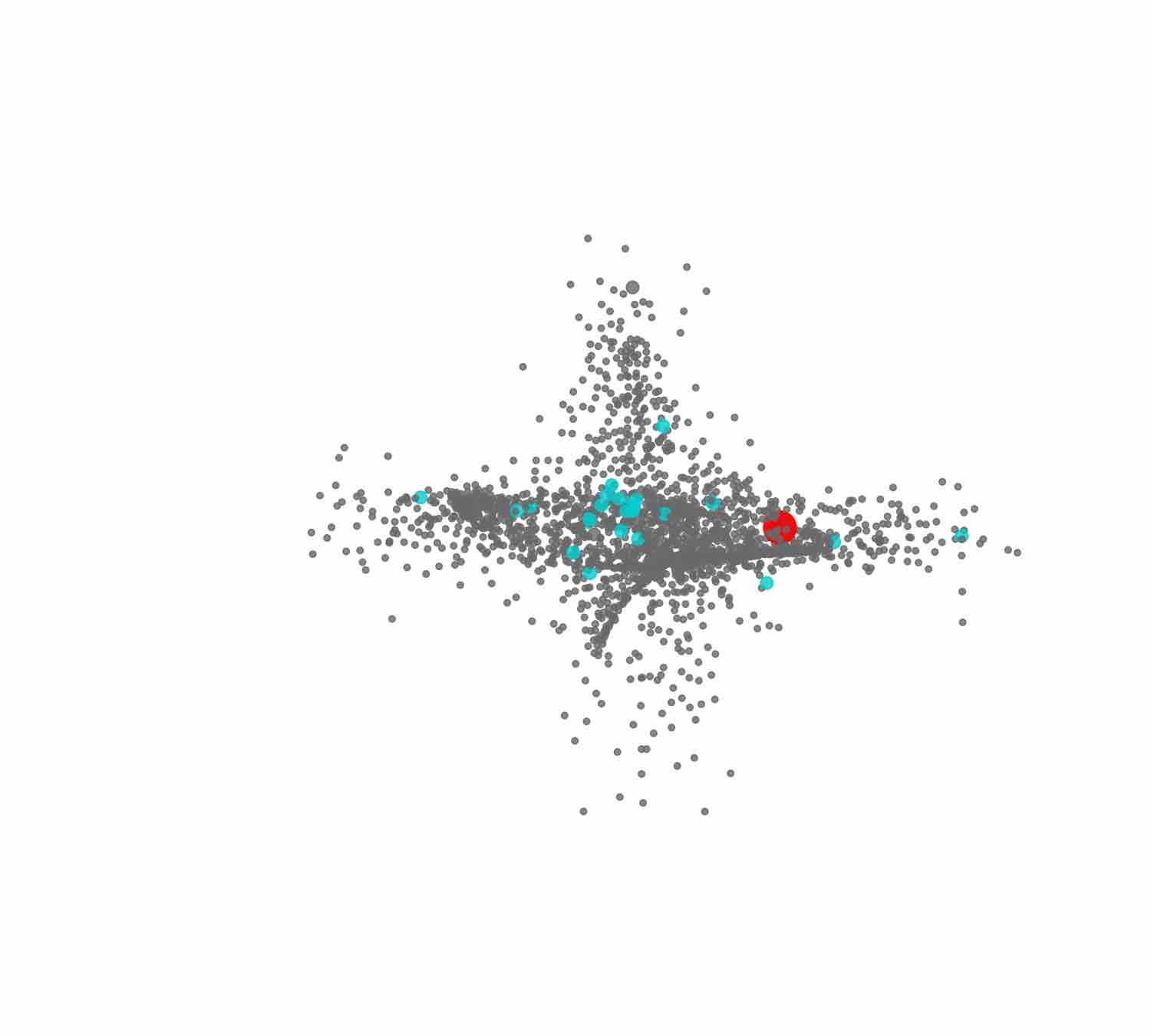} &
      \includegraphics[trim={6cm 6cm 4cm 6cm}, clip=true , scale=0.045] {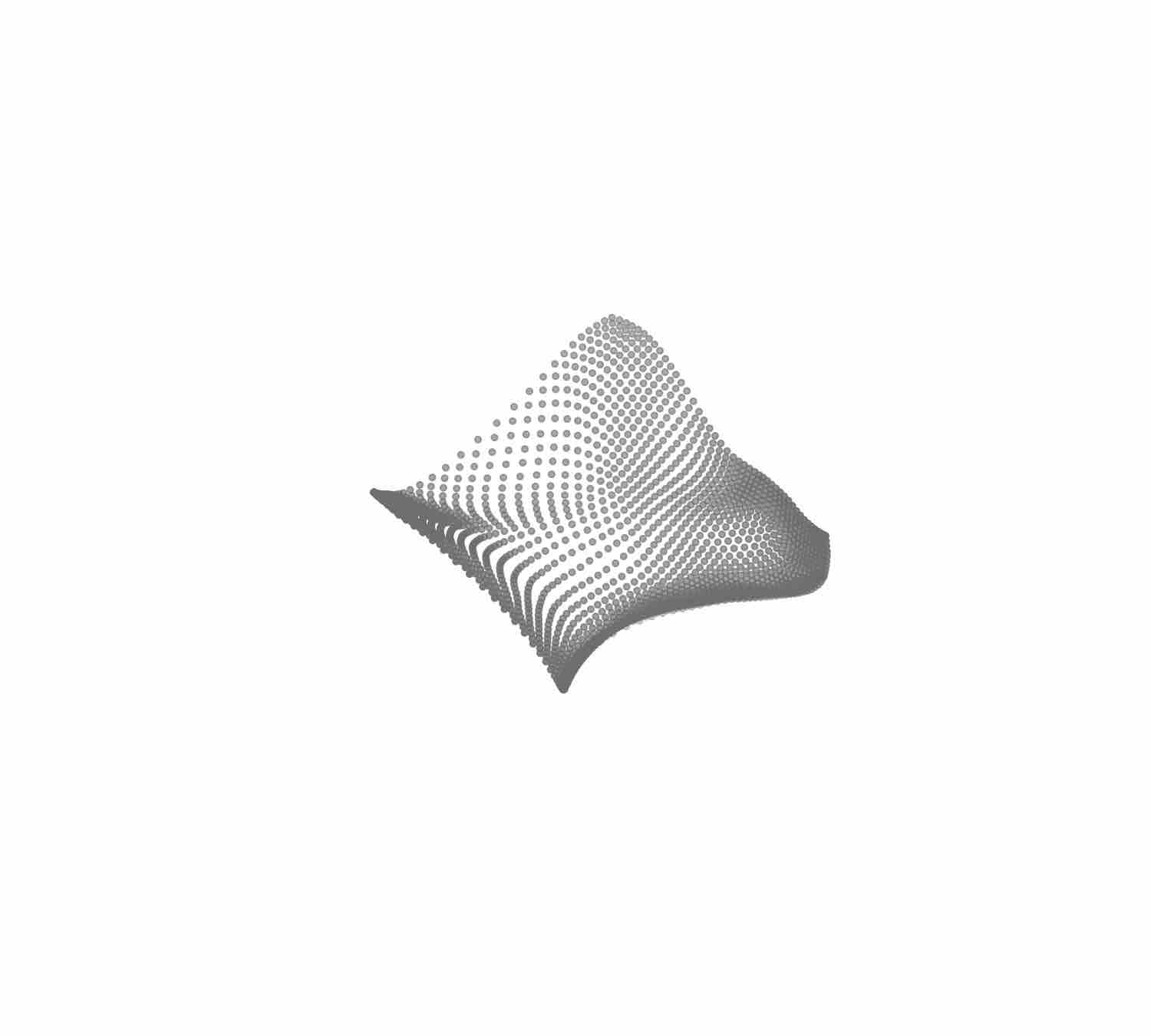} & 
      \includegraphics[trim={5cm 4cm 3cm 4cm}, clip=true , scale=0.045] {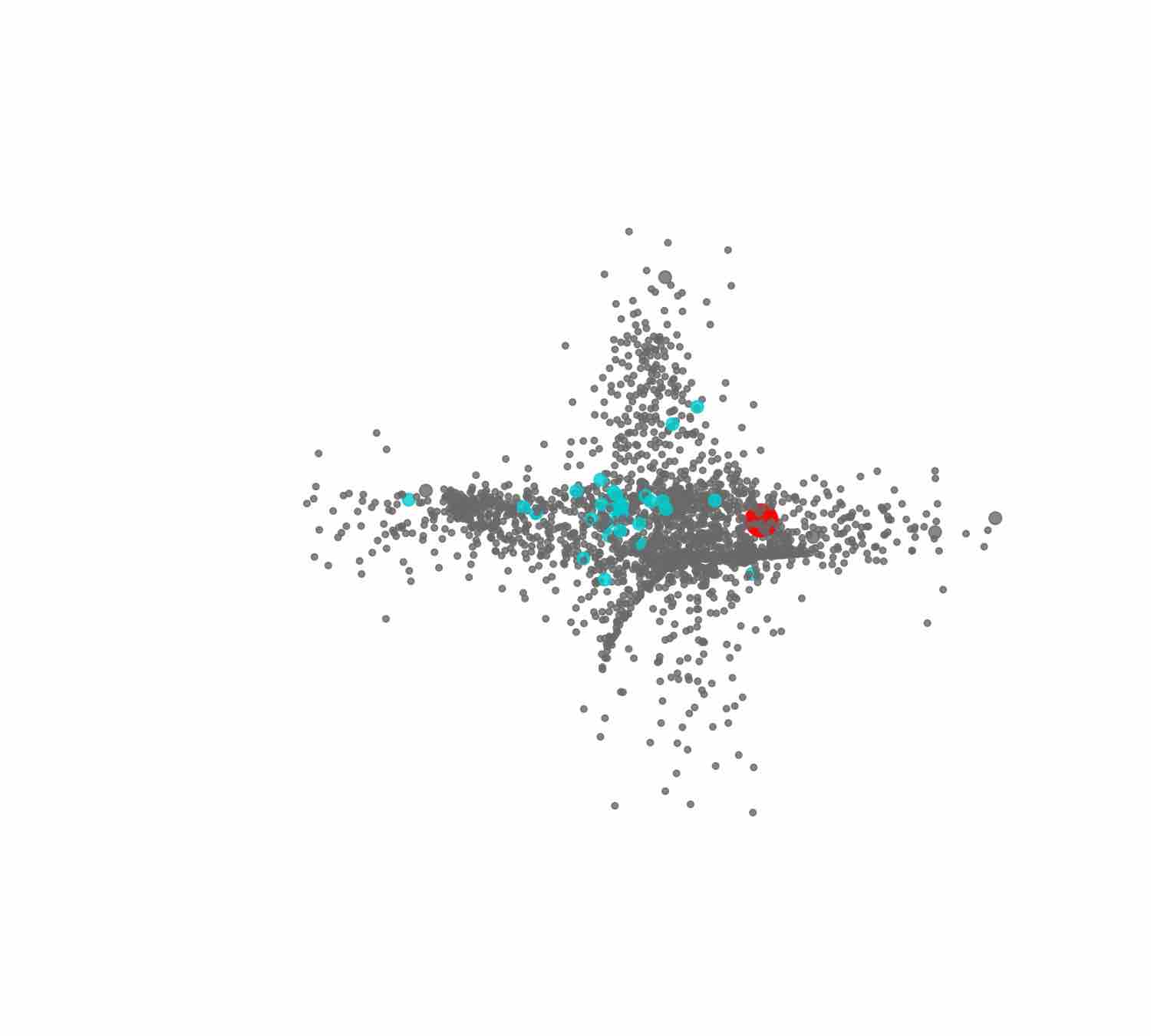}
      \\      
      \hline
       &&&&&&\\[-1em]
            
      300 &
      \includegraphics[trim={6cm 6cm 4cm 6cm}, clip=true , scale=0.06] {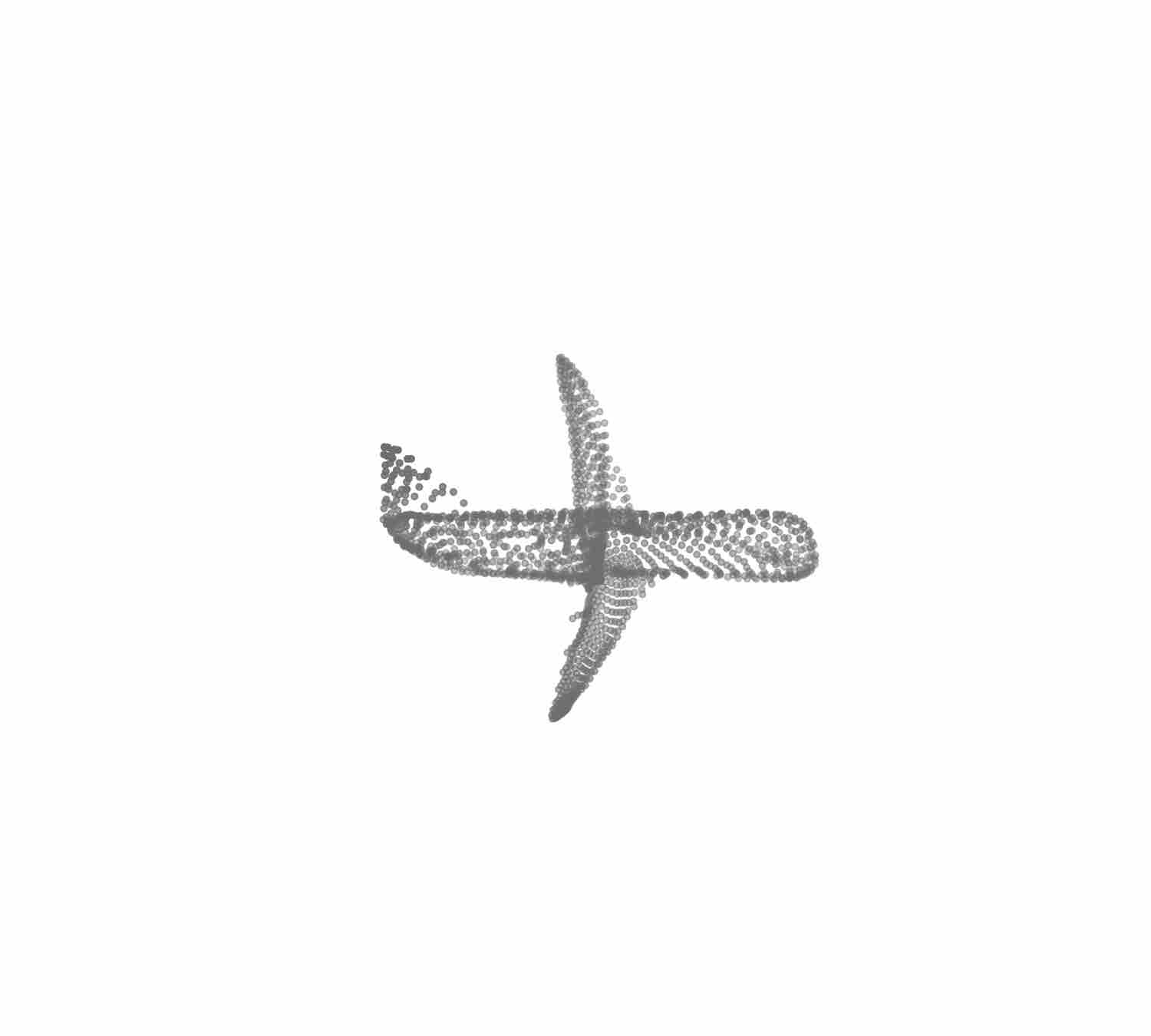} & 
      \includegraphics[trim={5cm 4cm 3cm 4cm}, clip=true , scale=0.045] {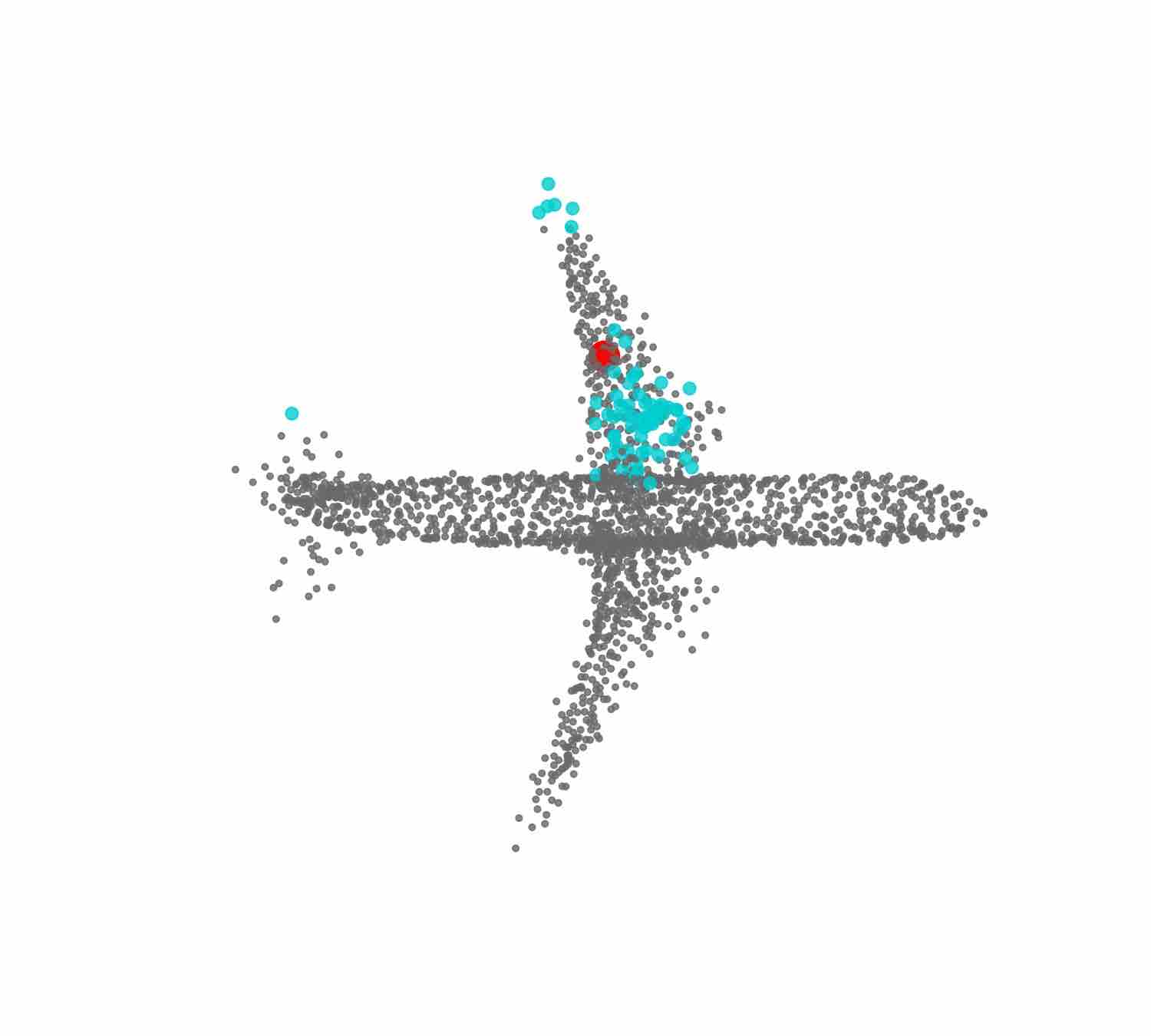} & 
      \includegraphics[trim={6cm 6cm 4cm 6cm}, clip=true , scale=0.06] {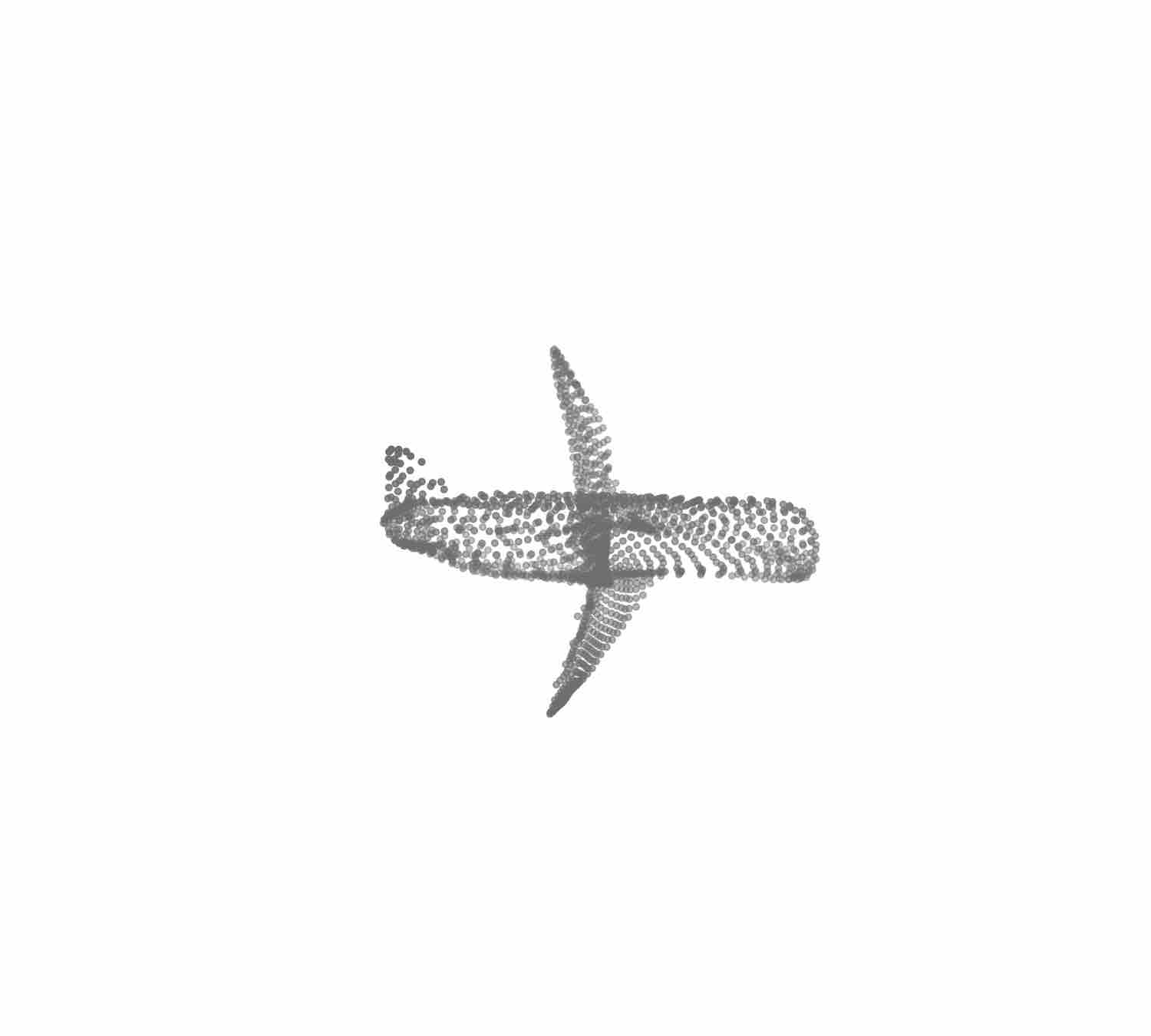} &
      \includegraphics[trim={5cm 4cm 3cm 4cm}, clip=true , scale=0.045] {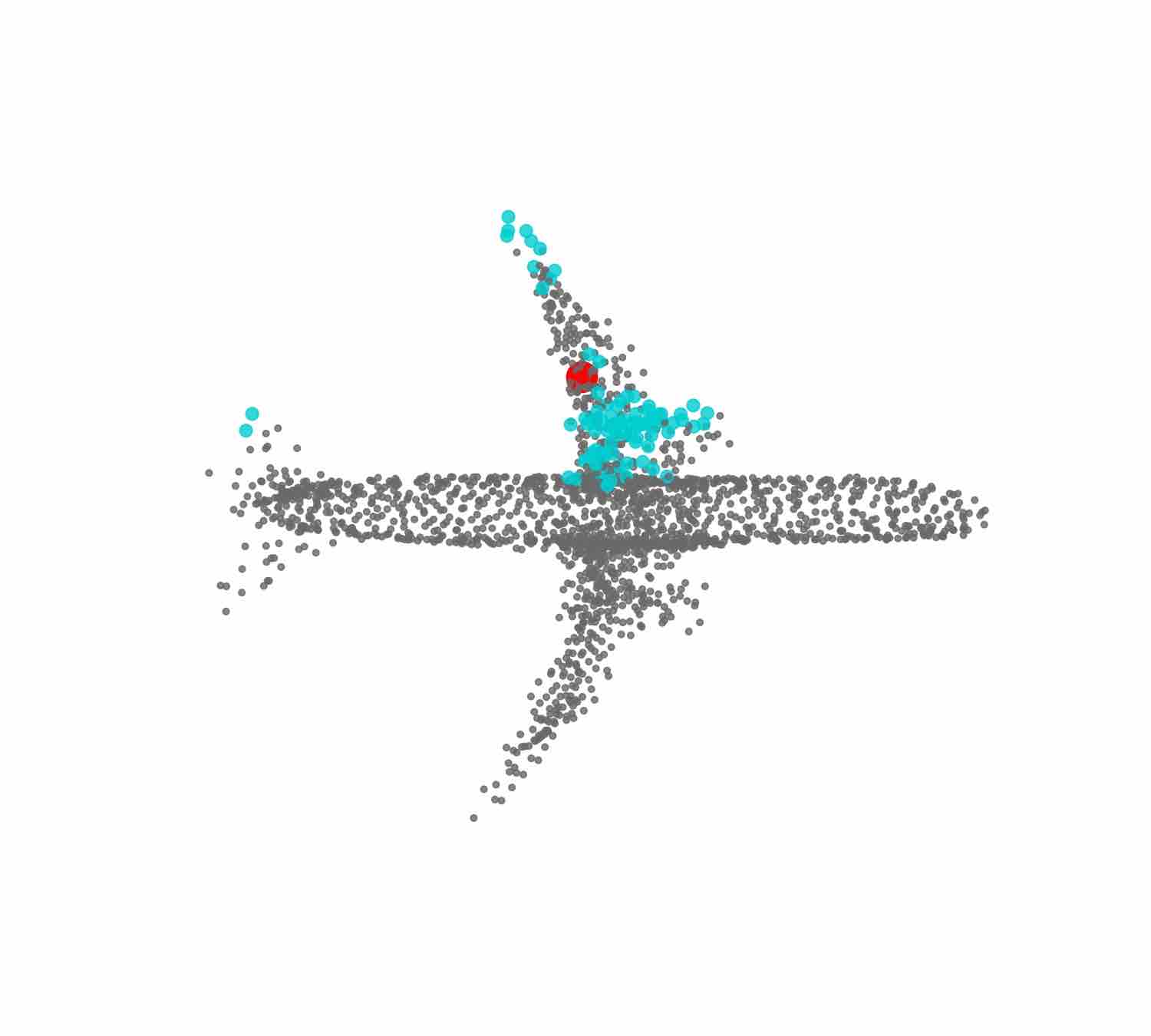} &
      \includegraphics[trim={6cm 6cm 4cm 6cm}, clip=true , scale=0.06] {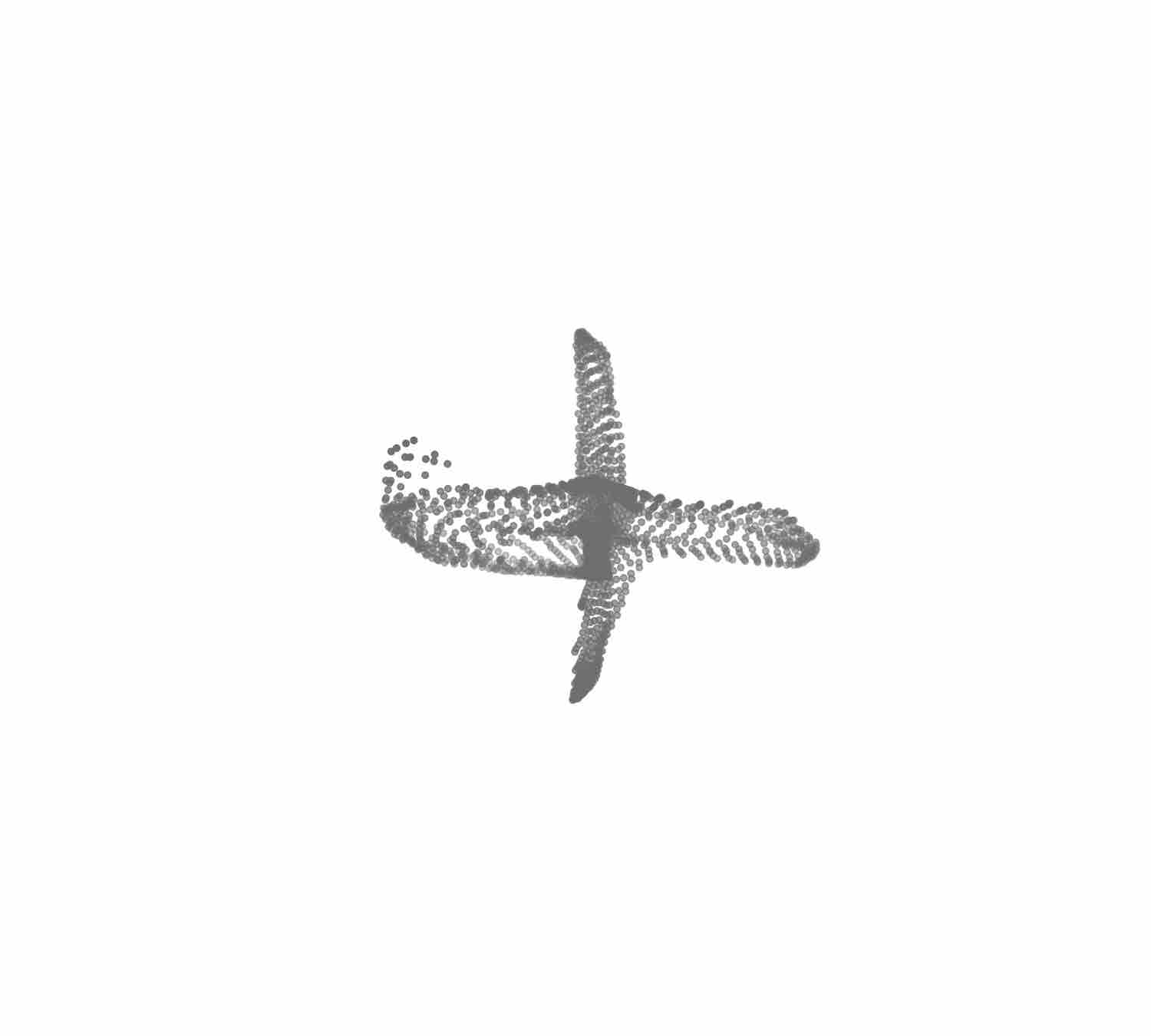} & 
      \includegraphics[trim={5cm 4cm 3cm 4cm}, clip=true , scale=0.045] {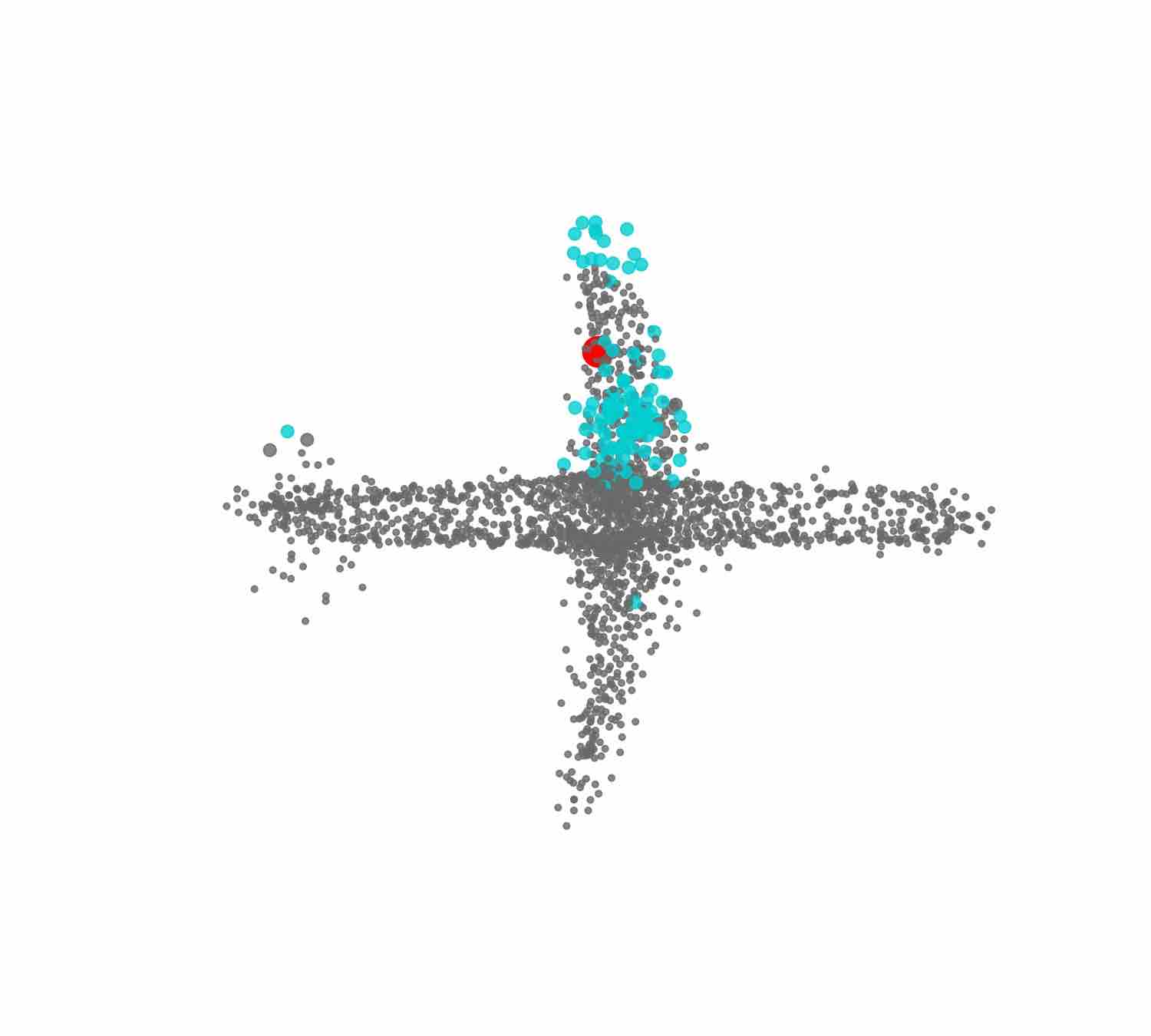} 
      \\      \hline
    \end{tabular}
  \end{center}
  \caption{\label{tab:graph_topology_evolvement_adj} \textbf{Visualizing the evolvement of reconstructions and learnt graph topologies during the training process (based on the graph-adjacency-matrix based filtering~\eqref{eq:adj_filtering}).} In the \textit{Before graph filtering} columns, we show the coarse reconstructions produced by the folding module, which is the intermediate output; in the \textit{After graph filtering} columns, we show the reconstructions after the graph-filtering module, which is the final output. We put supervision only to the final output. We choose one point in the 2D lattice and record its neighboring points in the training process. The chosen point is colored in red and its neighboring points are colored in cyan. The chosen point and its neighboring points evolve to form an engine, which is a fine detail. Comparing to the reconstructed point clouds in the \textit{Before graph filtering} columns, the reconstructed point clouds in the \textit{After graph filtering} columns preserve more fine details, such as engines, tails, nodes, and wing tips; see the improvement on fine-detailed classification in Table~\ref{tab:sub_classification}.}
\end{table*}

\begin{table*}[htb!]
  \begin{center}
    \begin{tabular}{c | c  c | c  c | c  c }
      \hline   
      \hline
      Epoch & 
      \multicolumn{2}{c|}{Swept wings no engines} & 
      \multicolumn{2}{c|}{Swept wings two engines} & 
      \multicolumn{2}{c}{Straight wings two engines} \\
      \hline
             & 
       \emph{Before graph filtering} & 
       \emph{After graph filtering} & 
       \emph{Before graph filtering} & 
       \emph{After graph filtering} & 
       \emph{Before graph filtering} & 
       \emph{After graph filtering} 
       \\
        \hline
      
      Input &
      \includegraphics[trim={4cm 4cm 2cm 4cm}, clip=true , scale=0.025] {figures/graph_variance/grid.jpg} &
      \includegraphics[trim={4cm 4cm 2cm 4cm}, clip=true , scale=0.045] {figures/graph_variance/graph_fig_e0/input_pc_0_e0.jpg} & 
      \includegraphics[trim={4cm 4cm 2cm 4cm}, clip=true , scale=0.025] {figures/graph_variance/grid.jpg} &
      \includegraphics[trim={4cm 4cm 2cm 4cm}, clip=true , scale=0.045]  {figures/graph_variance/graph_fig_e0/input_pc_13_e0.jpg} & 
      \includegraphics[trim={4cm 4cm 2cm 4cm}, clip=true , scale=0.025] {figures/graph_variance/grid.jpg} &
      \includegraphics[trim={4cm 4cm 2cm 4cm}, clip=true , scale=0.045] {figures/graph_variance/graph_fig_e0/input_pc_23_e0.jpg}
 \\
      \hline
       &&&&&&\\[-1em] 
      0 &
      \includegraphics[trim={6cm 6cm 4cm 6cm}, clip=true , scale=0.06] {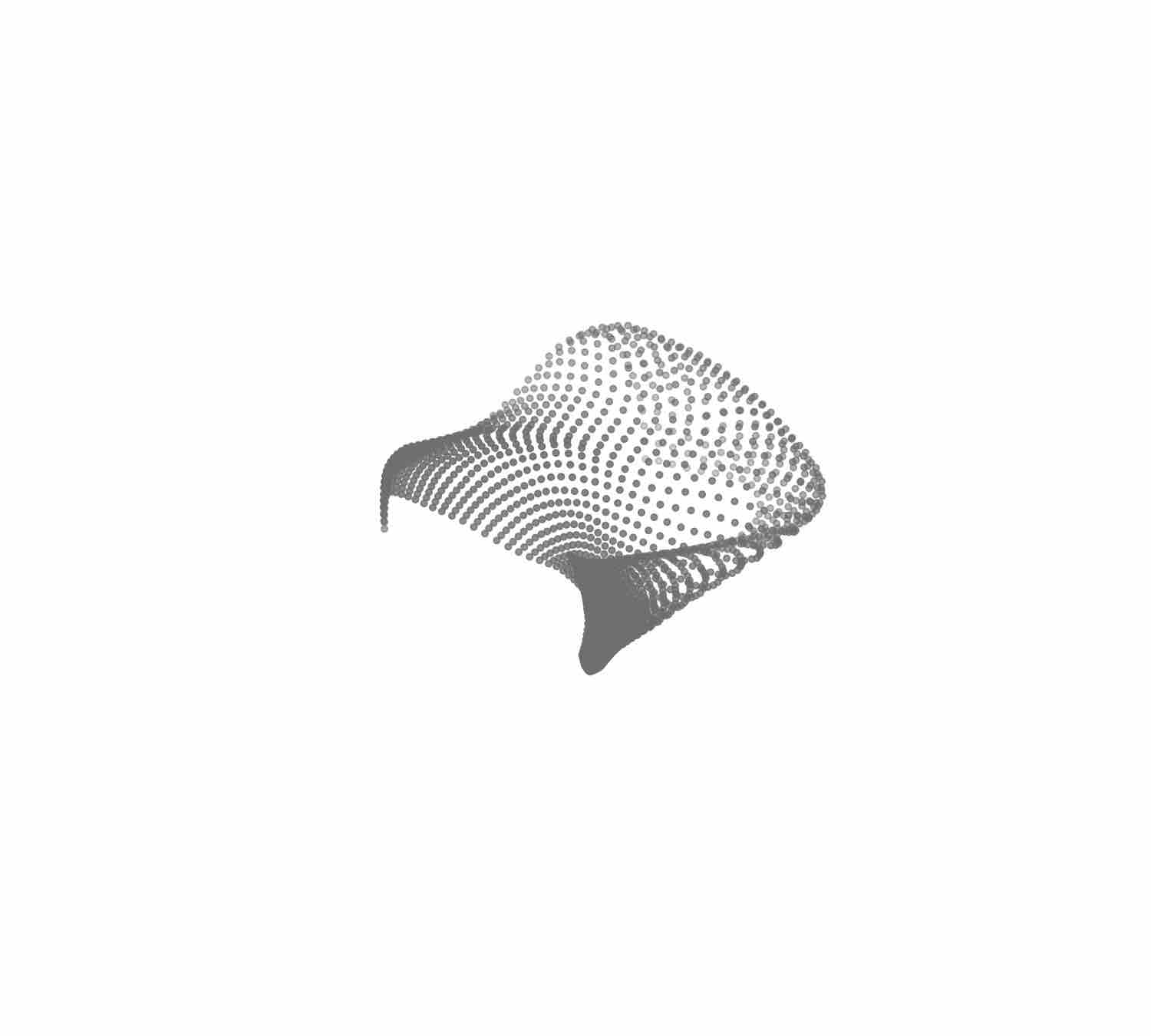} & 
      \includegraphics[trim={5cm 4cm 3cm 4cm}, clip=true , scale=0.045] {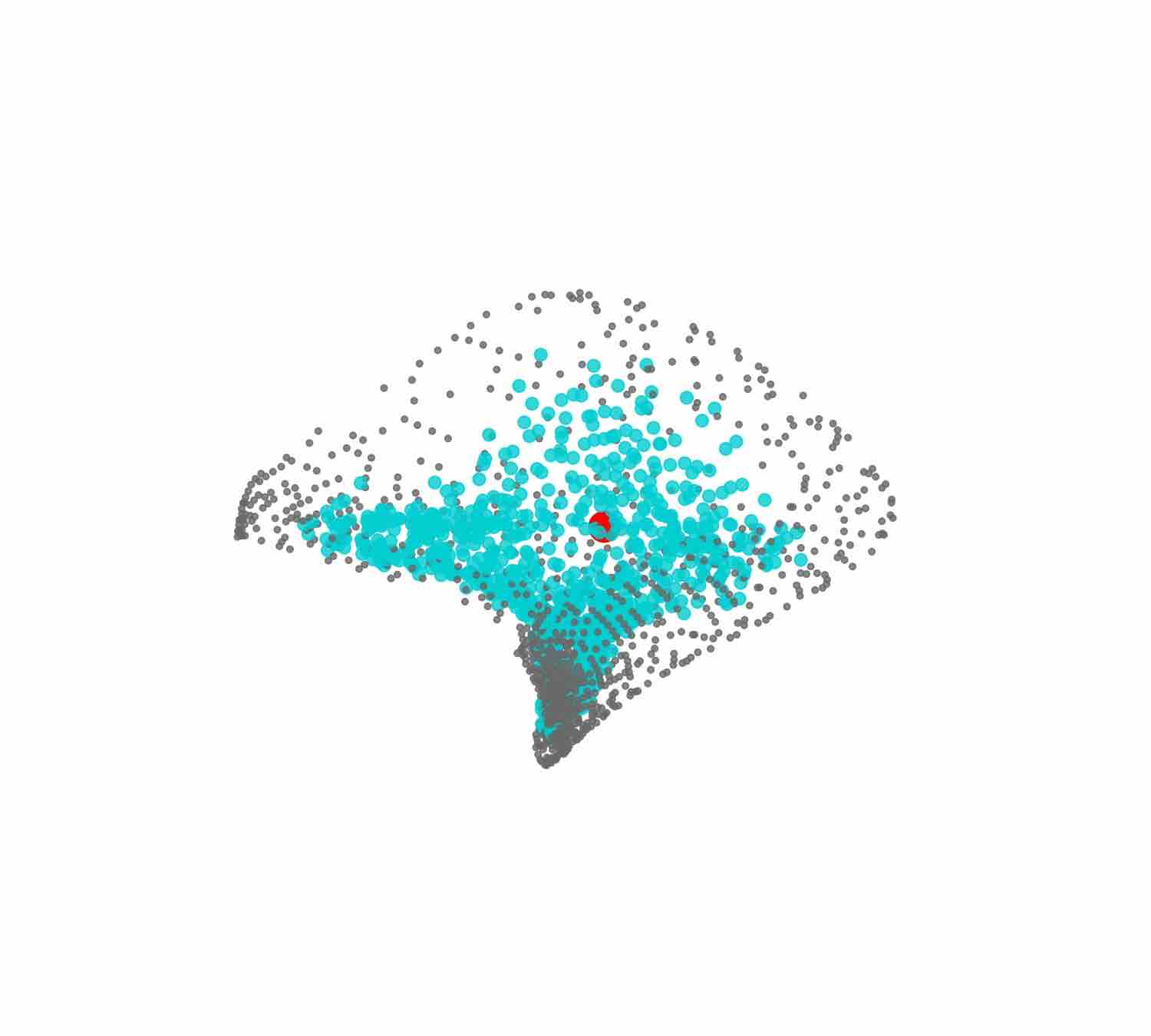} & 
      \includegraphics[trim={6cm 6cm 4cm 6cm}, clip=true , scale=0.06] {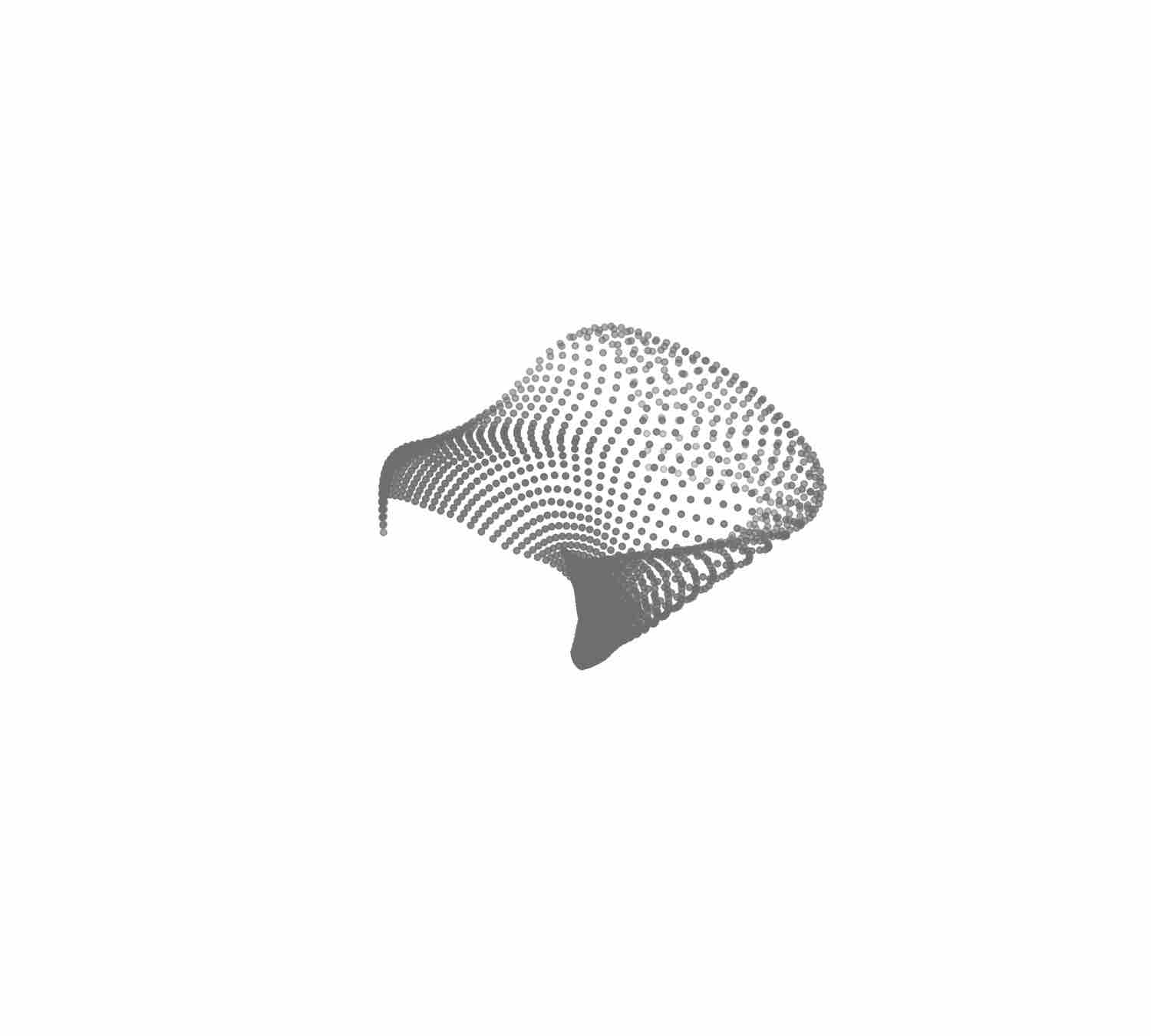} &
      \includegraphics[trim={5cm 4cm 3cm 4cm}, clip=true , scale=0.045] {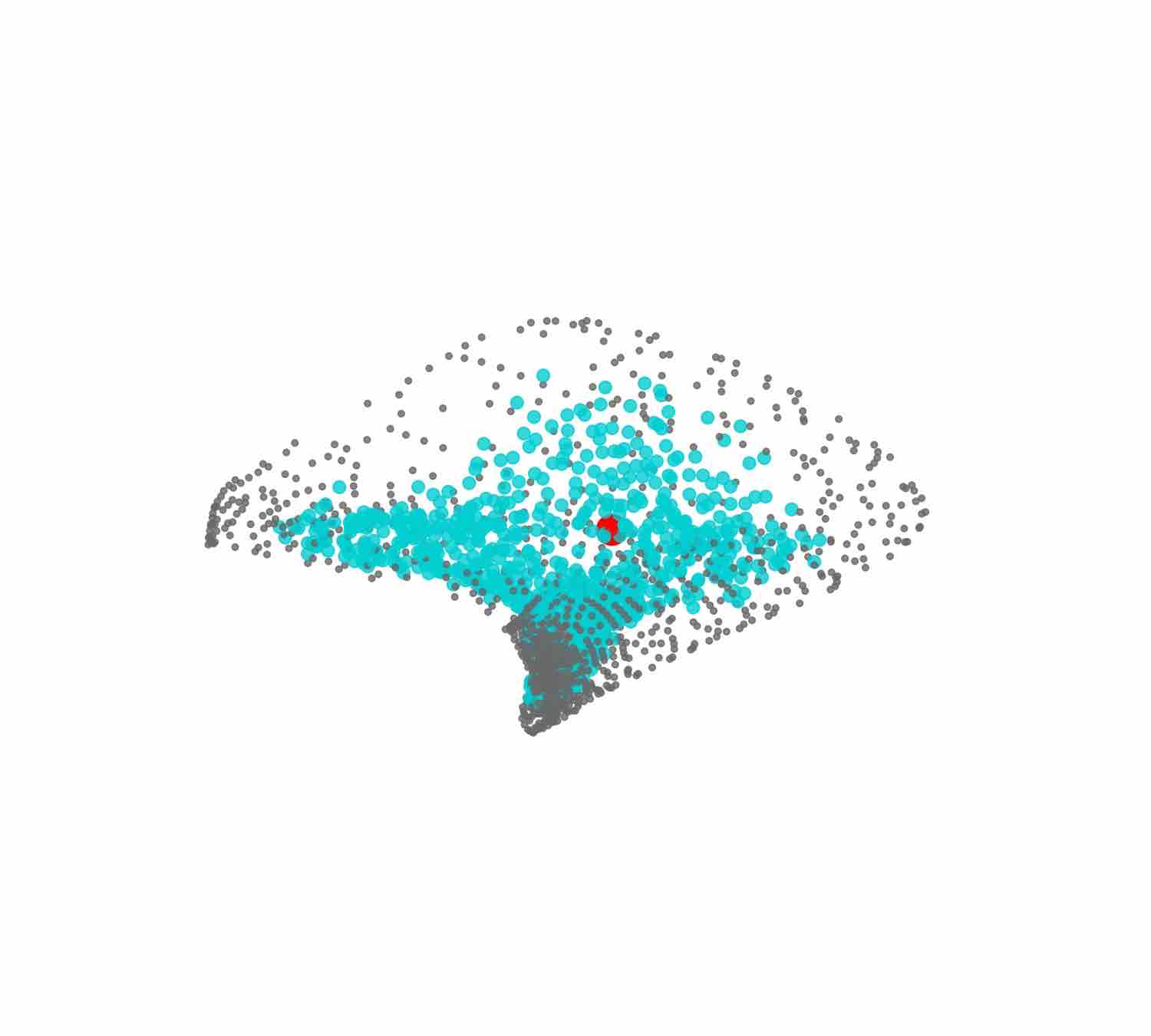} & 
      \includegraphics[trim={6cm 6cm 4cm 6cm}, clip=true , scale=0.06] {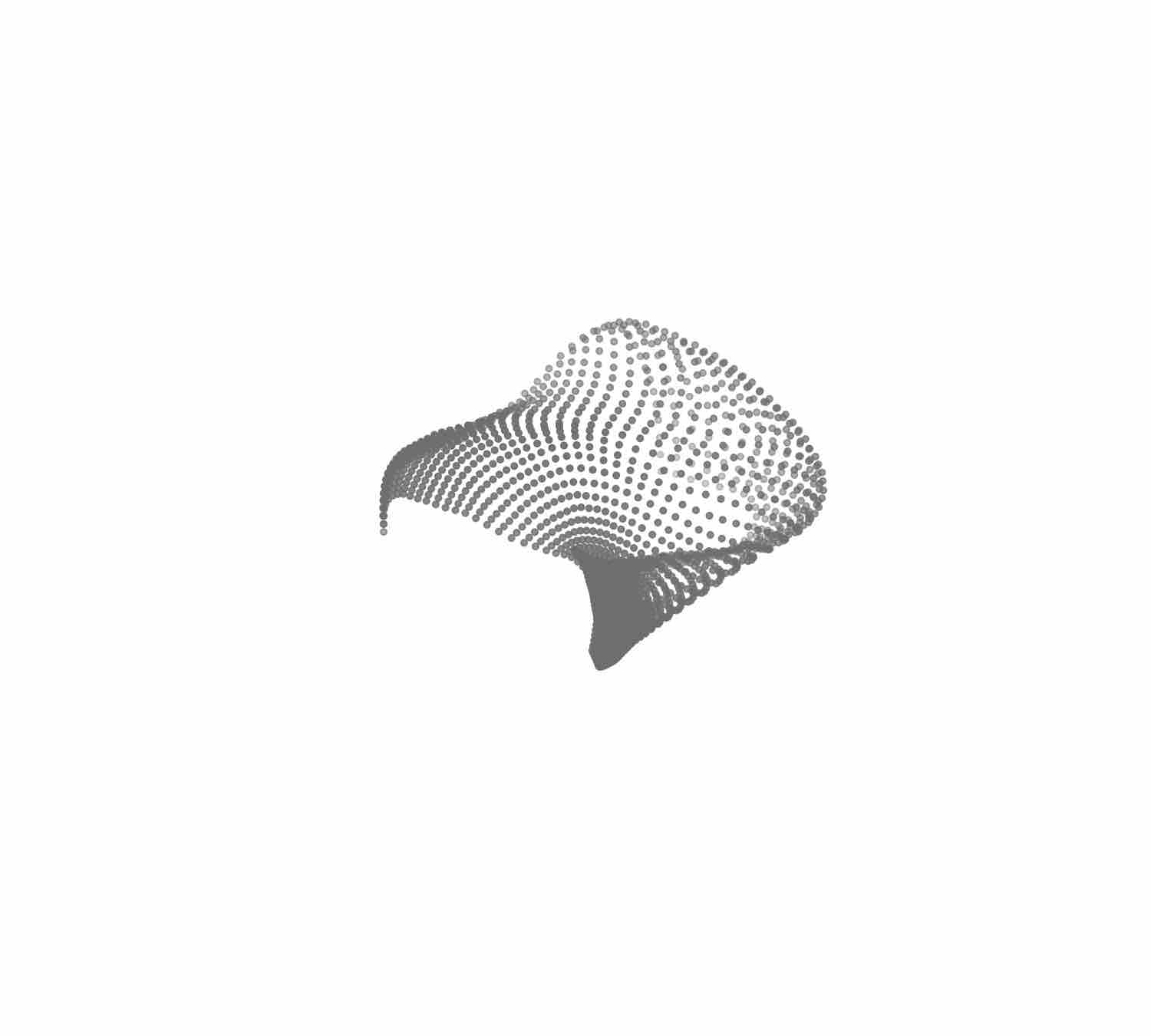} & 
      \includegraphics[trim={5cm 4cm 3cm 4cm}, clip=true , scale=0.045] {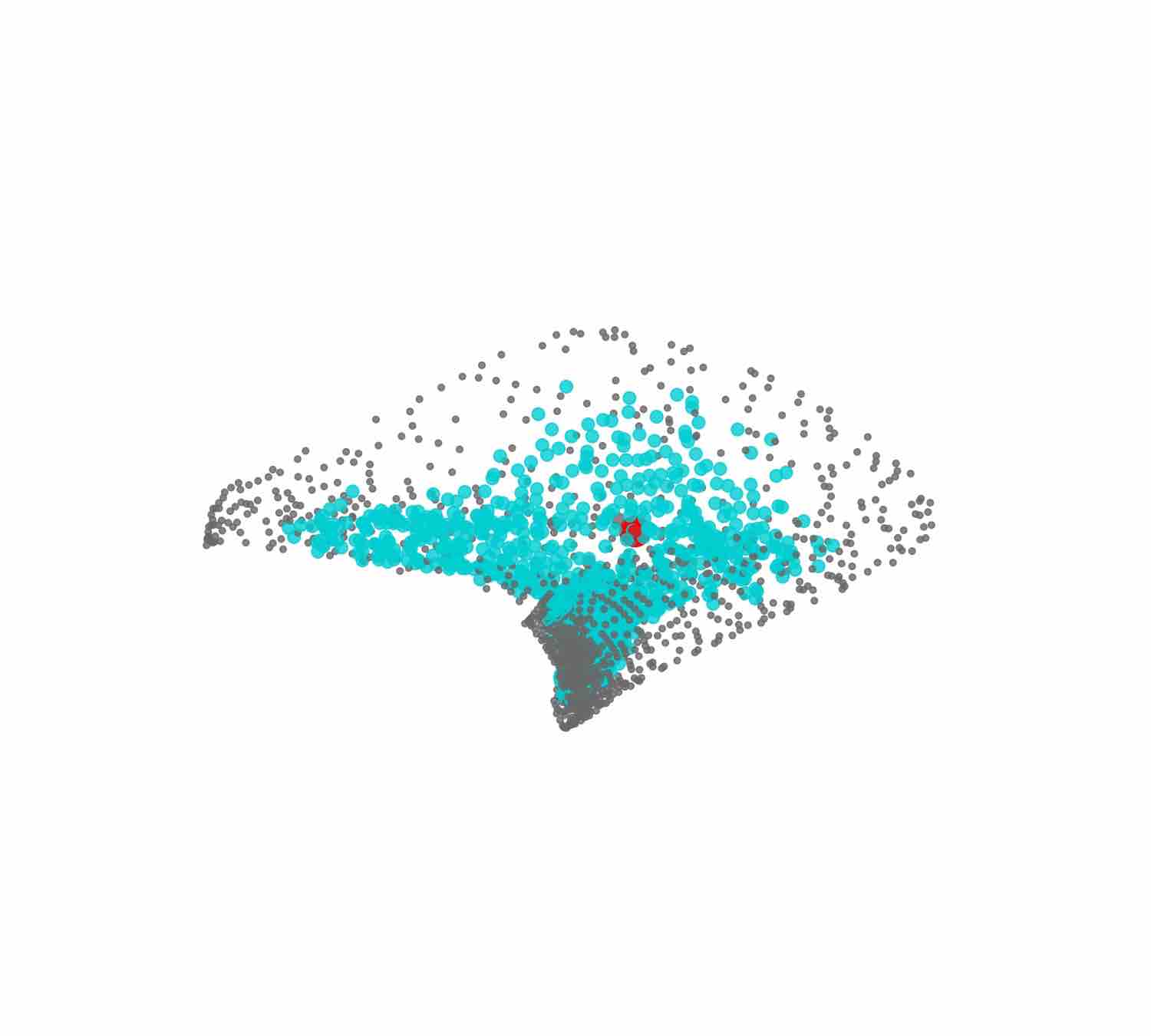}    
      \\      
      \hline
       &&&&&&\\[-1em]
      5 &
      \includegraphics[trim={6cm 6cm 4cm 6cm}, clip=true , scale=0.06] {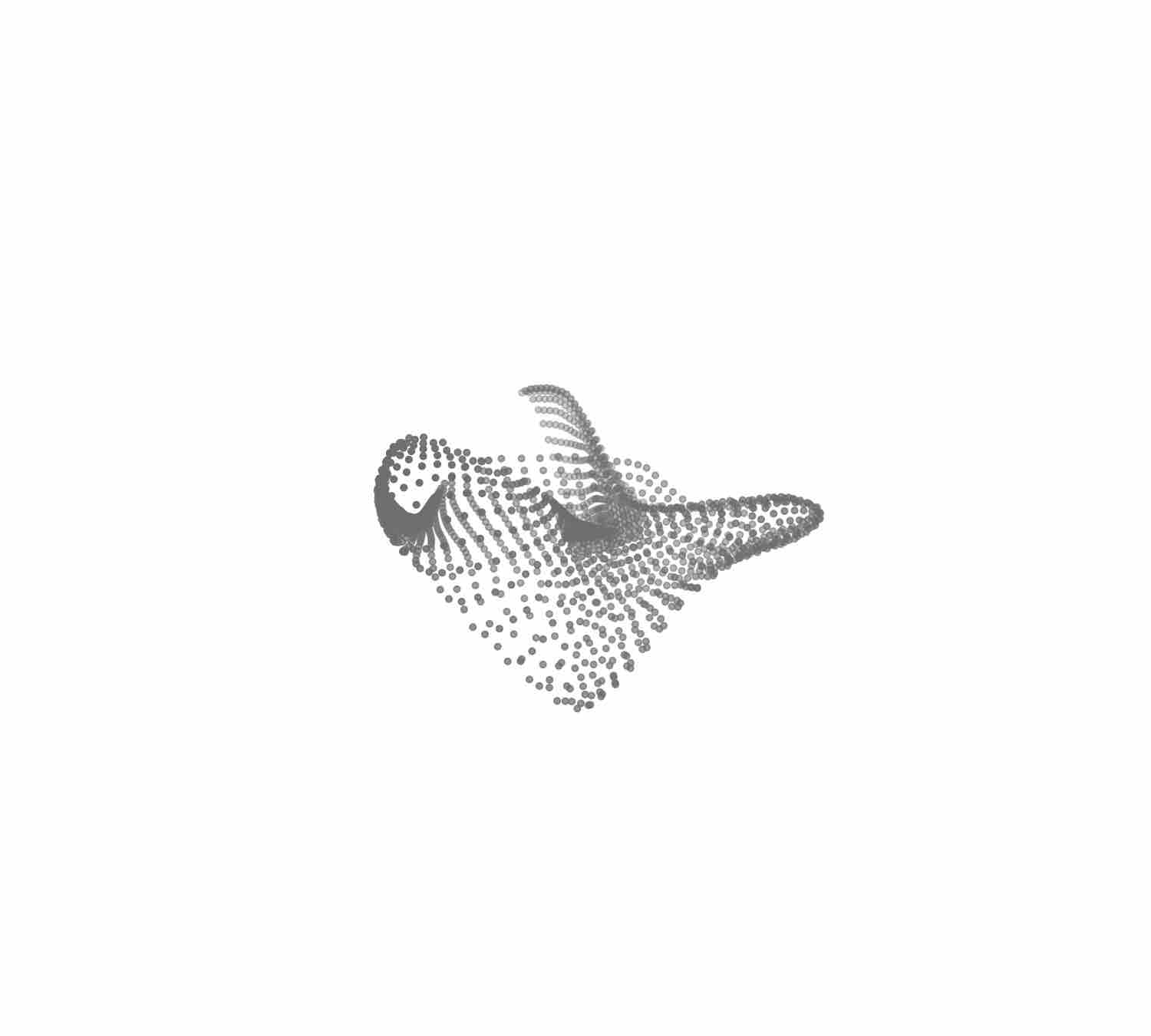} & 
      \includegraphics[trim={5cm 4cm 3cm 4cm}, clip=true , scale=0.045] {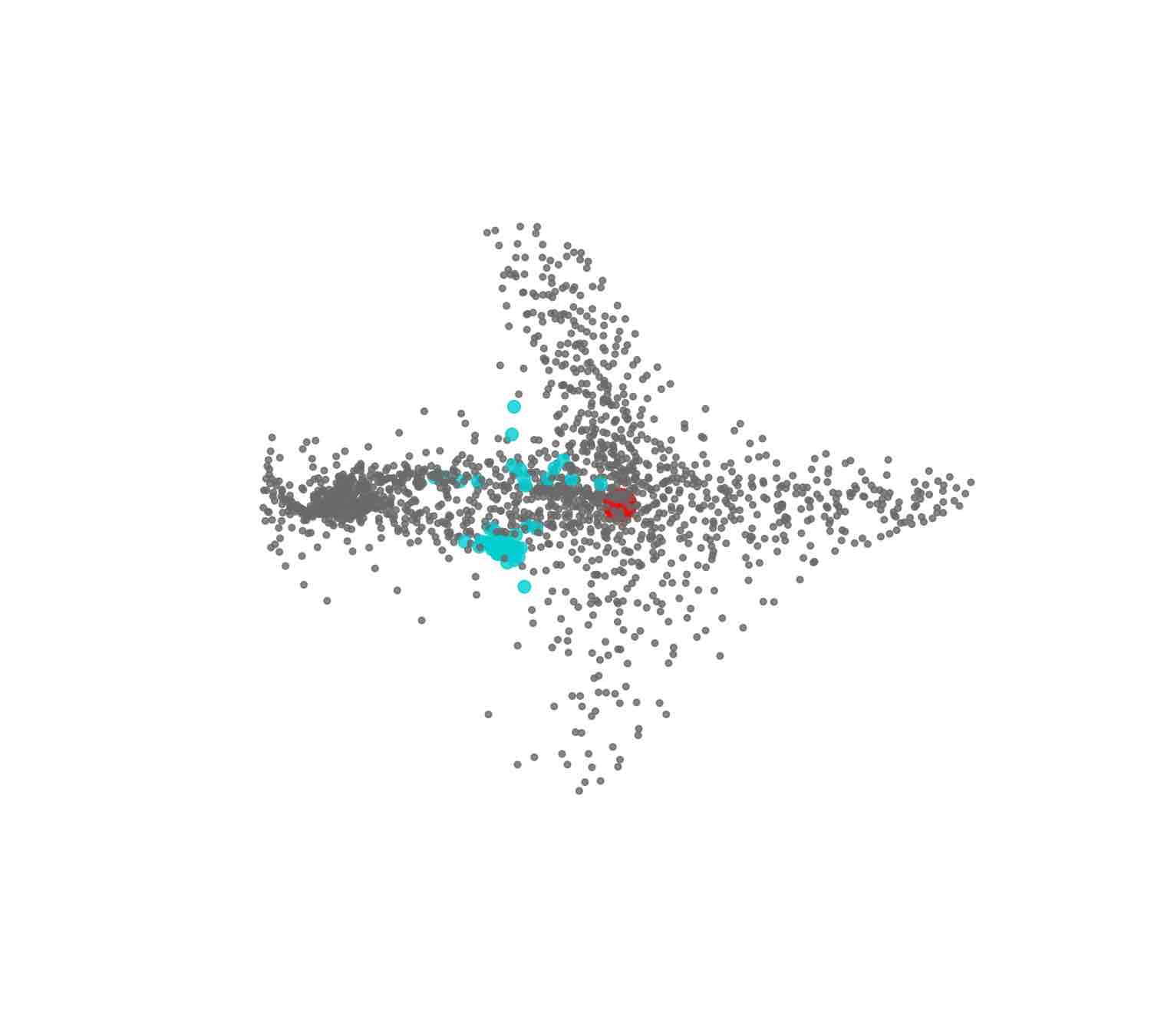} & 
      \includegraphics[trim={6cm 6cm 4cm 6cm}, clip=true , scale=0.06] {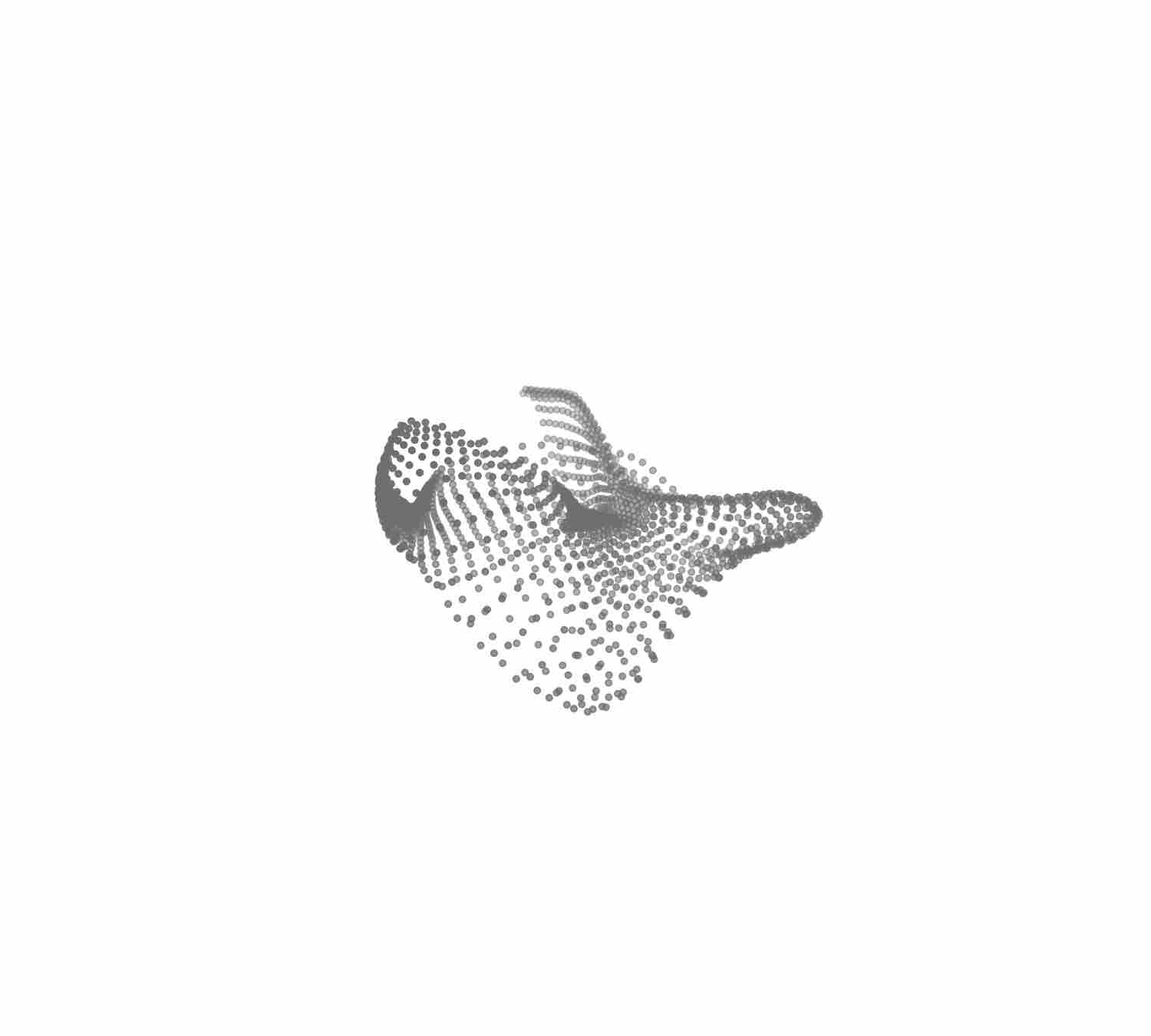} &
      \includegraphics[trim={5cm 4cm 3cm 4cm}, clip=true , scale=0.045] {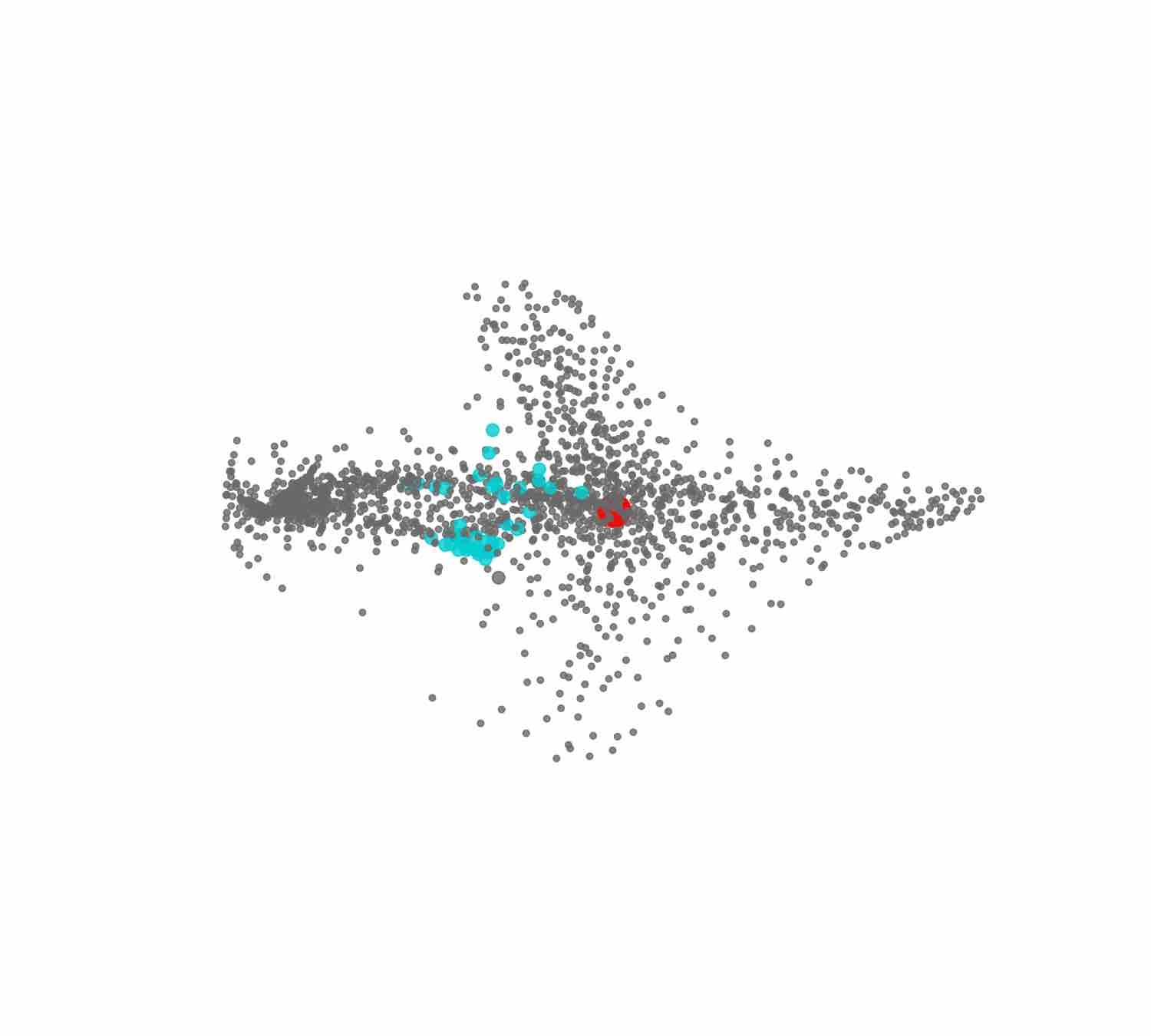} &
      \includegraphics[trim={6cm 6cm 4cm 6cm}, clip=true , scale=0.06] {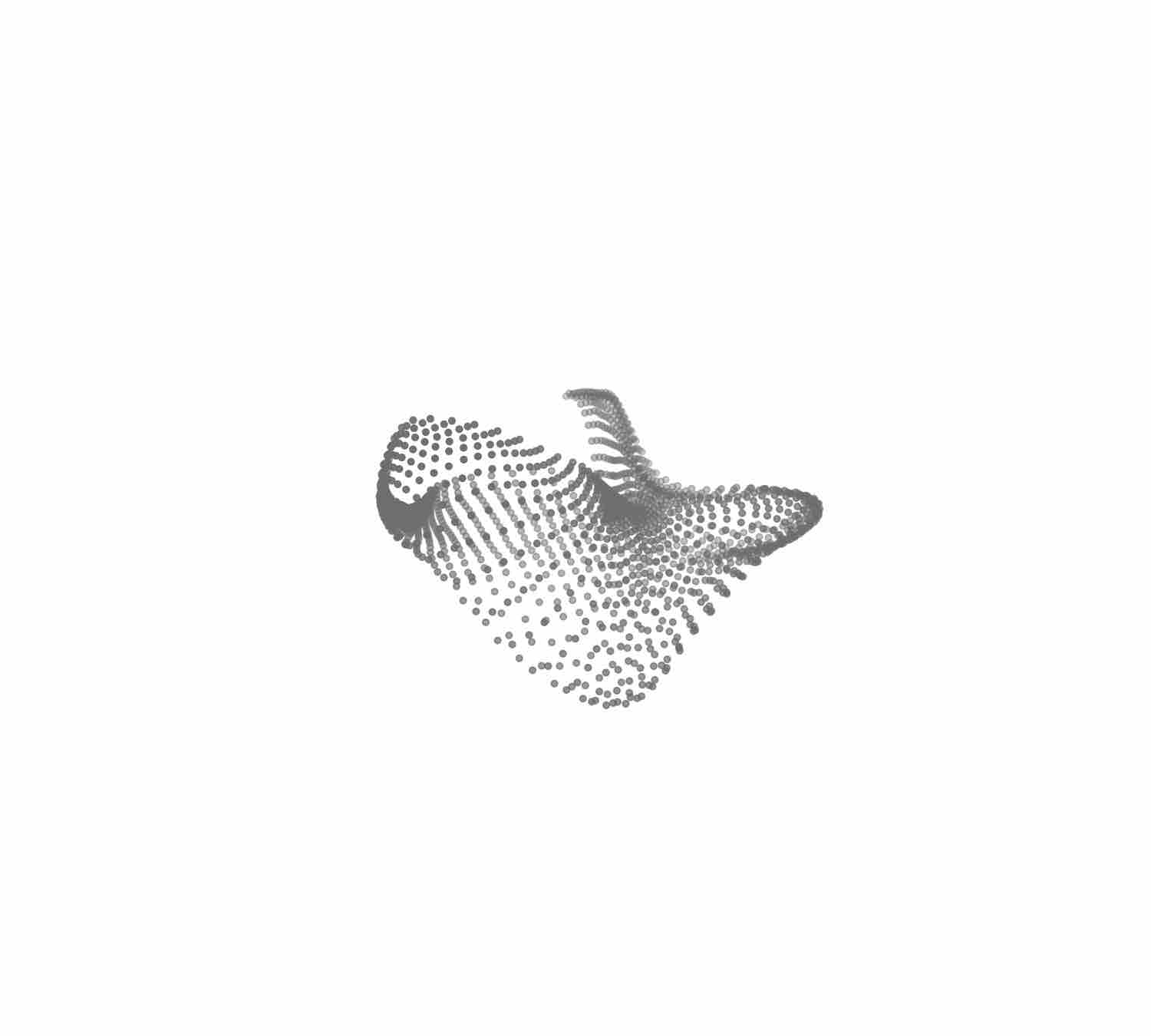} & 
      \includegraphics[trim={5cm 4cm 3cm 4cm}, clip=true , scale=0.045] {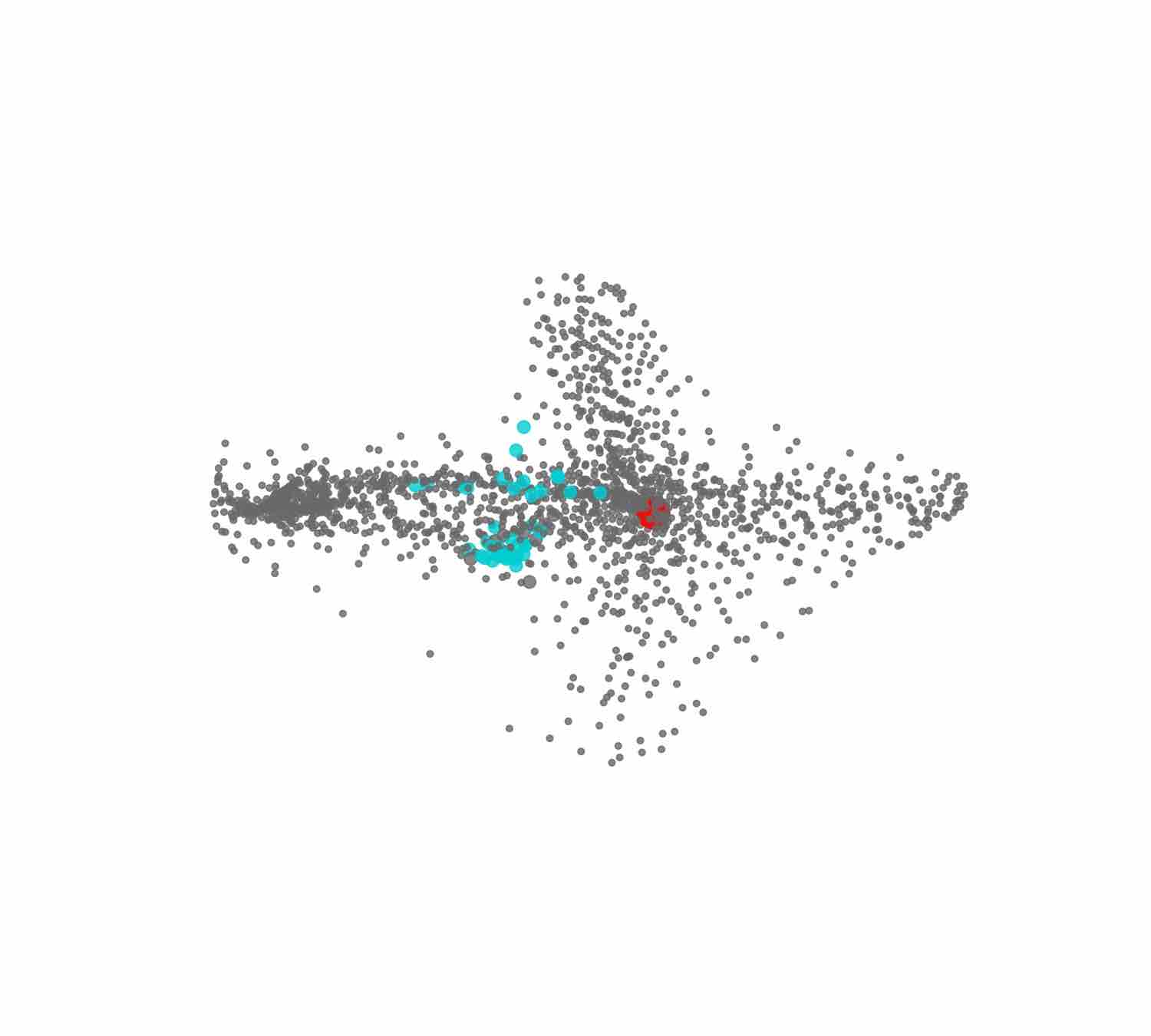}
      \\      
      \hline
       &&&&&&\\[-1em]
            
      300 &
      \includegraphics[trim={6cm 6cm 4cm 6cm}, clip=true , scale=0.06] {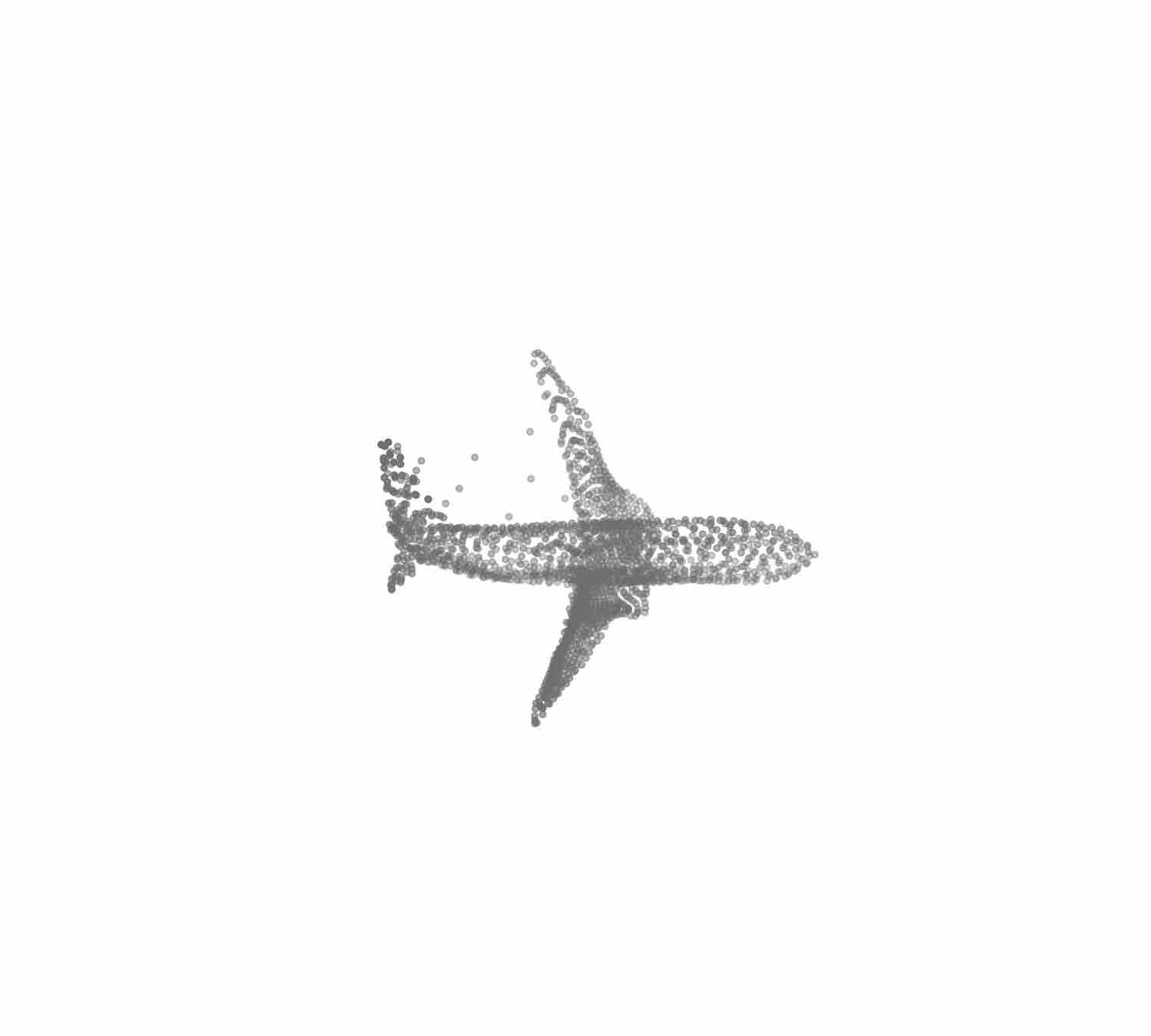} & 
      \includegraphics[trim={5cm 4cm 3cm 4cm}, clip=true , scale=0.045] {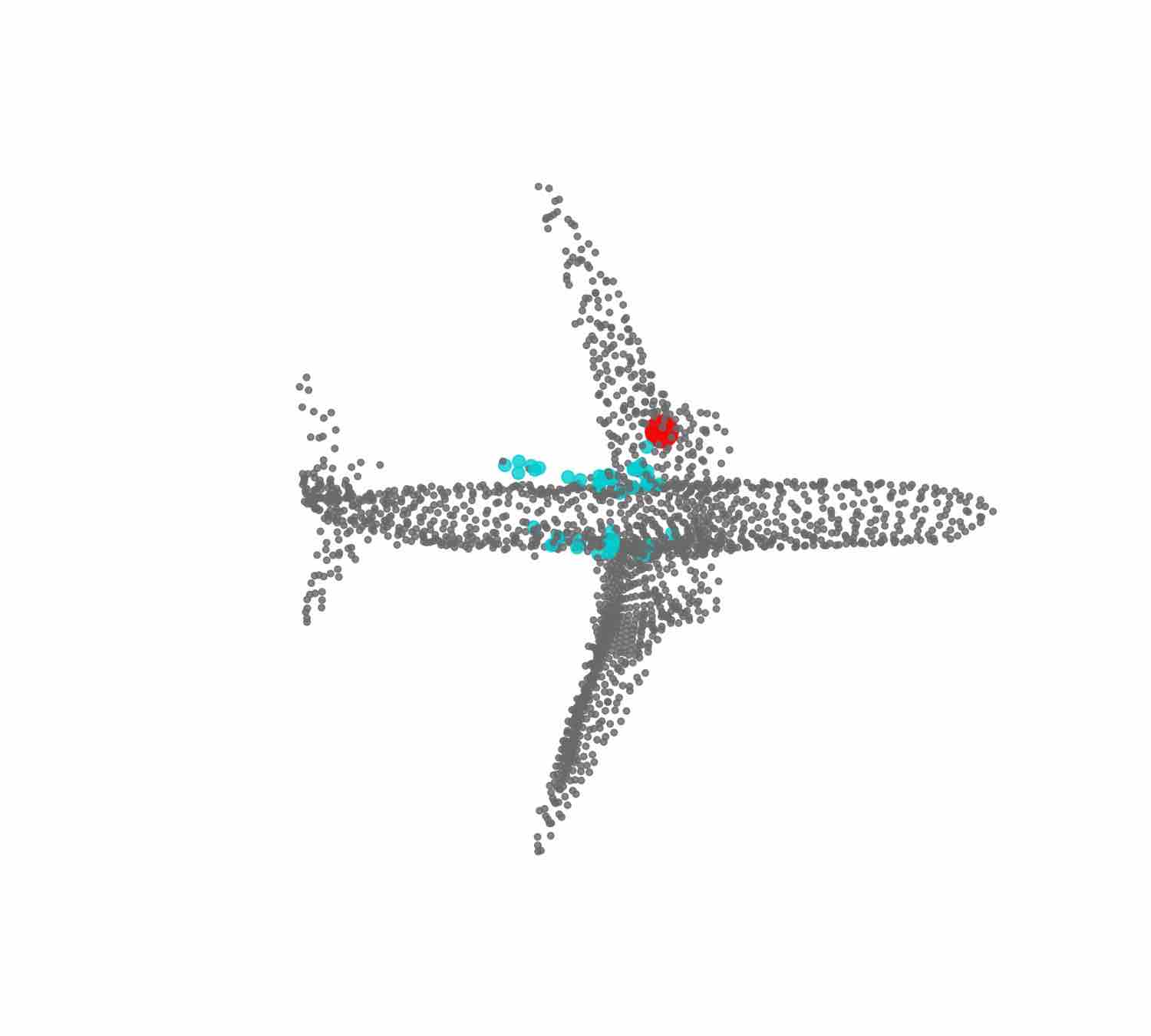} & 
      \includegraphics[trim={6cm 6cm 4cm 6cm}, clip=true , scale=0.06] {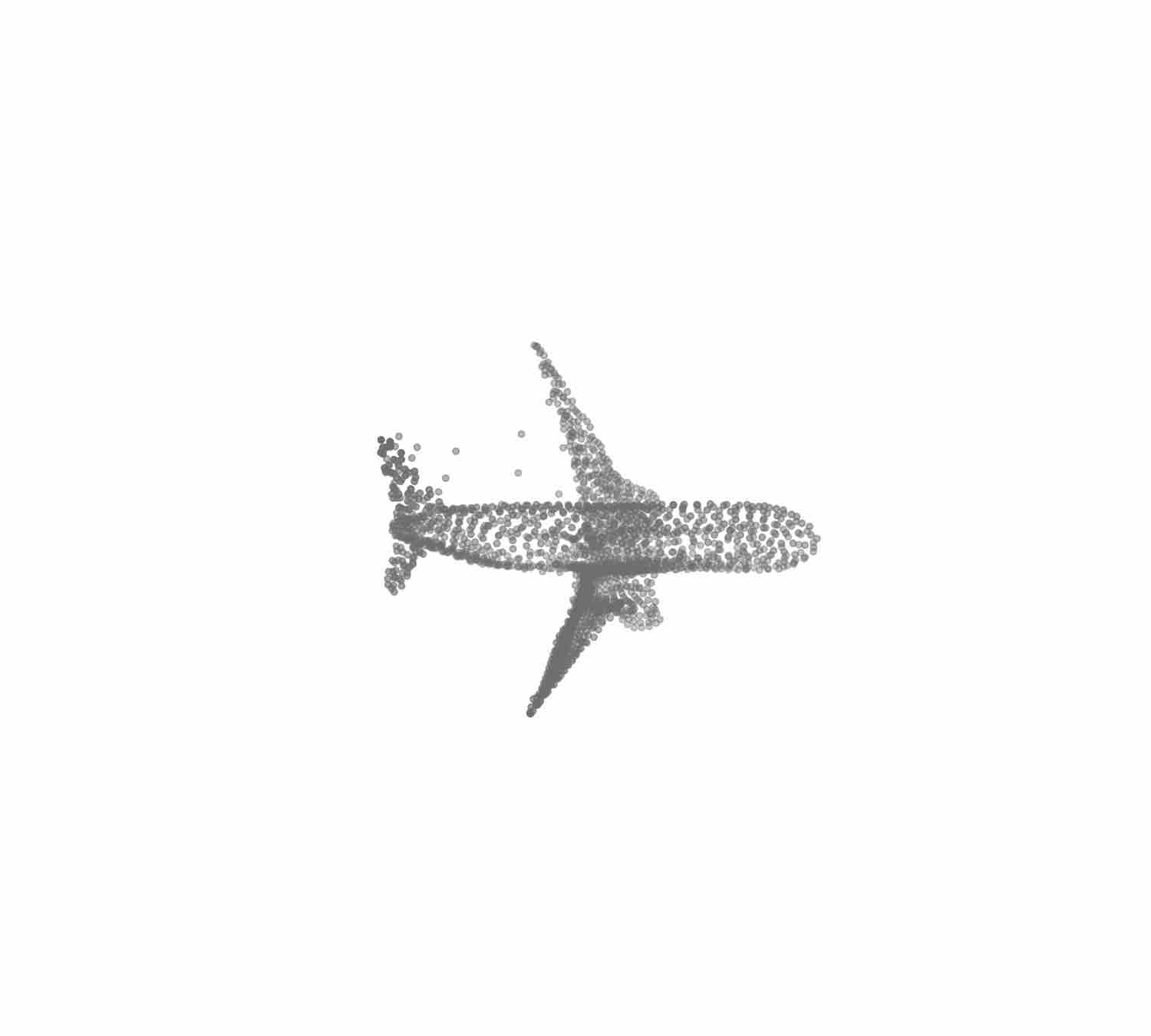} &
      \includegraphics[trim={5cm 4cm 3cm 4cm}, clip=true , scale=0.045] {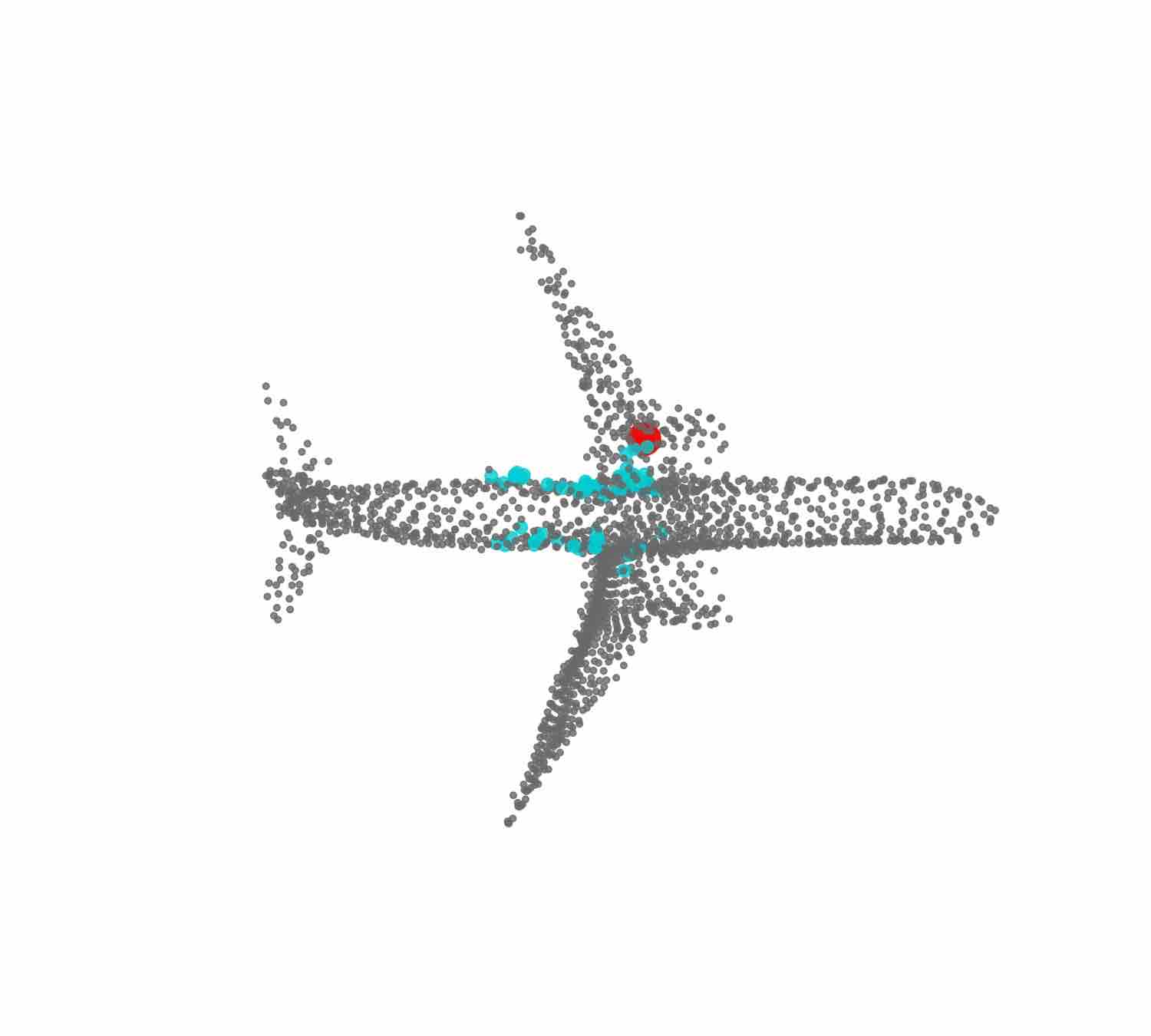} &
      \includegraphics[trim={6cm 6cm 4cm 6cm}, clip=true , scale=0.06] {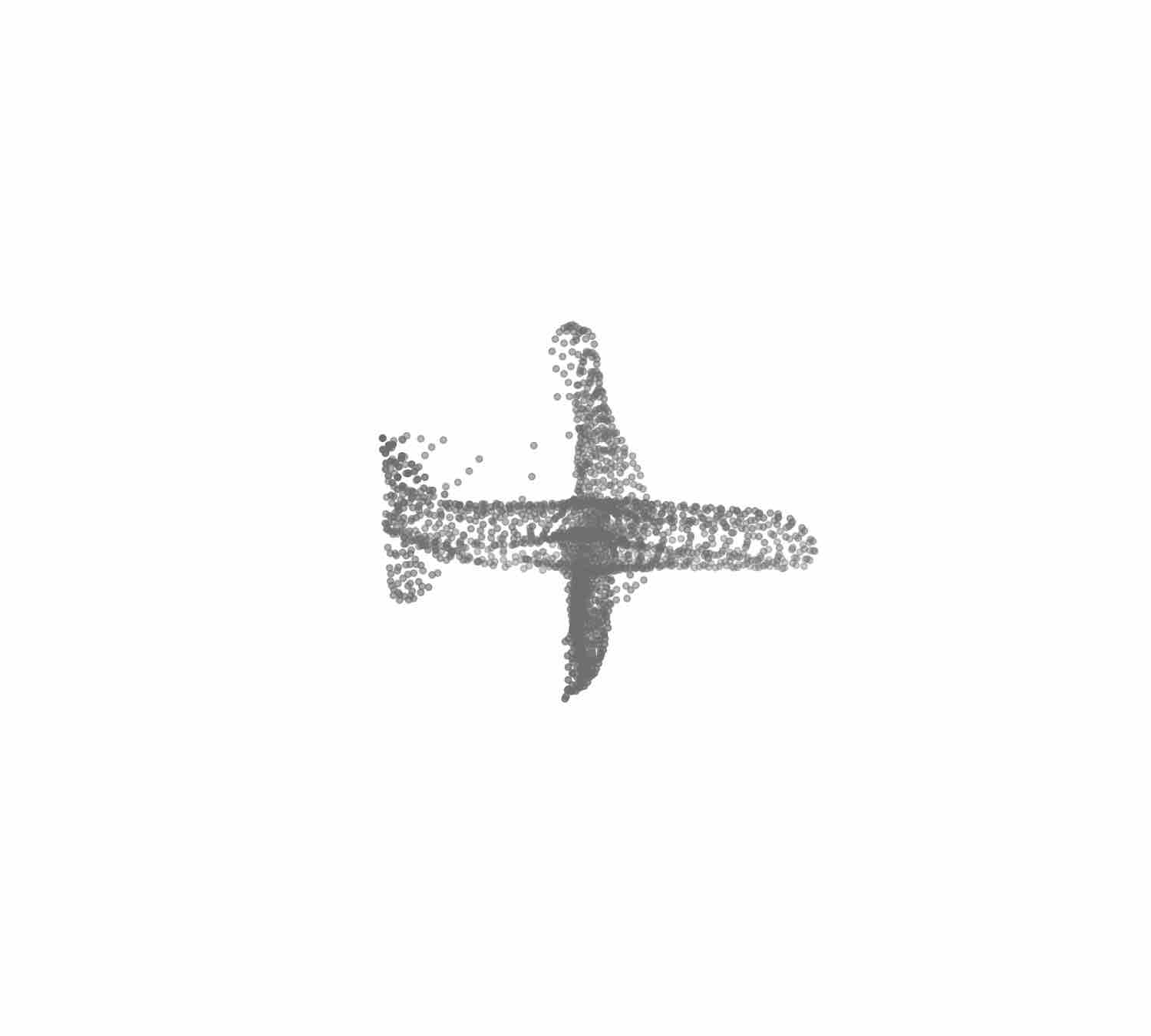} & 
      \includegraphics[trim={5cm 4cm 3cm 4cm}, clip=true , scale=0.045] {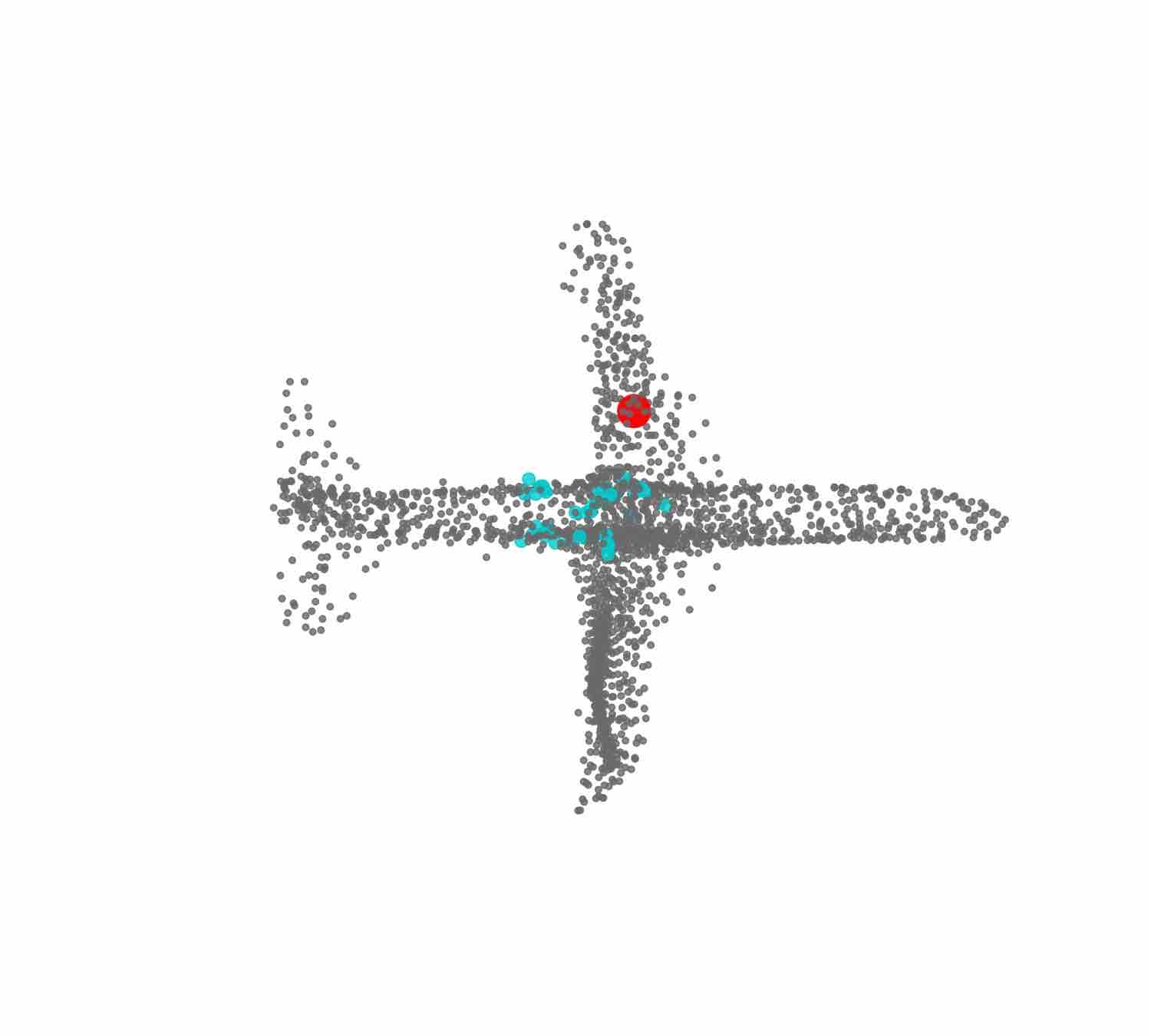} 
      \\      
      \hline
    \end{tabular}
  \end{center}  \caption{\label{tab:graph_topology_evolvement_laplacian} \textbf{Visualizing the evolvement of reconstructions and learnt graph topologies during the training process (based on the graph-Laplacian-matrix based filtering~\eqref{eq:lap_filter}).} In the \textit{Before graph filtering} columns, we show the coarse reconstructions produced by the folding module, which is the intermediate output; in the \textit{After graph filtering} columns, we show the reconstructions after the graph-filtering module, which is the final output. We put supervision only to the final output. We choose one point in the 2D lattice and record its neighboring points in the training process. The chosen point is colored in red and its neighboring points are colored in cyan. The chosen point and its neighboring points evolve to form an engine, which is a fine detail. Comparing to the reconstructed point clouds in the \textit{Before graph filtering} columns, the reconstructed point clouds in the \textit{After graph filtering} columns preserve more fine details, such as engines, tails, nodes, and wing tips; see the improvement on fine-detailed classification in Table~\ref{tab:sub_classification}.}
\end{table*}

\subsection{Visualization of Reconstruction and Graph Topology}
Tables~\ref{tab:graph_topology_evolvement_adj} and ~\ref{tab:graph_topology_evolvement_laplacian} show the evolvement of reconstructions and learnt graph topologies during the training process based on graph-adjacency-matrix and graph-Laplacian-matrix based filtering, respectively. The \textit{Before graph filtering} columns  show the coarse reconstructions produced by the folding module. Different from the reconstruction without graph filtering shown in Section~\ref{sec:reconstruction}, here we train the entire networks with three modules end-to-end and extract the output of the folding module as  the intermediate output. In other words, the supervision is not put on the output of the folding module directly. The \textit{After graph filtering} columns show the reconstructions after the graph-filtering module.  We see that (i) for three airplanes,  the reconstructed point clouds in the \textit{Before graph filtering} columns are similar, reflecting a shared and common shape of various airplanes. This indicates that  the folding module provides coarse reconstructions. (ii) The reconstructed point clouds in the \textit{After graph filtering} columns preserves more fine details, such as engines, tails, nodes, and wing tips. Note that the wing tips were missing without applying graph filtering; see Table~2~\cite{YangFST:18}. One can look at Figure~\ref{fig:architecture} for a more clear demonstration of the reconstructed wing tips from another visualization angle. This indicates that  the graph-filtering module refines the details and provides better reconstructions.   Comparing two types of graph filtering, the graph-adjacency-matrix-based filtering~\eqref{eq:adj_filtering} and the graph-Laplacian-matrix-based filtering~\eqref{eq:lap_filter} provide different, yet similar reconstruction performances.

To visualize the evolvement of the learnt graph topology, we select one point in the 2D lattice and record its neighbors in the learnt graph topology during the training process.  The chosen point is colored in red and its neighboring points are colored in cyan. We see that (i) during the training procedure, the learnt graph builds new connections and cuts the old connections adaptively for a better reconstruction performance; (ii) for three airplanes, the neighbors are similar, reflecting shared and common pairwise relationships in various airplanes; (iii) with the graph-topology-inference module and the graph-filtering module, both 3D coordinates and pairwise relationships are updating simultaneously to refine local shapes.

\subsection{3D Point Cloud Transfer Classification}
Features extracted from unsupervised-learning methods  can be used for supervised tasks. For example, principal component analysis is a common unsupervised-learning method that represents data via a learnt orthogonal transformation. The coordinates in the transformed coordinate system are often used as features in classification, regression~\cite{Bishop:06}. Similarly, we can take the latent code produced by the encoder of the proposed networks as features of a 3D point cloud. Based on those features, we can train a classifier to recognize the corresponding category of a 3D point cloud. Since the proposed networks and the classifier are trained by separate datasets, we call the task~\emph{transfer classification}.  In other words, the networks learn shape patterns from 3D point clouds in one dataset and are used to extract features of 3D point clouds in a different dataset.  The performance of the transfer classification shows the effectiveness of  the latent code and  the generalization ability of the networks.

 \begin{table*}[htb!]
\begin{center}
\begin{tabular}{ c|c|c|c|c } 
\hline
Method & Modality & \# code & MN40 & MN10 \\
\hline
SPH \cite{sph} & Voxels & 544 & 68.20 & 79.80 \\
LFD \cite{lfd} & Voxels & 4700 & 75.50 & 79.90 \\
T-L Network \cite{tlnetwork} & Voxels & - & 74.40 & - \\
VConv-DAE \cite{vconv} & Voxels & 6912 & 75.50 & 80.50 \\
3D-GAN \cite{3dgan} & Voxels & 7168 & 83.30 & 91.00 \\
VIP-GAN \cite{innergan} & Images & 4096 & \textbf{91.98} & 94.05\\
\hline
Latent-GAN \cite{AchlioptasDMG:17} & Points & 512 & 85.70 & 95.40\\
\hline
Our (Folding module only) & Points & 512 & 88.40 & 94.40 \\
Our (with fixed 2D lattice) &  Points & 512 & 84.27 \ & 92.11 \\

 Our (PointNet++ with adjacency~\eqref{eq:adj_filtering}) &  Points & 512 & 89.63 \ & 95.69\\

 Our (PointNet++ with Laplacian~\eqref{eq:lap_filter}) &  Points & 512 & 89.65 \ & 95.91 \\

Our (with graph-adjacency~\eqref{eq:adj_filtering}) &  Points & 512 & 89.67 \ & 95.63\\
Our (with graph-Laplacian~\eqref{eq:lap_filter}) &  Points & 512 & 89.55 \ & \textbf{95.93}\\
\hline
\end{tabular}
\end{center}
\caption{\label{tab:category_classification}\textbf{Comparison of classification accuracies on ModelNet10 and ModelNet40}. The proposed networks achieve the best classification accuracy in ModelNet10. VIP-GAN achieves the best performance in ModelNet40, but it uses a longer code and uses images as inputs. All methods use the same transfer-classification routine.}
\end{table*}

\subsubsection{Category classification}
We first consider the category classification. There are $10$ and $40$ categories in ModelNet10 (MN10) and ModelNet40 (MN40), respectively. Some categories include airplane, chair, and bed. We follow the same experimental setting in~\cite{AchlioptasDMG:17, YangFST:18, AtlasNet} to train the proposed networks with 3D point clouds sampled from the dataset of ShapeNet.  After the proposed networks are trained, we fix the parameters and run the networks to obtain the codes for 3D point clouds sampled from ModelNet10 and ModelNet40. We then split the codes into train/test datasets as the work in \cite{2016pointnet, YangFST:18, 3dgan, AtlasNet}. ModelNet40 (MN40) contains 9843/2468 3D mesh models in train/test datasets; ModelNet10 (MN10) contains 3991/909 3D mesh models in train/test datasets. Each point cloud contains 2048 3D points with their $[x, y, z]$ coordinates. We use a linear SVM as the classifier to recognize the corresponding category of a 3D point cloud. We compare the proposed networks with the other state-of-the-art unsupervised-learning models, including SPH, LFD, T-L Network, VConv-DAE, 3D GAN, VIP-GAN, LatentGAN~\cite{sph, lfd, tlnetwork, vconv, 3dgan, AchlioptasDMG:17}.  

Table~\ref{tab:category_classification} shows the comparison of classification accuracies. We see that (i) the proposed networks achieve the best classification accuracy in ModelNet10 and achieves the second best performance in ModelNet40; however, VIP-GAN uses a longer code (4096 vs. 512 in ours) and has a different input modality; it uses images, instead of 3D point clouds as inputs; (ii) in both datasets, two types of learnable graph filters~\eqref{eq:adj_filtering} and ~\eqref{eq:lap_filter} provide similar performances; (iii) in both datasets, when we replace the learnable graph topology by 
the fixed 2D lattice, the performance degenerates, indicating a naive graph topology
would provide misleading prior and make the training process harder; (iv) in both datasets, the proposed networks outperform the LatentGAN, which adopts 3D point clouds as inputs;  (v) in both datasets, the proposed networks perform better when the graph-filtering module is used to refine the reconstruction. The reason is that graph-topology-inference-module pushes the code to preserve both 3D coordinate information and pairwise relationship information, such that the decoder can reconstruct the 3D point clouds and the corresponding graph topology; and (vi) when we replace the encoder from PointNet to PointNet++, the classification performance does not significantly improved. The intuition could be that a powerful decoder is able to push the codewords to preserve informative features and release the pressure of using a sophisticated encoder.

\begin{table*}[htb!]
  \begin{center}
    \begin{tabular}{ c  c  c  c  c   }
      \hline 
      \multicolumn{5}{c}{\textbf{Airplane}  ( Train/Test :  551/142  , Number of classes: 9 )}\\
      \hline   
      Swept wings & Swept wings & Swept wings & Straight wings & Straight wings  \\
        back engines & two engines & four engines & two engines & four engines  \\
        \hline
      $\quad$&
      $\quad$&
      $\quad$&
      $\quad$&
      $\quad$\\
      \includegraphics[trim={12cm 12cm 12cm 12cm}, clip=true , scale=0.043] {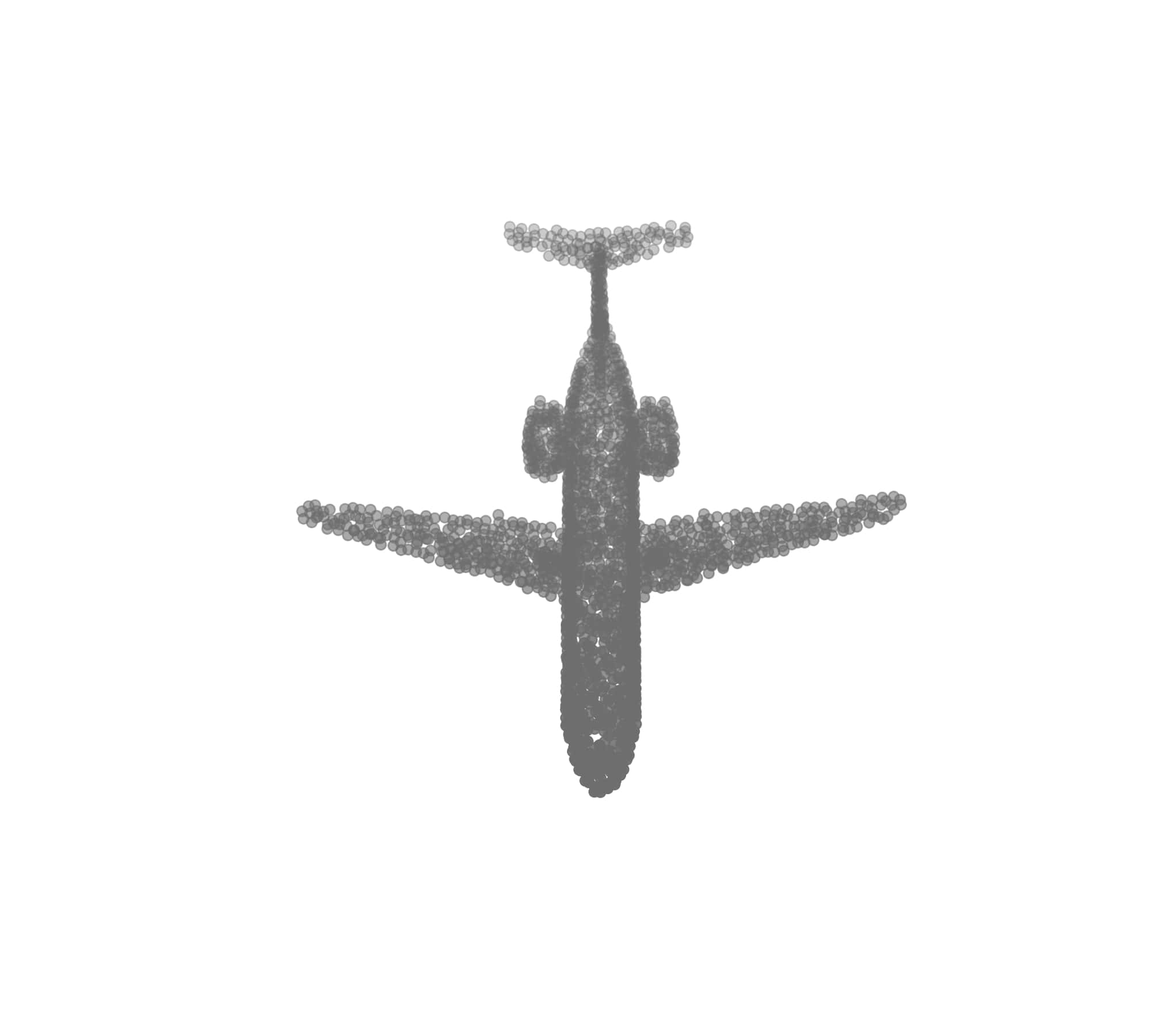} & 
      \includegraphics[trim={12cm 12cm 12cm 12cm}, clip=true , scale=0.043] {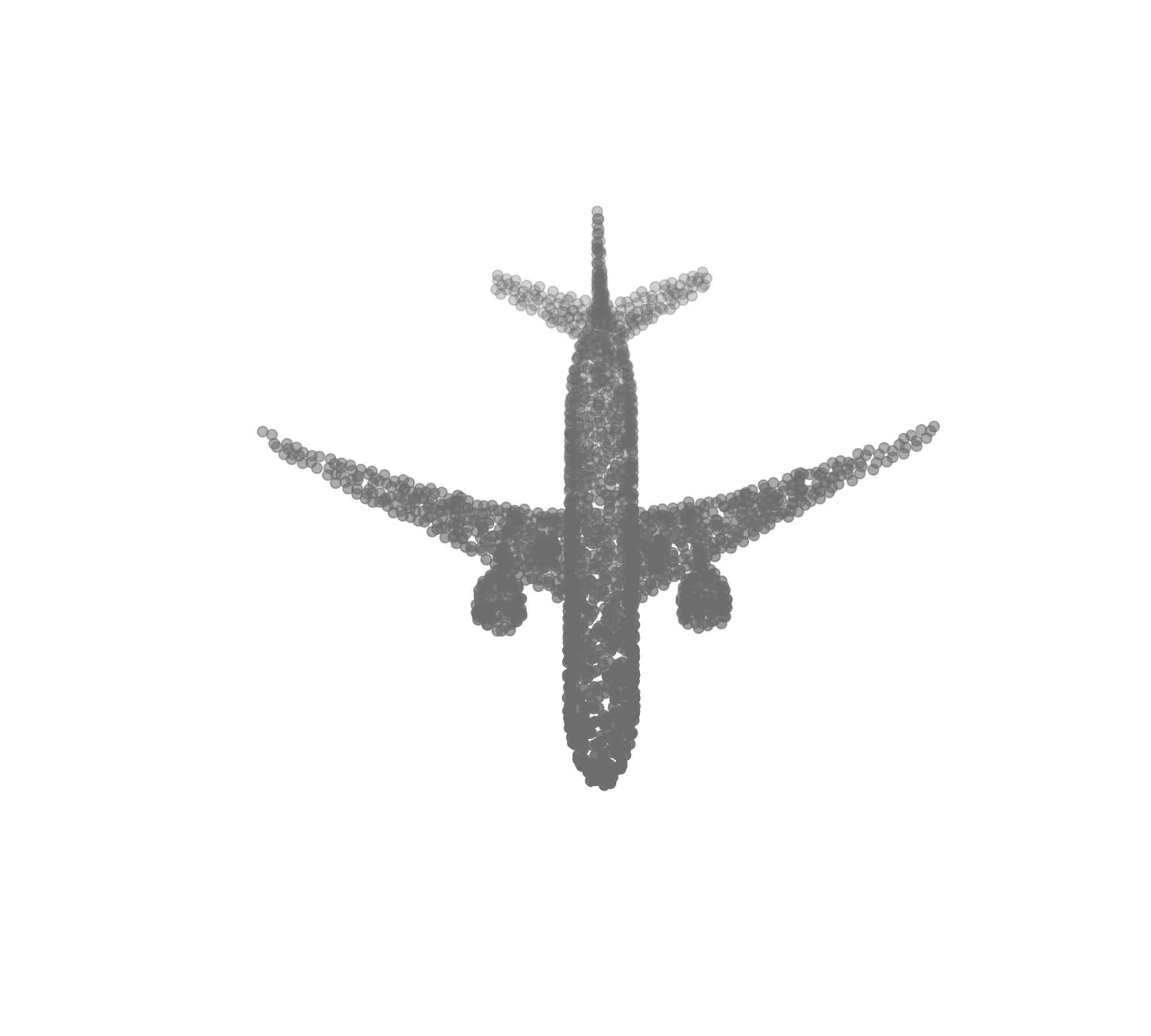} & 
      \includegraphics[trim={12cm 12cm 10cm 12cm}, clip=true , scale=0.043] {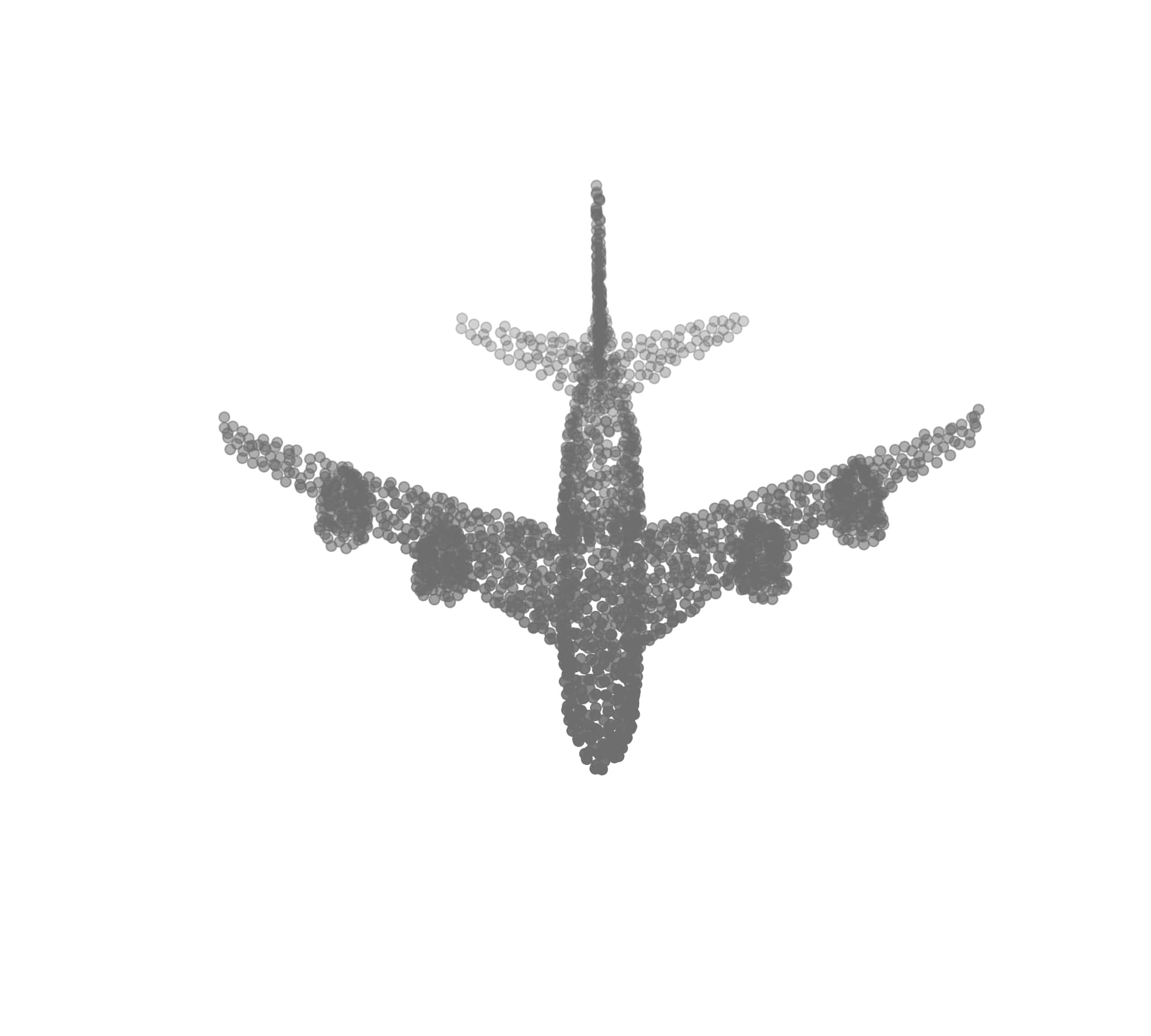} &
      \includegraphics[trim={11cm 12cm 10cm 12cm}, clip=true , scale=0.043] {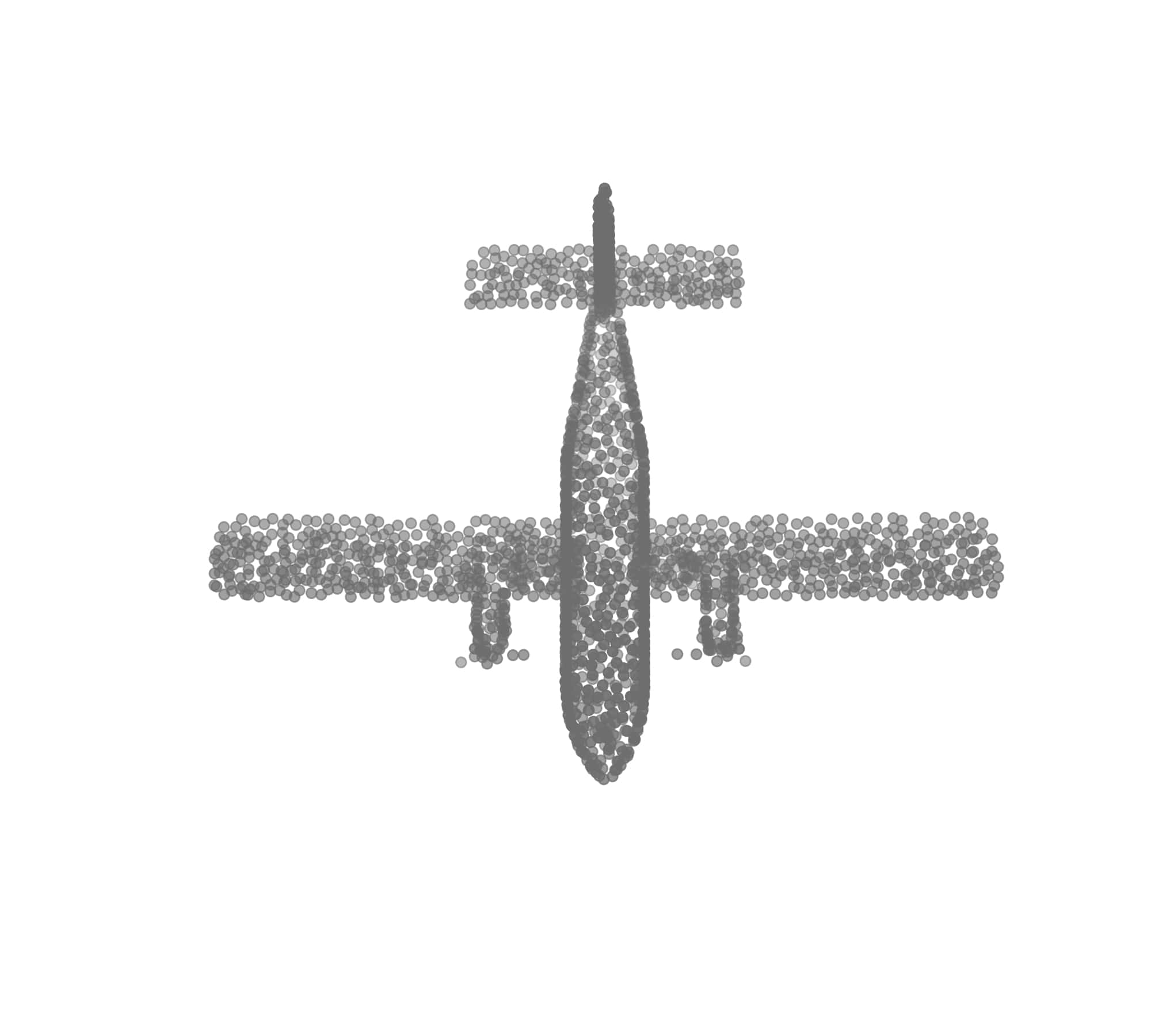} & 
      \includegraphics[trim={11cm 12cm 10.5cm 12cm}, clip=true , scale=0.043] {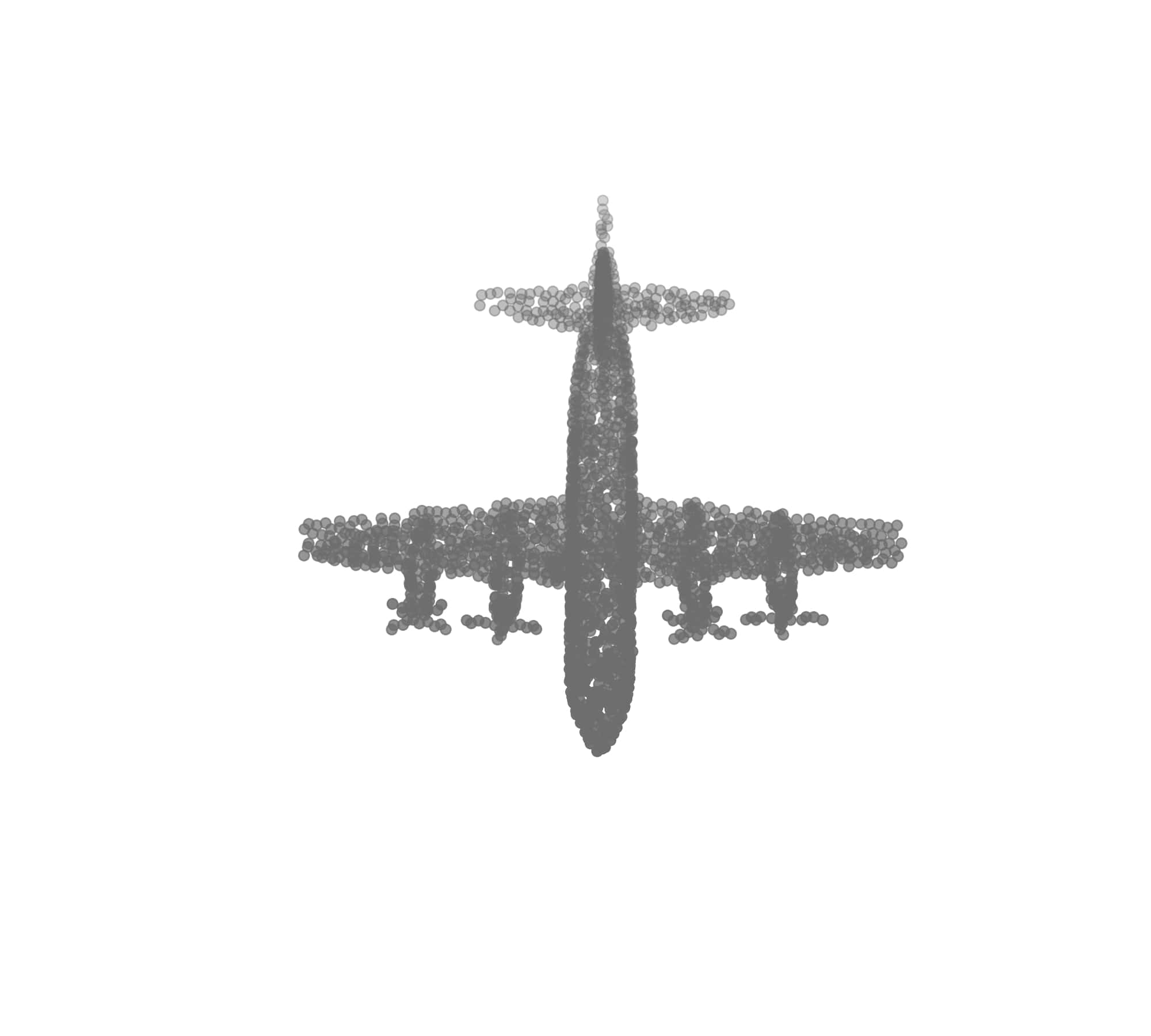} \\
      \hline
      
      \multicolumn{5}{c}{\textbf{Bed}  ( Train/Test :  433/112  , Number of classes: 8 )}\\
      \hline   
      Boardless & Front board & Front board & Double boards & Double boards  \\
      with pillows & no pillows & with pillows & no pillows & with pillows  \\
      \hline
      $\quad$&
      $\quad$&
      $\quad$&
      $\quad$&
      $\quad$\\
      \includegraphics[trim={12cm 12cm 12cm 12cm}, clip=true , scale=0.043] {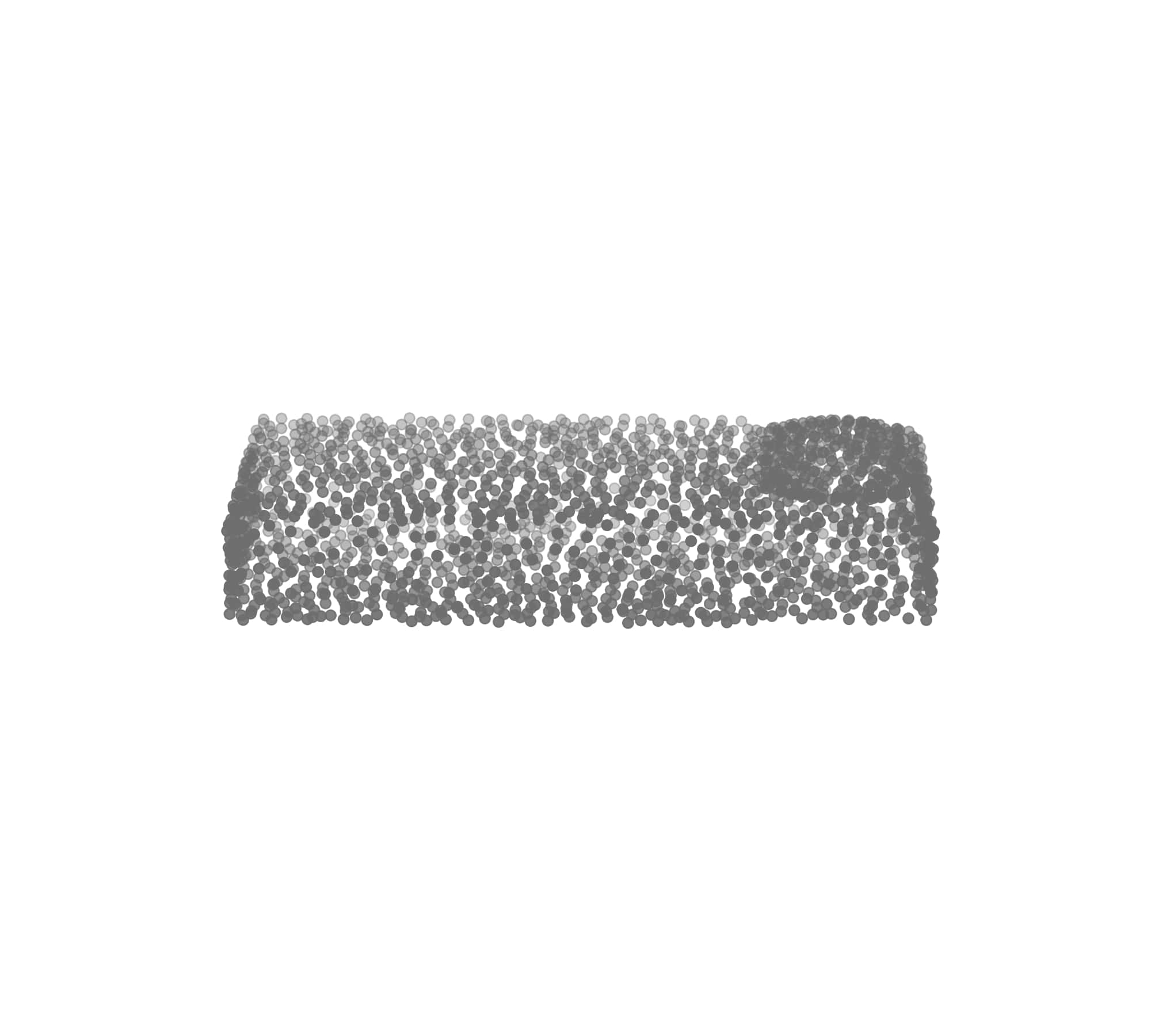} & 
      \includegraphics[trim={12cm 12cm 12cm 12cm}, clip=true , scale=0.043] {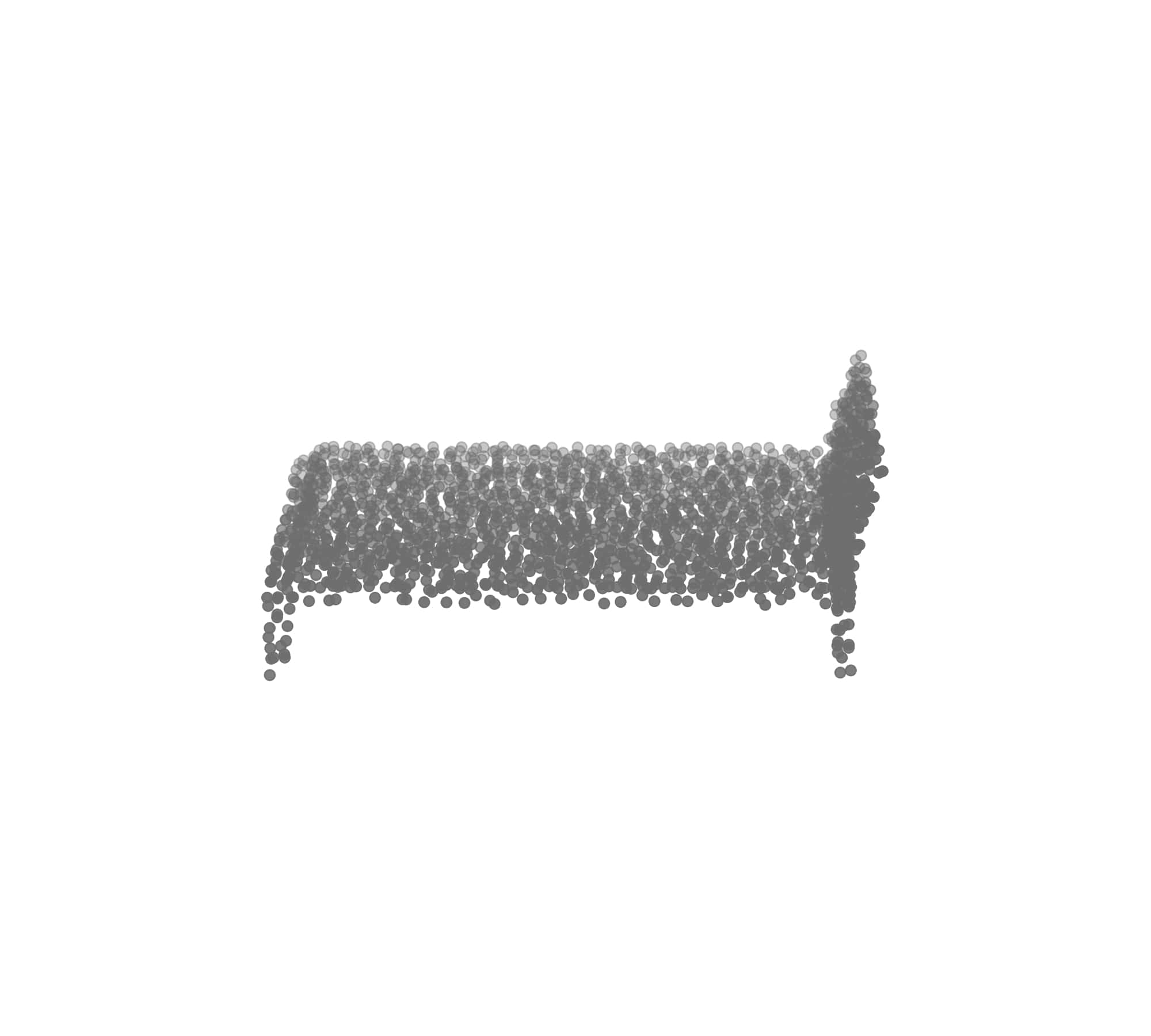} & 
      \includegraphics[trim={10.5cm 12cm 10.5cm 12cm}, clip=true , scale=0.043] {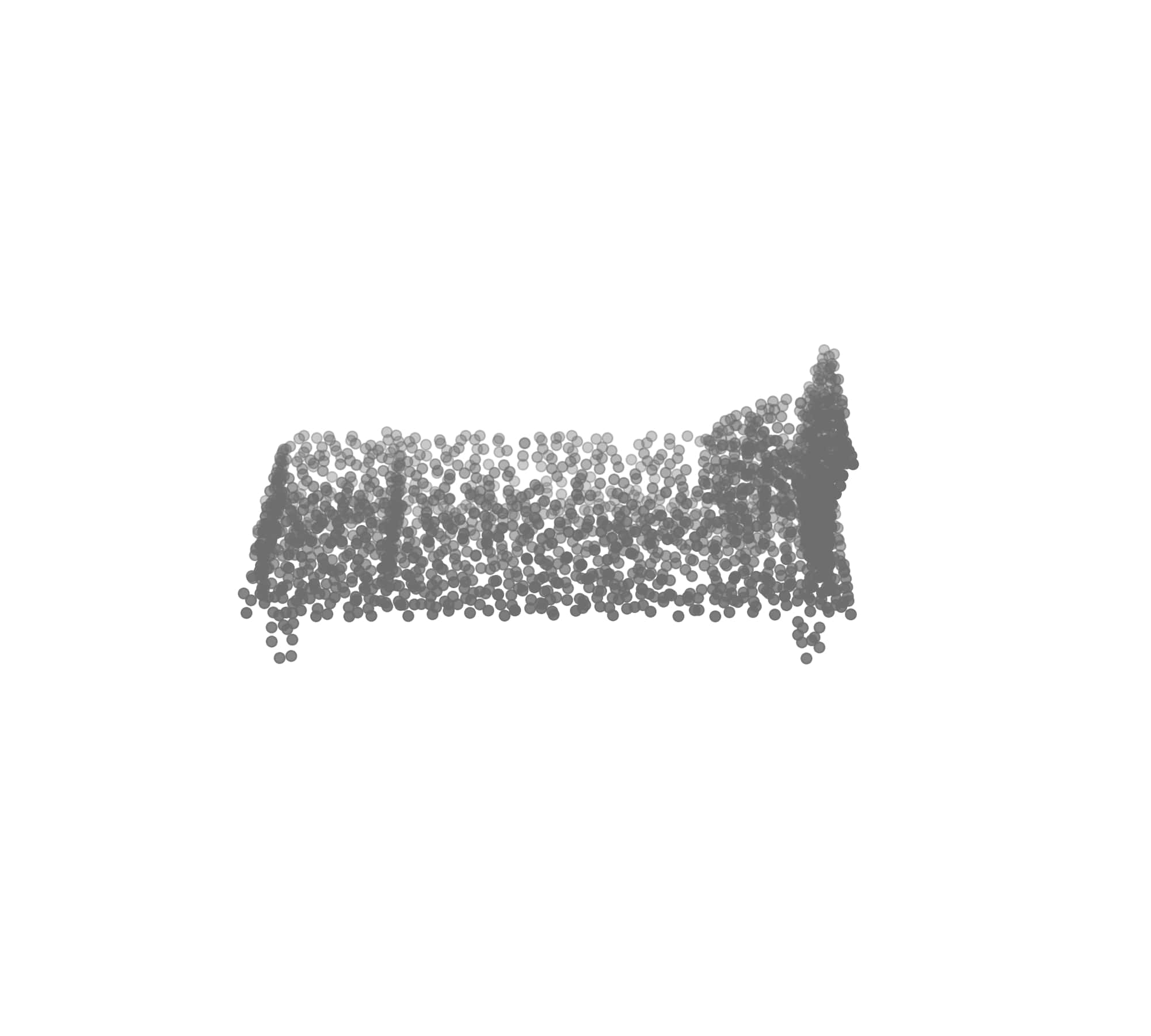} &
      \includegraphics[trim={10.5cm 12cm 10.5cm 12cm}, clip=true , scale=0.043] {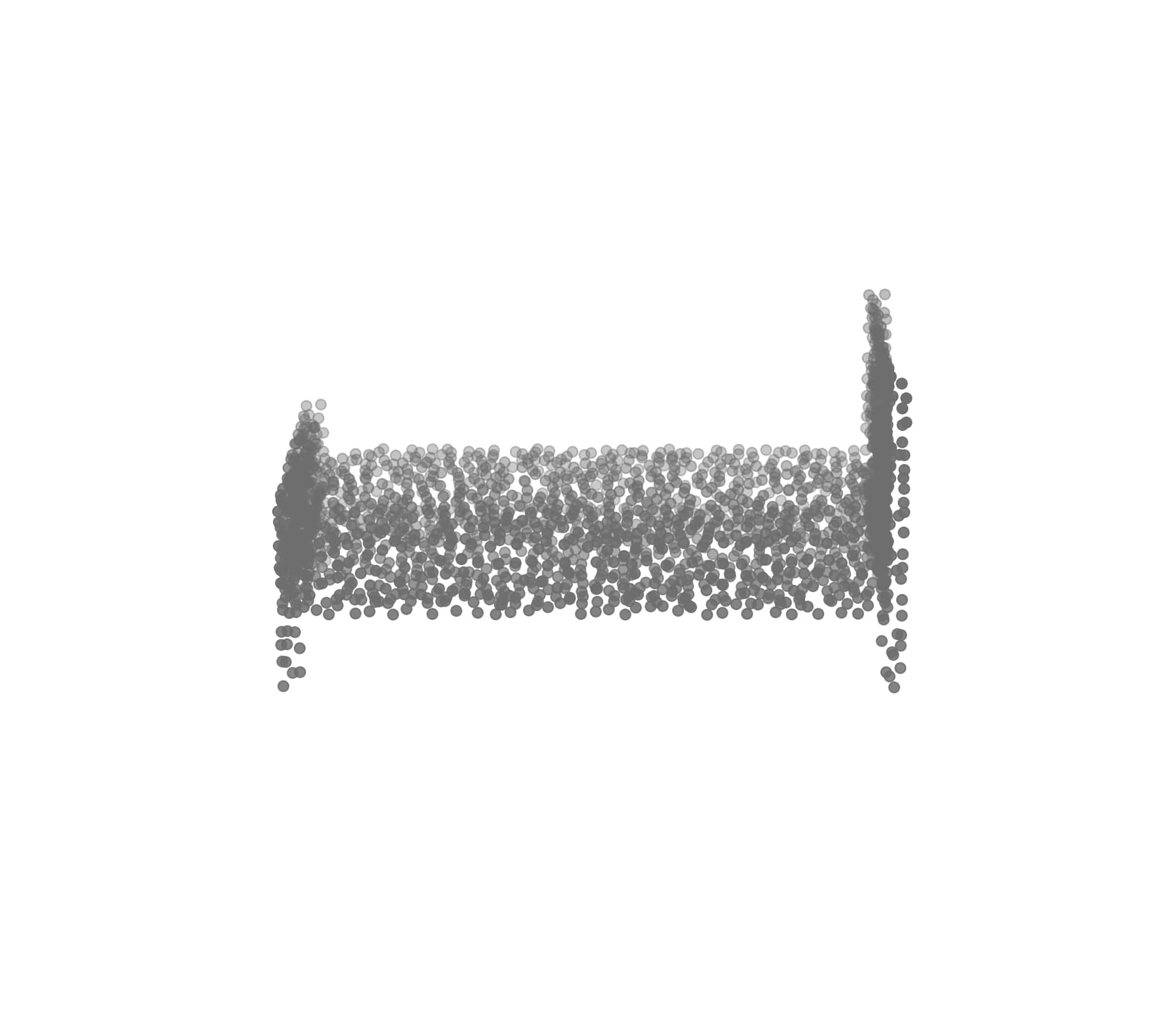} & 
      \includegraphics[trim={10.5cm 12cm 10.5cm 12cm}, clip=true , scale=0.043] {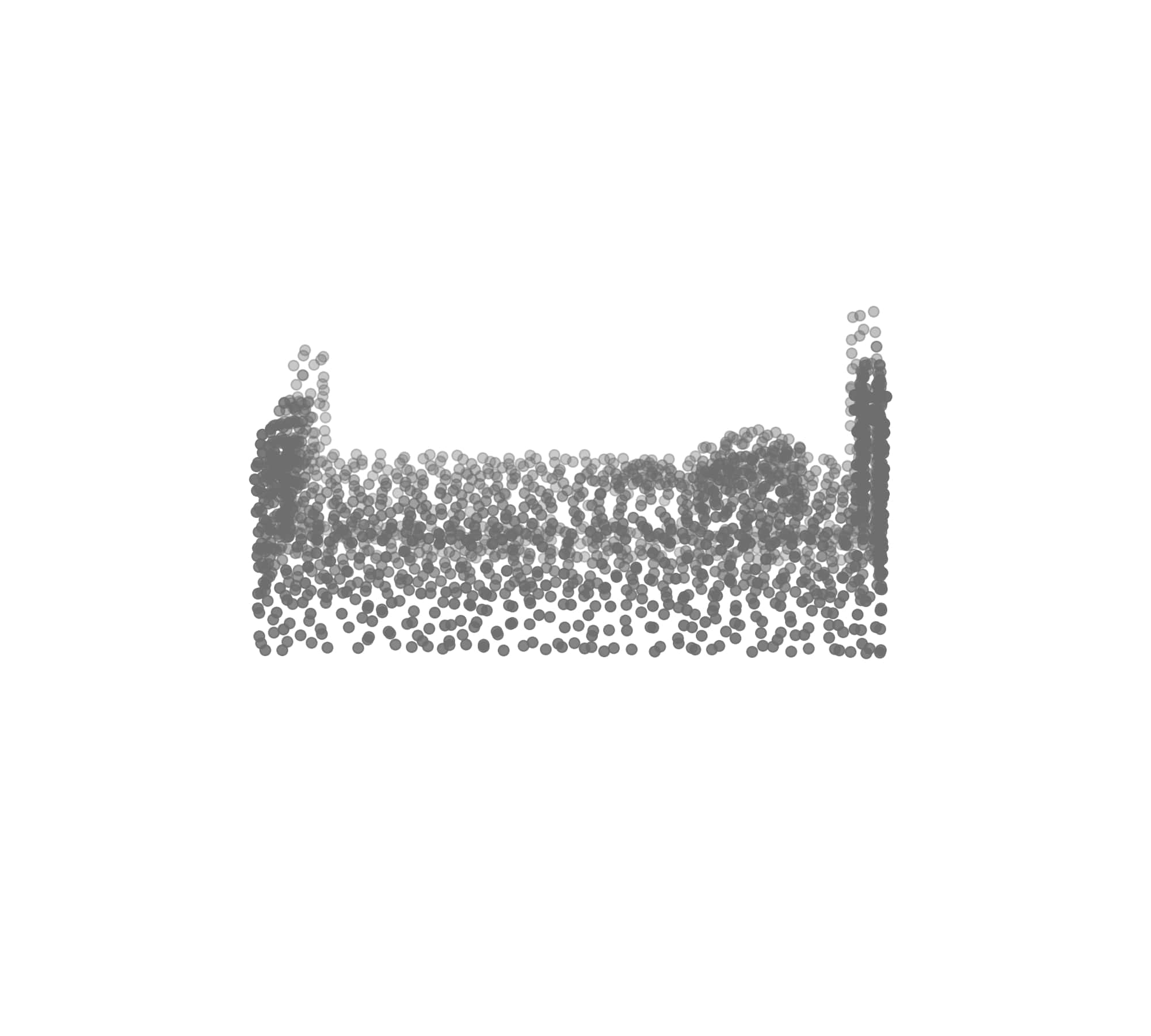} \\  
      \hline
      
      \multicolumn{5}{c}{\textbf{Chair} ( Train/Test :  707/187  , Number of classes: 16 )}\\
      \hline   
      Cantilever & Folding chair & Swivel chair & Armless chair & Wing chair  \\
      \hline  \\
      \includegraphics[trim={12cm 11cm 12cm 10cm}, clip=true , scale=0.04] {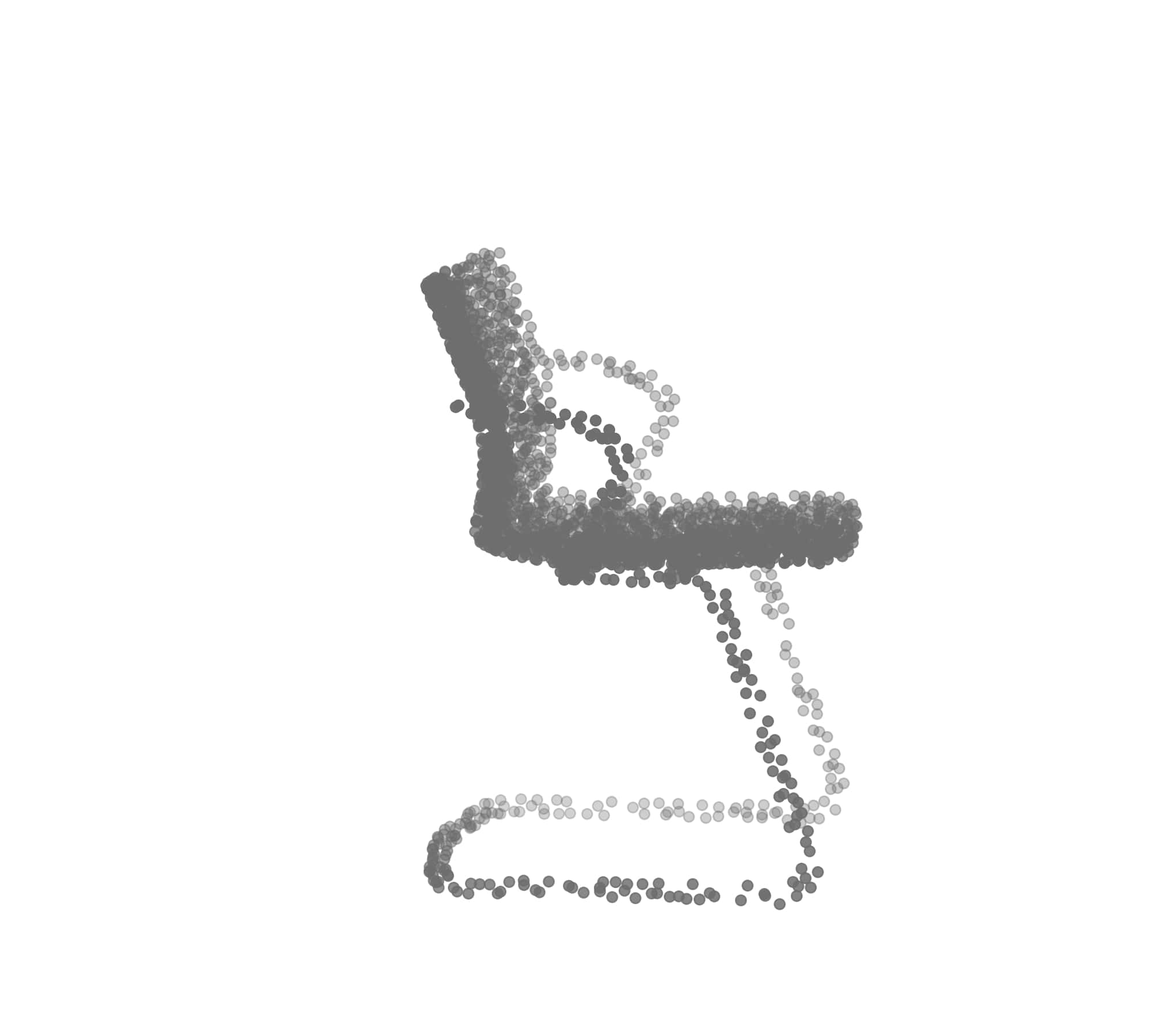} & 
      \includegraphics[trim={12cm 11cm 12cm 10cm}, clip=true , scale=0.04] {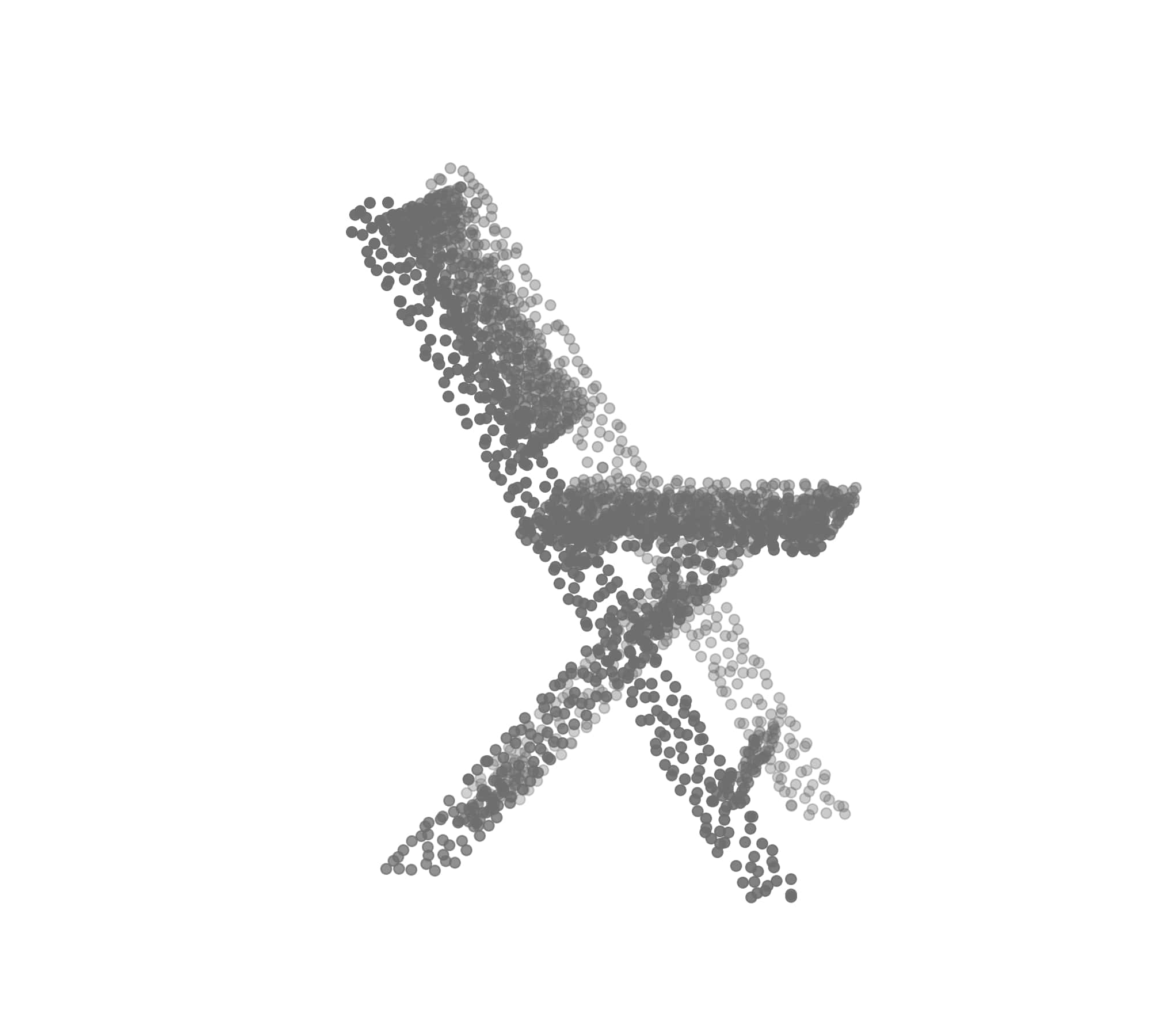} & 
      \includegraphics[trim={10.5cm 11cm 10.5cm 10cm}, clip=true , scale=0.04] {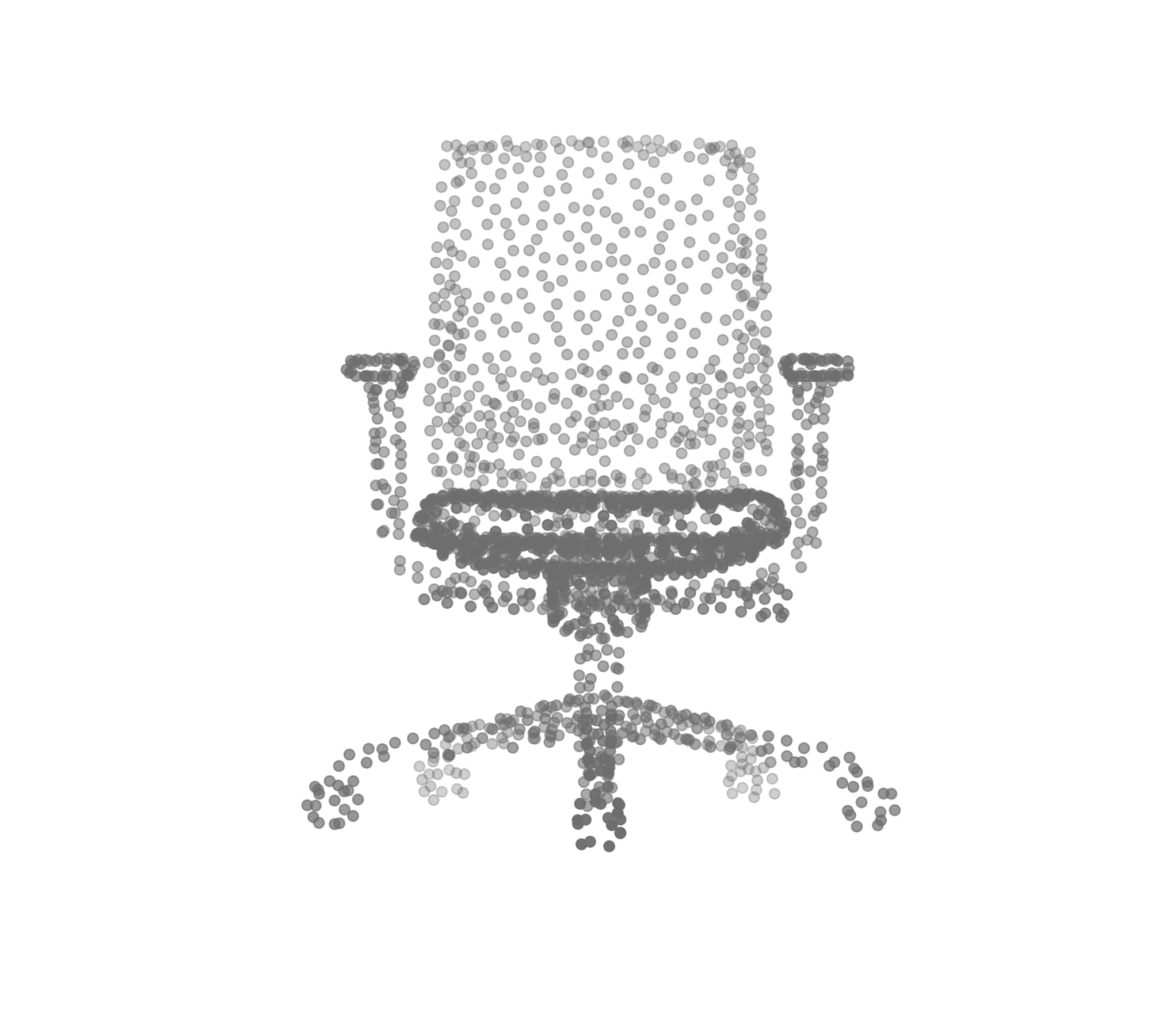} &
      \includegraphics[trim={10.5cm 11cm 10.5cm 10cm}, clip=true , scale=0.04] {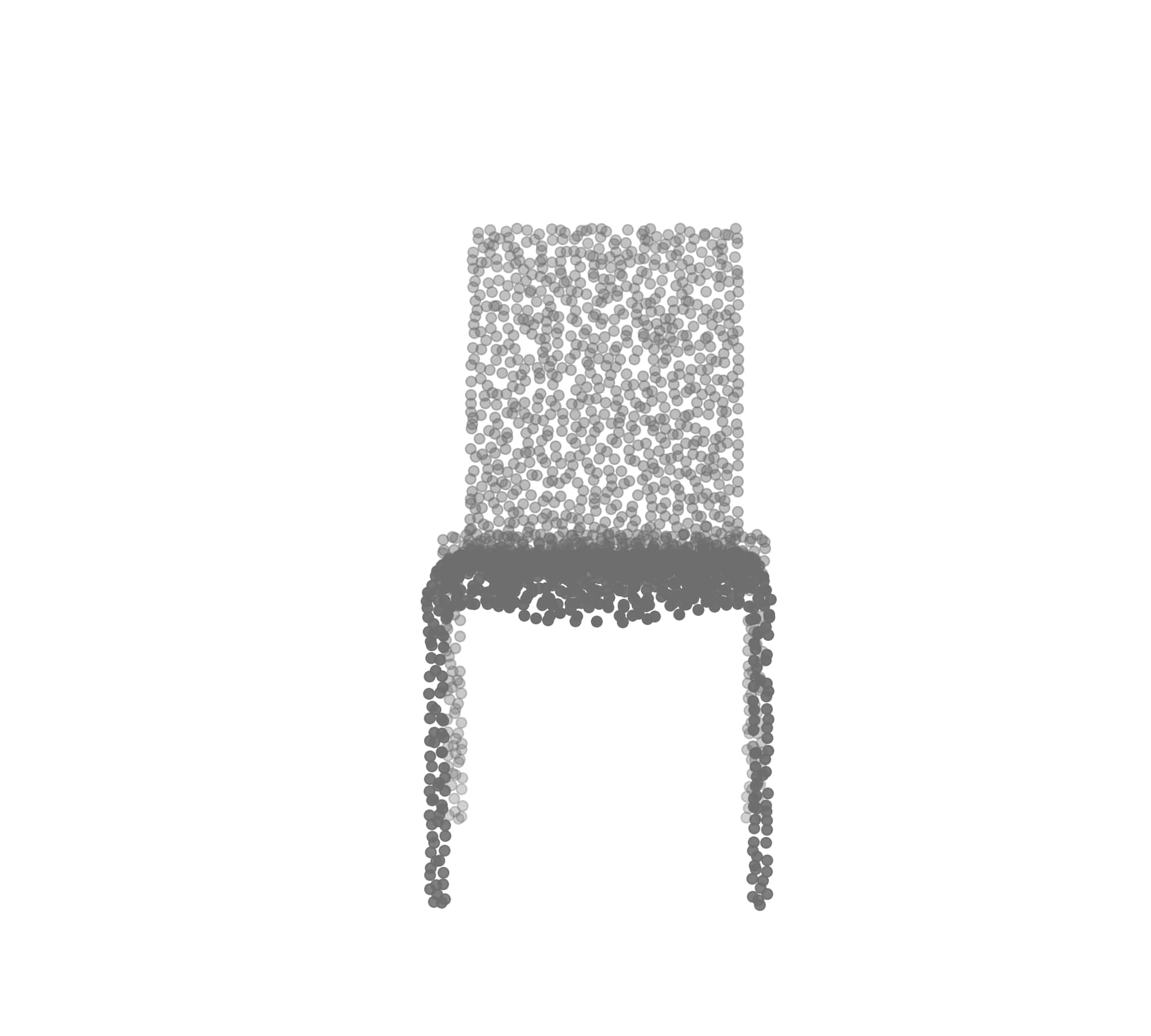} & 
      \includegraphics[trim={10.5cm 11cm 10.5cm 10cm}, clip=true , scale=0.04] {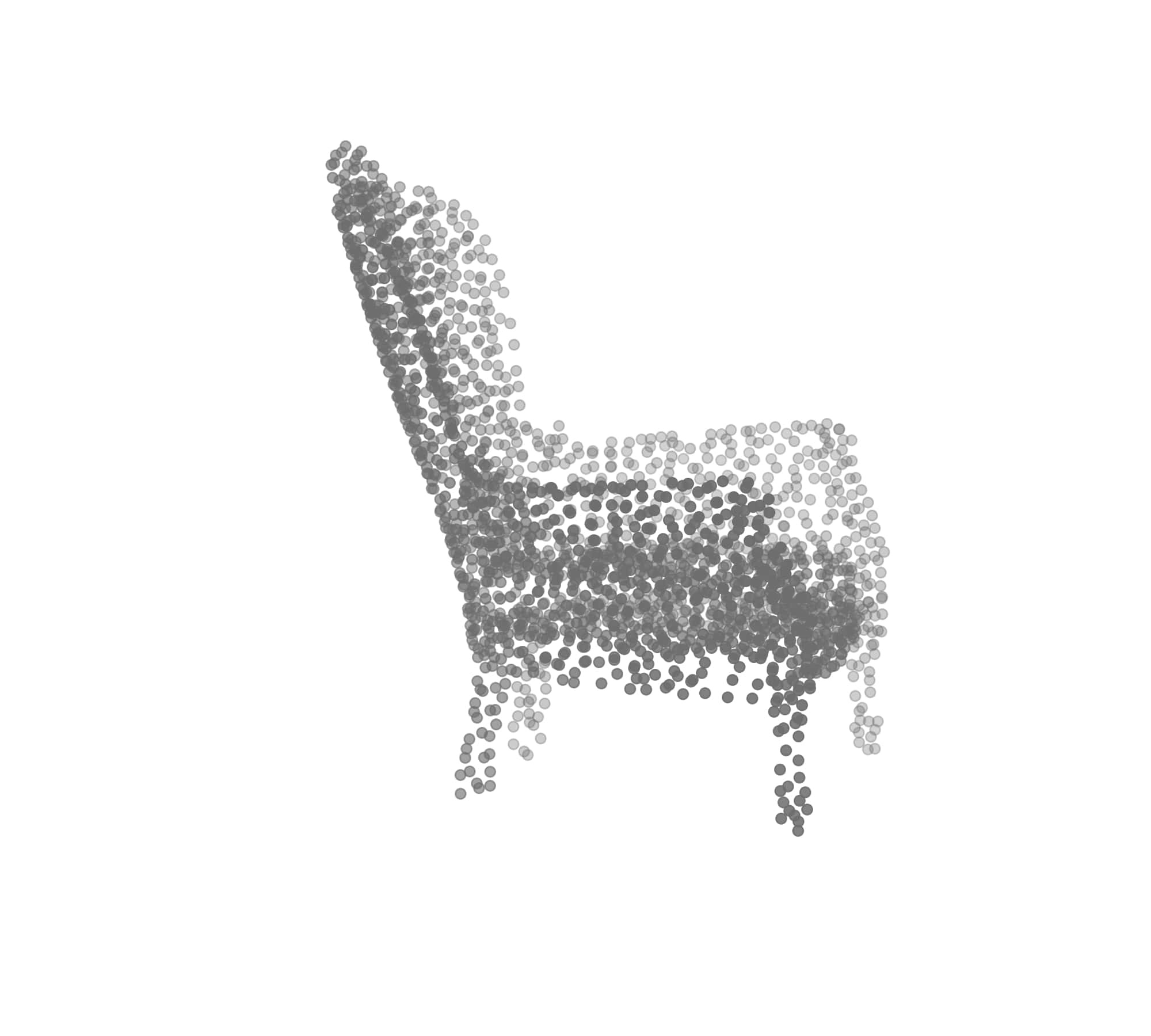} \\ 
      \hline
      
      \multicolumn{5}{c}{\textbf{Monitor}  ( Train/Test :  400/104  , Number of classes: 9) }\\
      \hline   
      V-shape base & Half-circle base & Half-ring base & Baseless & Rectangular base \\
      \hline  
      $\quad$&
      $\quad$&
      $\quad$&
      $\quad$&
      $\quad$\\
      \includegraphics[trim={12cm 12cm 12cm 12cm}, clip=true , scale=0.043] {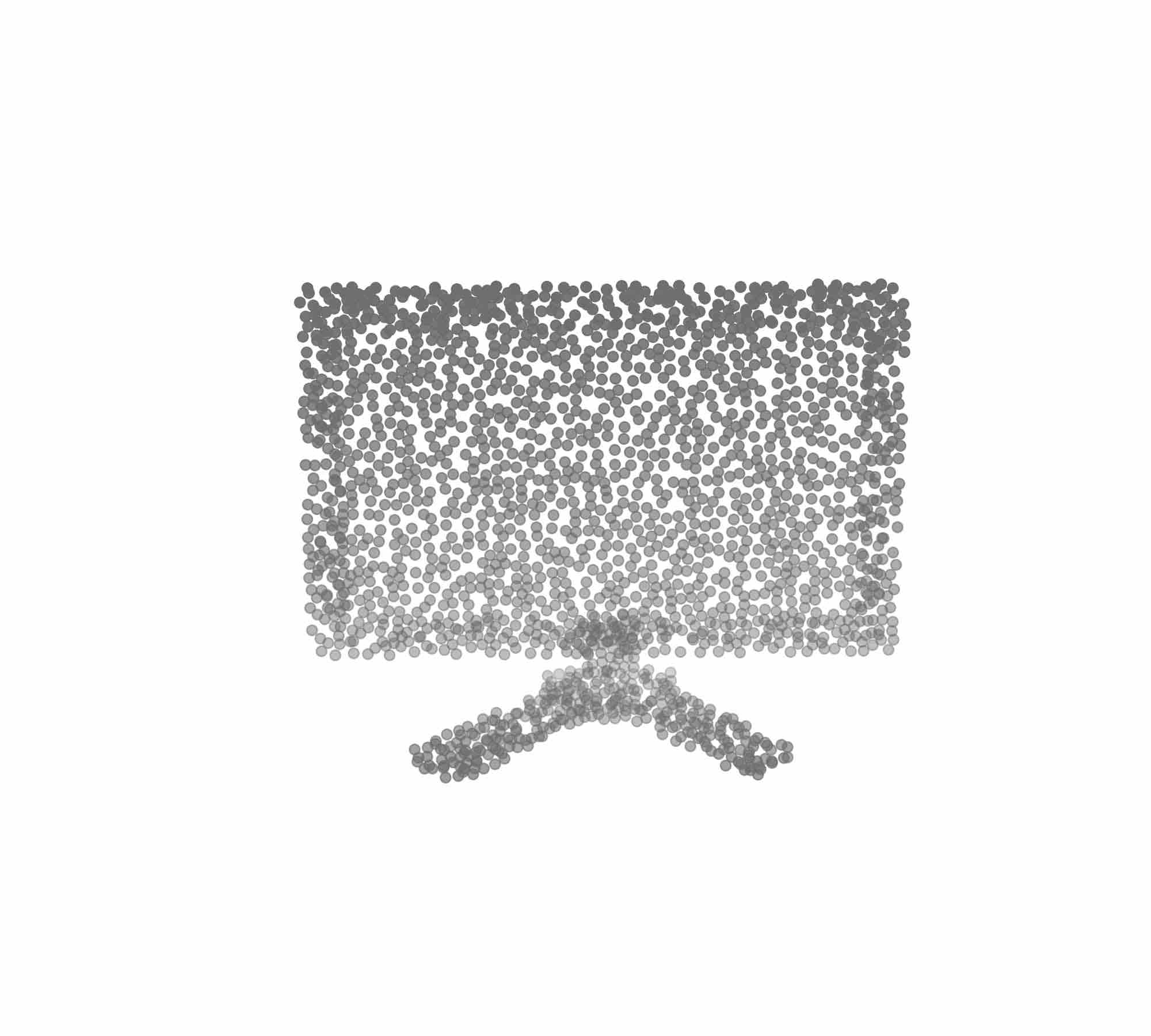} & 
      \includegraphics[trim={12cm 12cm 12cm 12cm}, clip=true , scale=0.043] {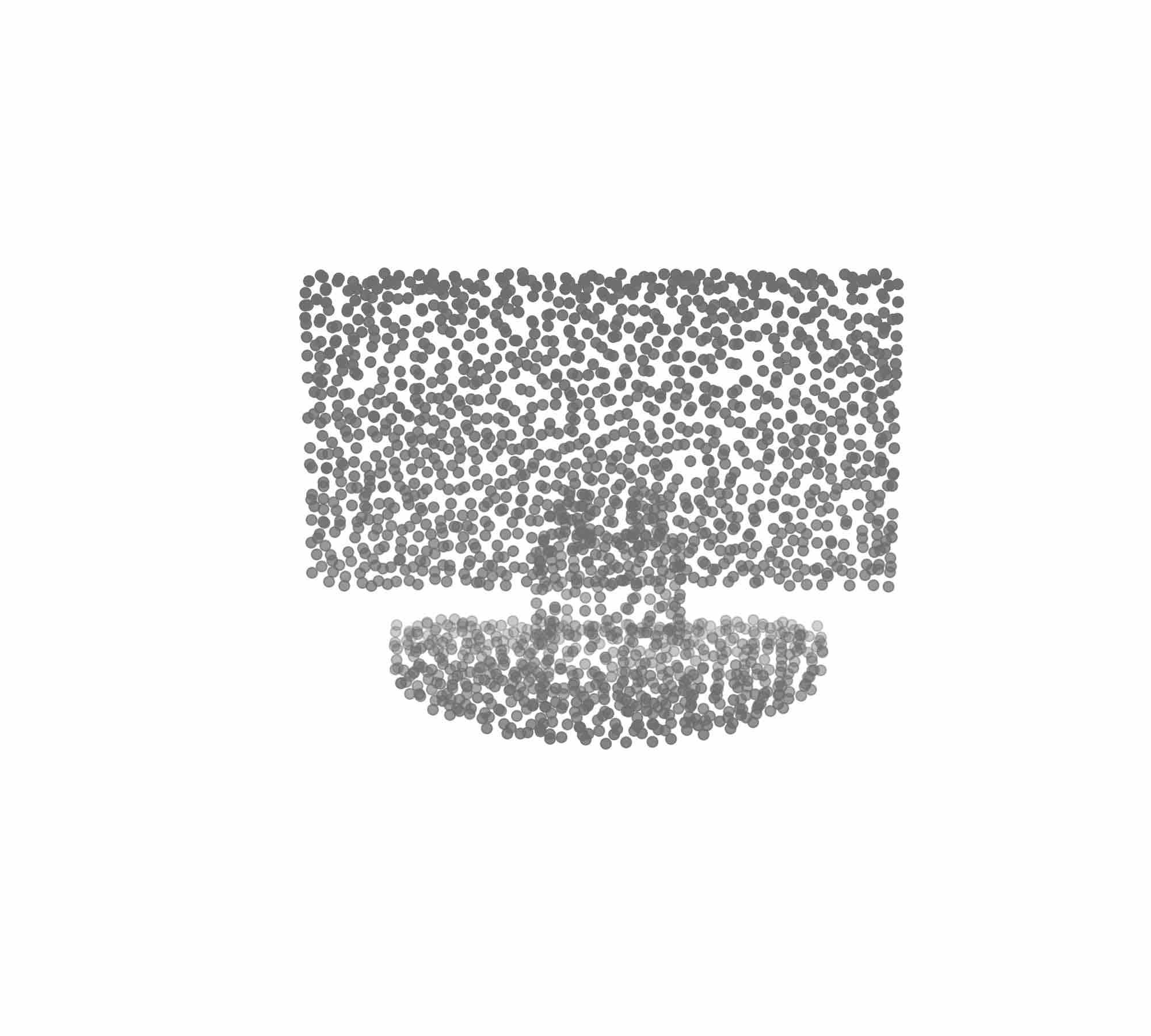} & 
      \includegraphics[trim={10.5cm 12cm 10.5cm 12cm}, clip=true , scale=0.043] {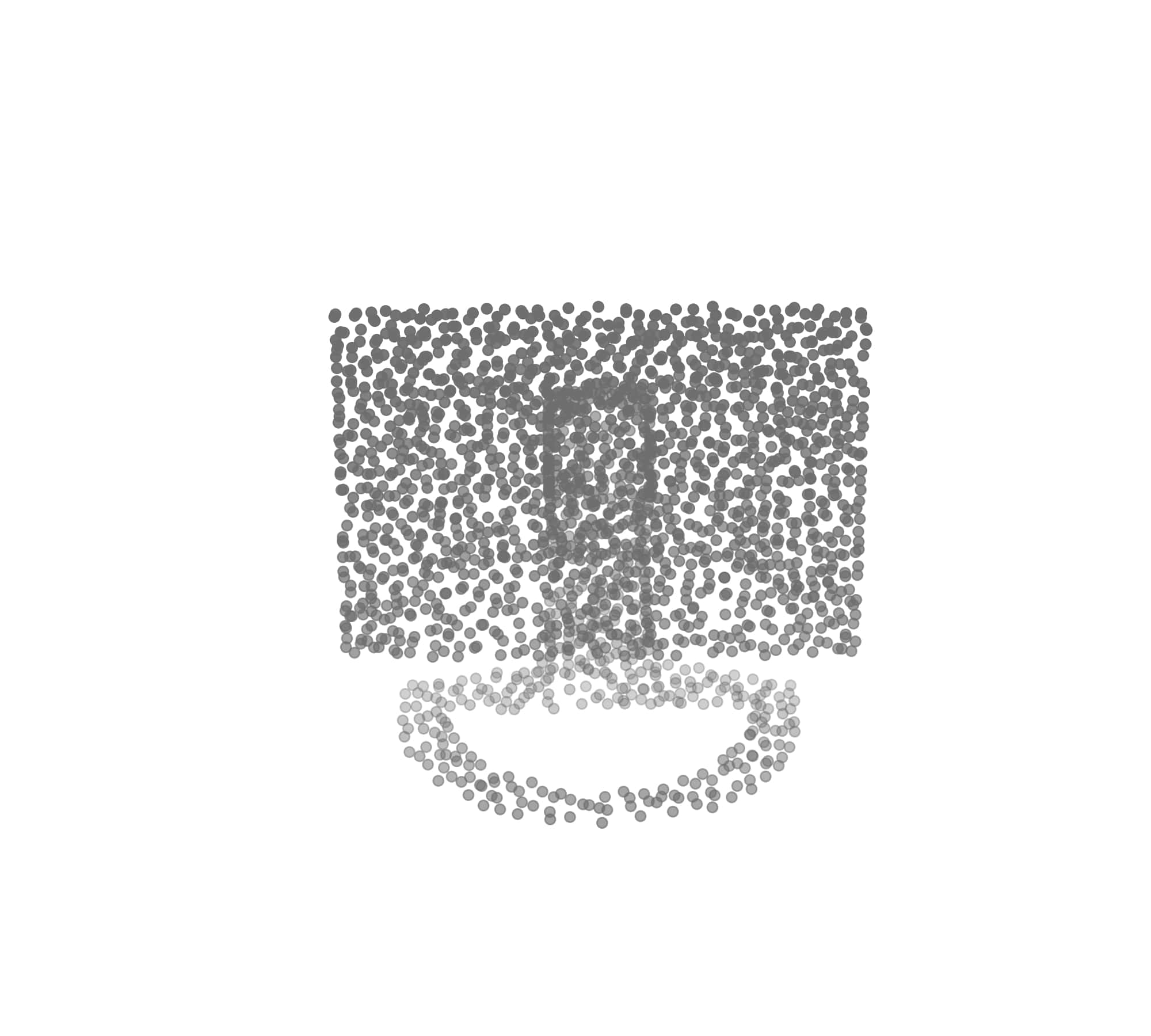} &
      \includegraphics[trim={10.5cm 12cm 10.5cm 12cm}, clip=true , scale=0.043] {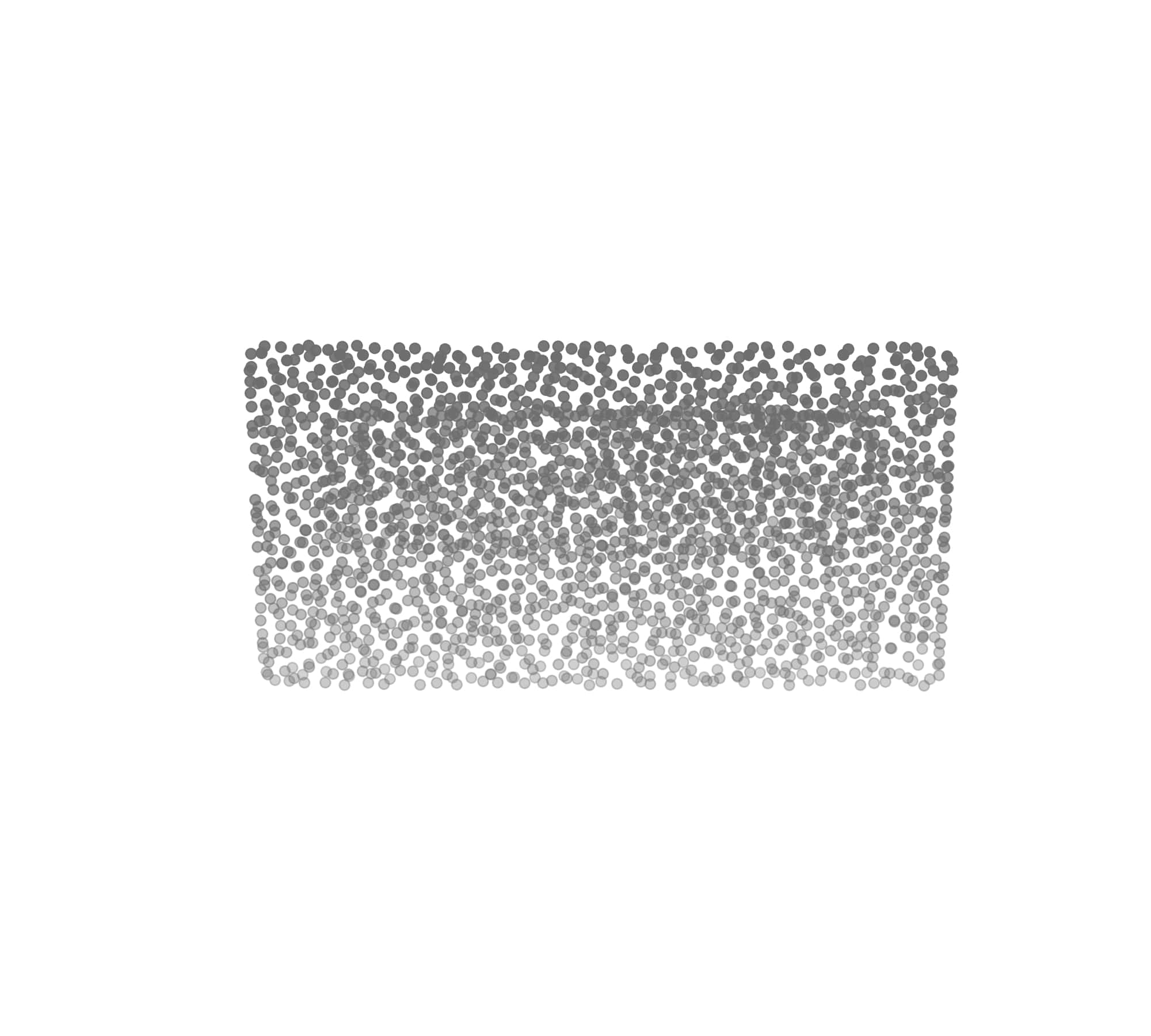} & 
      \includegraphics[trim={10.5cm 12cm 10.5cm 12cm}, clip=true , scale=0.043] {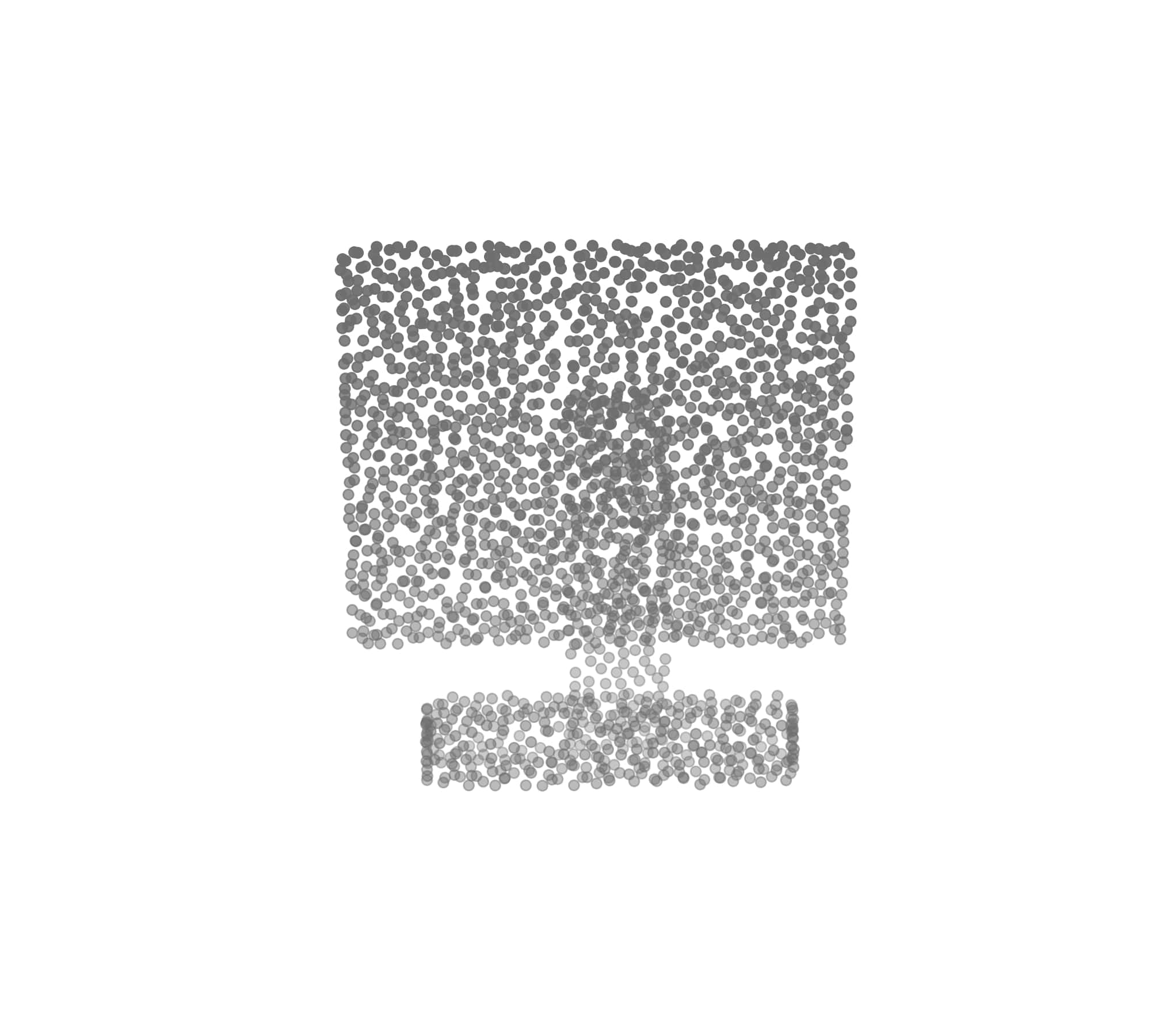} \\  
      \hline
    \end{tabular}
  \end{center}
  \caption{\label{tab:fine_grained_figures}\textbf{Dataset with subcategory labels}. We categorize airplanes, beds, chairs and monitors into $9$, $8$, $16$, $9$ classes, respectively. This  Dataset could be used for fine-grained classification, visualization, clustering, and many other tasks.}
\end{table*}

\subsubsection{Subcategory Classification}
Recognizing objects across various categories mainly requires global features, instead of local features. For example, even a model cannot capture engines of airplanes shown in Table~\ref{tab:fine_grained_figures}, the classifier can still correctly label it as an airplane. Here we consider validating the ability of the proposed networks to extract detailed features. For objects in the same category, we further group them into several subcategories based on some local shapes. We then train the models to classify subcategories; called the subcategory classification. To achieve a high classification accuracy in this task, a model has to capture local features.

We choose four categories in ModelNet40, including airplanes, beds, chairs, and monitors, and manually label subcategories for them. Table~\ref{tab:fine_grained_figures}  illustrates the subcategory dataset.  Here we label point clouds only based on 3D shapes, without considering their function or other information.  Airplanes are grouped into $9$ subcategories based on the shape of the wings and the number of engines. $551$ and $142$ airplanes are used for training and testing, respectively. Beds are grouped into $8$ subcategories based on the shape of the board and the presence of pillows. $433$ and $112$ beds are used for training and testing, respectively. Chairs are grouped into $16$ subcategories. $707$ and $187$ chairs are used for training and testing, respectively. Monitors are grouped into $9$ subcategories based on the shape of bases. $400$ and $104$ monitors are used for training and testing, respectively.

We compare the proposed networks with  LatentGAN, AtlasNet~\cite{AtlasNet}, 3DCapsNet~\cite{3dcapsule},
 and a supervised model, PointNet. Table~\ref{tab:sub_classification} shows the  comparison of classification accuracies in the subcategory dataset. We see that  (i) the proposed networks achieve significantly higher classification accuracies than LatentGAN in all four categories. LatentGAN fails to classify subcategories since fully-connected layers do not commit a dedicated design to exploit local geometric structures or point relationships; (ii) as an unsupervised model, the proposed networks even work better than PointNet in three categories, indicating a strong generalization ability of the proposed networks. Note that PointNet is trained with both cross-entropy loss and the hinge loss. The one with the hinge loss
is slightly worse; (iii) the proposed networks perform better when the graph-filtering module is used to refine the reconstruction. The results indicate that graph filtering has a significant advantage in preserving local geometric structures; and (iv) two types of graph filters~\eqref{eq:adj_filtering} and ~\eqref{eq:lap_filter} provide similar classification performances.

\begin{table}[htb!]
	\begin{center}
		\begin{tabular}{ c|cccc|c } 
			\hline
			Category & Airplane & Bed & Chair & Monitor & S?\\
			\hline
			 \# subcategories & 9 & 8 & 16 & 9 & - \\
			 \hline
			PointNet  (Softmax) & 81.69 & {\bf  79.46} & 62.57 & 72.12 & Y \\
			
			PointNet (Hinge loss) & 78.35 & 77.40 & 67.39 & 71.88 & Y \\
				
			LatentGAN & 13.38 & 51.78 & 48.13 & 42.31 & N \\

			AtlasNet & 83.57 & 72.23 & 78.54 & 71.69 & N\\
			
			3DCapsNet & 83.61 & 72.84 & 78.59 & 75.23 & N \\
			\hline
			Our (Folding only) & 83.29 & 73.21 & 77.05 & 72.12 & N \\
			
			Our (with GA) & 83.80 & 75.00 & {\bf 81.28} & {\bf 76.92} & N\\
			Our (with GL) & {\bf 83.97} & 76.65 & 81.23 & 75.43 & N\\
			\hline
		\end{tabular}
	\end{center}
	\caption{\label{tab:sub_classification}\textbf{Comparison of transfer classification accuracies on the subcategory dataset.} Proposed networks significantly outperform the other unsupervised model and outperforms the supervised model, PointNet, in three categories. S stands for supervised model; GA stands for the graph-adjacency-matrix-based filtering~\eqref{eq:adj_filtering}; GL stands for the graph-Laplacian-matrix-based filtering~\eqref{eq:lap_filter}.}
\end{table}

\subsection{Visualization of Clustering}
We use the latent code produced by the proposed networks to represent a 3D point cloud and use t-SNE to reduce the dimensionality to $2$ for visualization~\cite{MaatenH:08}. Here we use the graph-adjacency-matrix-based filtering~\eqref{eq:adj_filtering} to implement the graph-filtering module. Figure~\ref{fig:clustering} shows the clustering performance on both training and testing sets of ModelNet10 where each point represents a 3D point cloud and the associated color represents its ground-truth category. We see that 3D point clouds with the same categories are clustered together, indicating the proposed networks encode similar 3D point clouds to similar codes.

\begin{figure}[thb]
  \begin{center}
    \begin{tabular}{cc}
    \includegraphics[width=0.5\columnwidth]{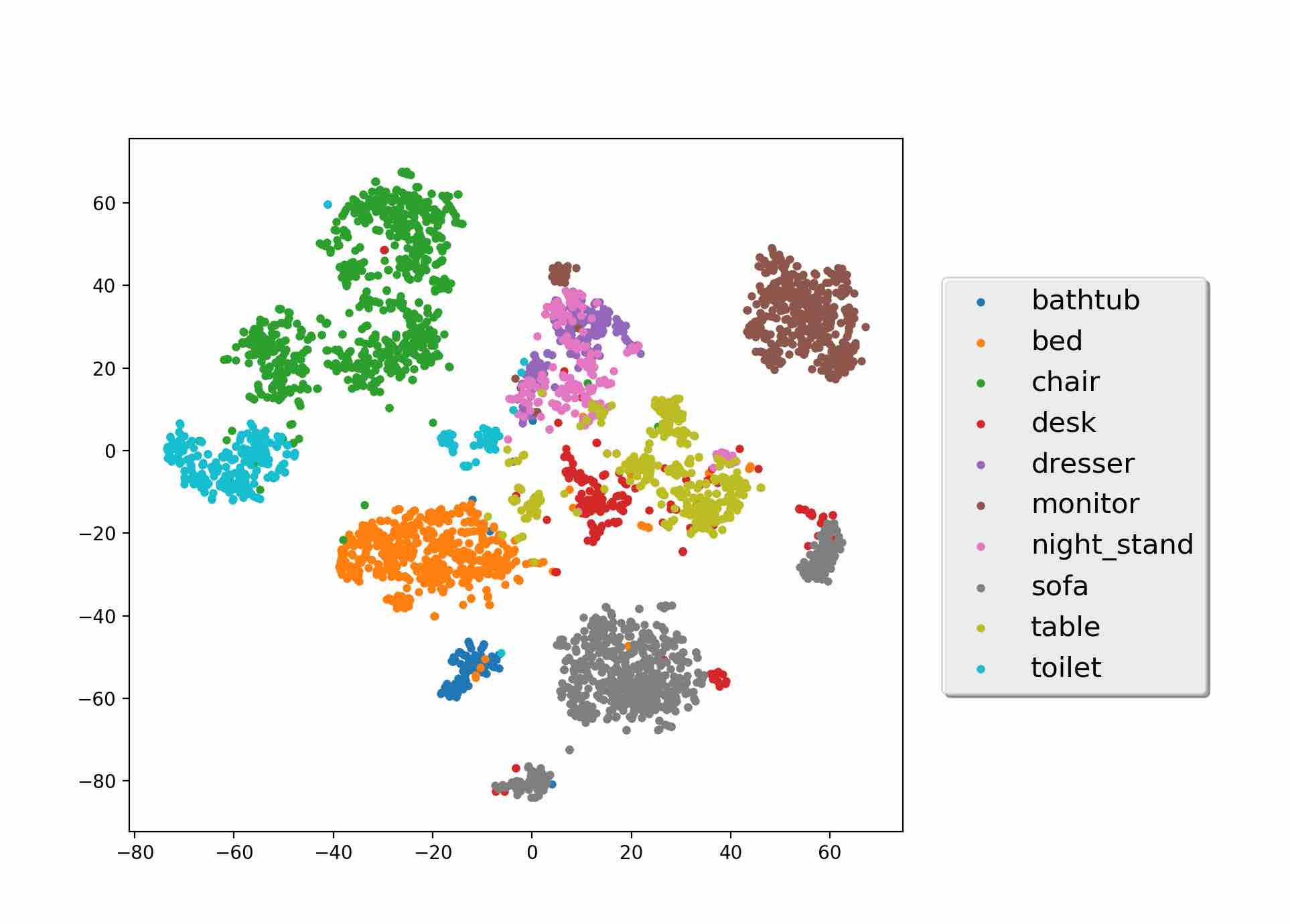}     & \includegraphics[width=0.5\columnwidth]{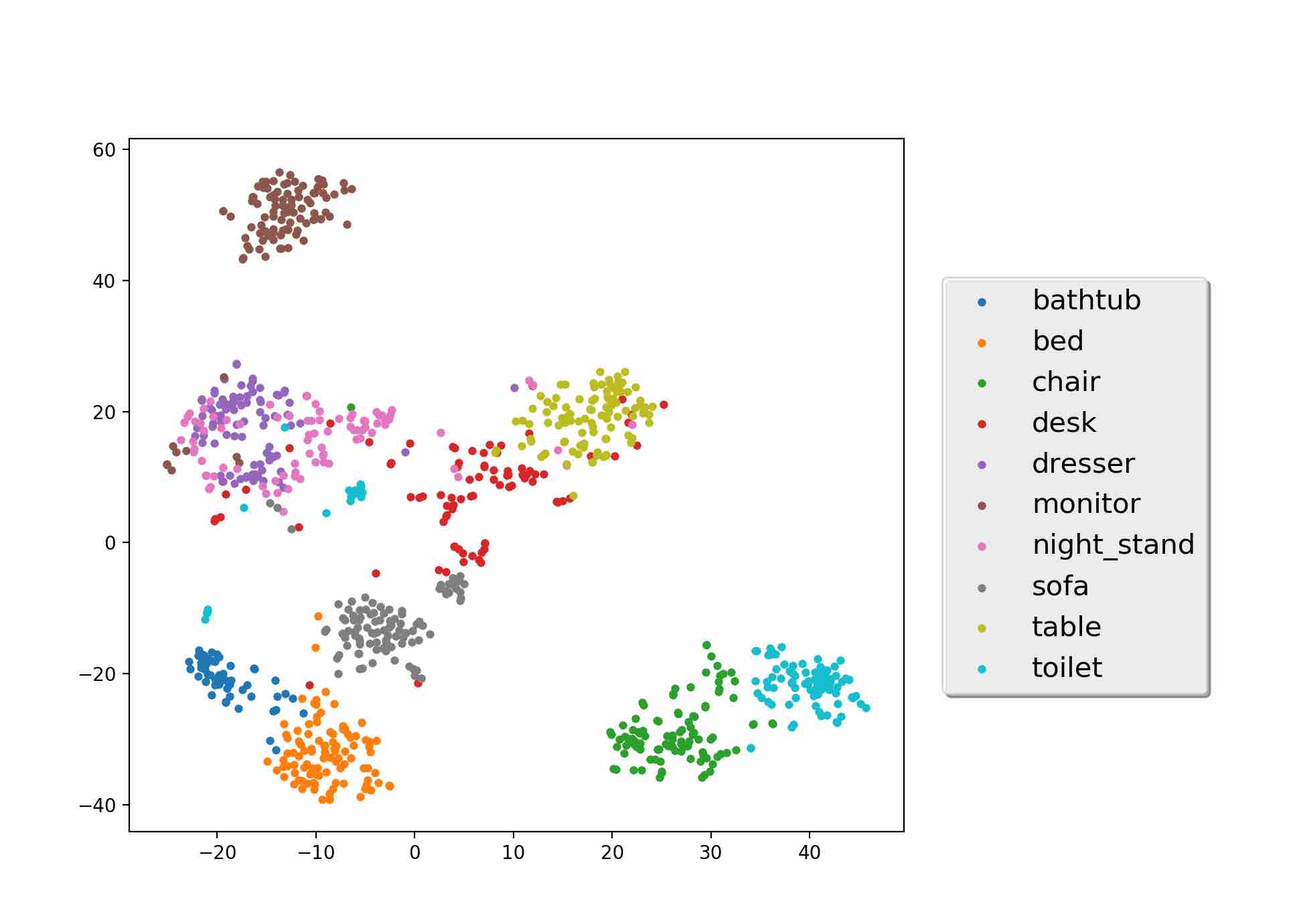}  
    \\
    {\small (a) Train.} &  {\small (b) Test.} 
  \end{tabular}
\end{center}
\caption{\label{fig:clustering} Clustering on ModelNet10 using t-SNE. Each point in the plots represents a 3D point cloud and the associated color represents its ground-truth category.}
\end{figure}

\subsection{Graph Spectral Analysis}
Table~\ref{fig:view_eigens} shows the spectral properties of the learned graph topologies based on either the graph adjacency matrix~\eqref{eq:adj_filtering} or the graph Laplacian matrix~\eqref{eq:lap_filter}. The first column shows the distribution of eigenvalues of the learned graph adjacency/Laplacian matrix, which indicates the graph frequencies. In each plot, the left side of the $x$-axis represents lower frequency and vice versa. We see that for the graph adjacency matrix, the eigenvalues associated with the torus point cloud and the airplane point cloud are similar; however for the graph Laplacian matrix, the eigenvalues of torus point cloud decrease much faster compared with the eigenvalues of airplane point cloud, which implies that it is easier to learn a torus than an airplane. We also calculate the first four eigenvectors of the graph adjacency/Laplacian matrix and color the corresponding reconstructed point clouds using the values of these eigenvectors, which are shown in Table~\ref{fig:view_eigens}. The colored point clouds listed in the second, third, fourth and fifth columns are segmented by the colors in a finer way. 

\begin{table*}[htb!]
  \begin{center}
    \begin{tabular}{ c | c  c  c  c}
      \hline 
      graph-filter &
      Eigenvalues & 
      1st eigenvector & 
      2nd eigenvector & 
      3rd eigenvector  \\
      \hline 
      $\quad$&
      $\quad$&
      $\quad$&
      $\quad$&
      $\quad$\\
      graph-adjacency & 
      \includegraphics[trim={1.5cm 2.5cm 4cm 4.5cm}, clip=true , scale=0.07] {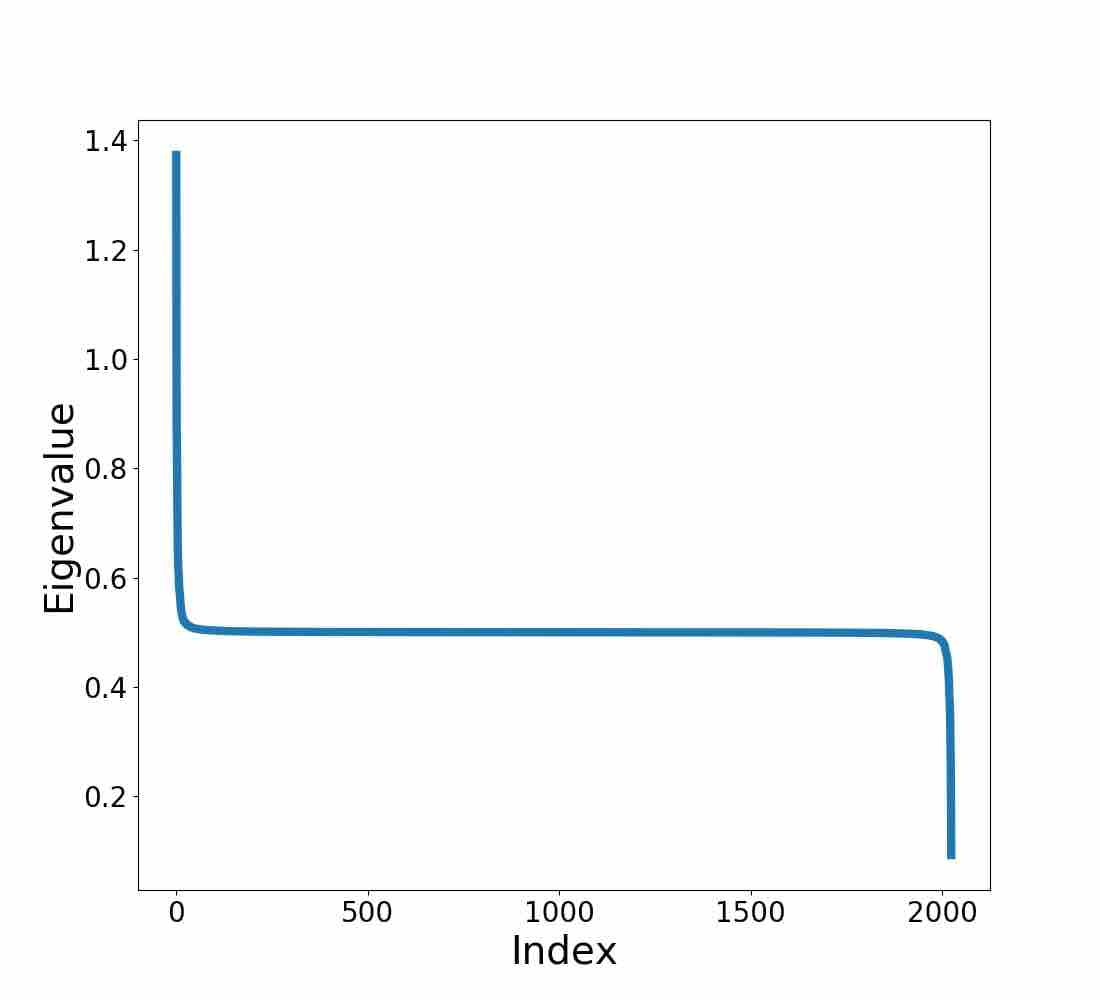} & 
      \includegraphics[trim={21cm 5cm 10cm 5cm}, clip=true , scale=0.045] {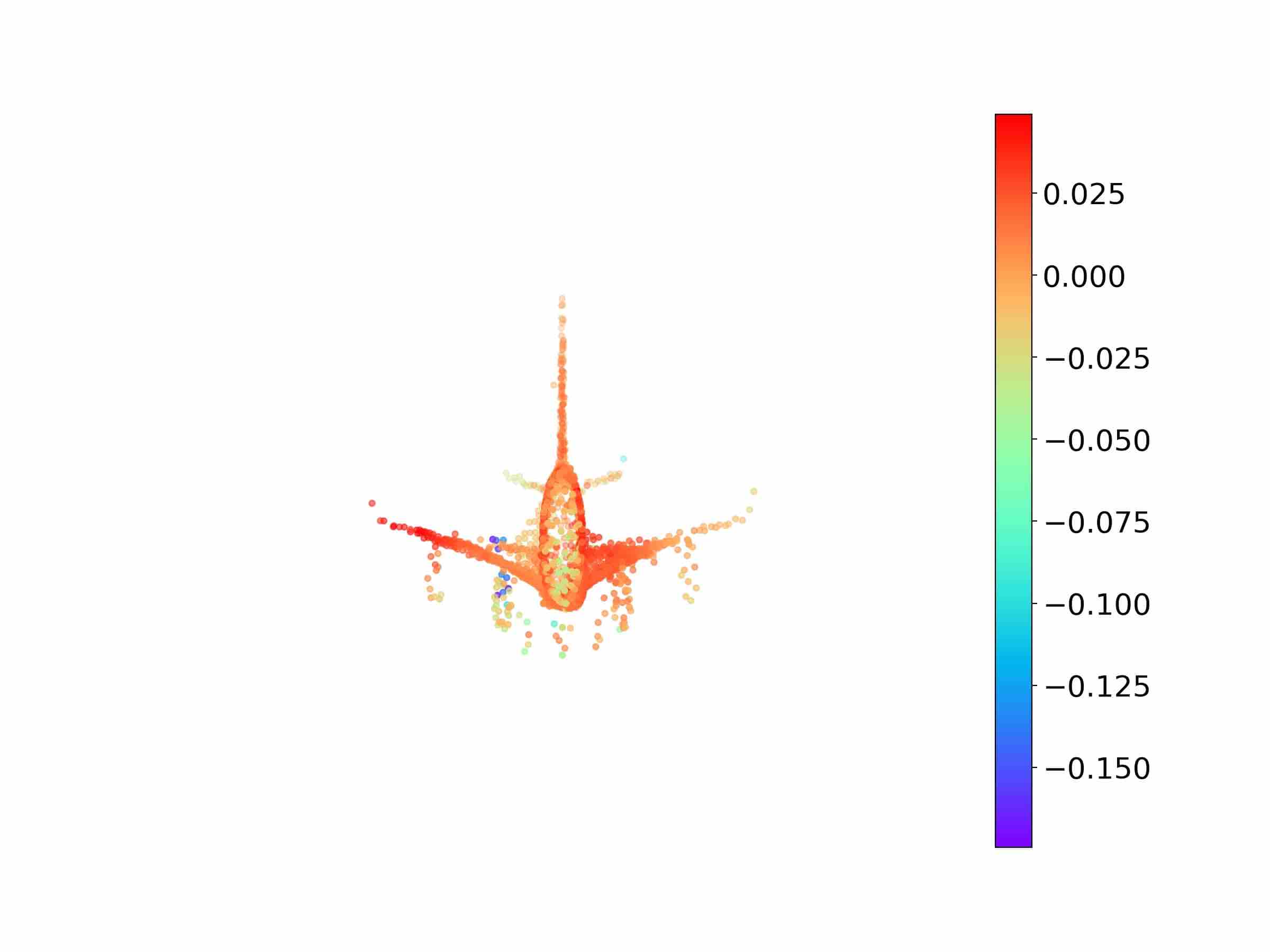} & 
      \includegraphics[trim={21cm 5cm 10cm 5cm}, clip=true , scale=0.045] {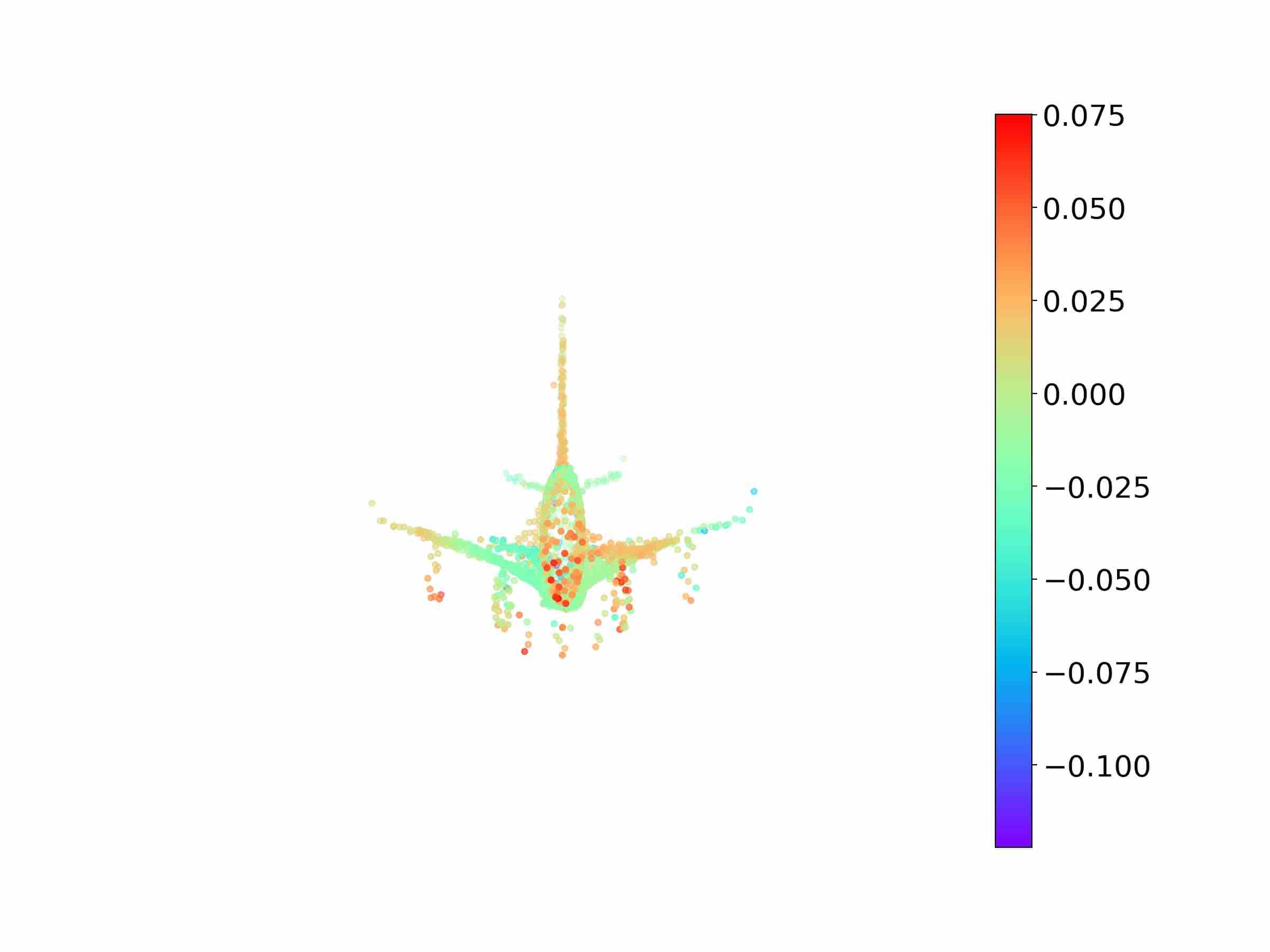} & 
      \includegraphics[trim={21cm 5cm 10cm 5cm}, clip=true , scale=0.045] {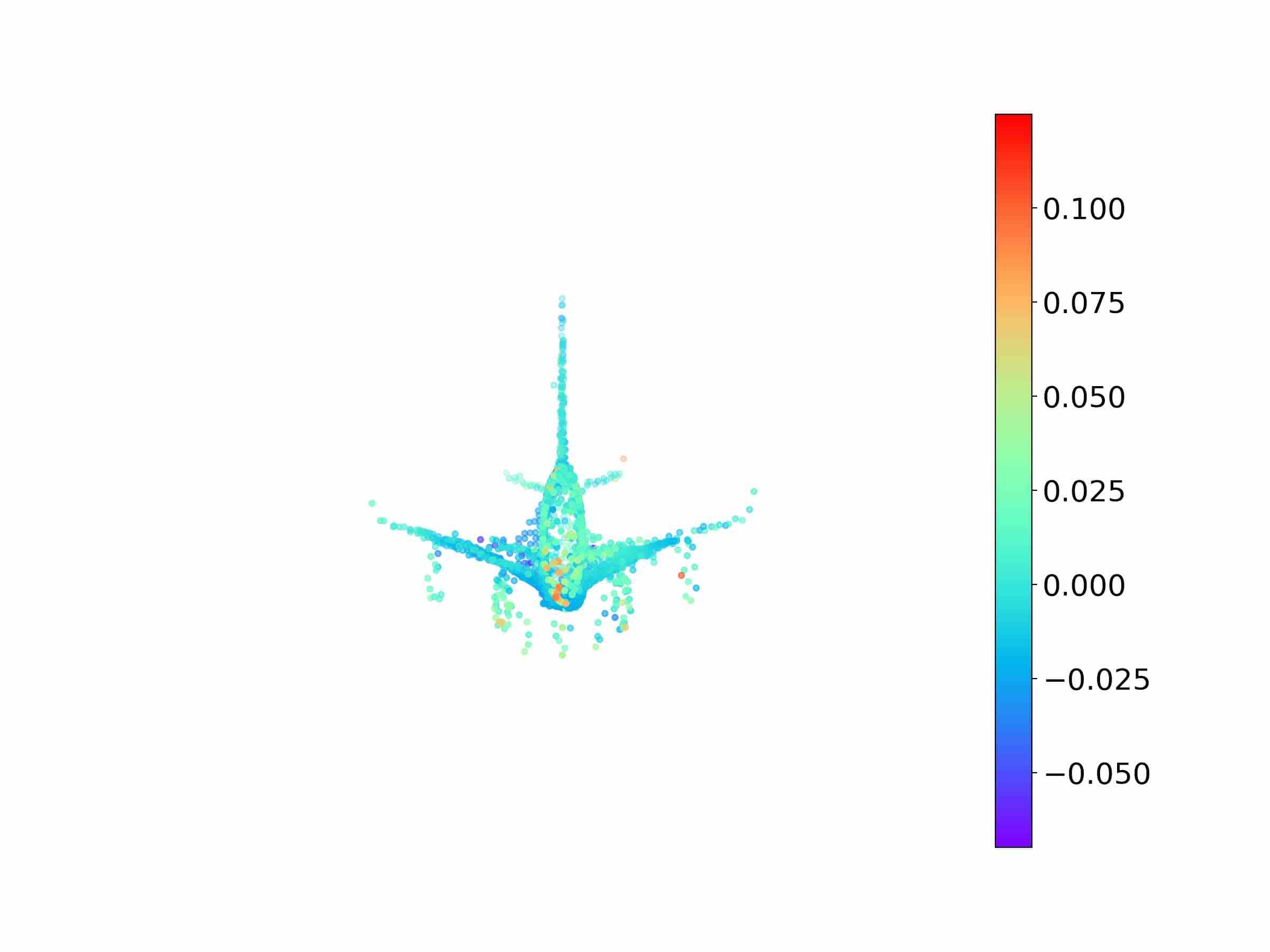}
      \\
      &
      \includegraphics[trim={1.5cm 2.5cm 4cm 4.5cm}, clip=true , scale=0.07] {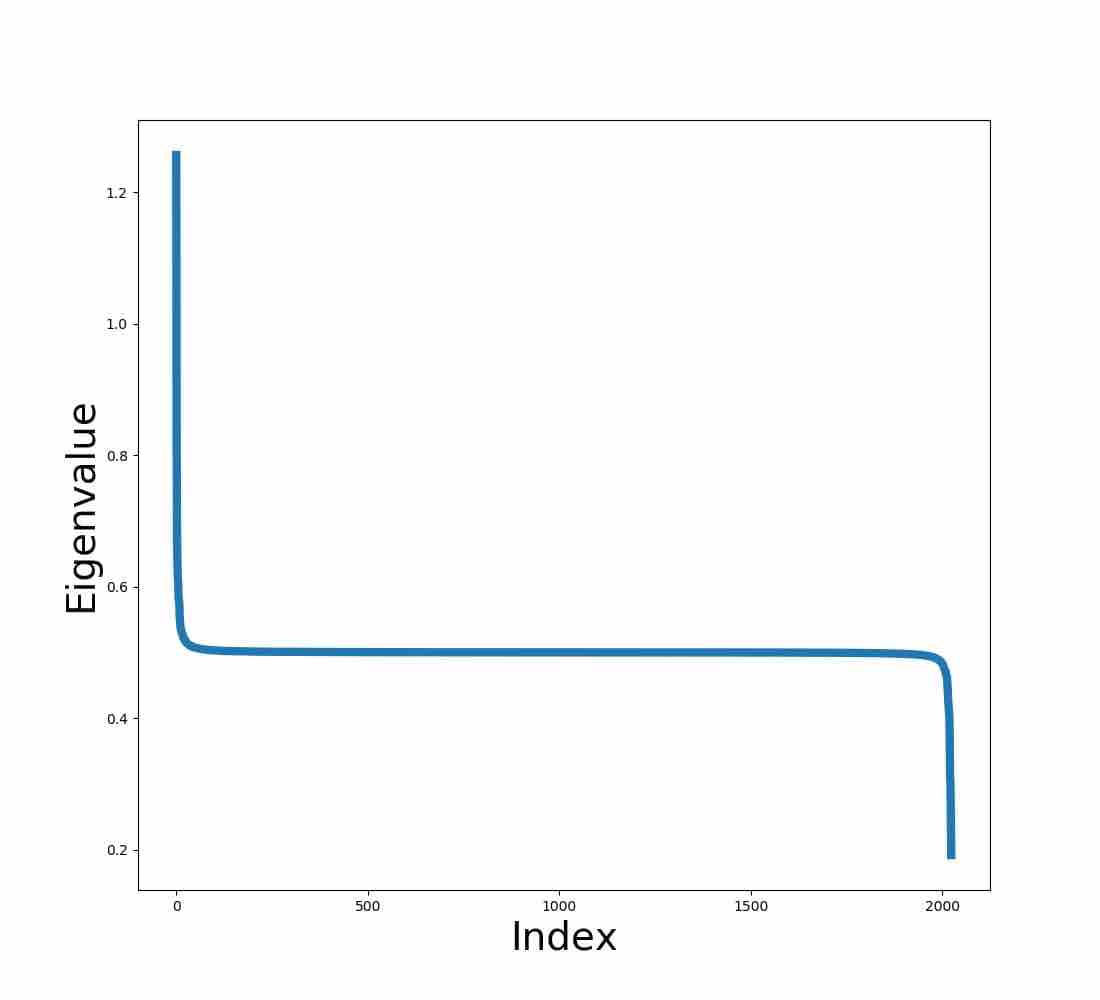} & 
      \includegraphics[trim={21cm 5cm 10cm 5cm}, clip=true , scale=0.045] {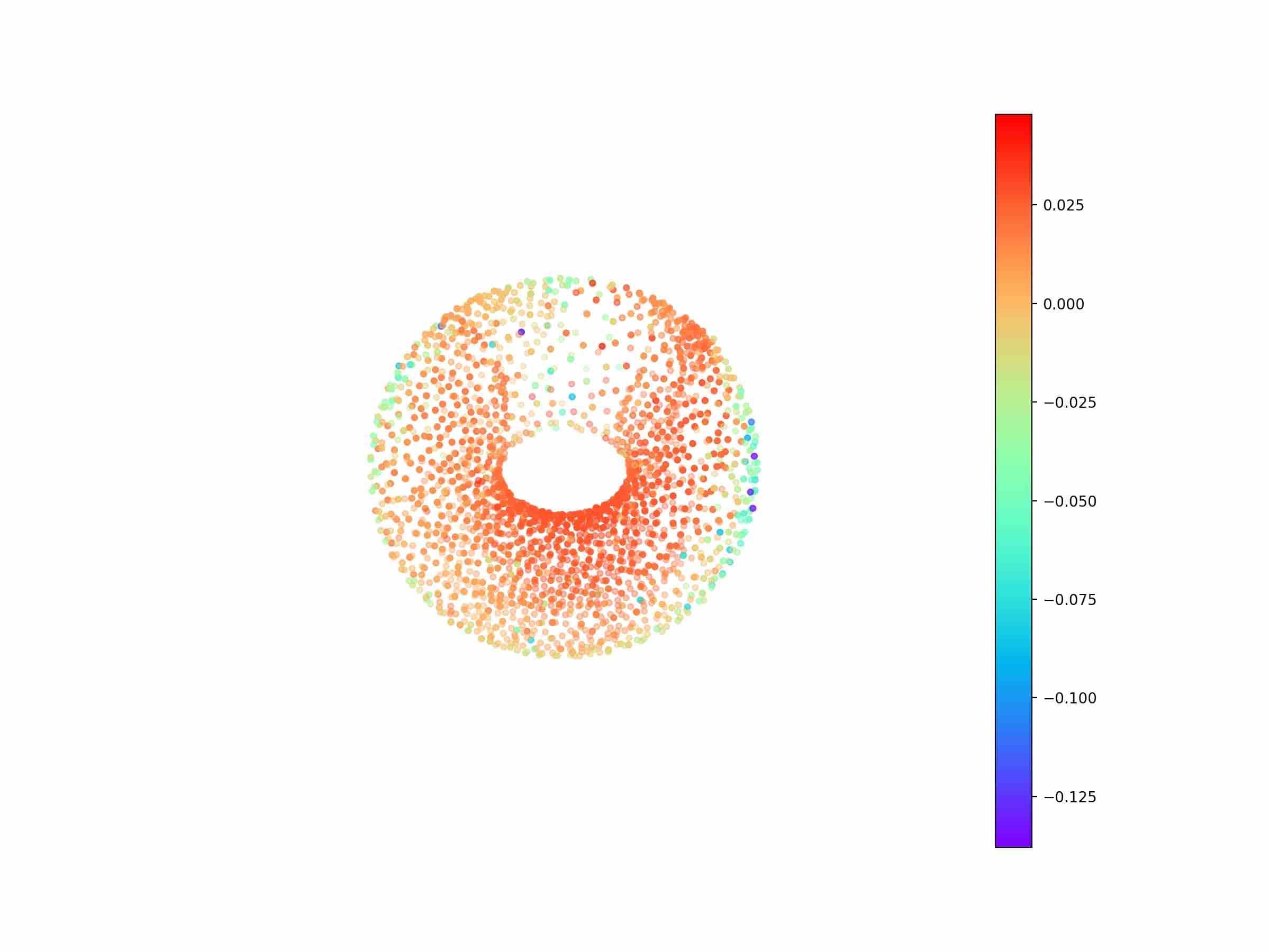} & 
      \includegraphics[trim={21cm 5cm 10cm 5cm}, clip=true , scale=0.045] {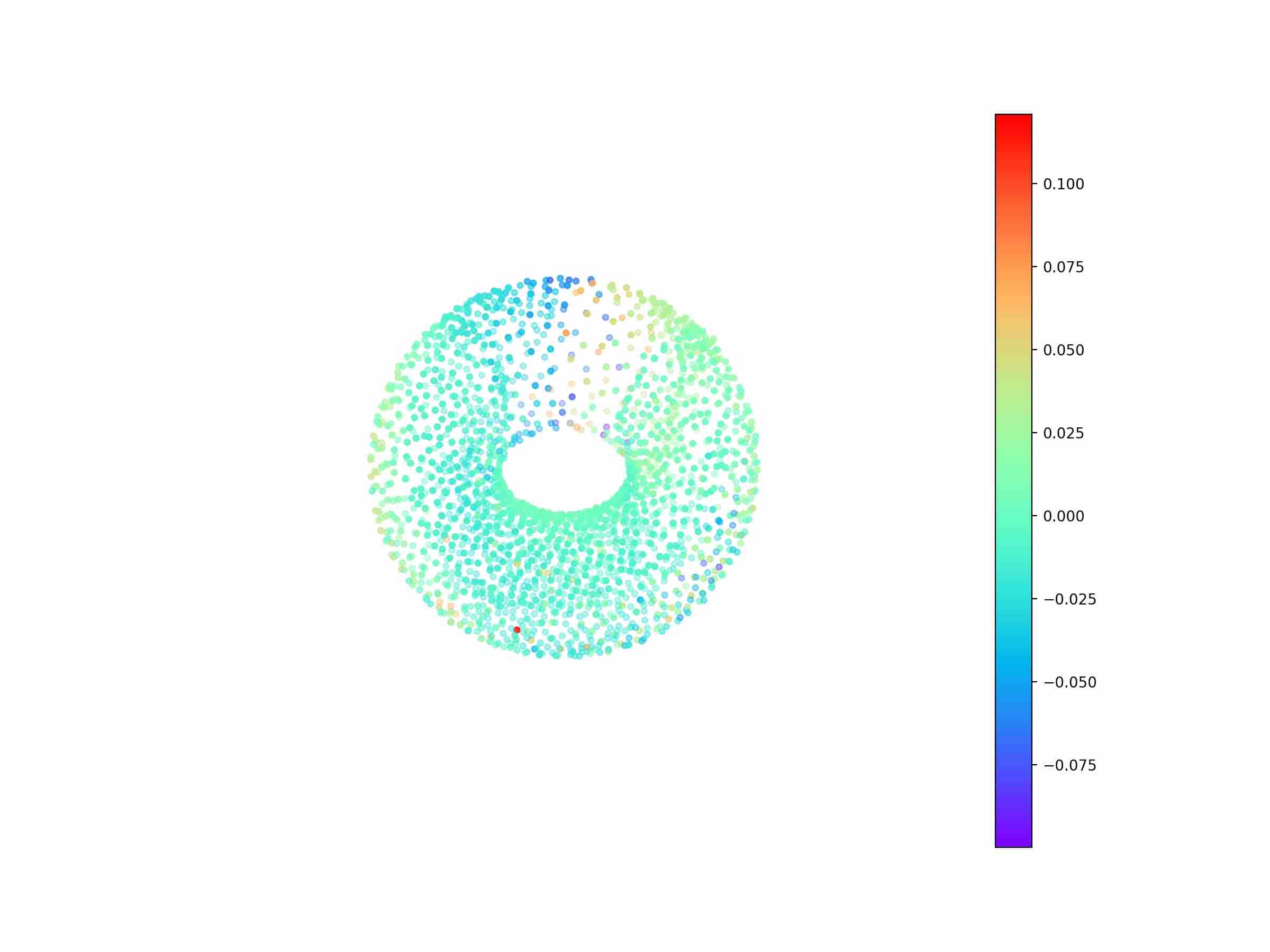} & 
      \includegraphics[trim={21cm 5cm 10cm 5cm}, clip=true , scale=0.045] {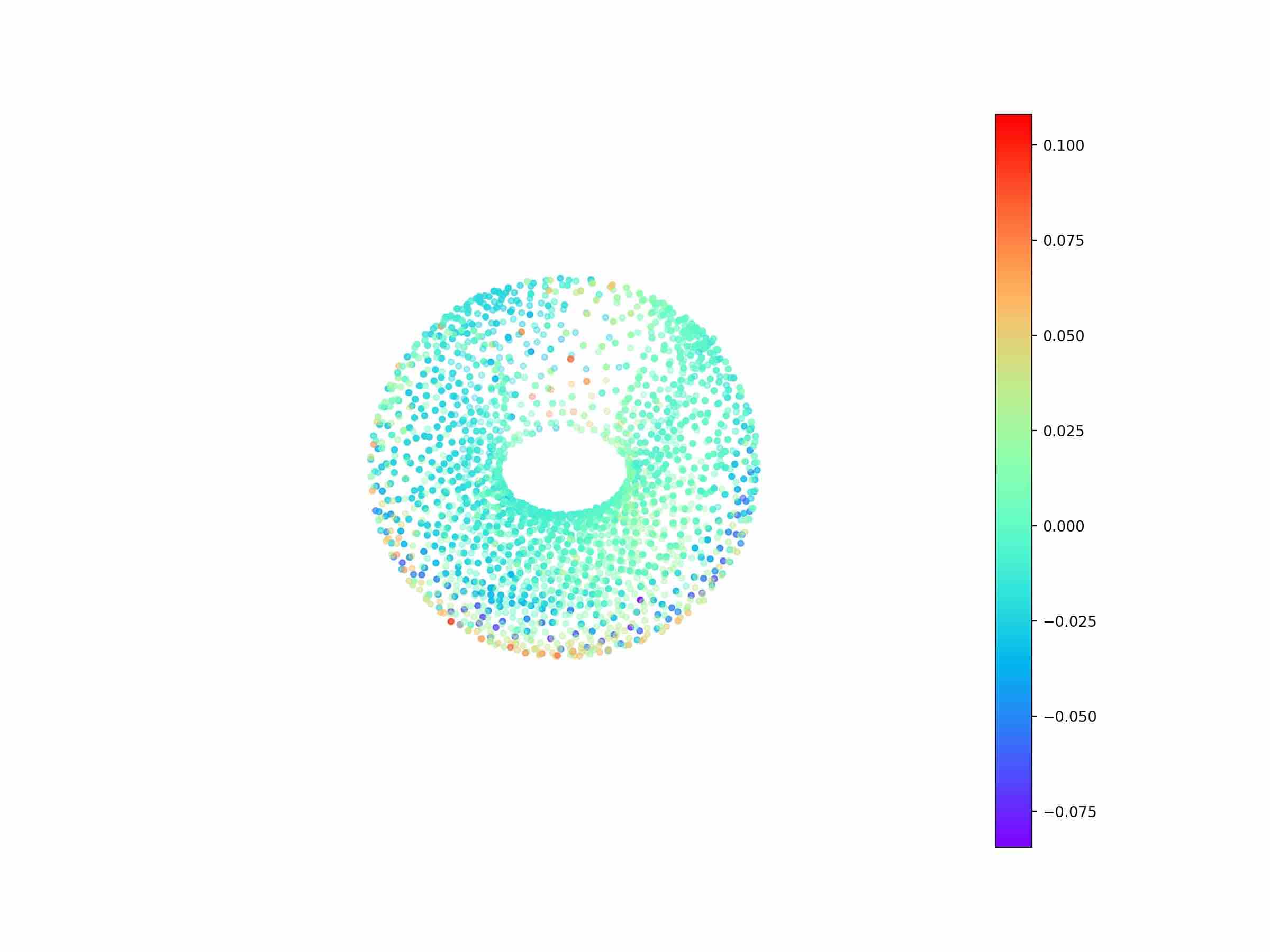} 
      \\
      \hline 
      $\quad$&
      $\quad$&
      $\quad$&
      $\quad$&
      $\quad$\\
      graph-Laplacian &
      \includegraphics[trim={1.5cm 2.5cm 4cm 4.5cm}, clip=true , scale=0.07] {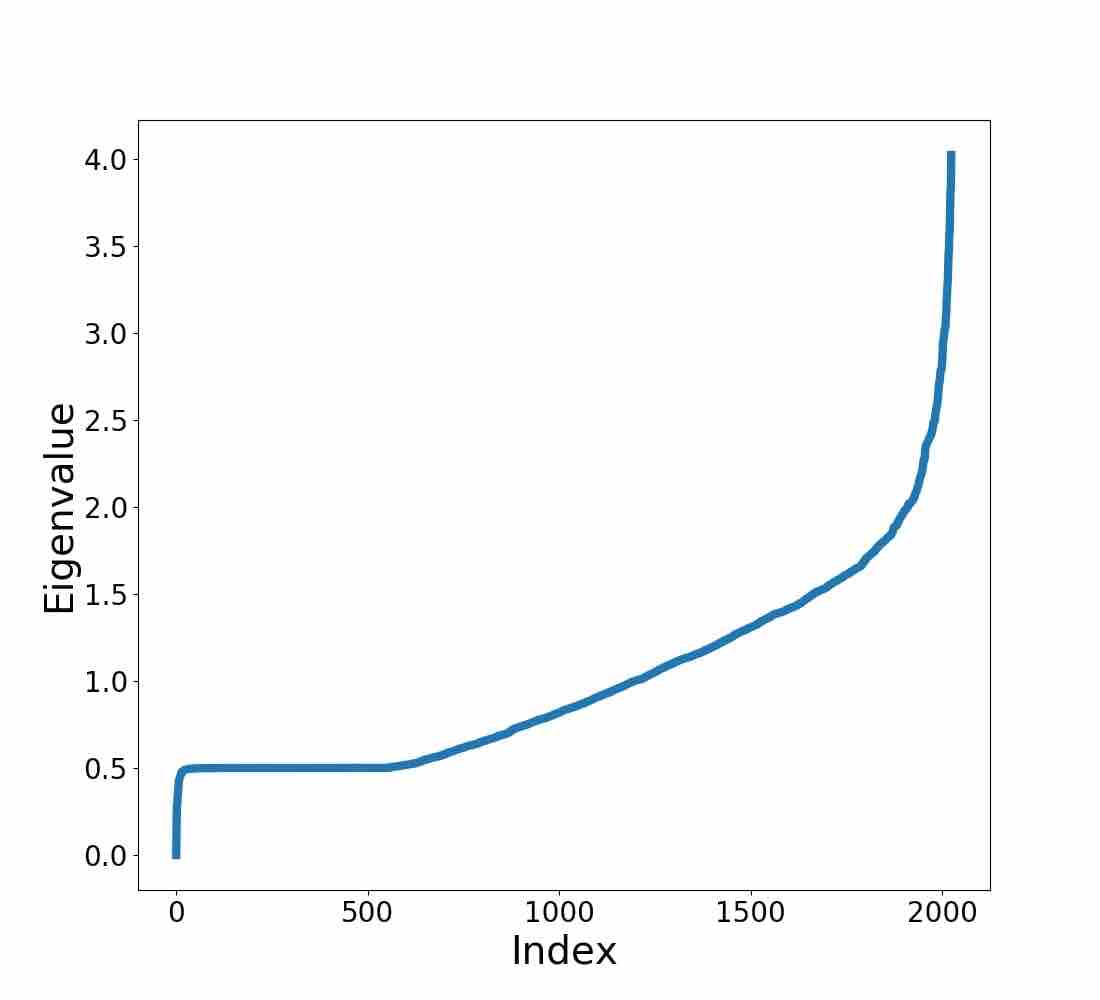} & 
      \includegraphics[trim={21cm 5cm 10cm 5cm}, clip=true , scale=0.045] {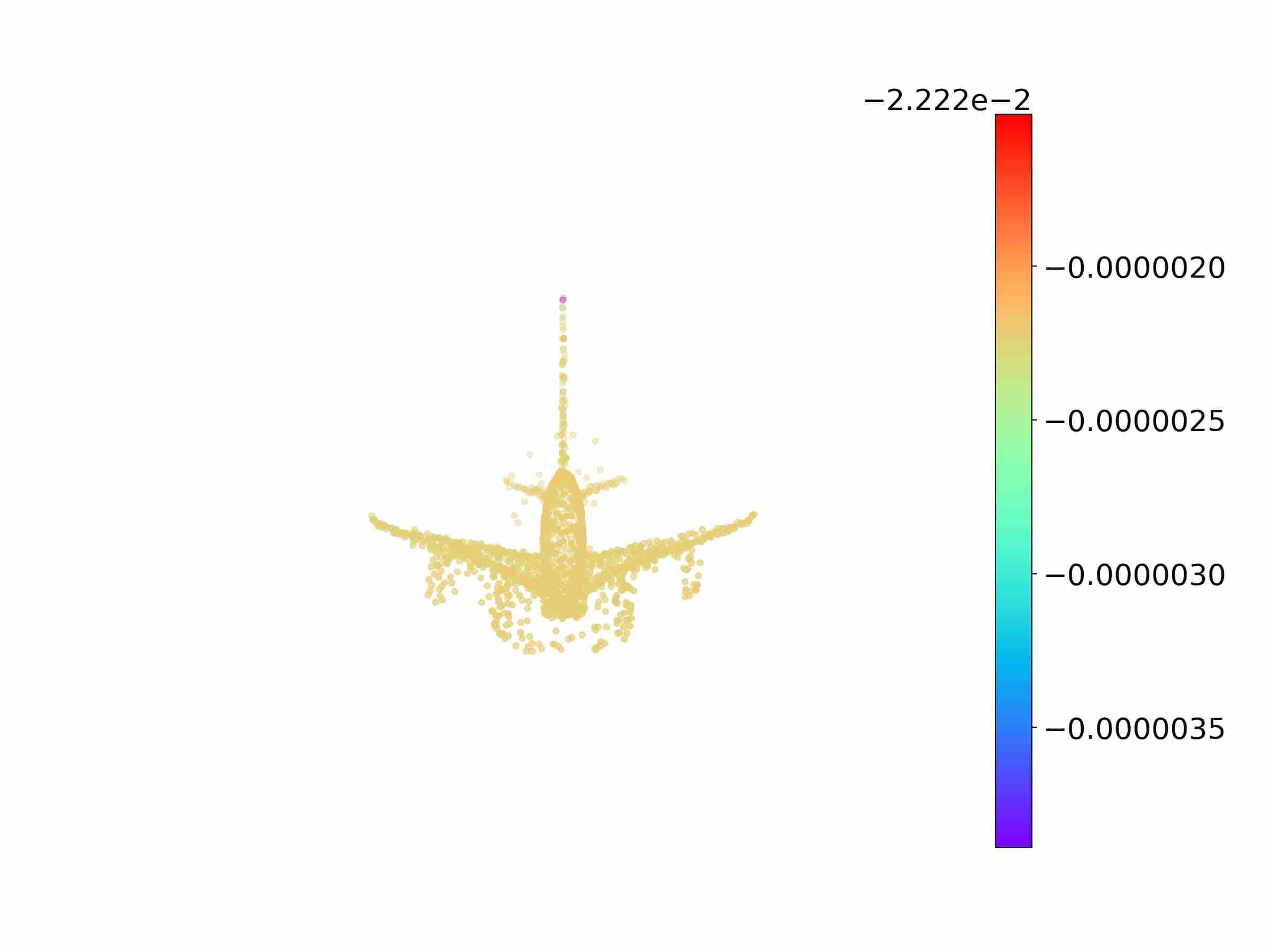} & 
      \includegraphics[trim={21cm 5cm 10cm 5cm}, clip=true , scale=0.045] {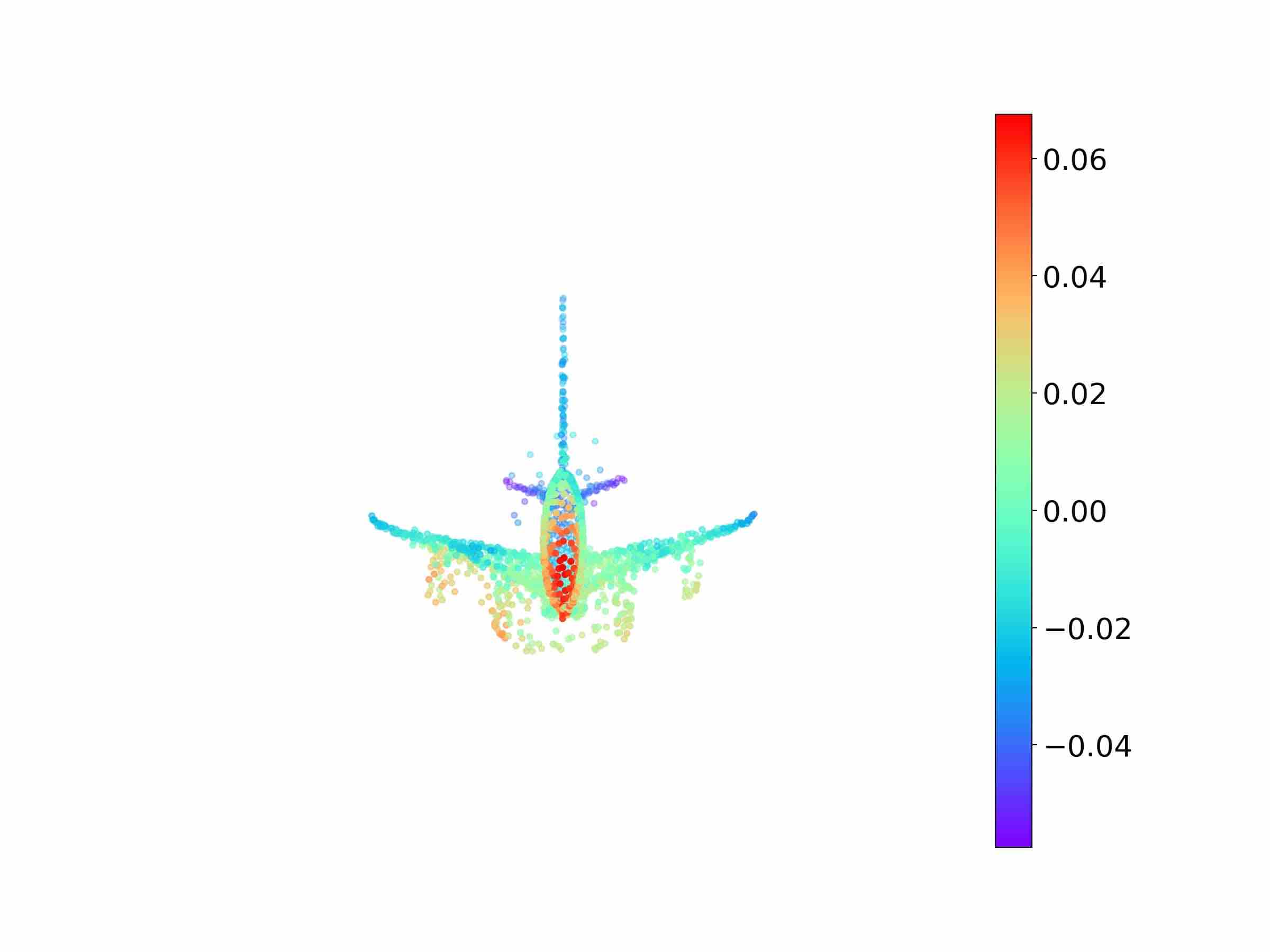} & 
      \includegraphics[trim={21cm 5cm 10cm 5cm}, clip=true , scale=0.045] {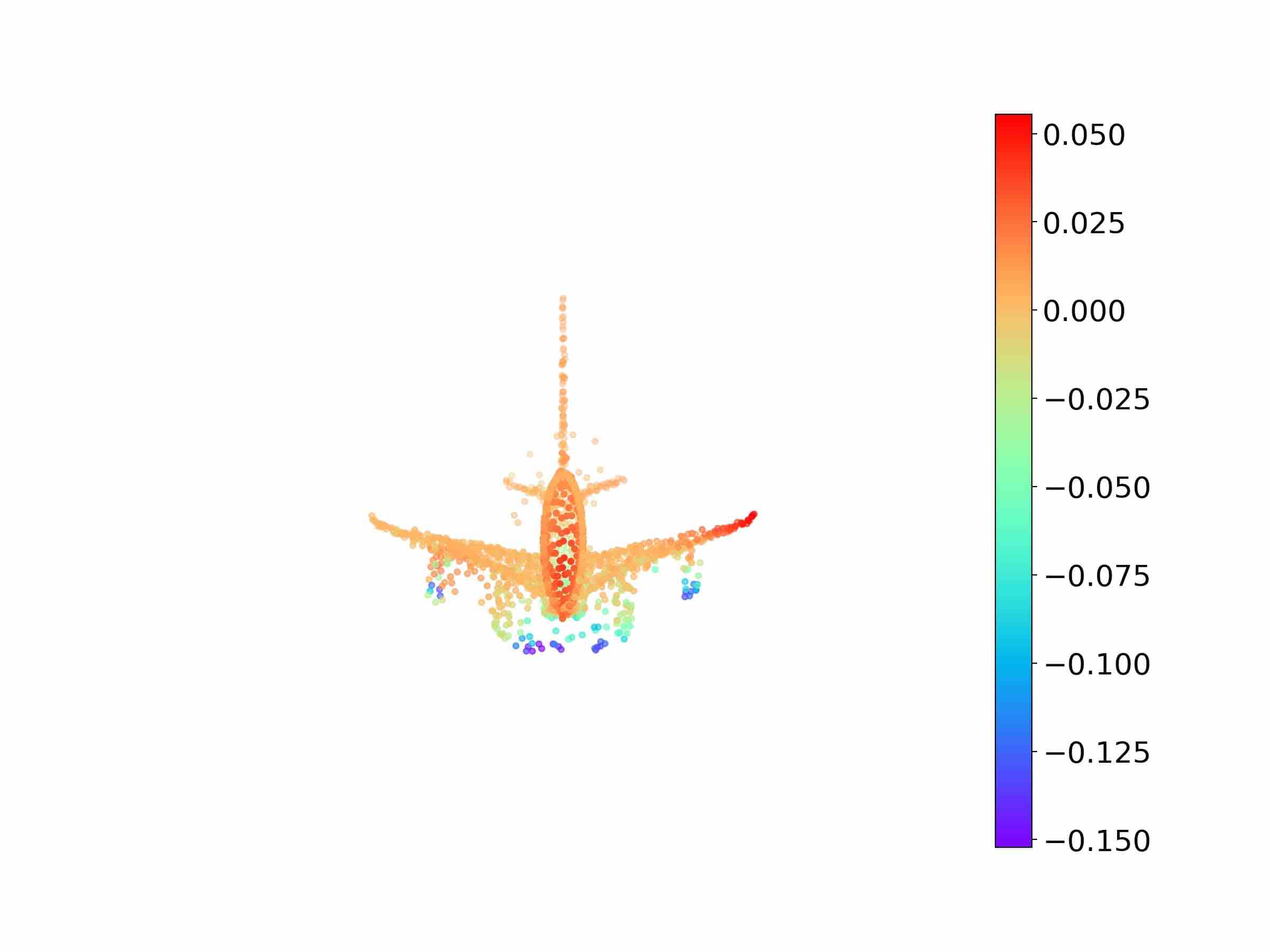}
      \\
      &
      \includegraphics[trim={1.5cm 2.5cm 4cm 4.5cm}, clip=true , scale=0.07] {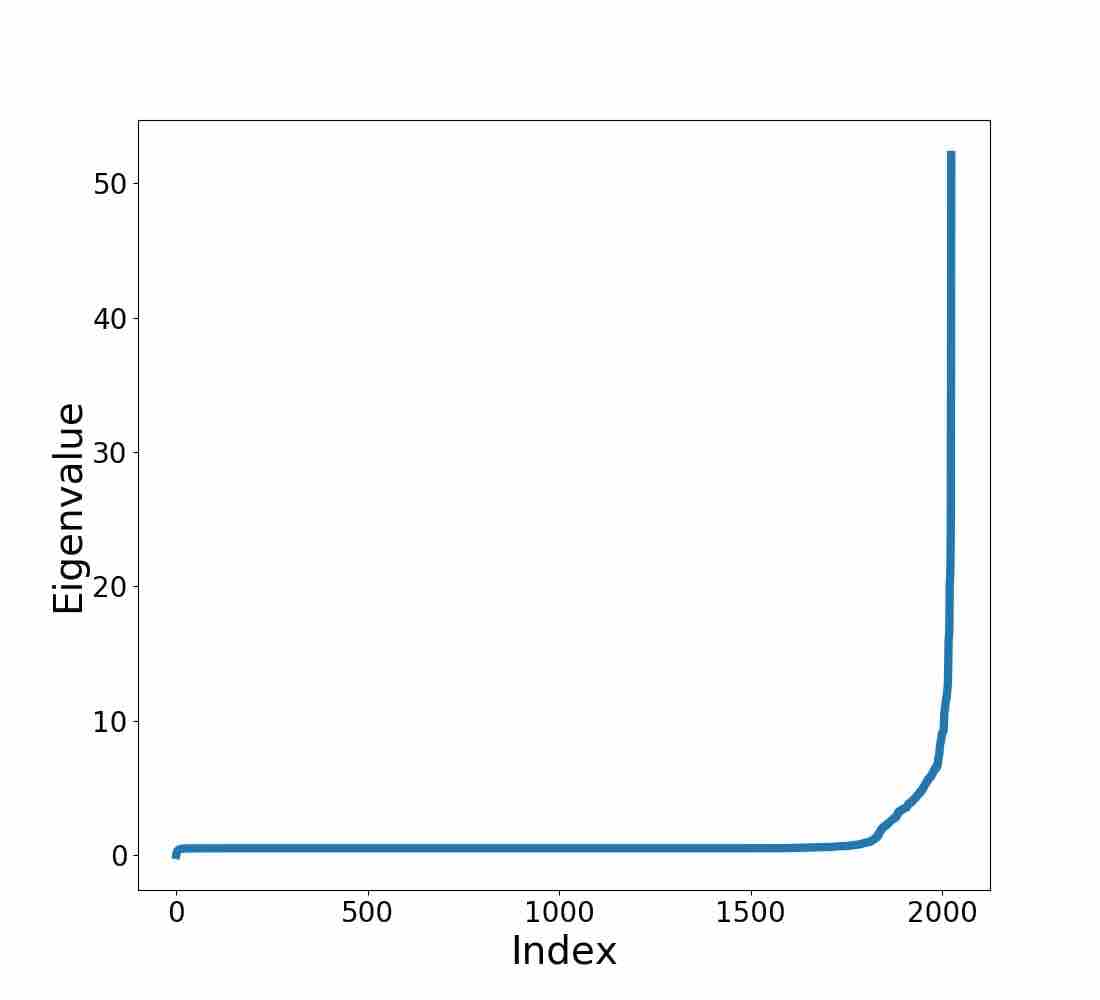} & 
      \includegraphics[trim={21cm 5cm 10cm 5cm}, clip=true , scale=0.045] {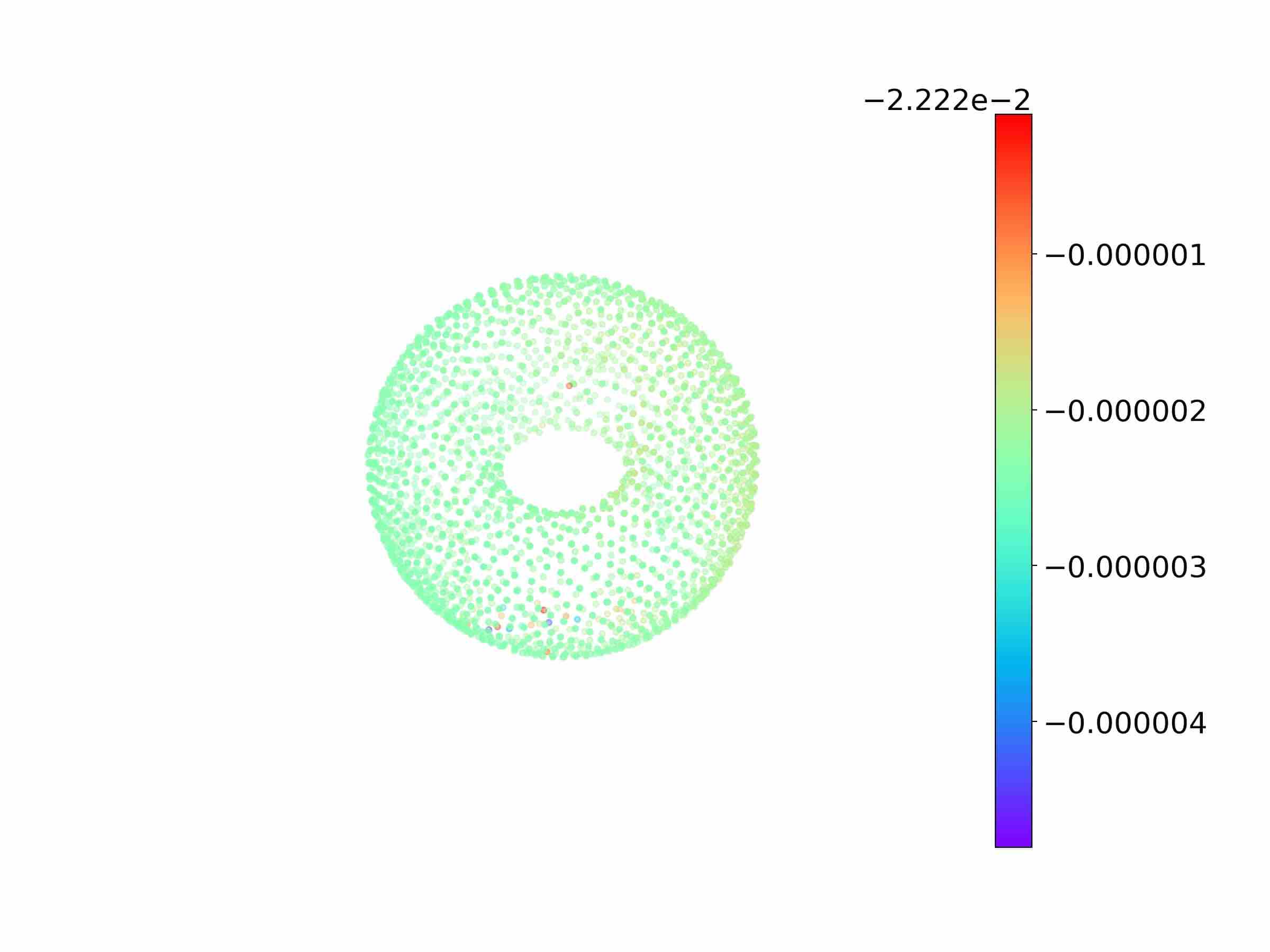} & 
      \includegraphics[trim={21cm 5cm 10cm 5cm}, clip=true , scale=0.045] {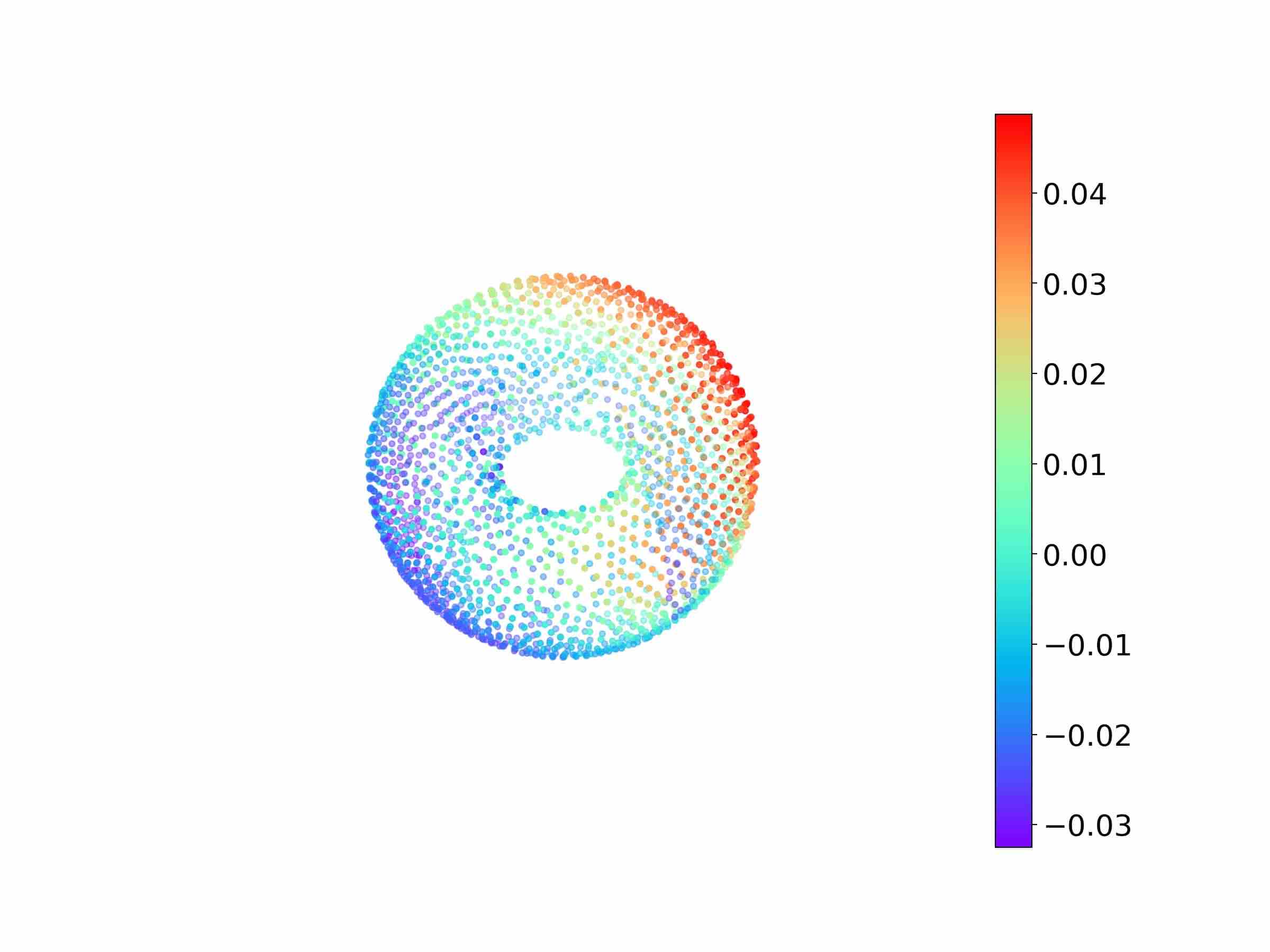} & 
      \includegraphics[trim={21cm 5cm 10cm 5cm}, clip=true , scale=0.045] {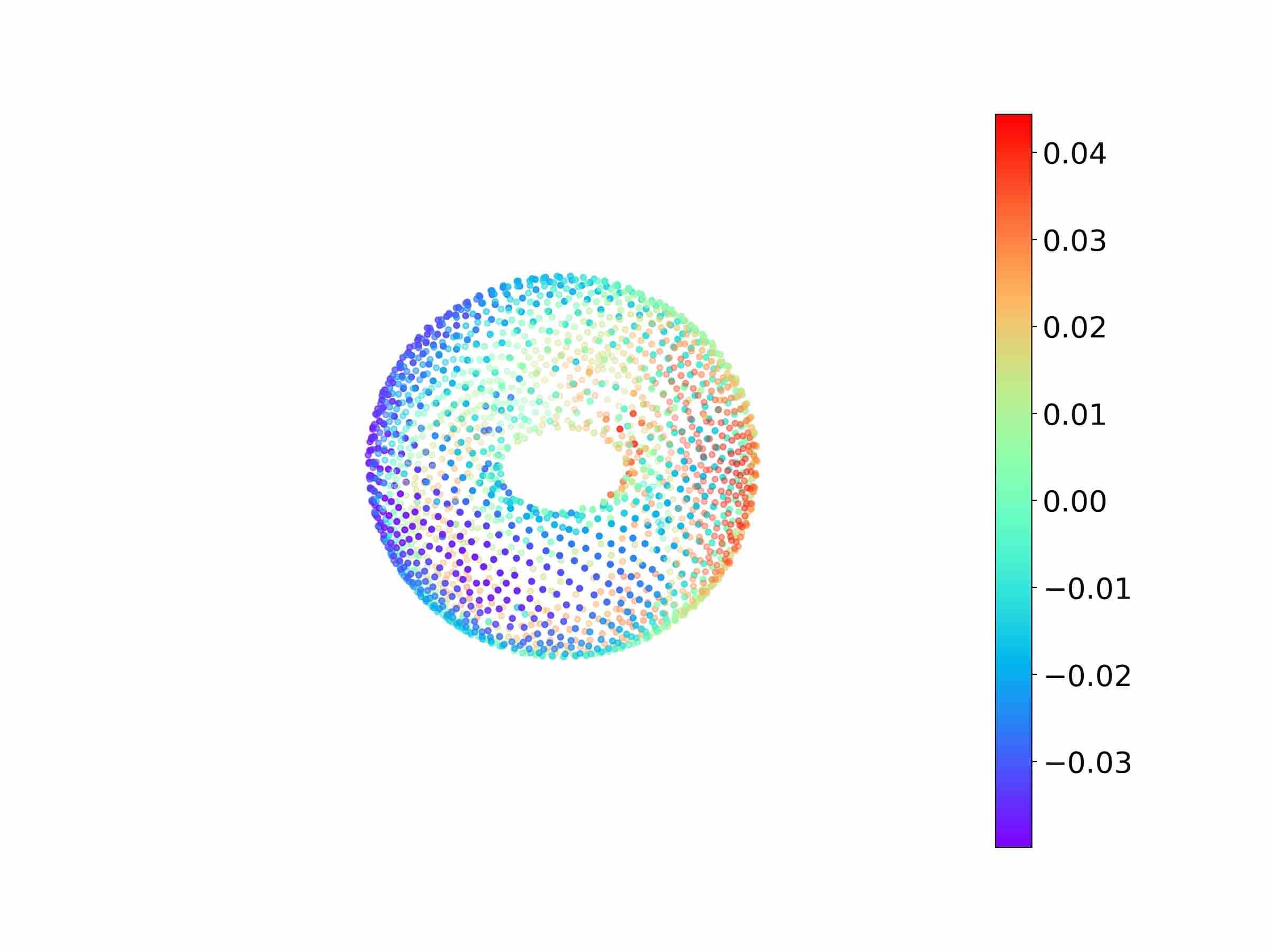}
      \\
      \hline
    \end{tabular}
  \end{center}
  \caption{\label{fig:view_eigens}\textbf{Visualization of graph spectral information for reconstructed 3D point cloud.} The first column shows the eigenvalues of the graph adjacency/Laplacian matrix obtained in the graph-topology-inference module~\eqref{eq:graph_lap}. The left side of the $x$-axis indicates lower frequency. From the second column to the fifth column, the plots show the reconstructed point clouds colored with the first four eigenvectors of the graph adjacency/Laplacian matrix. As eigenvalues increase, the corresponding eigenvectors represent a higher frequency and segment the point clouds in a finer way. }
\end{table*}

\begin{table*}[htb!]
	\begin{center}
		\begin{tabular}{ c|c c c c c c c c c } 
			\hline
			\hline 
		    \diagbox{Value of $M$}{Category} & Airplane & Car & Chair &  Guitar & Knife & Lamp & Laptop & Table\\
			\hline
			1024 (loss: $\times$ 1e-2) &  3.95 & 5.73 & 6.01 & 2.28 & 2.91 & 5.93 & 5.02 & 5.37 \\
			2025 (loss: $\times$ 1e-2) & 2.34 & 3.83 & 3.46 & 1.58 & 1.70 & 3.45 & 2.94 & 3.30 \\
			4096 (loss: $\times$ 1e-2) & 1.77 & 2.36 & 2.49 & 0.97 & 1.05 & 2.71 & 2.09 & 2.66 \\
			\hline
		\end{tabular}
	\end{center}
	\caption{\label{tab:num_point_loss}\textbf{Reconstruction loss as a function of the number of reconstructed points across eight categories in the ShapeNetCore dataset.} Increasing the number of reconstructed points leads to smaller reconstruction losses.}.
\end{table*}

\begin{table}[htb!]
\begin{center}
\begin{tabular}{ c|c|c|c|c } 
\hline
\hline
Value of $M$ & Modality & \# code & MN40 & MN10 \\
\hline
1024 & Points & 512 & 83.26 & 90.48\\
2048 & Points & 512 & \textbf{89.67} & \textbf{95.63} \\
4096 &  Points & 512 &  87.85 & 93.41 \\
\hline
\end{tabular}
\end{center}
\caption{\label{num_point_acc}
\textbf{Classification accuracies  as a function of the number of reconstructed points in the  datasets of ModelNet10 and ModelNet40}. Increasing the number of reconstructed points does not necessarily improve the classification performances.}
\end{table}

\begin{table}[htb!]
\begin{center}
\begin{tabular}{ c|c|c } 
\hline
\hline 
Method  &  MN40 & MN10 \\
\hline
Folding module only (CD) & 86.32 & 92.11 \\
Folding module only (AugCD)  & 88.40 & 94.40 \\
\hline 
Folding with graph-adjacency (CD)  & 88.24 \ & 94.10\\
Folding with graph-adjacency (AugCD) & \textbf{89.67} \ & 95.63\\
\hline 
Folding with graph-Laplacian (CD)  & 88.27 \ & 94.44\\
Folding with graph-Laplacian (AugCD) & 89.55 \ & \textbf{95.93}\\
\hline
\end{tabular}
\end{center}
\caption{\label{cd_classification}\textbf{Classification comparison between 
the Chamfer distance (CD) and the augmented Chamfer distance (AugCD) on ModelNet10 and ModelNet40}. The proposed networks trained with the AugCD achieve better classification performances in the ones trained with the CD.}
\end{table}

\begin{table}[htb!]
	\begin{center}
		\begin{tabular}{ c|cccc } 
			\hline
			\hline 
			Category & Airplane & Bed & Chair & Monitor\\
			\hline
			 \# subcategories & 9 & 8 & 16 & 9  \\
			 \hline
			 Folding module only (CD) & 80.15 & 71.20 & 71.93 & 68.82 \\
			Folding module only (AugCD) & 83.29 & 73.21 & 77.05 & 72.12 \\
			\hline 
			Folding with GA (CD) & 81.20 & 73.33 & 77.86 & 74.28\\
			Folding with GA (AugCD) & 83.80 & 75.00 & {\bf 81.28} & {\bf 76.92}\\
			\hline 
			Folding with GL (CD) & 82.13 & 73.65 & 77.28 & 73.75 \\
			Folding with GL (AugCD) & {\bf 83.97} & \textbf{76.65} & 81.23 & 75.43 \\
			\hline
		\end{tabular}
	\end{center}
	\caption{\label{cd_sub_classification}\textbf{Classification comparison between 
the Chamfer distance (CD) and the augmented Chamfer distance (AugCD) on the subcategory dataset.} The proposed networks trained with the AugCD significantly and consistently outperform the networks with the CD. GA stands for the graph-adjacency-matrix-based filtering; GL stands for the graph-Laplacian-matrix-based filtering. }
\end{table}

\subsection{Effect of the number of reconstructed points $M$}
Table~\ref{tab:num_point_loss} shows the reconstruction losses as a function of the number of reconstructed points $M$. We see that a larger value of $M$ leads to smaller reconstruction losses. The reason is that increasing the number of reconstructed points in a point cloud is equivalent to increasing the sampling density of reconstructed point clouds; thus, the points in input 3D point clouds are able to find closer points in reconstructed point clouds.  Note that (1) increasing the value of $M$ does not necessarily improve the transfer classification performance; see Table~\ref{num_point_acc}; and (2) increasing the value of $M$ would significantly increases the number of training parameters because the learned graph-adjacency/Laplacian matrix is an $M \times M$ matrix. Therefore, we usually set $M \approx N$. 

\subsection{Validation of augmented Chamfer distance}
As discussed in Section~\ref{sec:networks}, the augmented Chamfer distance is the Hausdorff distance between two 3D point clouds.  
Tables~\ref{cd_classification} and~\ref{cd_sub_classification} compare the classification accuracies between using the standard Chamfer distance (CD) and using the augmented Chamfer distance (AugCD) in the tasks of classification and fine-grained classification, respectively. We see that using the augmented Chamfer distance achieves a better performance than using the standard Chamfer distance.

\begin{table*}[htb!]
  \begin{center}
    \begin{tabular}{c | c  c  c  c  c }
      \hline 
    Original  & Spatial smoothness &  & Output  &  & Graph smoothness  \\
      & $\alpha = 0$ & $\alpha = 0.25$ & $\alpha = 0.5$ & $\alpha = 0.75$& $\alpha = 1$  \\
      \hline
      \includegraphics[trim={4cm 4cm 7cm 4cm}, clip=true , scale=0.12] {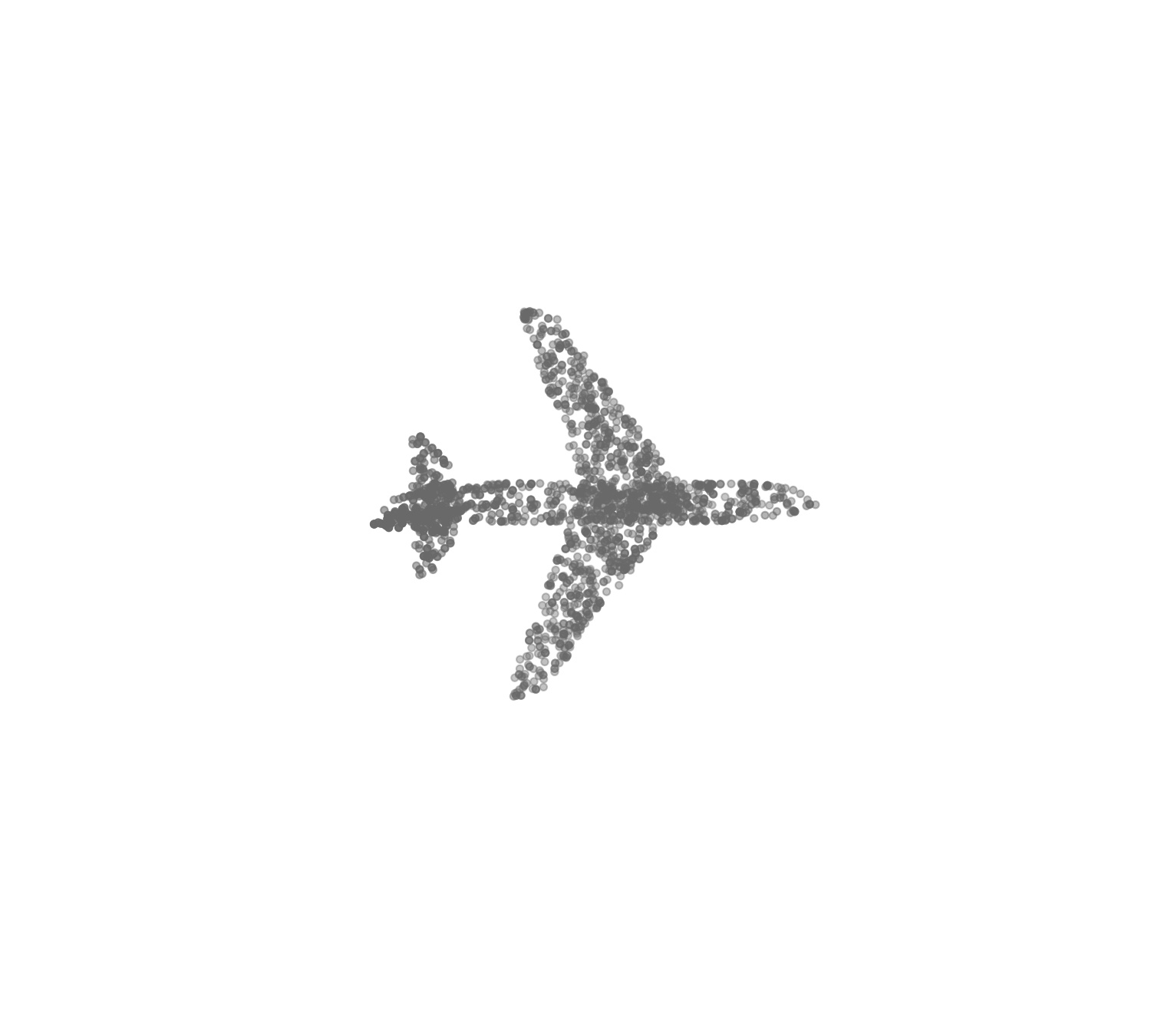} & 
      \includegraphics[trim={4cm 4cm 7cm 4cm}, clip=true , scale=0.12] {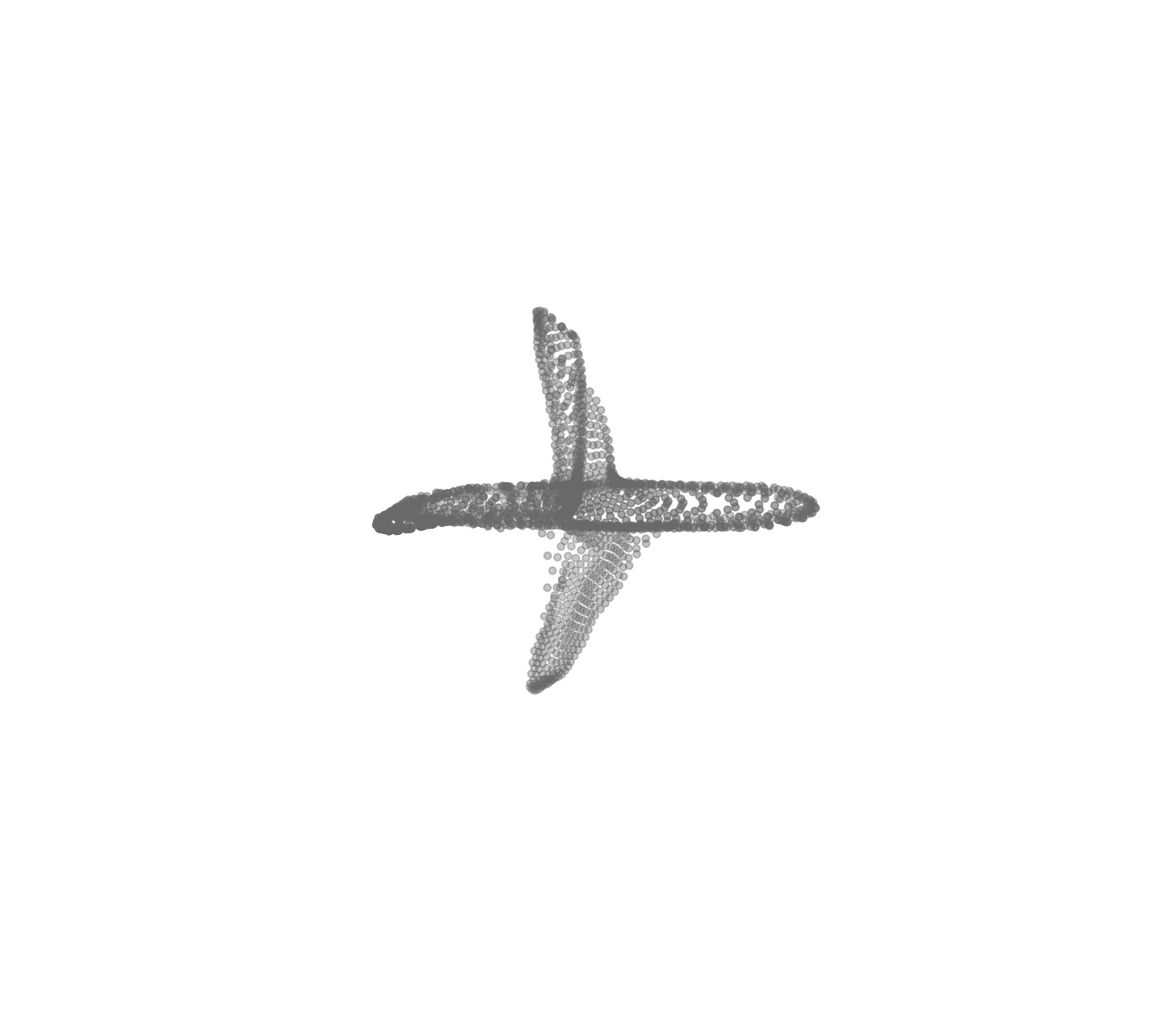} & 
      
      \includegraphics[trim={4cm 4cm 7cm 4cm}, clip=true , scale=0.12] {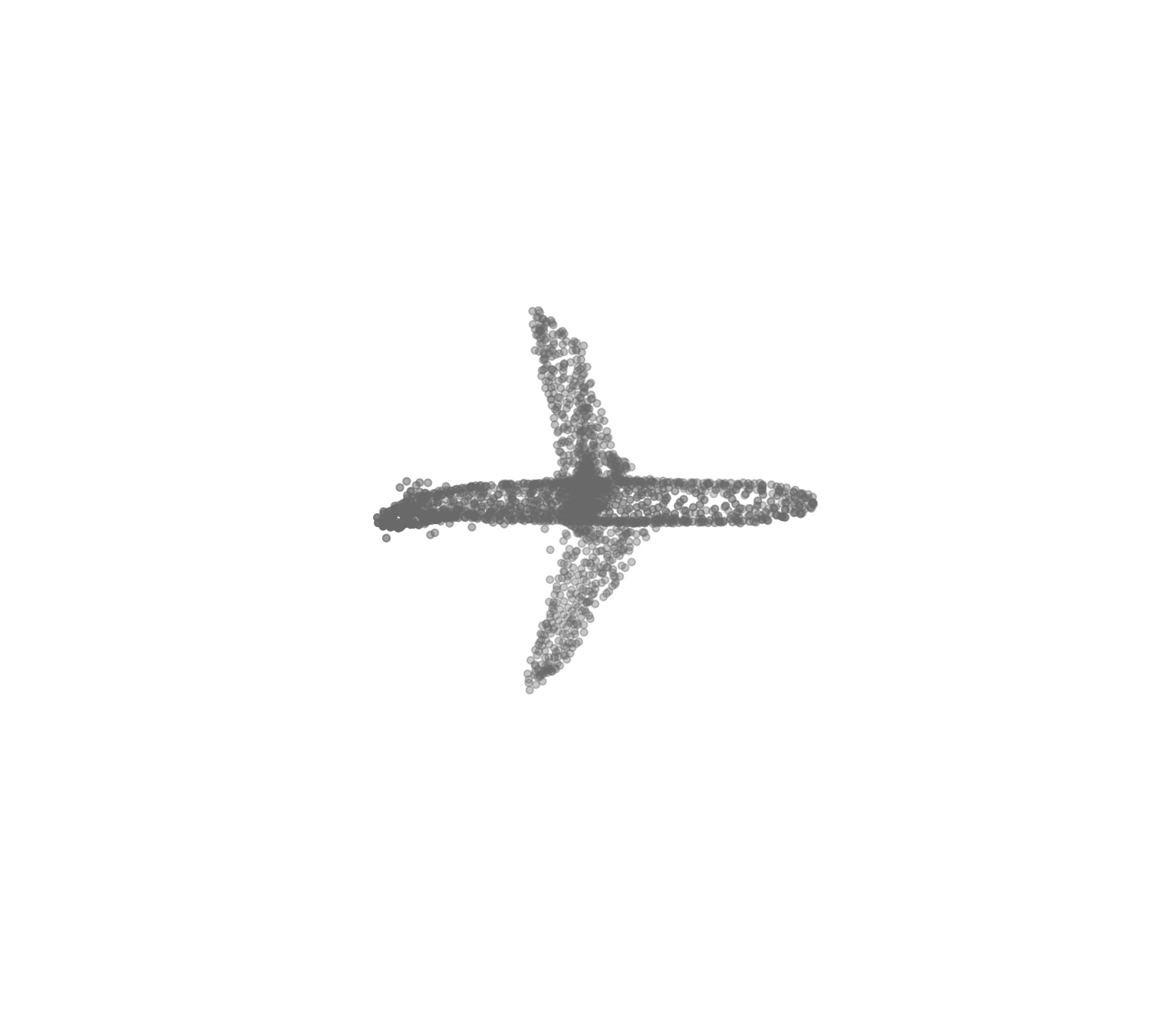} & 
      \includegraphics[trim={4cm 4cm 7cm 4cm}, clip=true , scale=0.12] {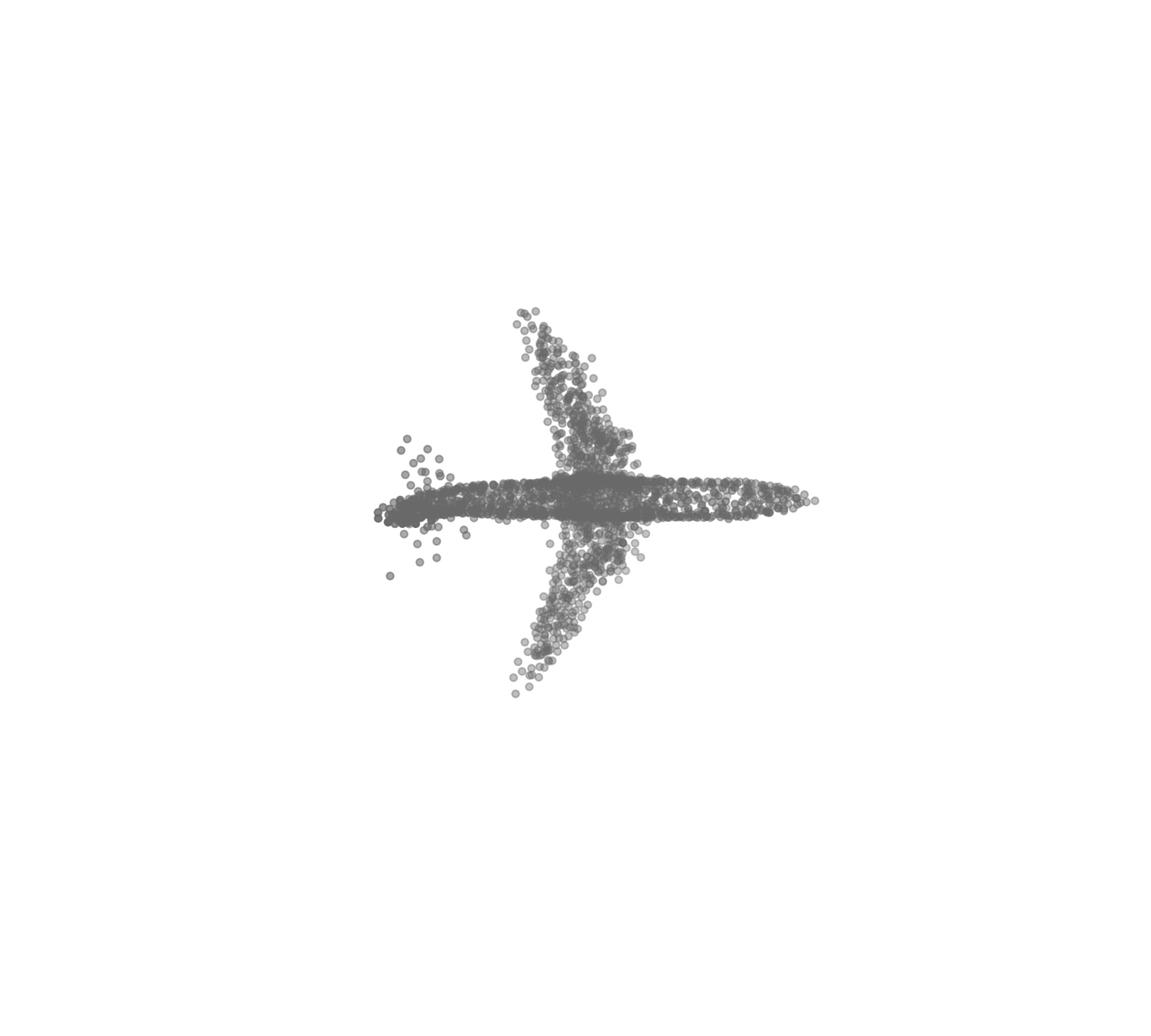} &
      
      \includegraphics[trim={4cm 4cm 7cm 4cm}, clip=true , scale=0.12] {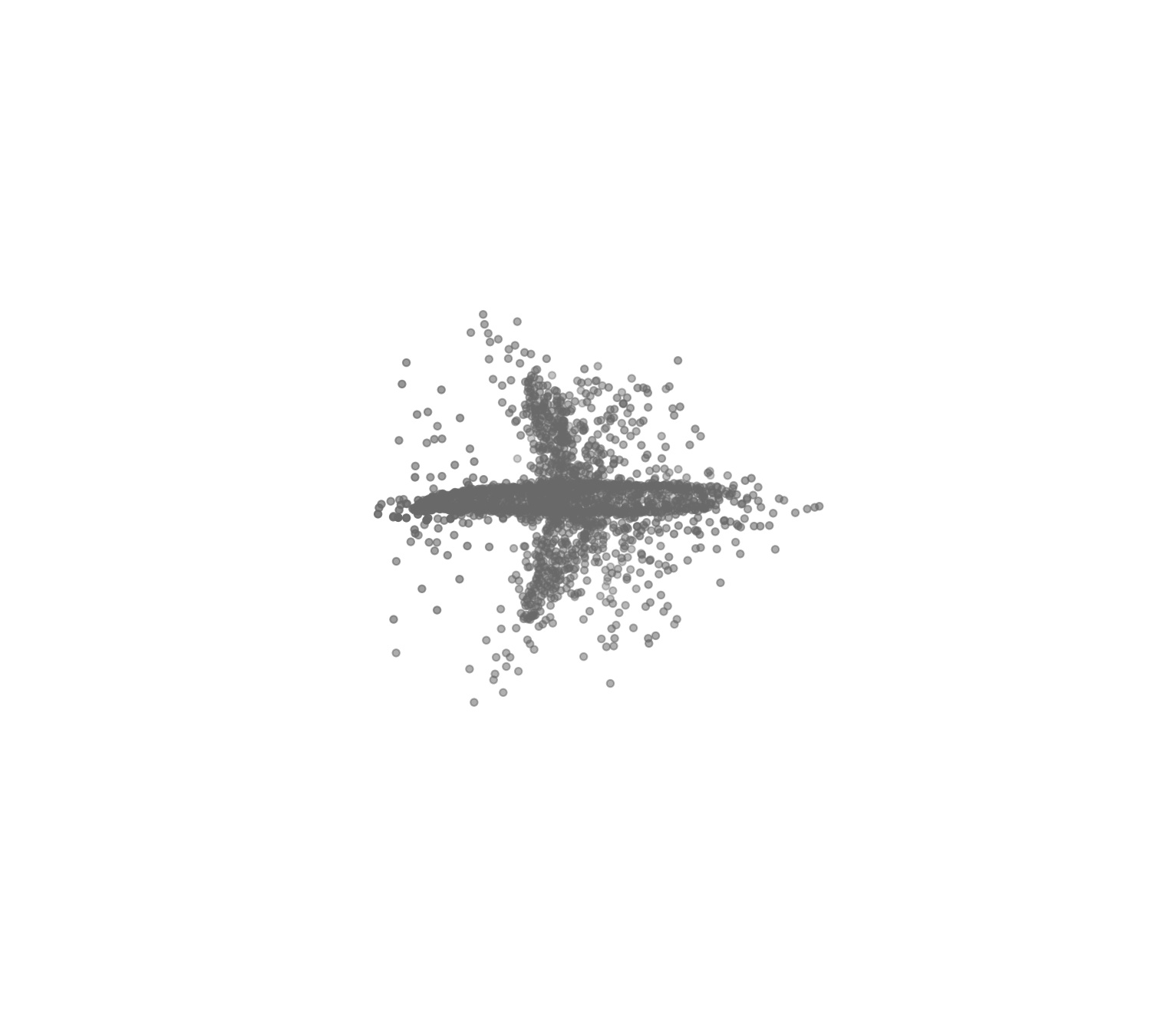} & 
      
      \includegraphics[trim={4cm 4cm 7cm 4cm}, clip=true , scale=0.12] {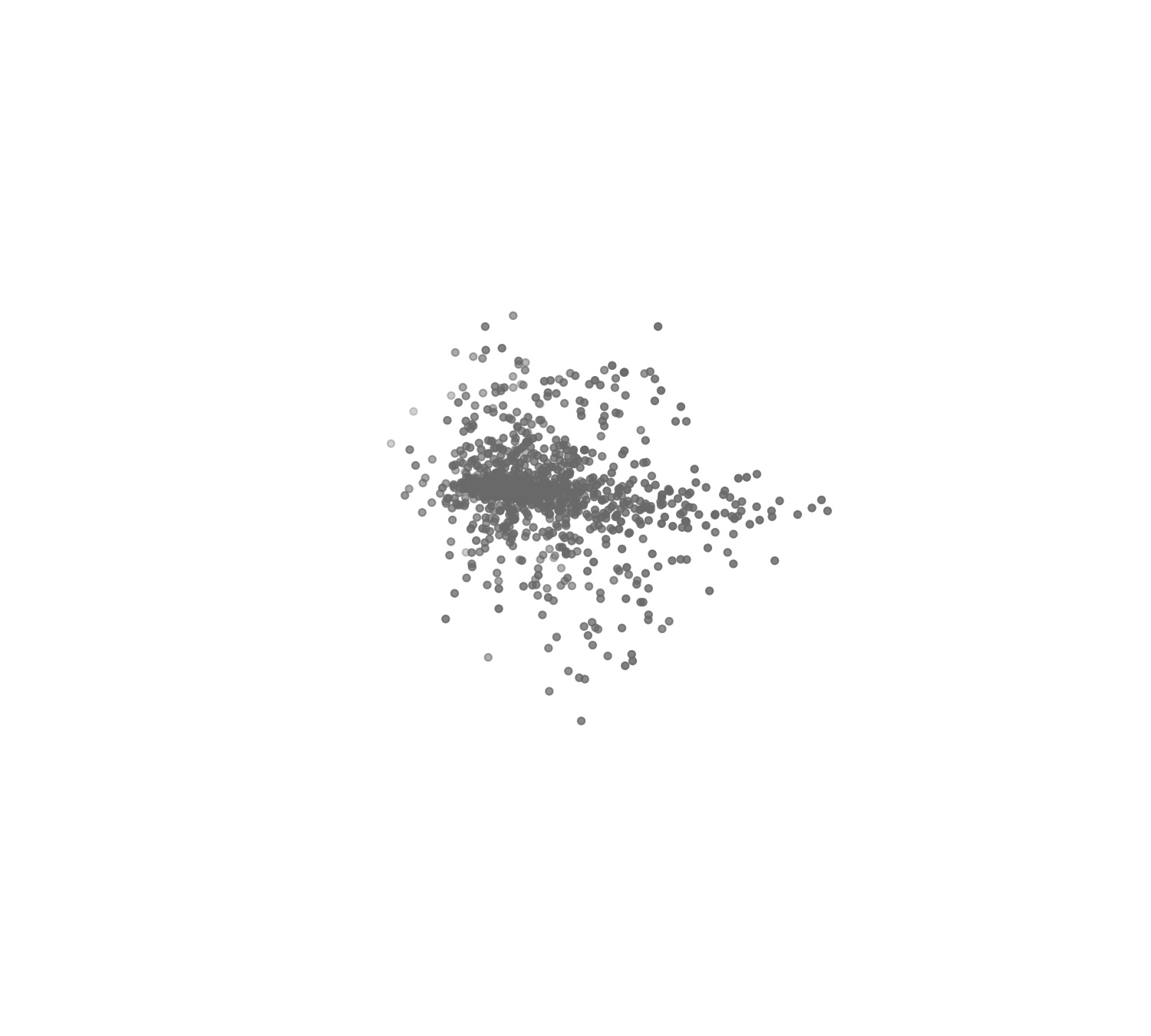}  \\
      
      \includegraphics[trim={4cm 4cm 7cm 4cm}, clip=true , scale=0.06] {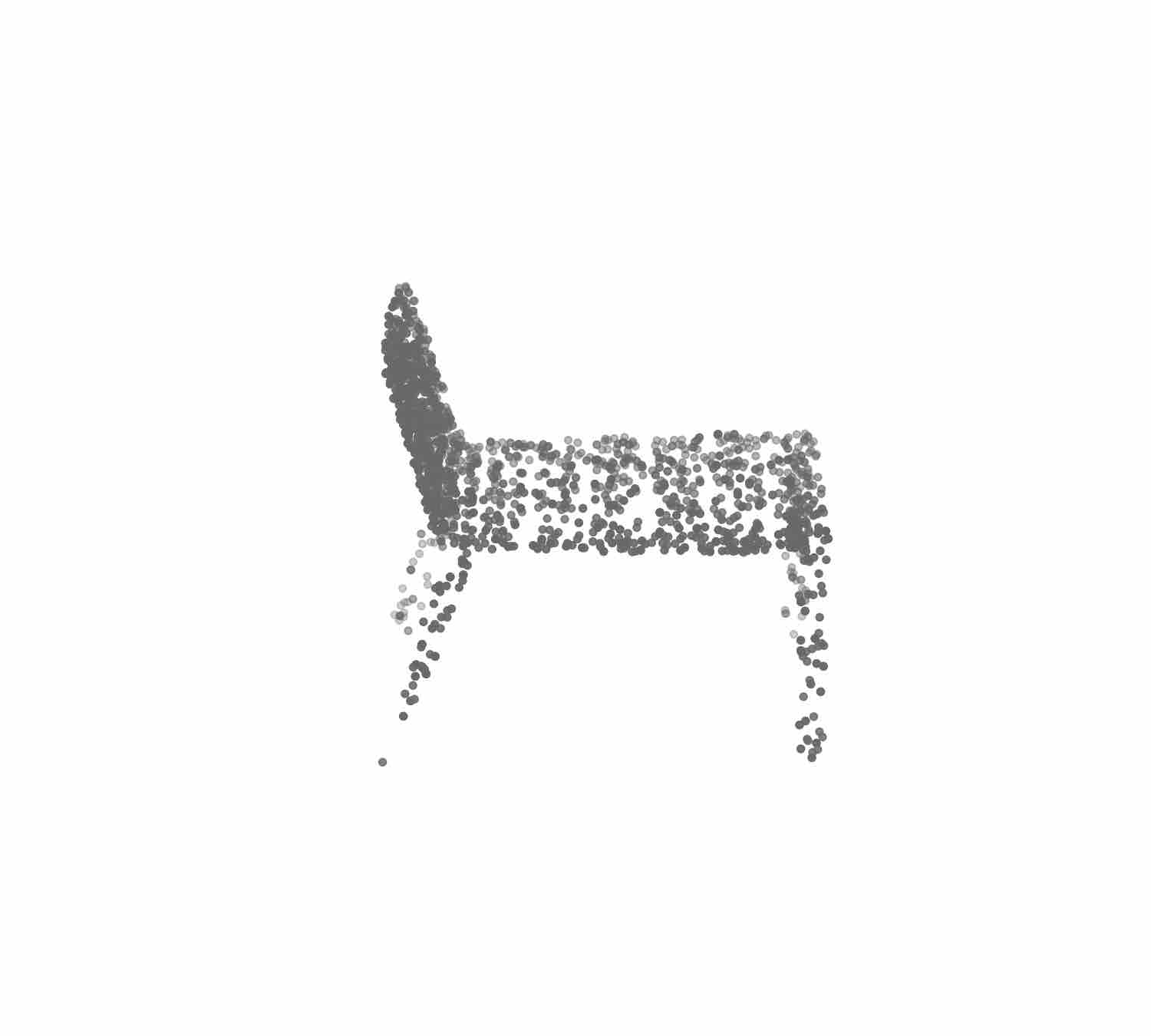} & 
      \includegraphics[trim={4cm 4cm 7cm 4cm}, clip=true , scale=0.06] {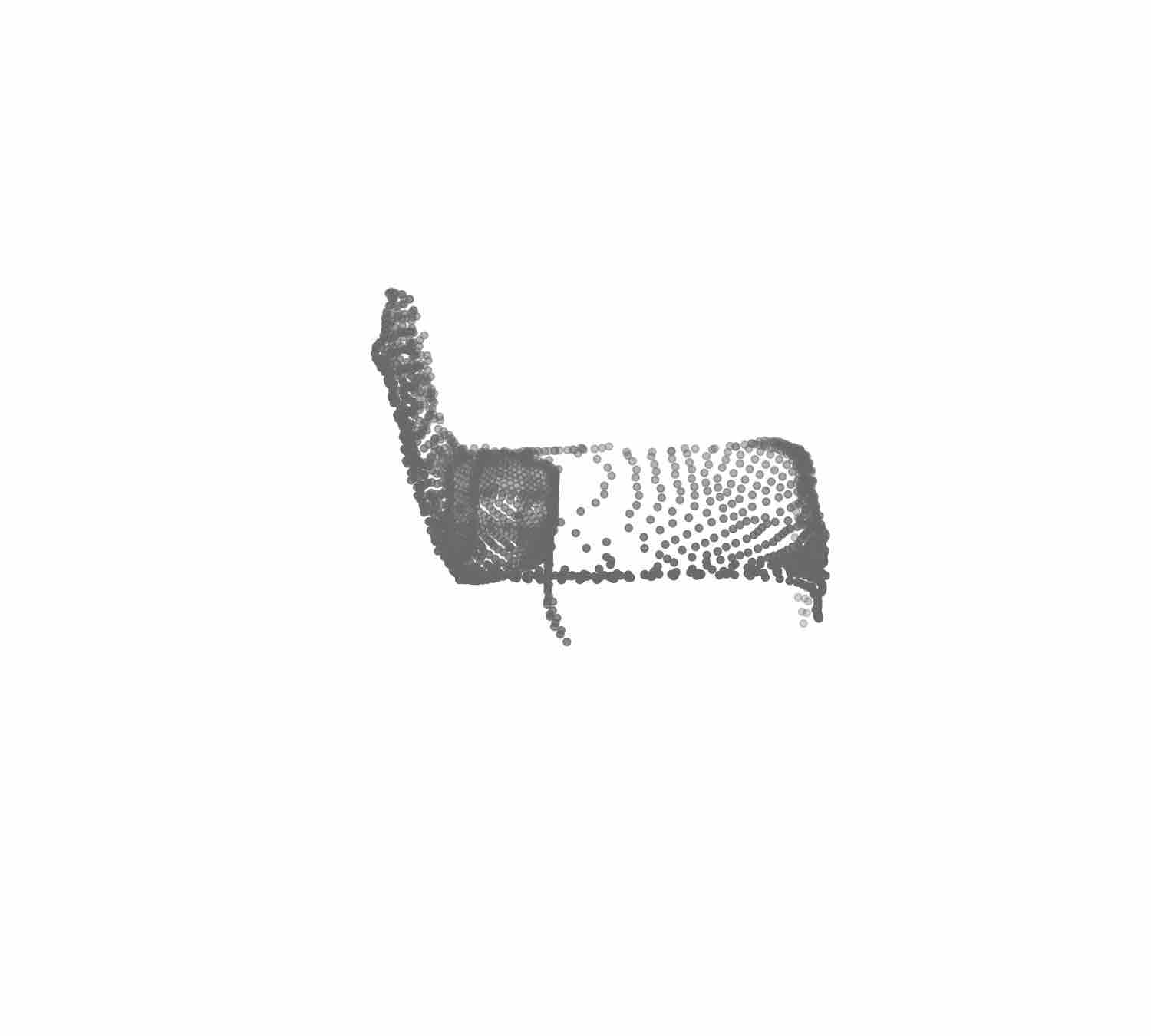} & 
      
      \includegraphics[trim={4cm 4cm 7cm 4cm}, clip=true , scale=0.06] {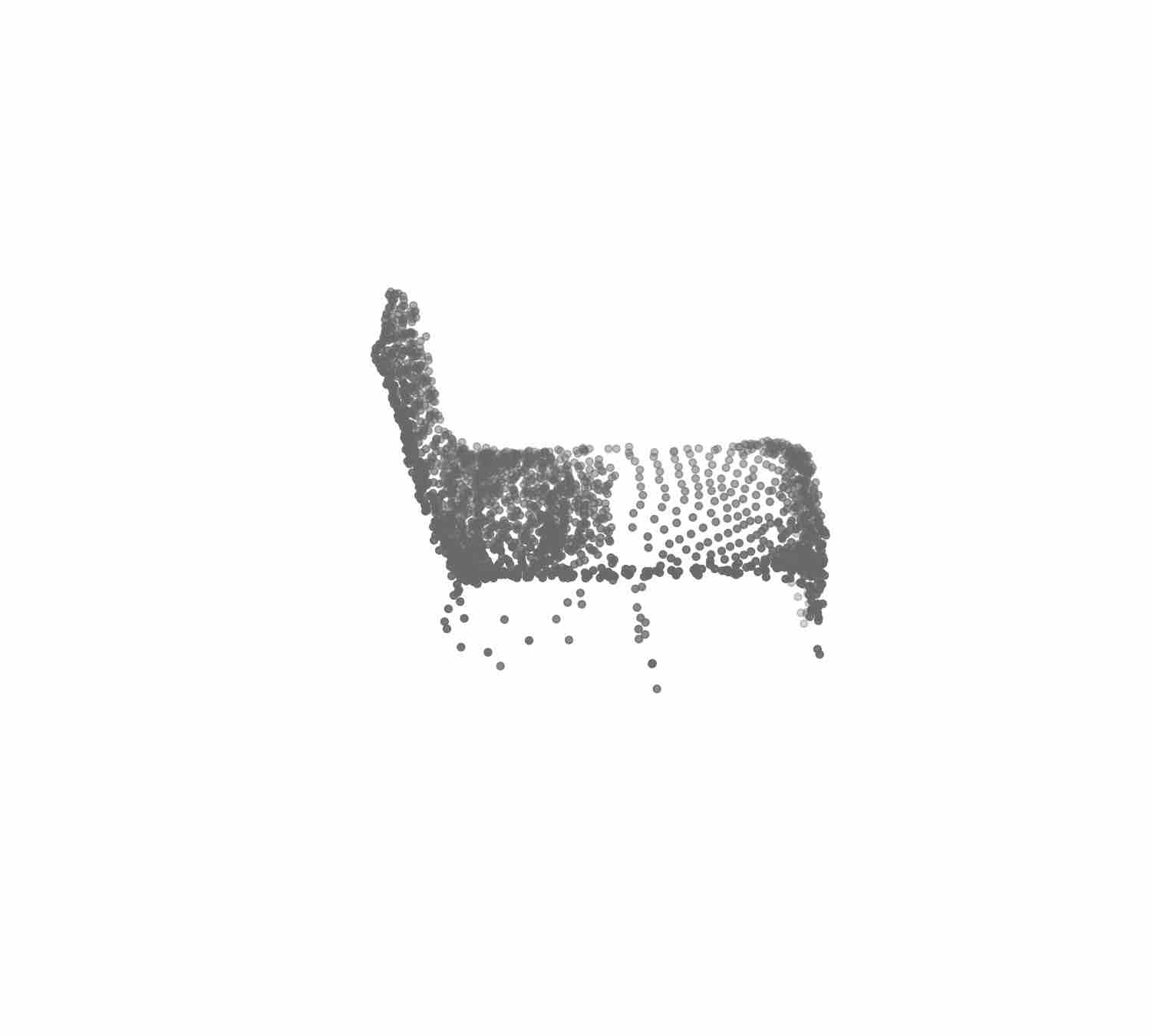} & 
      
      \includegraphics[trim={4cm 4cm 7cm 4cm}, clip=true , scale=0.06] {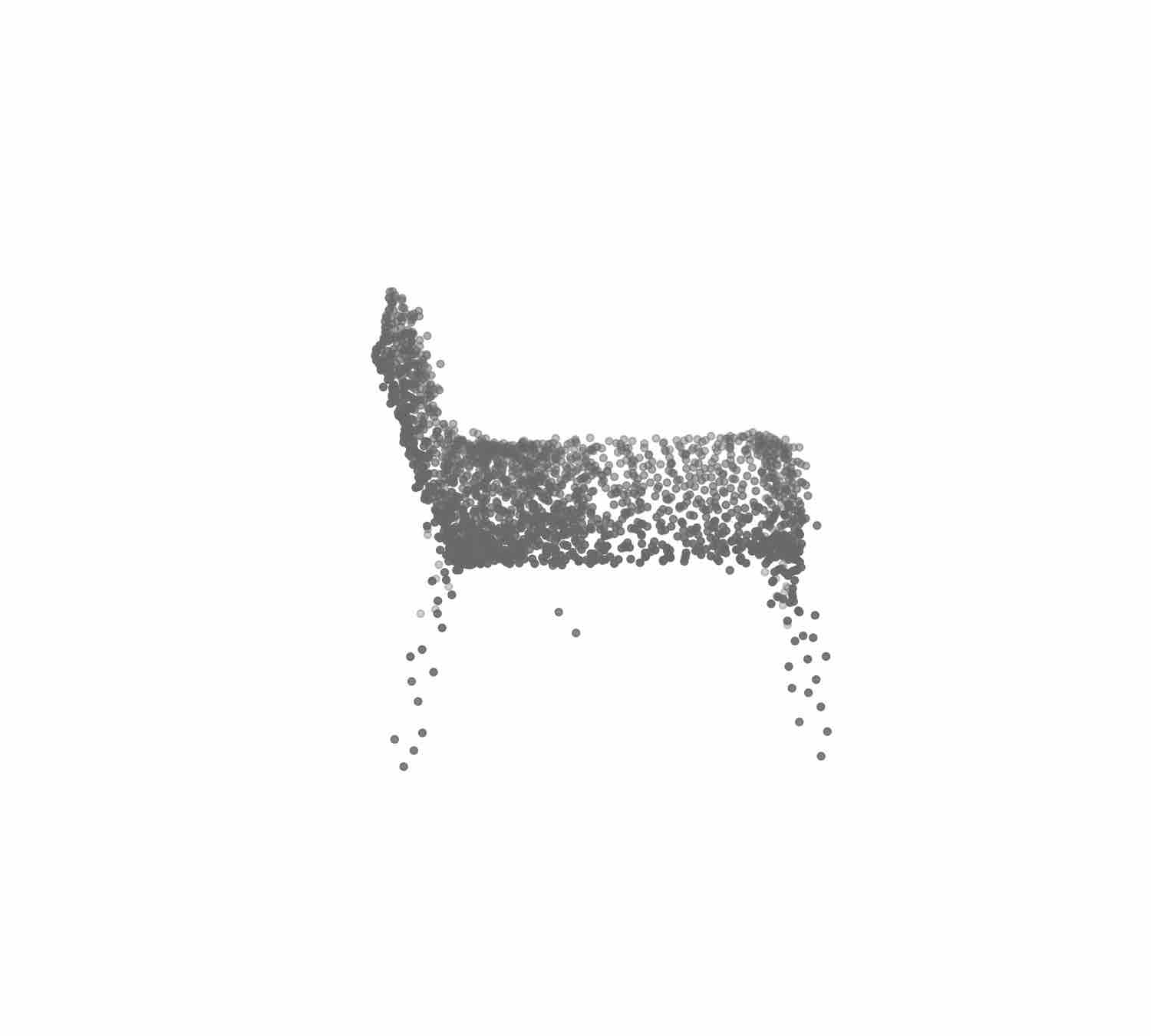} &
      
      \includegraphics[trim={4cm 4cm 7cm 4cm}, clip=true , scale=0.06] {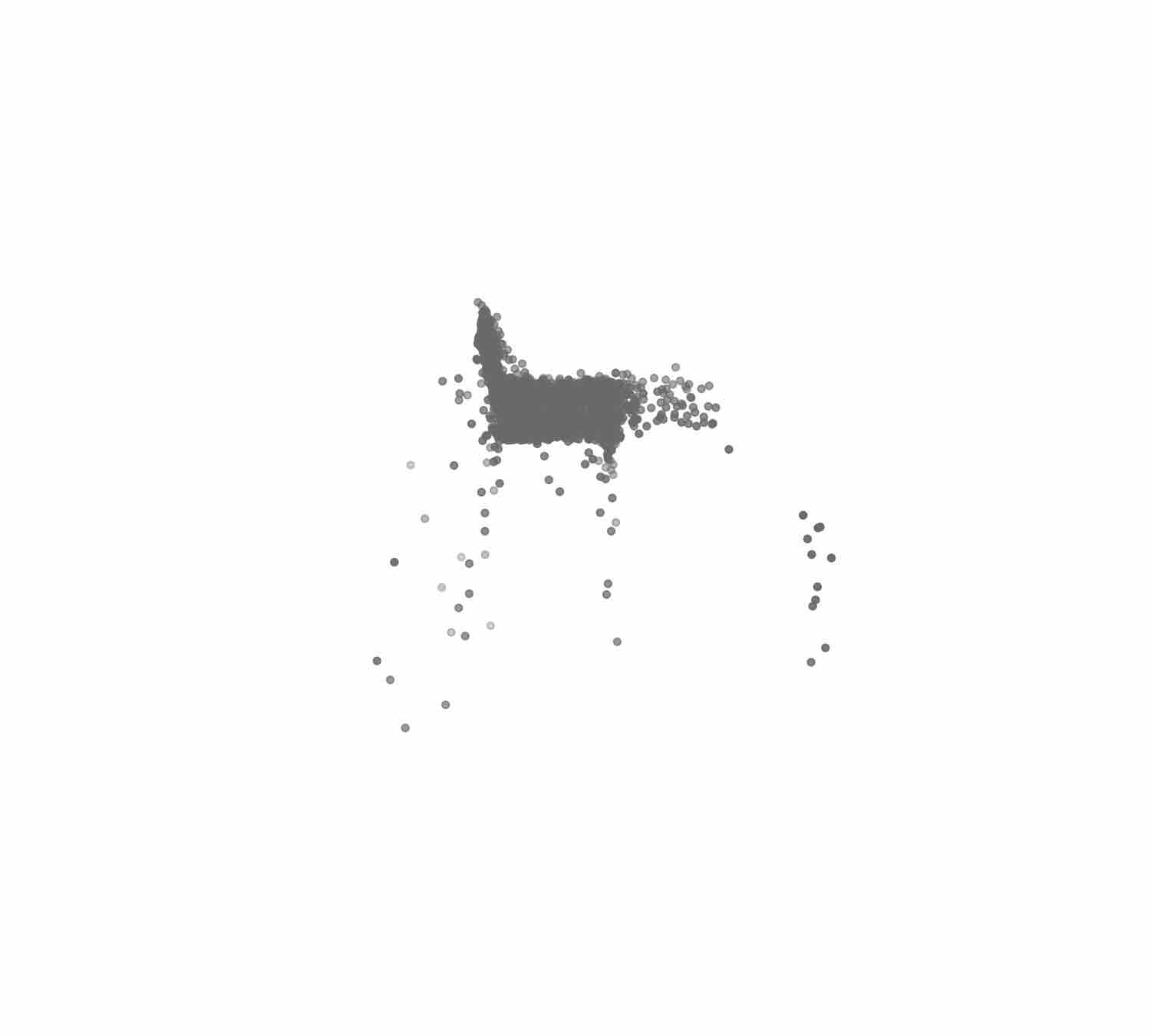} &
      
      \includegraphics[trim={4cm 4cm 7cm 4cm}, clip=true , scale=0.06] {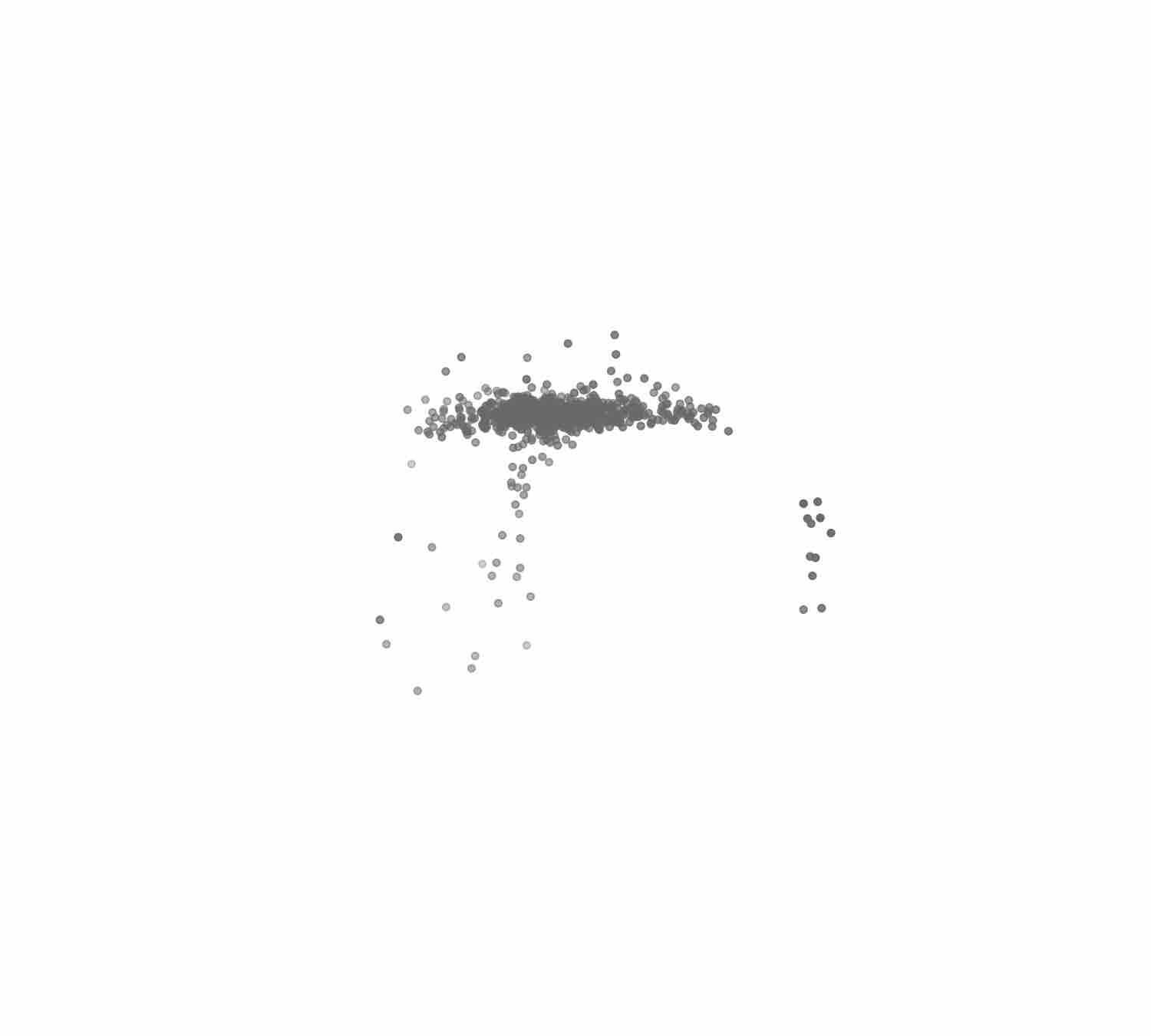}  \\
      \hline
    \end{tabular}
  \end{center}
  \caption{\label{tab:fine_grained_figures_adjacency}\textbf{Tradeoffs between spatial smoothness and graph smoothness (based on the graph-adjacency-matrix-based filtering).} We plot  the reconstructed point clouds as a function of $\alpha$; that is, $\X_\alpha = \left( (1-\alpha) \Id + \alpha \Adj \right) \X'$, where  $\X'$ is the coarse reconstruction produced by the folding module. When $\alpha = 0$, the reconstruction only depends on the folding module, which promotes the spatial smoothness and implies the dimension of the underlying surface is close to $2$; when $\alpha = 1$, the reconstruction promotes the graph smoothness and implies the dimension of the underlying surface is close to $3$. Here we use graph Haar filter, which is equivalent to $\alpha = 0.5$; it combines both spatial and graph smoothness. The intrinsic dimension of underlying surface is around $2.5$~\cite{manifold_dim}.}
\end{table*}

\begin{table*}[htb!]
  \begin{center}
    \begin{tabular}{c | c  c  c  c  c }
      \hline 
    Original  & Spatial smoothness &  & Output  &  & Graph smoothness  \\
      & $\alpha = 0$ & $\alpha = 0.25$ & $\alpha = 0.5$ & $\alpha = 0.75$& $\alpha = 1$  \\
      \hline
      \includegraphics[trim={4cm 4cm 7cm 4cm}, clip=true , scale=0.12] {figures/shapenet_core_class0/input_pc_5.jpg} & 
      \includegraphics[trim={4cm 4cm 7cm 4cm}, clip=true , scale=0.06] {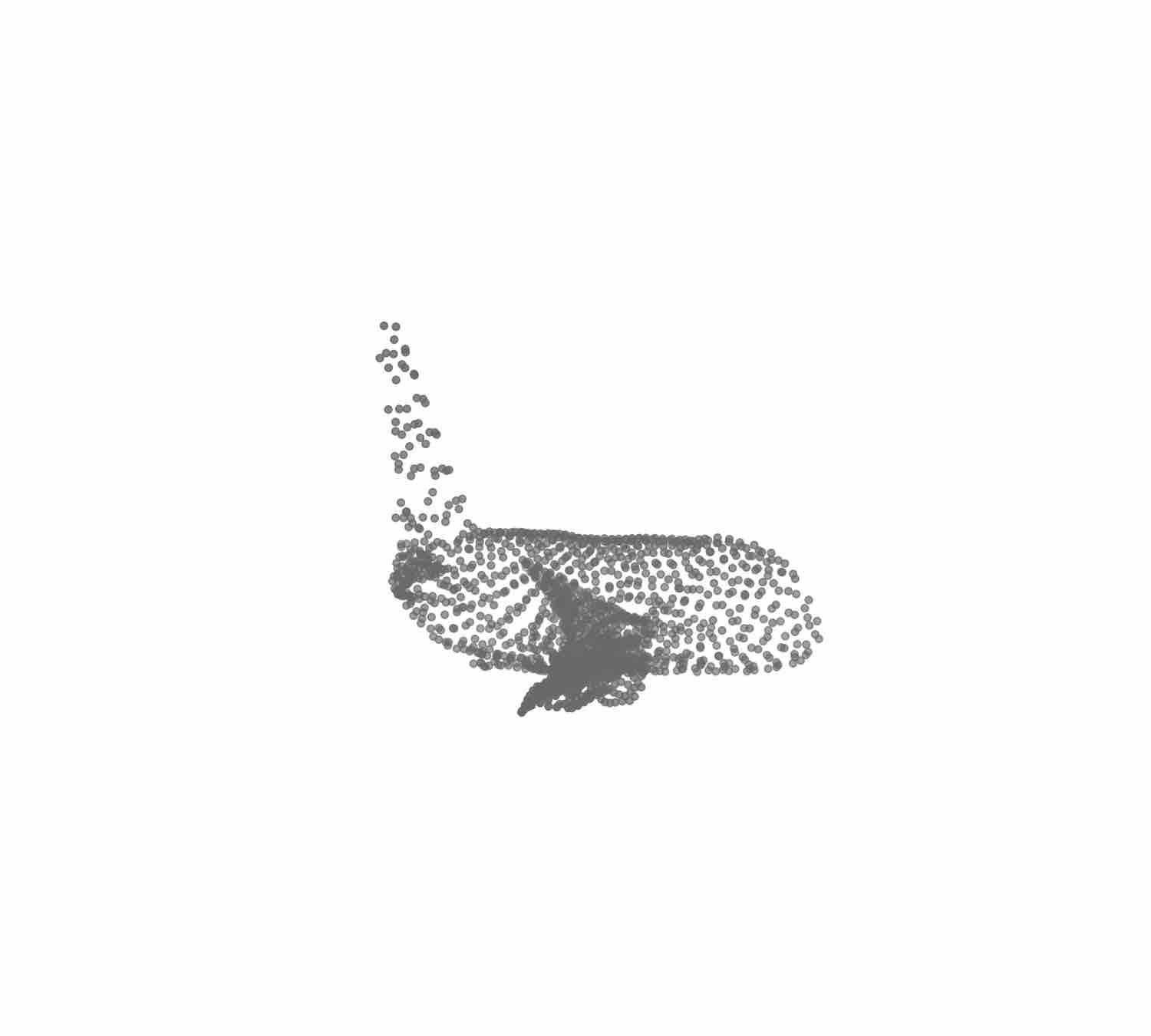} & 
      
      \includegraphics[trim={4cm 4cm 7cm 4cm}, clip=true , scale=0.06] {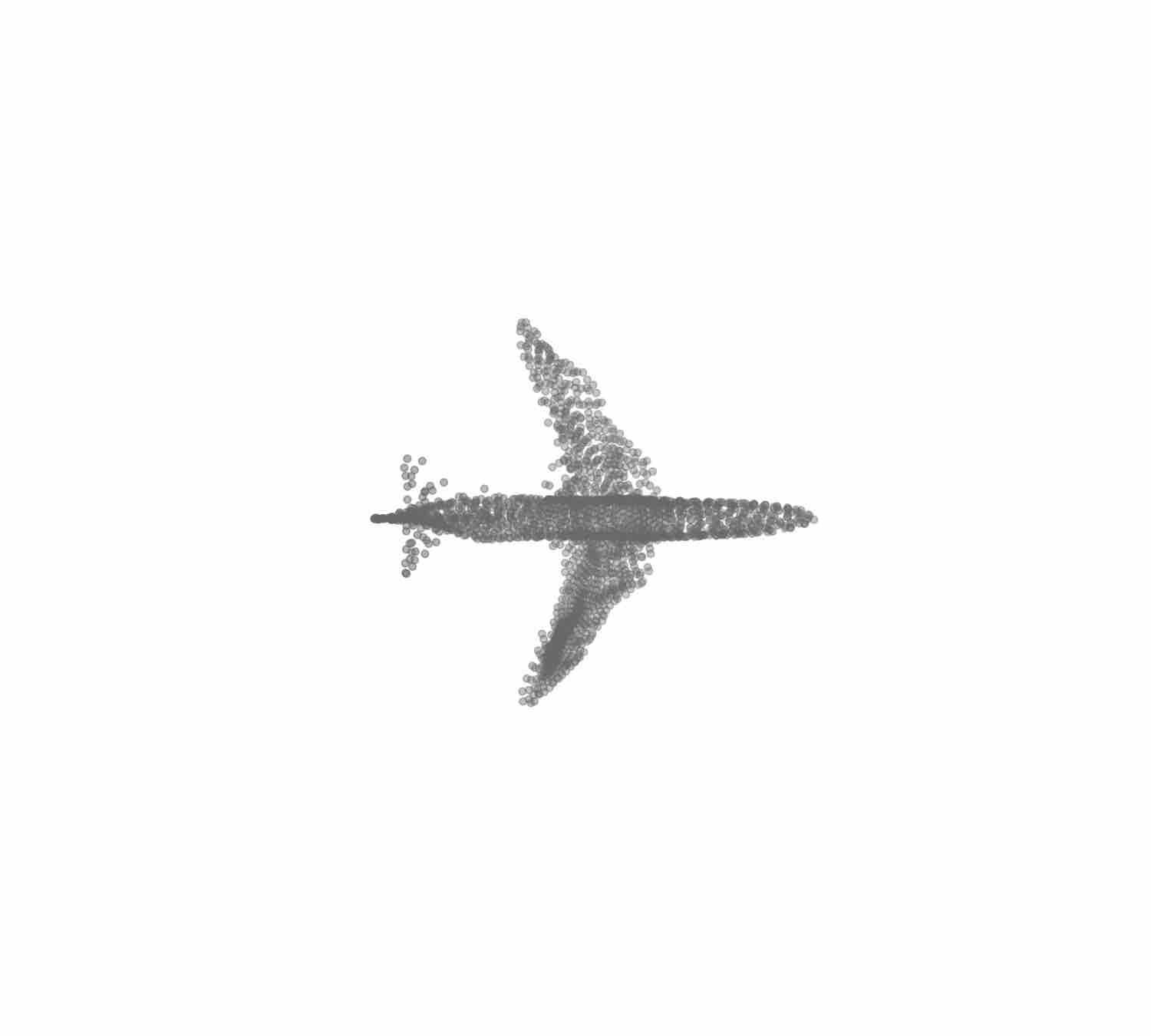} & 
      \includegraphics[trim={4cm 4cm 7cm 4cm}, clip=true , scale=0.06] {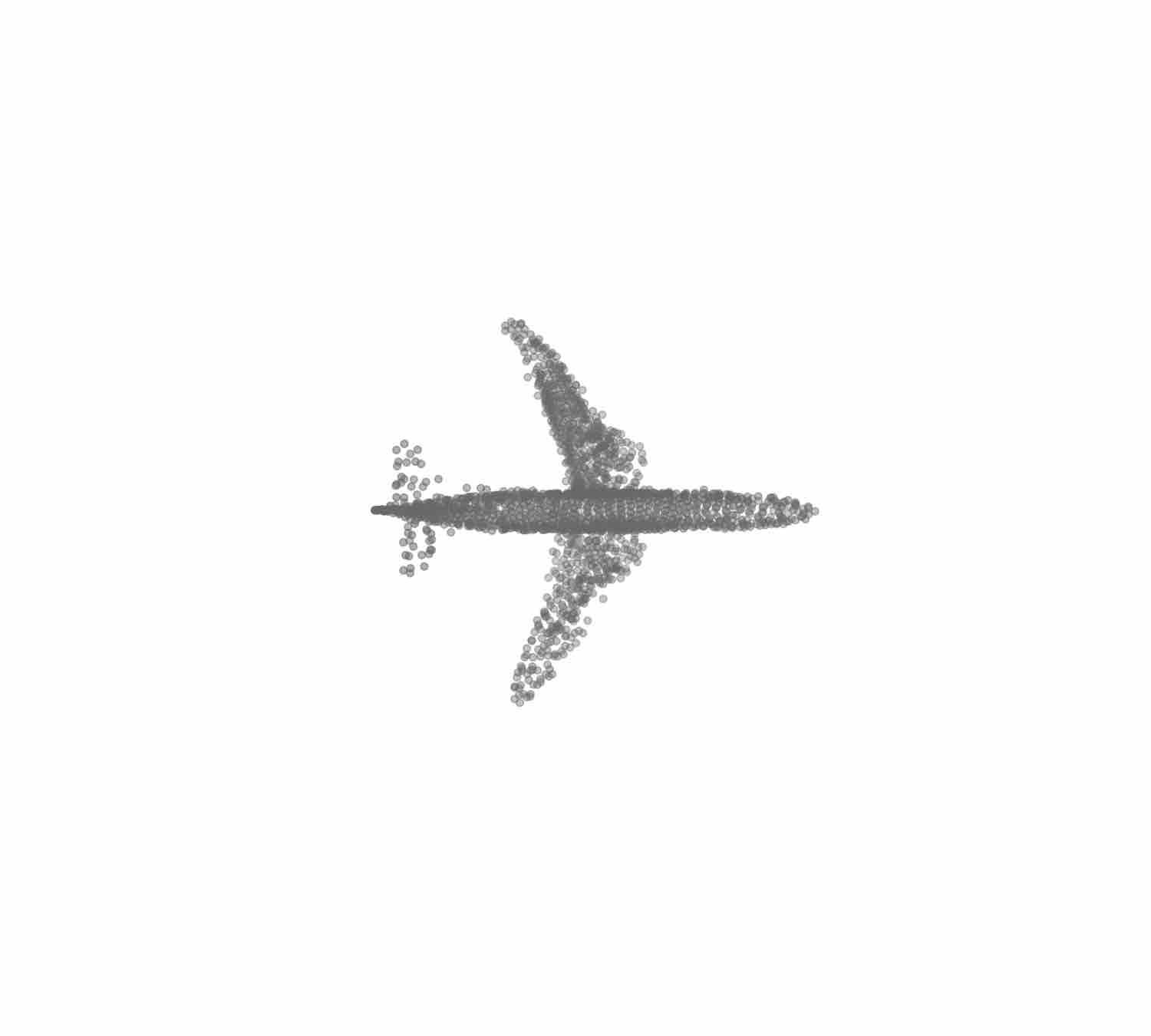} &
      
      \includegraphics[trim={4cm 4cm 7cm 4cm}, clip=true , scale=0.06] {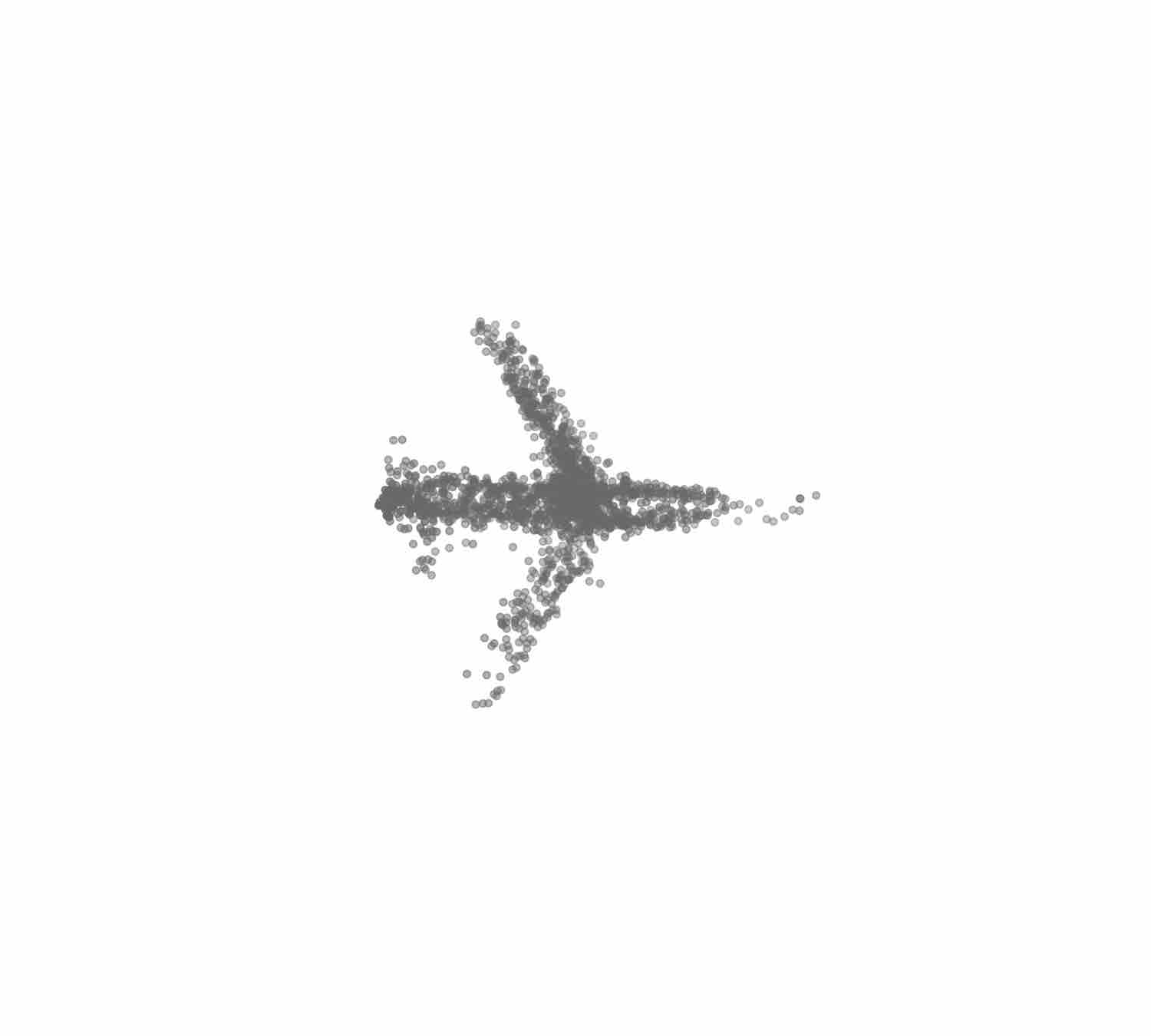} & 
      
      \includegraphics[trim={4cm 4cm 7cm 4cm}, clip=true , scale=0.06] {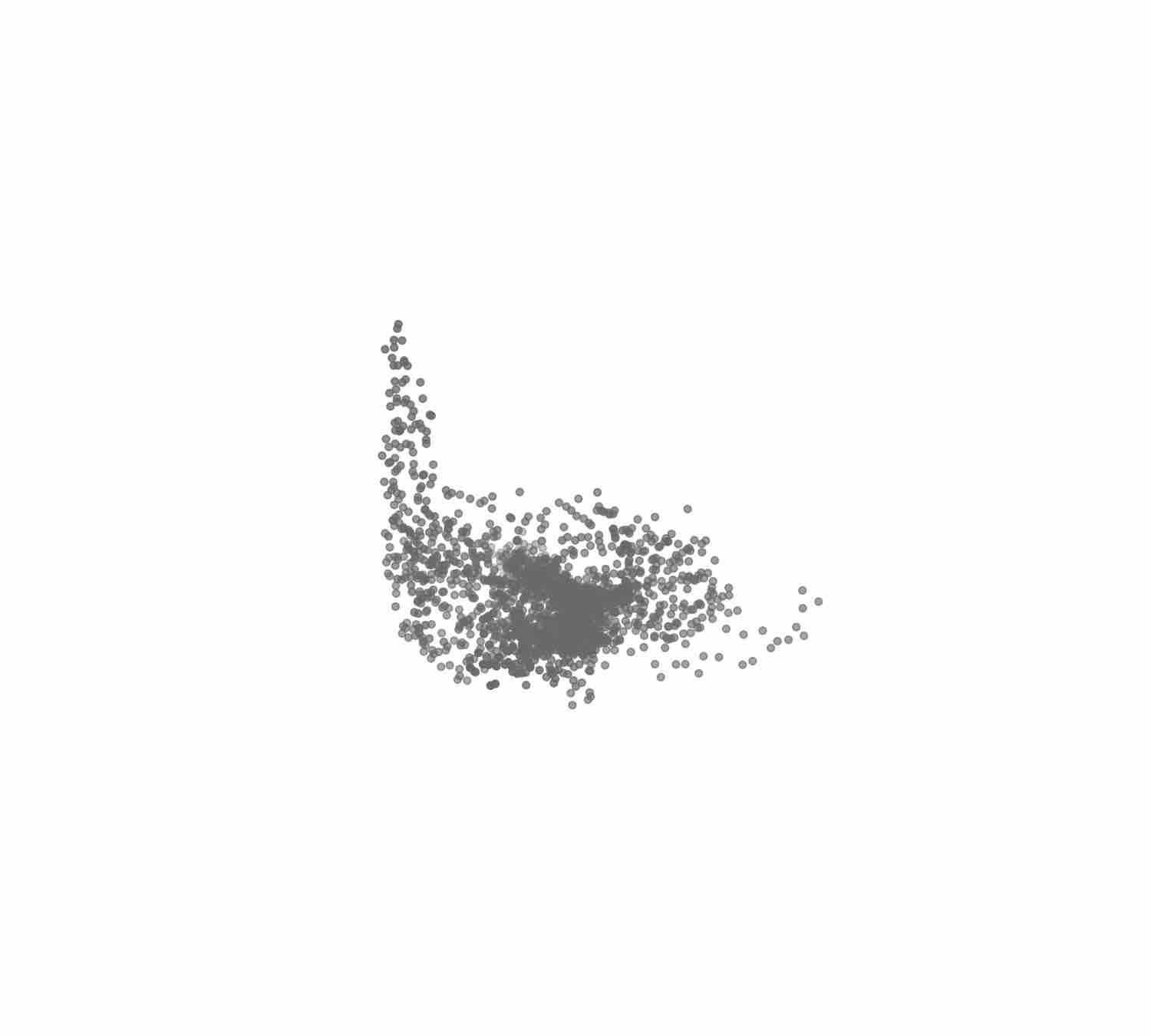}  \\
      
      \includegraphics[trim={4cm 4cm 7cm 4cm}, clip=true , scale=0.06] {figures/shapenet_core_class4/input_pc_2.jpg} & 
      \includegraphics[trim={4cm 4cm 7cm 4cm}, clip=true , scale=0.06] {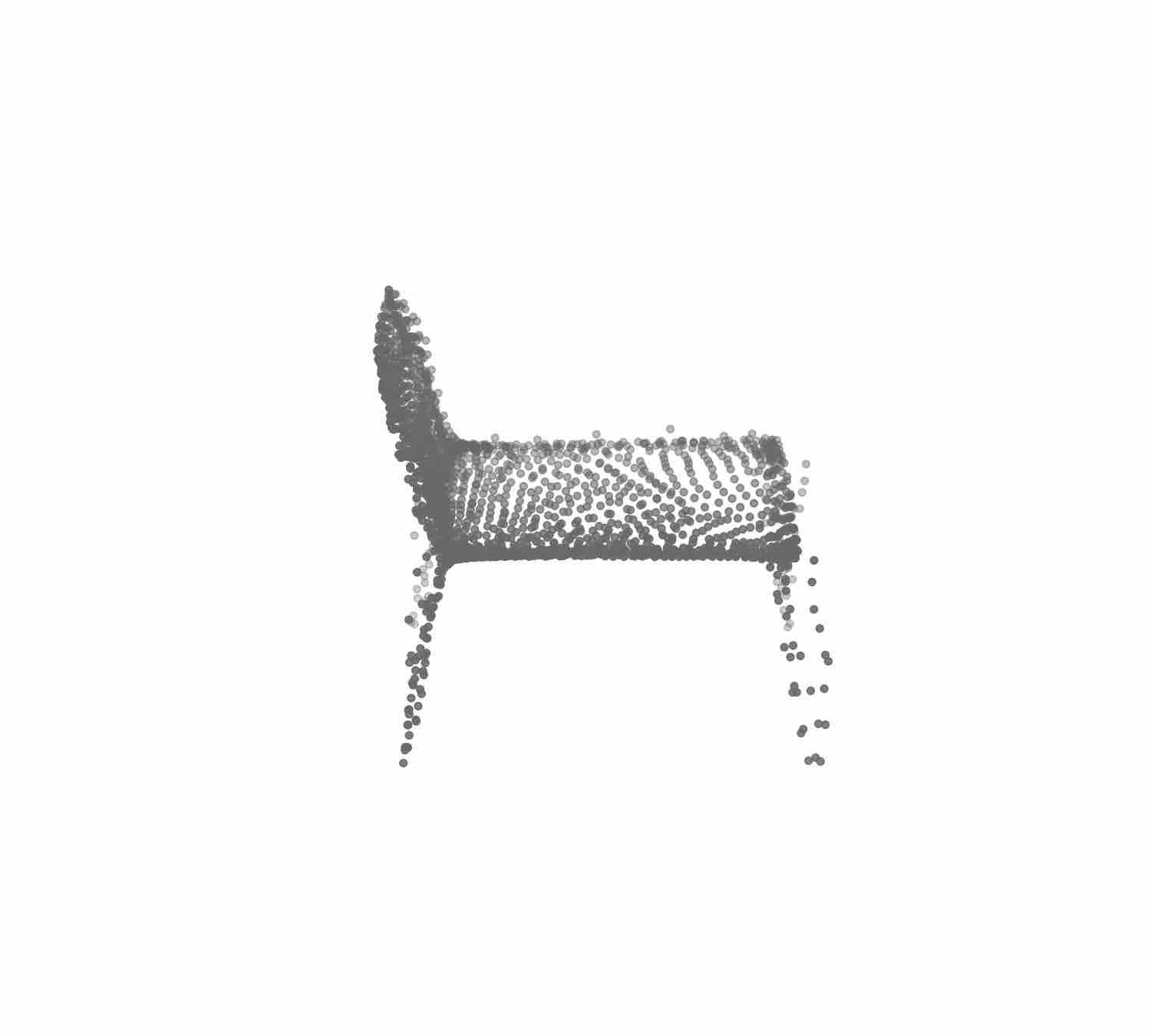} & 
      
      \includegraphics[trim={4cm 4cm 7cm 4cm}, clip=true , scale=0.06] {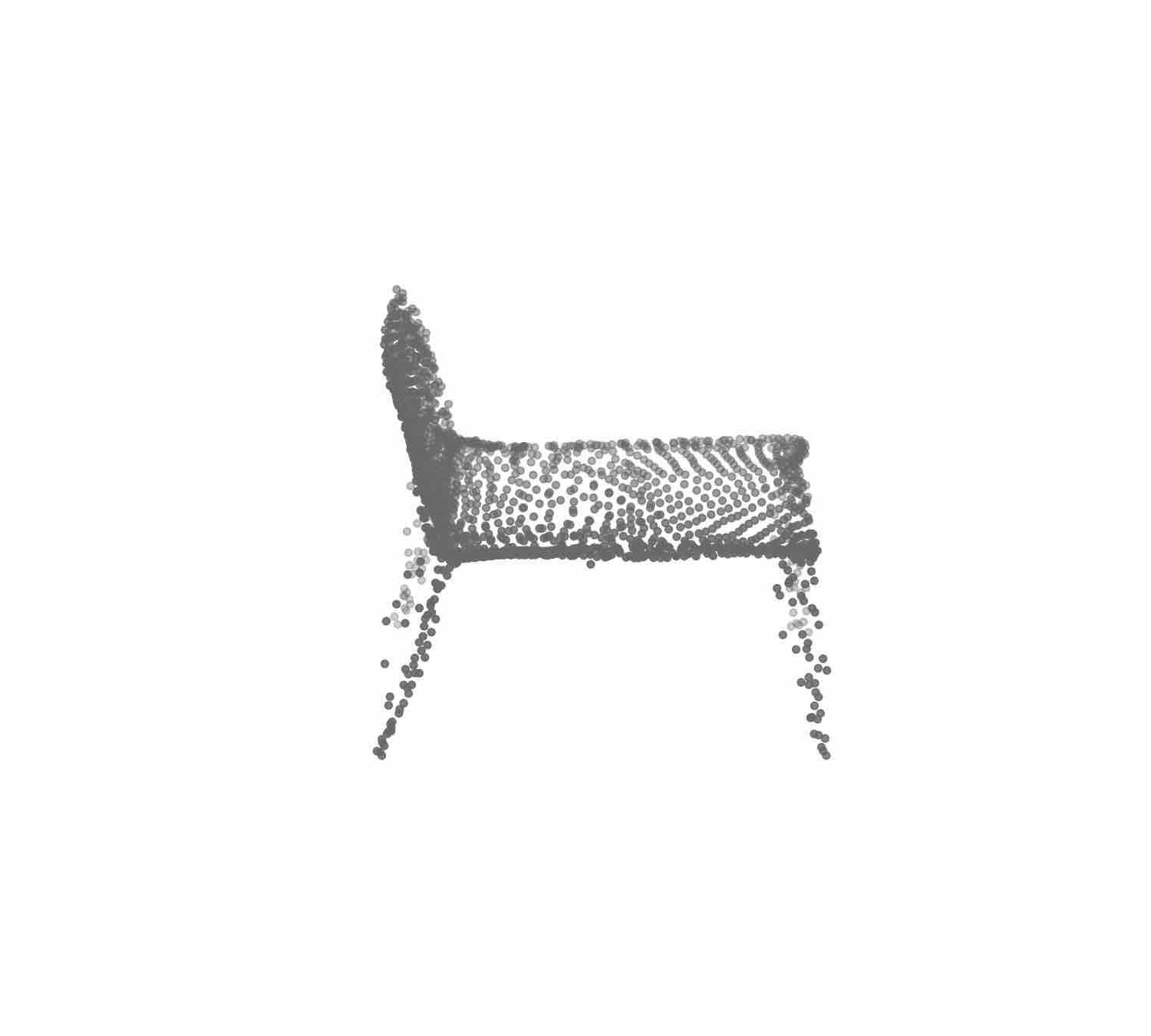} & 
      
      \includegraphics[trim={4cm 4cm 7cm 4cm}, clip=true , scale=0.06] {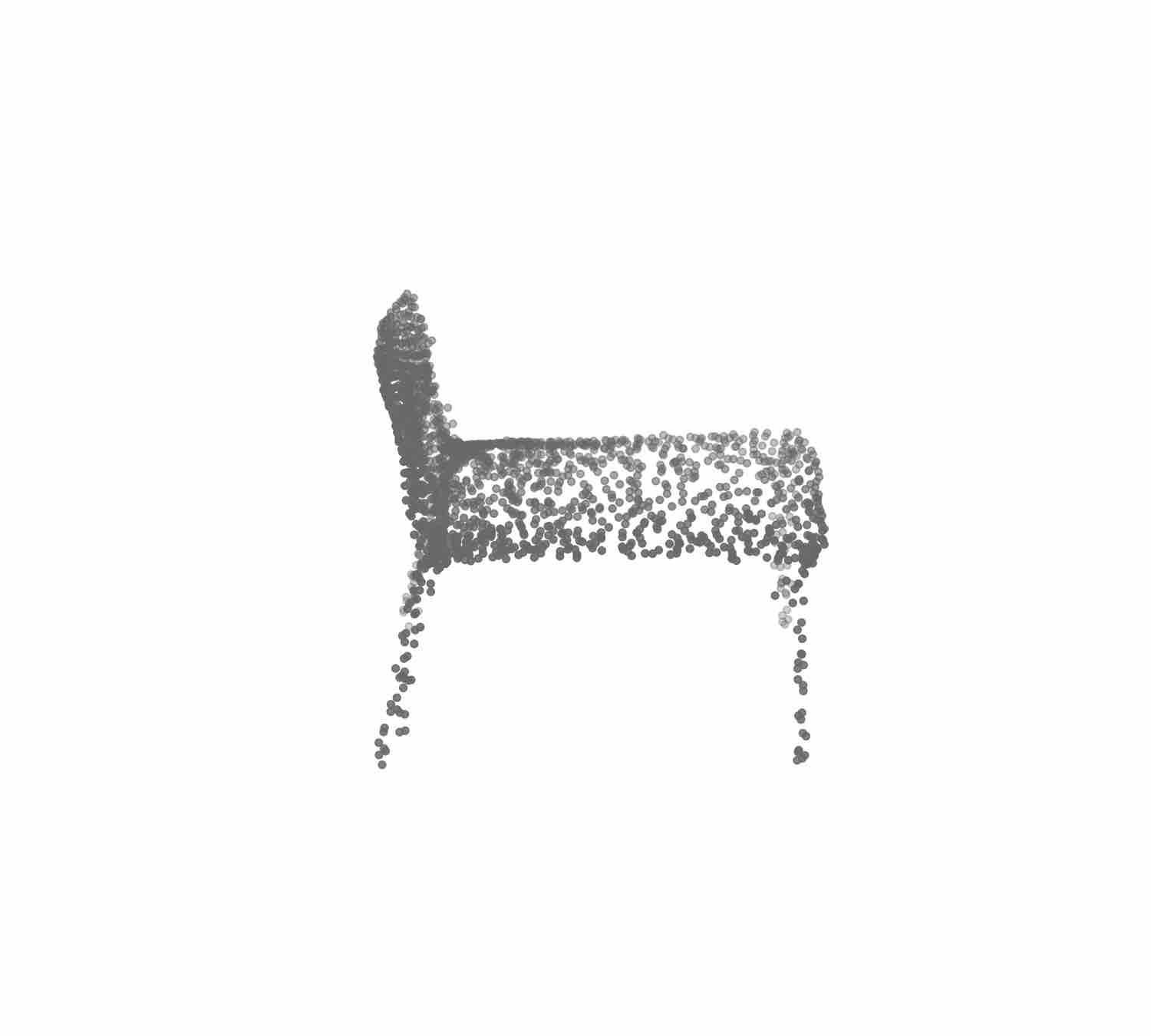} &
      
      \includegraphics[trim={4cm 4cm 7cm 4cm}, clip=true , scale=0.06] {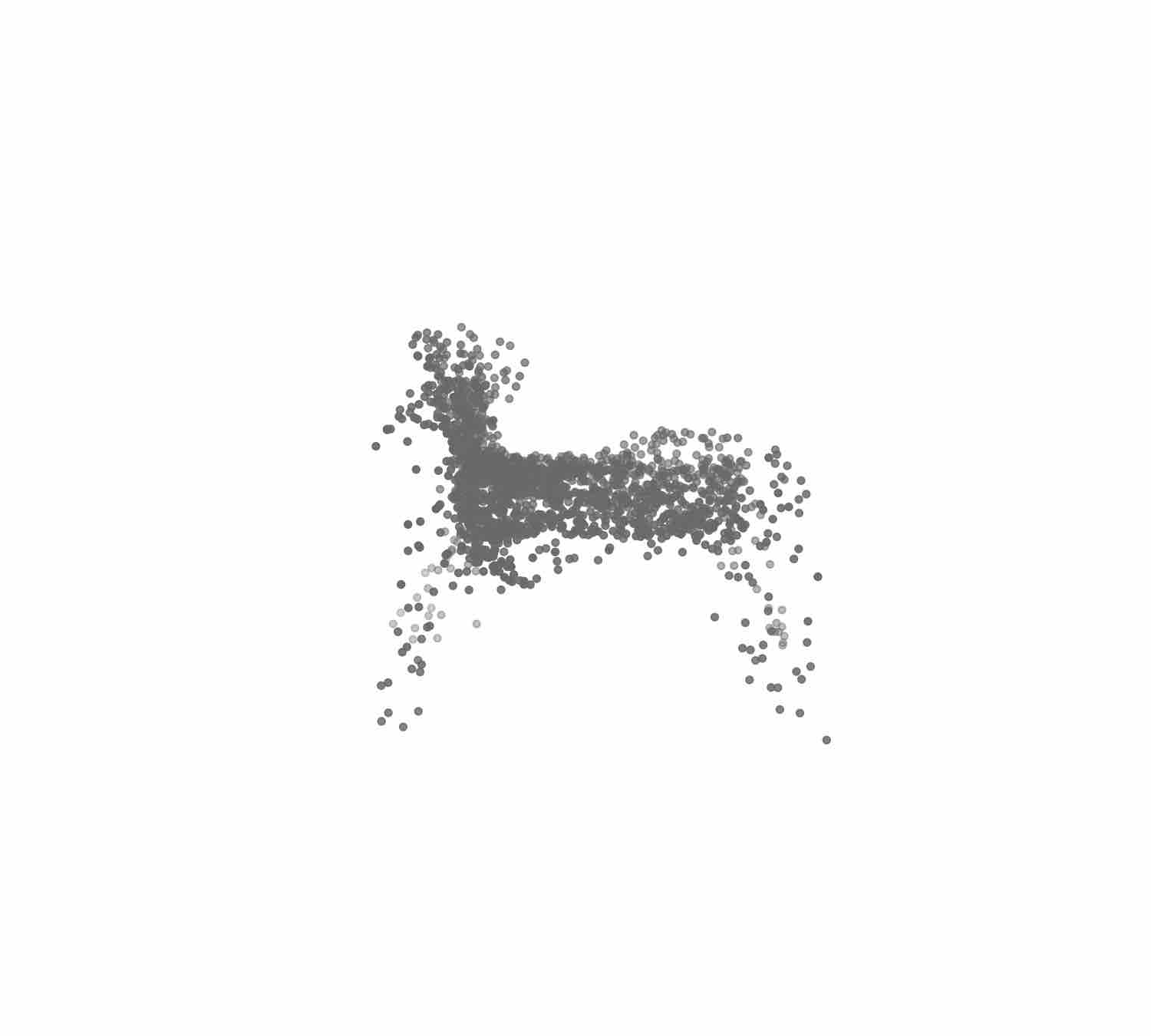} &
      
      \includegraphics[trim={4cm 4cm 7cm 4cm}, clip=true , scale=0.06] {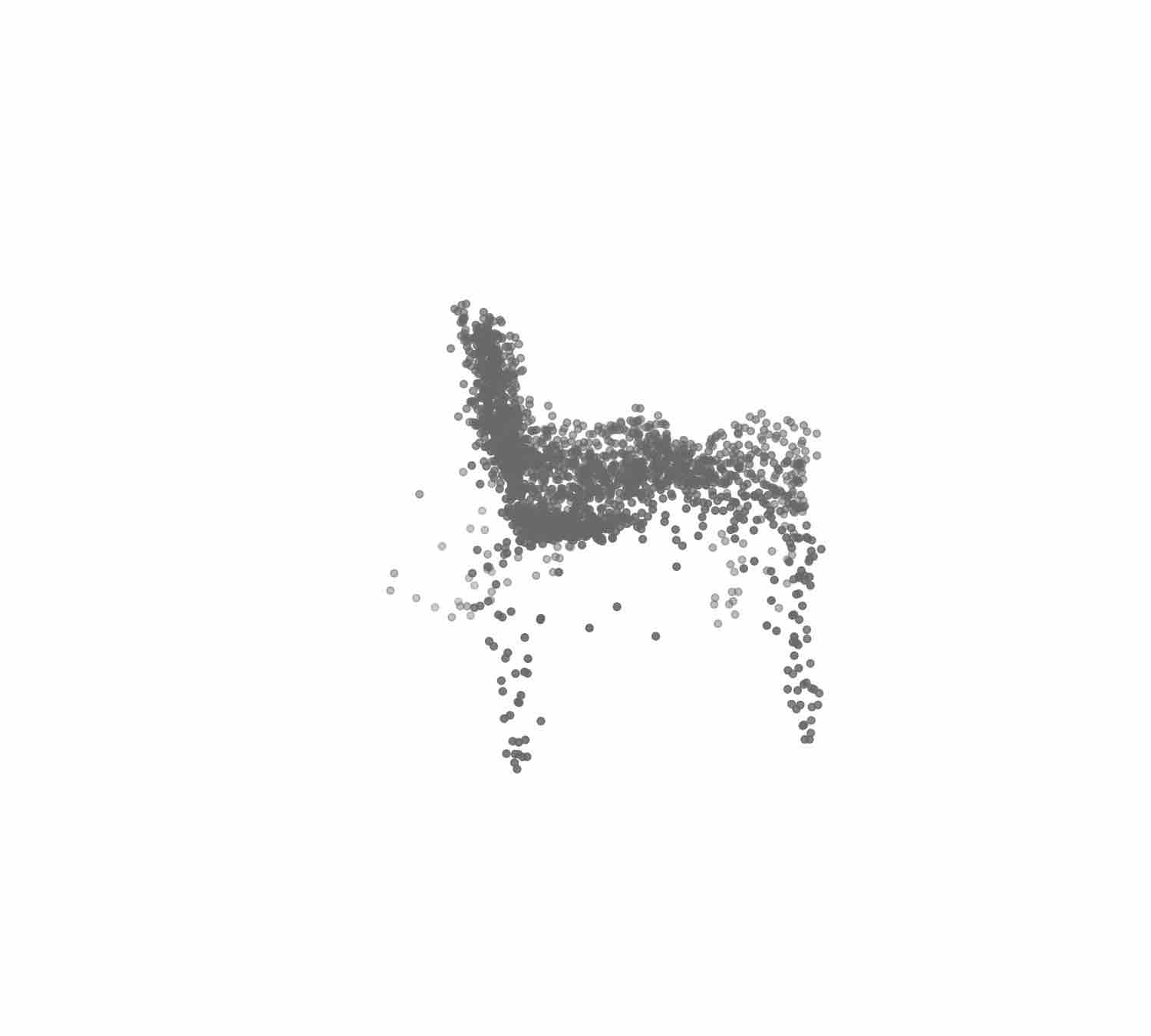}  \\
      \hline
    \end{tabular}
  \end{center}
  \caption{\label{tab:fine_grained_figures_lap}\textbf{Tradeoffs between spatial smoothness and graph smoothness (based on the graph-Laplacian-matrix-based filtering).} We plot the reconstructed point clouds as a function of $\alpha$; that is, $\X_\alpha = \left( \mu \Id +   \mathcal{L}  \right)^{-2\alpha}\X'$, where  $\X'$ is the coarse reconstruction produced by the folding module. When $\alpha = 0$, the reconstruction only depends on the folding module, which promotes the spatial smoothness, but loses some details; when $\alpha = 1$, the reconstruction promotes the graph smoothness, but less meaningful in the 3D space. Here the graph-Laplacian-matrix based filter~\eqref{eq:lap_filter} is equivalent to $\alpha = 0.5$; it combines both spatial and graph smoothness.}
\end{table*}

\subsection{Tradeoff between spatial and graph smoothness}
Tables~\ref{tab:fine_grained_figures_adjacency} and~\ref{tab:fine_grained_figures_lap} show the tradeoffs between spatial and graph smoothness based on graph adjacency matrix and graph Laplacian matrix, respectively. For the graph-adjacency-matrix based filter,  we plot  the reconstructed point clouds as a function of $\alpha$; that is, $\X_\alpha = \left( (1-\alpha) \Id + \alpha \Adj \right) \X'$, where  $\X'$ is the coarse reconstruction produced by the folding module. For the graph-Laplacian-matrix based filter,  we plot the reconstructed point clouds as a function of $\alpha$; that is, $\X_\alpha = \left( \mu \Id +   \mathcal{L}  \right)^{-2\alpha}\X'$. In both cases, when $\alpha = 0$, the reconstruction only depends on the folding module, which promotes the spatial smoothness and implies the intrinsic dimension of the underlying surface is close to $2$; when $\alpha = 1$, the reconstruction promotes the graph smoothness and implies the intrinsic dimension of the underlying surface is close to $3$. Here we use graph Haar filter, which is equivalent to $\alpha = 0.5$; it combines both spatial and graph smoothness. The intrinsic dimension of underlying surface is around $2.5$~\cite{manifold_dim}.

\section{Conclusions}
\label{sec:conclusions}
We propose an end-to-end deep autoencoder that achieves cutting-edge performance in unsupervised learning of 3D point clouds. The main novelties are that (i) we propose the folding module to fold a canonical 2D flat to the underlying surface of a 3D point cloud; (ii) we propose the graph-topology-inference module to model pairwise relationships between 3D points, pushing the latent code to preserve both coordinates and pairwise relationships of 3D points; (iii) we propose the graph-filtering module, which refines the coarse reconstruction to obtain the final reconstruction. We further provide an upper bound for the reconstruction loss and show that graph filtering lowers the upper bound. In the experiments, we validate the proposed networks in 3D point clouds reconstruction, visualization, and transfer classification. The experimental results show that (i) the proposed networks outperform state-of-the-art methods in the quantitative tasks; (ii) a graph topology can be inferred without specific supervision; and (iii) graph filtering improves the performances by reconstructing details.

\bibliographystyle{IEEEbib}
\bibliography{bibl_jelena}

\begin{thebibliography}{10}

\bibitem{YangFST:18}
Y.~Yang, C.~Feng, Y.~Shen, and D.~Tian,
\newblock ``Foldingnet: Point cloud auto-encoder via deep grid deformation,''
\newblock in {\em Proc. IEEE Conf. on Computer Vision and Pattern Recognition
  (CVPR)}, 2018, vol.~3.

\bibitem{meshlab}
P.~Cignoni, M.~Callieri, M.~Corsini, M.~Dellepiane, F.~Ganovelli, and
  G.~Ranzuglia,
\newblock ``Meshlab: an open-source mesh processing tool.,''
\newblock in {\em Eurographics Italian chapter conference}, 2008, vol. 2008,
  pp. 129--136.

\bibitem{PCL}
R.~B. Rusu and S.~Cousins,
\newblock ``3{D} is here: Point cloud library,''
\newblock in {\em {ICRA}}. 2011, pp. 1--4, IEEE.

\bibitem{3dmultiview}
Furukawa Y and J.~Ponce,
\newblock ``Accurate, dense, and robust multiview stereopsis,''
\newblock {\em IEEE transactions on pattern analysis and machine intelligence},
  vol. 32, no. 8, pp. 1362--1376, 2010.

\bibitem{3dsensor}
J.~Han, L.~Shao, D.~Xu, and J.~Shotton,
\newblock ``Enhanced computer vision with microsoft kinect sensor: A review,''
\newblock {\em IEEE transactions on cybernetics}, vol. 43, no. 5, pp.
  1318--1334, 2013.

\bibitem{duanMWP}
C.~Duan, S.~Chen, and J.~Kovacevic,
\newblock ``Weighted multi-projection: 3d point cloud denoising with estimated
  tangent planes,''
\newblock in {\em 2018 IEEE Global Conference on Signal and Information
  Processing (GlobalSIP)}. IEEE, 2018, pp. 725--729.

\bibitem{2016pointnet}
C.~Qi, H.~Su, K.~Mo, and L.~Guibas,
\newblock ``Pointnet: Deep learning on point sets for 3d classification and
  segmentation,''
\newblock {\em Proc. Computer Vision and Pattern Recognition (CVPR), IEEE},
  vol. 1, no. 2, pp. 4, 2017.

\bibitem{vox3}
Y.~Zhou and O.~Tuzel,
\newblock ``Voxelnet: End-to-end learning for point cloud based 3d object
  detection,''
\newblock {\em arXiv preprint arXiv:1711.06396}, 2017.

\bibitem{recognition}
R.~Klokov and V.~Lempitsky,
\newblock ``Escape from cells: Deep kd-networks for the recognition of 3d point
  cloud models,''
\newblock in {\em Proceedings of the IEEE International Conference on Computer
  Vision}, 2017, pp. 863--872.

\bibitem{YuLFCH:18}
L.~Yu, X.~Li, C-W. Fu, D.~Cohen-Or, and P-A. Heng,
\newblock ``Pu-net: Point cloud upsampling network,''
\newblock in {\em Proceedings of the IEEE Conference on Computer Vision and
  Pattern Recognition}, 2018, pp. 2790--2799.

\bibitem{3dgan}
J.~Wu, C.~Zhang, T.~Xue, B.~Freeman, and J.~Tenenbaum,
\newblock ``Learning a probabilistic latent space of object shapes via 3d
  generative-adversarial modeling,''
\newblock in {\em Advances in Neural Information Processing Systems}, 2016, pp.
  82--90.

\bibitem{AchlioptasDMG:17}
P.~Achlioptas, O.~Diamanti, I.~Mitliagkas, and L.~Guibas,
\newblock ``Representation learning and adversarial generation of 3d point
  clouds,''
\newblock {\em arXiv preprint arXiv:1707.02392}, 2017.

\bibitem{Bishop:06}
C.~M. Bishop,
\newblock {\em Pattern Recognition and Machine Learning},
\newblock Information Science and Statistics. Springer, 2006.

\bibitem{BerryBLPP:07}
Michael~W. Berry, Murray Browne, Amy~N. Langville, V.~Paul Pauca, and Robert~J.
  Plemmons,
\newblock ``Algorithms and applications for approximate nonnegative matrix
  factorization,''
\newblock {\em Comput. Stat. Data Anal.}, vol. 52, no. 1, pp. 155--173, Sept.
  2007.

\bibitem{Goodfellow-et-al-2016}
I.~Goodfellow, Y.~Bengio, and A.~Courville,
\newblock {\em Deep Learning},
\newblock MIT Press, 2016,
\newblock \url{http://www.deeplearningbook.org}.

\bibitem{sph}
M.~Kazhdan, T.~Funkhouser, and S.~Rusinkiewicz,
\newblock ``Rotation invariant spherical harmonic representation of 3d shape
  descriptors,''
\newblock in {\em Symposium on geometry processing}, 2003, vol.~6, pp.
  156--164.

\bibitem{lfd}
D-Y. Chen, X-P. Tian, Y-T. Shen, and M.~Ouhyoung,
\newblock ``On visual similarity based 3d model retrieval,''
\newblock in {\em Computer graphics forum}. Wiley Online Library, 2003,
  vol.~22, pp. 223--232.

\bibitem{tlnetwork}
R.~Girdhar, D.~F. Fouhey, M.~Rodriguez, and A.~Gupta,
\newblock ``Learning a predictable and generative vector representation for
  objects,''
\newblock in {\em European Conference on Computer Vision}. Springer, 2016, pp.
  484--499.

\bibitem{vconv}
A.~Sharma, O.~Grau, and M.~Fritz,
\newblock ``Vconv-dae: Deep volumetric shape learning without object labels,''
\newblock in {\em European Conference on Computer Vision}. Springer, 2016, pp.
  236--250.

\bibitem{FanSG:17}
H.~Fan, H.~Su, and L.~J. Guibas,
\newblock ``A point set generation network for 3d object reconstruction from a
  single image,''
\newblock in {\em {CVPR}}, 2017, pp. 2463--2471.

\bibitem{innergan}
Z.~Han, M.~Shang, Y-S. Liu, and M.~Zwicker,
\newblock ``View inter-prediction gan: Unsupervised representation learning for
  3d shapes by learning global shape memories to support local view
  predictions,''
\newblock {\em arXiv preprint arXiv:1811.02744}, 2018.

\bibitem{abs-1901-05103}
J.~J. Park, P.~Florence, J.~Straub, R.~A. Newcombe, and S.~Lovegrove,
\newblock ``Deep{SDF}: Learning continuous signed distance functions for shape
  representation,''
\newblock {\em CoRR}, vol. abs/1901.05103, 2019.

\bibitem{ShumanNFOV:13}
D.~I. Shuman, S.~K. Narang, P.~Frossard, A.~Ortega, and P.~Vandergheynst,
\newblock ``The emerging field of signal processing on graphs: {E}xtending
  high-dimensional data analysis to networks and other irregular domains,''
\newblock {\em IEEE Signal Process. Mag.}, vol. 30, pp. 83--98, May 2013.

\bibitem{SandryhailaM:14}
A.~Sandryhaila and J.~M.~F. Moura,
\newblock ``Big data processing with signal processing on graphs,''
\newblock {\em IEEE Signal Process. Mag.}, vol. 31, no. 5, pp. 80--90, Sept.
  2014.

\bibitem{OrtegaFKMV:18}
A.~Ortega, P.~Frossard, J.~Kova{\v c}evi{\'c}, J.~M.~F. Moura, and
  P.~Vandergheynst,
\newblock ``Graph signal processing: Overview, challenges, and applications,''
\newblock {\em Proceedings of the {IEEE}}, vol. 106, no. 5, pp. 808--828, 2018.

\bibitem{ShumanFV:16}
D.~I. Shuman, M.~J. Faraji, , and P.~Vandergheynst,
\newblock ``A multiscale pyramid transform for graph signals,''
\newblock {\em IEEE Trans. Signal Process.}, vol. 64, pp. 2119--2134, Apr.
  2016.

\bibitem{ChenSK:18}
S.~Chen, A.~Singh, and J.~Kova{\v c}evi{\'c},
\newblock ``Multiresolution representations for piecewise-smooth signals on
  graphs,''
\newblock {\em arXiv preprint arXiv:1803.02944}, pp. 6246--6253, 2018.

\bibitem{ChenVSK:15}
S.~Chen, R.~Varma, A.~Sandryhaila, and J.~Kova{\v c}evi{\'c},
\newblock ``Discrete signal processing on graphs: {S}ampling theory,''
\newblock {\em IEEE Trans. Signal Process.}, vol. 63, no. 24, pp. 6510--6523,
  Dec. 2015.

\bibitem{AnisGO:15}
A.~Anis, A.~Gadde, and A.~Ortega,
\newblock ``Efficient sampling set selection for bandlimited graph signals
  using graph spectral proxies,''
\newblock {\em IEEE Trans. Signal Process.}, 2015,
\newblock Submitted.

\bibitem{ChenVSK:16}
S.~Chen, R.~Varma, A.~Singh, and J.~Kova{\v c}evi{\'c},
\newblock ``Signal recovery on graphs: Fundamental limits of sampling
  strategies,''
\newblock {\em {IEEE} Trans. Signal and Information Processing over Networks},
  vol. 2, no. 4, pp. 539--554, 2016.

\bibitem{NarangGO:13}
S.~K. Narang, Akshay Gadde, and Antonio Ortega,
\newblock ``Signal processing techniques for interpolation in graph structured
  data,''
\newblock in {\em Proc. IEEE Int. Conf. Acoust., Speech, Signal Process.},
  Vancouver, May 2013, pp. 5445--5449.

\bibitem{ChenSMK:14}
S.~Chen, A.~Sandryhaila, J.~M.~F. Moura, and J.~Kova{\v c}evi{\'c},
\newblock ``Signal recovery on graphs: {Variation} minimization,''
\newblock {\em IEEE Trans. Signal Process.}, vol. 63, no. 17, pp. 4609--4624,
  Sept. 2015.

\bibitem{NarangO:12}
S.~K. Narang and A.~Ortega,
\newblock ``Perfect reconstruction two-channel wavelet filter banks for graph
  structured data,''
\newblock {\em IEEE Trans. Signal Process.}, vol. 60, pp. 2786--2799, June
  2012.

\bibitem{ChenSMK:14a}
S.~Chen, A.~Sandryhaila, J.~M.~F. Moura, and J.~Kova{\v c}evi{\'c},
\newblock ``Signal denoising on graphs via graph filtering,''
\newblock in {\em Proc. IEEE Glob. Conf. Signal Information Process.}, Atlanta,
  GA, Dec. 2014, pp. 872--876.

\bibitem{TremblayB:16}
N.~Tremblay and P.~Borgnat,
\newblock ``Subgraph-based filterbanks for graph signals,''
\newblock {\em IEEE Trans. Signal Process.}, vol. 64, pp. 3827--3840, Mar.
  2016.

\bibitem{NarangSO:10}
S.l~K. Narang, G.~Shen, and A.~Ortega,
\newblock ``Unidirectional graph-based wavelet transforms for efficient data
  gathering in sensor networks,''
\newblock in {\em Proc. IEEE Int. Conf. Acoust., Speech, Signal Process.},
  Dallas, TX, Mar. 2010, pp. 2902--2905.

\bibitem{DongTRF:18}
X.~Dong, D.~Thanou, M.~Rabbat, and P.~Frossard,
\newblock ``Learning graphs from data: {A} signal representation perspective,''
\newblock {\em CoRR}, vol. abs/1806.00848, 2018.

\bibitem{GamaMLR:19}
F.~Gama, A.~G. Marques, G.~Leus, and A.~Ribeiro,
\newblock ``Convolutional neural network architectures for signals supported on
  graphs,''
\newblock {\em {IEEE} Trans. Signal Processing}, vol. 67, no. 4, pp.
  1034--1049, 2019.

\bibitem{NiuCGTSK:18}
S.~Niu, S.~Chen, H.~Guo, C.~Targonski, M.~C. Smith, and J.~Kova{\v c}evi{\'c},
\newblock ``Generalized value iteration networks: Life beyond lattices,''
\newblock in {\em Proceedings of the Thirty-Second {AAAI} Conference on
  Artificial Intelligence, New Orleans, Louisiana, USA, February 2-7, 2018},
  2018, pp. 6246--6253.

\bibitem{ChenTFVK:18}
S.~Chen, D.~Tian, C.~Feng, A.~Vetro, and J.~Kova{\v c}evi{\'c},
\newblock ``Fast resampling of three-dimensional point clouds via graphs,''
\newblock {\em {IEEE} Trans. Signal Processing}, vol. 66, no. 3, pp. 666--681,
  2018.

\bibitem{ZhangR:18}
Y.~Zhang and M.~Rabbat,
\newblock ``A graph-cnn for 3d point cloud classification,''
\newblock {\em CoRR}, vol. abs/1812.01711, 2018.

\bibitem{maturana2015voxnet}
D.~Maturana and S.~Scherer,
\newblock ``Voxnet: A 3d convolutional neural network for real-time object
  recognition,''
\newblock in {\em Intelligent Robots and Systems (IROS), 2015 IEEE/RSJ
  International Conference on}. IEEE, 2015, pp. 922--928.

\bibitem{su2015multi}
H.~Su, S.~Maji, E.~Kalogerakis, and E.~Learned-Miller,
\newblock ``Multi-view convolutional neural networks for 3d shape
  recognition,''
\newblock in {\em Proceedings of the IEEE international conference on computer
  vision}, 2015, pp. 945--953.

\bibitem{graph_data1}
M.~M. Bronstein, J.~Bruna, Y.~LeCun, A.~Szlam, and P.~Vandergheynst,
\newblock ``Geometric deep learning: going beyond euclidean data,''
\newblock {\em IEEE Signal Processing Magazine}, vol. 34, no. 4, pp. 18--42,
  2017.

\bibitem{splatnet}
H.~Su, V.~Jampani, D.~Sun, S.~Maji, E.~Kalogerakis, M-H. Yang, and J.~Kautz,
\newblock ``Splatnet: Sparse lattice networks for point cloud processing,''
\newblock in {\em Proceedings of the IEEE Conference on Computer Vision and
  Pattern Recognition}, 2018, pp. 2530--2539.

\bibitem{graph_pc4}
Y.~Wang, Y.~Sun, Z.~Liu, S.~Sarma, M.~M. Bronstein, and J.~M. Solomon,
\newblock ``Dynamic graph cnn for learning on point clouds,''
\newblock {\em arXiv preprint arXiv:1801.07829}, 2018.

\bibitem{wang2018deep}
S.~Wang, S.~Suo, W-C. Ma, A.~Pokrovsky, and R.~Urtasun,
\newblock ``Deep parametric continuous convolutional neural networks,''
\newblock in {\em Proceedings of the IEEE Conference on Computer Vision and
  Pattern Recognition}, 2018, pp. 2589--2597.

\bibitem{tatarchenko2018tangent}
M.~Tatarchenko, J.~Park, V.~Koltun, and Q-Y. Zhou,
\newblock ``Tangent convolutions for dense prediction in 3d,''
\newblock in {\em Proceedings of the IEEE Conference on Computer Vision and
  Pattern Recognition}, 2018, pp. 3887--3896.

\bibitem{xu2018spidercnn}
Y.~Xu, T.~Fan, M.~Xu, L.~Zeng, and Y.~Qiao,
\newblock ``Spidercnn: Deep learning on point sets with parameterized
  convolutional filters,''
\newblock {\em arXiv preprint arXiv:1803.11527}, 2018.

\bibitem{li2018so}
J.~Li, M.~Ben Chen, and G.~H. Lee,
\newblock ``So-net: Self-organizing network for point cloud analysis,''
\newblock in {\em Proceedings of the IEEE Conference on Computer Vision and
  Pattern Recognition}, 2018, pp. 9397--9406.

\bibitem{vox1}
A.~Brock, T.~Lim, J.~M. Ritchie, and N.~Weston,
\newblock ``Generative and discriminative voxel modeling with convolutional
  neural networks,''
\newblock {\em arXiv preprint arXiv:1608.04236}, 2016.

\bibitem{choy20163d}
C.~Choy, D.~Xu, J.~Gwak, K.~Chen, and S.~Savarese,
\newblock ``3d-r2n2: A unified approach for single and multi-view 3d object
  reconstruction,''
\newblock in {\em Proceedings of the European Conference on Computer Vision
  ({ECCV})}, 2016.

\bibitem{deep_image1}
R.~Roveri, L.~Rahmann, A.~C. Oztireli, and M.~Gross,
\newblock ``A network architecture for point cloud classification via automatic
  depth images generation,''
\newblock in {\em Proceedings of the IEEE Conference on Computer Vision and
  Pattern Recognition}, 2018, pp. 4176--4184.

\bibitem{wang2018adaptive}
P-S. Wang, C-Y. Sun, Y.~Liu, and X.~Tong,
\newblock ``Adaptive o-cnn: A patch-based deep representation of 3d shapes,''
\newblock {\em arXiv preprint arXiv:1809.07917}, 2018.

\bibitem{AtlasNet}
T.~Groueix, M.~Fisher, V.~G. Kim, B.~Russell, and M.~Aubry,
\newblock ``{AtlasNet: A Papier-M\^ach\'e Approach to Learning 3D Surface
  Generation},''
\newblock in {\em Proceedings IEEE Conf. on Computer Vision and Pattern
  Recognition (CVPR)}, 2018.

\bibitem{sgpn}
W.~Wang, R.~Yu, Q.~Huang, and U.~Neumann,
\newblock ``Sgpn: Similarity group proposal network for 3d point cloud instance
  segmentation,''
\newblock in {\em Proceedings of the IEEE Conference on Computer Vision and
  Pattern Recognition}, 2018, pp. 2569--2578.

\bibitem{huang2018recurrent}
Q.~Huang, W.~Wang, and U.~Neumann,
\newblock ``Recurrent slice networks for 3d segmentation of point clouds,''
\newblock in {\em Proceedings of the IEEE Conference on Computer Vision and
  Pattern Recognition}, 2018, pp. 2626--2635.

\bibitem{shen2018}
Y.~Shen, C.~Feng, Y.~Yang, and D.~Tian,
\newblock ``Mining point cloud local structures by kernel correlation and graph
  pooling,''
\newblock in {\em Proceedings of the IEEE Conference on Computer Vision and
  Pattern Recognition}, 2018, vol.~4.

\bibitem{Newman:10}
M.~Newman,
\newblock {\em Networks: An Introduction},
\newblock Oxford University Press, 2010.

\bibitem{ChenNLL:19}
S.~Chen, S.~Niu, T.~Lan, and B.~Liu,
\newblock ``Large-scale 3d point cloud representations via graph inception
  networks with applications to autonomous driving,''
\newblock in {\em {ICIP}}, 2019.

\bibitem{ThanouCF:16}
D.~Thanou, P.~A. Chou, and P.~Frossard,
\newblock ``Graph-based compression of dynamic 3d point cloud sequences,''
\newblock {\em {IEEE} Trans. Image Processing}, vol. 25, no. 4, pp. 1765--1778,
  2016.

\bibitem{ZengCNPY:18}
J.~Zeng, G.~Cheung, M.~Ng, J.~Pang, and C.~Yang,
\newblock ``3d point cloud denoising using graph laplacian regularization of a
  low dimensional manifold model,''
\newblock {\em CoRR}, vol. abs/1803.07252, 2018.

\bibitem{LozesEL:15}
F.~Lozes, A.~Elmoataz, and O.~Lezoray,
\newblock ``Pde-based graph signal processing for 3-d color point clouds :
  Opportunities for cultural herihe arts and found promising,''
\newblock {\em {IEEE} Signal Process. Mag.}, vol. 32, no. 4, pp. 103--111,
  2015.

\bibitem{VetterliKG:12}
M.~Vetterli, J.~Kova{\v c}evi{\'c}, and V.~K. Goyal,
\newblock {\em Foundations of Signal Processing},
\newblock Cambridge University Press, Cambridge, 2014,
\newblock http://foundationsofsignalprocessing.org.

\bibitem{Chung:96}
F.~R.~K. Chung,
\newblock {\em Spectral Graph Theory (CBMS Regional Conference Series in
  Mathematics, No. 92)},
\newblock Am. Math. Soc., 1996.

\bibitem{RudinOF:92}
L.~I. Rudin, Osher, and E.~Fatemi,
\newblock ``Nonlinear total variation based noise removal algorithms,''
\newblock {\em Physica D}, , no. 1--4, pp. 259--268, Nov. 1992.

\bibitem{shapenet}
A.~Chang, T.~Funkhouser, L.~Guibas, P.~Hanrahan, Q.~Huang, Z.~Li, S.~Savarese,
  M.~Savva, S.~Song, H.~Su, J.~Xiao, L.~Yi, and F.~Yu,
\newblock ``Shapenet: An information-rich 3d model repository,''
\newblock {\em arXiv preprint arXiv:1512.03012}, 2015.

\bibitem{yi2016scalable}
L.~Yi, V.~G. Kim, D.~Ceylan, I.~Shen, M.~Yan, H.~Su, C.~Lu, Q.~Huang,
  A.~Sheffer, L.~Guibas, et~al.,
\newblock ``A scalable active framework for region annotation in 3d shape
  collections,''
\newblock {\em ACM Transactions on Graphics (TOG)}, vol. 35, no. 6, pp. 210,
  2016.

\bibitem{WuSKYZTX:15}
Z.~Wu, S.~Song, A.~Khosla, F.~Yu, L.~Zhang, X.~Tang, and J.~Xiao,
\newblock ``3d shapenets: A deep representation for volumetric shapes,''
\newblock in {\em The IEEE Conference on Computer Vision and Pattern
  Recognition (CVPR)}, June 2015.

\bibitem{MaatenH:08}
L.~van~der Maaten and G.~Hinton,
\newblock ``Visualizing data using {t-SNE},''
\newblock {\em Journal of Machine Learning Research}, vol. 9, pp. 2579--2605,
  2008.

\bibitem{3dcapsule}
Y.~Zhao, T.~Birdal, H.~Deng, and F.~Tombari,
\newblock ``3d point capsule networks,''
\newblock in {\em Conference on Computer Vision and Pattern Recognition}, 2019.

\bibitem{manifold_dim}
B.~K{\'e}gl,
\newblock ``Intrinsic dimension estimation using packing numbers,''
\newblock in {\em Advances in neural information processing systems}, 2003, pp.
  697--704.

\end{thebibliography}

\end{document}